\newcommand{\anova}{K_{\mathrm{A}}}
\title{Factorization Machines with Regularization \\for Sparse Feature Interactions}
\author{Kyohei Atarashi, Satoshi Oyama, and Masahito Kurihara\\
Hokkaido University\\
katarashi0305@gmail.com,  \{oyama, kurihara\}@ist.hokudai.ac.jp}
\date{\today}
\begin{document}

\maketitle
\begin{abstract}%
    Factorization machines (FMs) are machine learning predictive models based on second-order feature interactions and FMs with sparse regularization are called sparse FMs.
    Such regularizations enable \emph{feature selection}, which selects the most relevant features for accurate prediction, and therefore they can contribute to the improvement of the model accuracy and interpretability.
    However, because FMs use second-order feature interactions, the selection of features often causes the loss of many relevant feature interactions in the resultant models.
    In such cases, FMs with regularization specially designed for \emph{feature interaction selection} trying to achieve interaction-level sparsity may be preferred instead of those just for feature selection trying to achieve feature-level sparsity.
    In this paper, we present a new regularization scheme for feature interaction selection in FMs.
    For feature interaction selection, our proposed regularizer makes the feature interaction matrix sparse without a restriction on sparsity patterns imposed by the existing methods.
    We also describe efficient proximal algorithms for the proposed FMs and how our ideas can be applied or extended to feature selection and other related models such as higher-order FMs and the all-subsets model.
    The analysis and experimental results on synthetic and real-world datasets show the effectiveness of the proposed methods.
\end{abstract}

\section{Introduction}
    \label{sec:intro}
    Factorization machines (FMs)~\citep{rendle2010factorization,rendle2012factorization} are machine learning predictive models based on second-order feature interactions, i.e., the products $x_i x_j$ of two feature values.
    One advantage of FMs compared with linear models with a polynomial term (namely, quadratic regression (QR)) and kernel methods is efficiency.
    The computational cost of evaluating FMs is linear with respect to the dimension of the feature vector $d$ and independent of the number of training examples $N$.
    Another advantage is that FMs can learn well even on a sparse dataset because they can estimate the weights for feature interactions that are not observed from the training dataset.
    These advantages are due to the low-rank matrix factorization modeling.
    In FMs, the weight matrix for feature interactions $\bm{W} \in \real^{d \times d}$ is factorized as $\bm{W}=\bm{P}\bm{P}^\top,\ \bm{P} \in \real^{d \times k}$, where $k \in \natu_{>0}$, called the rank hyperparameter, is usually far smaller than $d$.
 
    \emph{Feature selection} methods based on sparse regularization~\citep{tibshirani1996regression,friedman2010note} have been developed to improve the performance and interpretability of FMs~\citep{pan2016sparse,xu2016synergies,zhao2017meta}.
    When one uses feature selection methods, it is assumed that there are irrelevant features.
    However, because FMs use second-order feature interactions, not only feature-level but also interaction-level relevance should be considered.
    Actually, FMs with feature selection can work well only if all relevant interactions are those among a subset of features, but this is not necessarily the case in practice.
    In such cases, FMs with regularization specially designed for \emph{feature interaction selection} trying to achieve interaction-level sparsity may be preferred instead of those just for feature selection trying to achieve feature-level sparsity.
    Technically speaking, the existing methods select features by inducing row-wise sparsity in $\bm{P}$, often leading to undesiable row/column-wise sparsity in $\bm{W}$ as a result.
    We would like to remove such row/column-wise sparsity-pattern restrictions in the interaction-level sparsity modeling.

    In this paper, we present a new regularization scheme for feature interaction selection in FMs.
    The proposed regularizer is intended to make $\bm{P}$ sparse and $\bm{W}$ sparse without row/column-wise sparsity-pattern restrictions, which means that our proposed regularizer induces interaction-level sparsity in $\bm{W}$.
    Our basic objectives are to design regularizers of $\bm{P}$ imposing a penalty on the density (in some sense) of $\bm{W}$ and to develop efficient algorithms for the associated optimization problems.
    We will see that our regularizer comes from mathematical analyses of norms (of matrices) and their squares in association with upper bounds of the $\ell_1$ norm for $\bm{W}$.
    In addition, we will discuss how our ideas can be applied or extended to better feature selection (in terms of the prediction performance) and other related models such as higher-order FMs and the all-subsets model.
    The experiments on synthetic datasets demonstrated that one of the proposed methods (but no other existing ones) succeeded in feature interaction selection in FMs and all the proposed methods performed feature selection more accurately in the cases where all relevant interactions are those among a subset of features.
    Moreover, the experiments on real-world datasets demonstrated that the proposed methods tend to be easier to use and select more relevant interactions and features for predictions than the existing methods.
    
    This paper is organized as follows.
    In~\cref{sec:background}, we review FMs and sparse FMs.
    \cref{sec:proposed} presents our basic idea and analyses for constructing regularizers that make $\bm{W}$ sparse.
    In accordance with them, we propose a new regularizer for feature interaction selection in~\cref{sec:ti} and then another regularizer for better feature selection (in terms of the prediction performance) in~\cref{sec:cs}, as well as efficient proximal optimization methods for these proposed regularizers.
    We extend the proposed regularizers to related models: higher-order FMs and the all-subsets model~\citep{blondel2016higher} in~\cref{sec:extension}.
    In~\cref{sec:related}, we discuss related work. 
    In~\cref{sec:experiments}, we provide the experimental results on two synthetic and four real-world datasets before concluding in~\cref{sec:conclusion}.
    
    \begin{table}[t]
        \centering
        \caption{A summary of sparse regularizers for FMs. $\ell_{2,1}$~\citep{xu2016synergies,zhao2017meta} and $\ell_1$~\citep{pan2016sparse} are those proposed in the existing methods, and $\tilde{\ell}_{1,2}^2$ (TI) and $\ell_{2,1}^2$ (CS) are those proposed in this paper.
        The TI regularizer is for feature interaction selection and the CS regularizer is for accurate feature selection. TI is an abbreviation of triangular inequality, and CS is an abbreviation of Cauchy-Schwarz}.
        \label{tab:summary}
        \begin{tabular}{c|lcc}
             \multirow{2}{*}{Regularizer} & \multicolumn{1}{c}{\multirow{2}{*}{Formula}} & Feature & Feature\\
             & & interaction selection & selection \\ \hline
             $\ell_{2,1}$ & $\norm{\bm{P}}_{2,1}=\sum_{j=1}^d \norm{\bm{p}_{j}}_2$ & No & Yes\\
             $\ell_1$ & $\norm{\bm{P}}_1 = \sum_{j=1}^d \sum_{s=1}^k \abs{p_{j,s}}$ & No & Yes\\
             $\tilde{\ell}_{1,2}^2$ (TI) & $\norm{\bm{P}^\top}_{1,2}^2 = \sum_{s=1}^k \norm{\bm{p}_{:, s}}_1^2$ & Yes & Yes\\
             $\ell_{2,1}^2$ (CS) & $\norm{\bm{P}}_{2,1}^2=\left(\sum_{j=1}^d \norm{\bm{p}_{j}}_2\right)^2$ & No & Yes
        \end{tabular}
    \end{table}
    \Cref{tab:summary} summarizes existing and proposed methods (regularizers).
    
    \paragraph{Notation.} We denote $\{1, 2, \ldots, n\}$ as $[n]$.
    We use $\circ$ for the element-wise product (a.k.a Hadamard product) of the vector and matrix.
    We denote the $\ell_p$ norm for vector and matrix as $\norm{\cdot}_p$.
    Given a matrix $\bm{X}$, we use $\bm{x}_i$ for the $i$-th row vector and $\bm{x}_{:, i}$ for the $i$-th column vector.
    Given a matrix $\bm{X} \in \real^{n \times m}$, we denote the $\ell_q$ norm of the vector $(\norm{\bm{x}_1}_p, \ldots, \norm{\bm{x}_n}_p)^\top$ by $\norm{\bm{X}}_{p, q} \coloneqq \left(\sum_{i=1}^{n}\norm{\bm{x}_i}^{q}_{p}\right)^{1/q}$ and call it $\ell_{p, q}$ norm.
    We use the terms $\tilde{\ell}_p$ and $\tilde{\ell}_{p, q}$ norm for $\ell_p$ and $\ell_{p, q}$ norm for the transpose matrix, i.e., $\norm{\bm{P}^\top}_p$ and $\norm{\bm{P}^\top}_{p, q}$, respectively.
    For the number of non-zero elements in vectors ($|\{i: x_i \neq 0\}|$) and matrices ($|\{(i, j): x_{i, j} \neq 0\}|$), we use $\nnz{\cdot}$.
    We define $\supp(\bm{x})$, called the support for $\bm{x}$,  as the indices of non-zero elements in $\bm{x} \in \real^d$: $\{i \in [d] : x_i \neq 0\}$.
    We define $\absop(\bm{x}): \bm{x} \in \real^n \to \real^{n}_{\ge 0}$ as $\absop(\bm{x}) = (\abs{x_1}, \ldots, \abs{x_n})^\top$.
    
\section{Factorization Machines and Sparse Factorization Machines}
    \label{sec:background}
    In this section, we briefly review FMs~\citep{rendle2010factorization,rendle2012factorization}, sparse FMs~\citep{pan2016sparse,xu2016synergies,zhao2017meta}, and a sparse and low-rank quadratic regression.

\subsection{Factorization Machines}
    FMs~\citep{rendle2010factorization,rendle2012factorization} are models for supervised learning based on second-order feature interactions.
    For a given feature vector $\bm{x} \in \real^d$, FMs predict the target of $\bm{x}$ as 
    \begin{align}
        f_{\mathrm{FM}}(\bm{x}; \bm{w}, \bm{P}) \coloneqq \inner{\bm{w}}{\bm{x}} + \sum_{j_2>j_1} \inner{\bm{p}_{ j_1}}{\bm{p}_{j_2}}x_{j_1}x_{j_2} = \inner{\bm{w}}{\bm{x}} + \frac{1}{2}\sum_{j_1=1}^d \sum_{j_2 \in [d]\setminus \{j_1\}} \inner{\bm{p}_{ j_1}}{\bm{p}_{j_2}}x_{j_1}x_{j_2},
        \label{eq:fm} 
    \end{align}
    where $\bm{w} \in \real^d$ and $\bm{P} \in \real^{d \times k}$ are learnable parameters, and $k \in \natu_{>0}$ is the rank hyperparameter.
    The first term in~\eqref{eq:fm} represents the linear relationship, and the second term represents the second-order polynomial relationship between the input and target.
    For a given training dataset $\mathcal{D} = \{(\bm{x}_n, y_n)\}_{n=1}^N$, the objective function of the FM is
    \begin{align}
        L_{\mathrm{FM}}(\bm{w}, \bm{P}; \lambda_{w}, \lambda_{p}) \coloneqq \frac{1}{N}\sum_{n=1}^N \ell (f_{\mathrm{FM}}(\bm{x}_n), y_n) + \lambda_{w} \norm{\bm{w}}_2^2 + \lambda_{p} \norm{\bm{P}}_2^2, 
        \label{eq:objective_fm}
    \end{align}
    where $\ell: \real \times \real \to \real_{\ge 0}$ is the $\mu$-smooth (i.e., its derivative is a $\mu$-Lipschitz) convex loss function, and $\lambda_{w}, \lambda_{p} \ge 0$ are the regularization-strength hyperparameters.

    The inner product of the $j_1$-th and $j_2$-th row vectors in $\bm{P}$, $\inner{\bm{p}_{j_1}}{\bm{p}_{j_2}}$, corresponds to the weight for the interaction between the $j_1$-th and $j_2$-th features in the FM. Thus, FMs are equivalent to the following linear model with a second-order polynomial term (we call it (distinct) quadratic regression (QR) in this paper) with factorization of the feature interaction matrix $\bm{W}=\bm{P}\bm{P}^\top$:
    \begin{align}
        f_{\mathrm{QR}}(\bm{x}; \bm{w}, \bm{W}) = \inner{\bm{w}}{\bm{x}} + \sum_{j_2 > j_1} w_{j_1, j_2}x_{j_1}x_{j_2} = \inner{\bm{w}}{\bm{x}} + \frac{1}{2}\sum_{j_1=1}^d\sum_{j_2 \in [d] \setminus \{j_1\}} w_{j_1, j_2}x_{j_1}x_{j_2},
        \label{eq:quadratic}
    \end{align}
    where $\bm{W} \in \real^{d \times d}$ is the feature interaction matrix.
    The computational cost for evaluating FMs is $O(\nnz{\bm{x}}k)$, i.e., it is linear w.r.t the dimension $d$ of feature vectors, because the second term in Equation~\eqref{eq:fm} can be rewritten as
    \begin{align}
        \sum_{j_2>j_1} \inner{\bm{p}_{ j_1}}{\bm{p}_{j_2}}x_{j_1}x_{j_2} =  \sum_{s=1}^k(\inner{\bm{p}_{:, s}}{\bm{x}}^2 - \inner{\bm{p}_{:, s}\circ \bm{p}_{:, s}}{\bm{x}\circ \bm{x}})/2.
    \end{align}
    On the other hand, the QR clearly requires $O\left(\nnz{\bm{x}}^2\right)$ time and $O\left(d^2\right)$ space for storing $\bm{W}$, which is prohibitive for a high-dimensional case.
    Moreover, this factorized representation enables FMs to learn the weights for unobserved feature interactions but the QR does not learn such weights~\citep{rendle2010factorization}.
    
    The objective function in Equation~\eqref{eq:objective_fm} is differentiable, so \citet{rendle2010factorization} developed a stochastic gradient descent (SGD) algorithm for minimizing~\eqref{eq:objective_fm}.
    Although the objective function is non-convex w.r.t $\bm{P}$, it is multi-convex w.r.t $\bm{p_{j}}$ for all $j \in [d]$.
    It can thus be efficiently minimized by using a coordinate descent (CD) (a.k.a alternating least squares) algorithm~\citep{rendle2011fast,blondel2016polynomial}.
    Both the SGD and CD algorithms require $O(\nnz{\bm{X}}k)$ time per epoch (using all instances at one time in the SGD algorithm and updating all parameters at one time in the CD algorithm), where $\bm{X} \in \real^{N \times d}$ is the design matrix.
    It is linear w.r.t both the number of training examples $N$ and the dimension of feature vector $d$.
    
\subsection{Sparse Factorization Machines}
    Feature selection methods based on sparse regualization~\citep{tibshirani1996regression,yuan2006model,bach2011optimization} have been developed to improve the performance and interpretability of FMs~\citep{pan2016sparse,xu2016synergies,zhao2017meta}.
    Selecting features necessarily means making the weight matrix $\bm{W}$ row/column-wise sparse.
    
    \citet{xu2016synergies} and \citet{zhao2017meta} proposed using $\norm{\cdot}_{2, 1}$ regularization, it is well-known as group-lasso regularization~\citep{friedman2010note,yuan2006model}.
    We call FMs with this regularization $\ell_{2,1}$-sparse FMs.
    The objective function of this FM is $L_{\mathrm{FM}}(\bm{w}, \bm{P}; \lambda_{w}, \lambda_{p}) + \tilde{\lambda}_{p} \norm{\bm{P}}_{2, 1}$, where $\tilde{\lambda}_p \ge 0$ is the regularization hyperparameter.~\footnote{There are several differences between our formulations of $\ell_{2,1}$-sparse FMs and the original ones. First, the original formulations~\citep{xu2016synergies,zhao2017meta} did not introduce the standard $\ell_2^2$ regularization $\lambda_{p} \norm{P}_2^2$ while ours do because setting $\lambda_{p}$ to zero reproduces the original formulations.
    Moreover, they modified the models or assumed some additional information according to some domain specific knowledge.
    We do not modify the models and do not assume such information since we do not specify any application.}
    \citet{xu2016synergies} and \citet{zhao2017meta} proposed the proximal block coordinate descent (PBCD) and proximal gradient descent (PGD) algorithms respectively, for minimizing this objective function.
    In our setting, at each iteration, the PBCD algorithm updates the $j$-th row vector $\bm{p}_j$ by
    \begin{align}
        \bm{p}_j \leftarrow \prox_{\eta\tilde{\lambda}_{p}\norm{\cdot}_2}(\bm{p}_j - \eta \nabla_{\bm{p}_j}L)&=\argmin_{\bm{q}}\frac{1}{2}\norm{\bm{q} - (\bm{p}-\eta \nabla_{\bm{p}_j}L)}_2^2 + \eta \tilde{\lambda}_{p} \norm{\bm{q}}_2\\
        &= \max (1 - \eta\tilde{\lambda}_{p}/\norm{\bm{p}'_j}_2, 0) \cdot \bm{p}'_j,
        \label{eq:prox_l21}
    \end{align}
    where $\eta > 0$ is the step size parameter and $\bm{p}'_j \coloneqq \bm{p}_j - \eta \nabla_{\bm{p}_j}L_{\mathrm{FM}}(\bm{P})$.
    This proximal algorithm produces row-wise sparse parameter matrix $\bm{P}$.
    When $\bm{p}_j=\bm{0}$, $\inner{\bm{p}_j}{\bm{p}_i}$ clearly equals zero for all $i \in [d]$.
    This means that the feature interaction matrix $\bm{W}=\bm{P}\bm{P}^\top$ is row/column-wise sparse, so the FM ignores all feature interactions that involve $j$-th feature, i.e., $\ell_{2,1}$ regularizer enables feature selection in FMs.
    
    \citet{pan2016sparse} proposed using $\ell_1$ (=$\ell_{1,1}$) regularization for $\bm{P}$. 
    We call FMs with $\ell_1$ regularized objective function $\ell_1$-sparse FMs~\footnote{Strictly speaking,~\citet{pan2016sparse} considered probabilistic FMs and proposed the use of a Laplace prior. It corresponds to a $\ell_1$ regularization in our non-probabilistic formulation.}.
    The PGD update for $p_{j, s}$ in $\ell_1$-sparse FMs is given by
    \begin{align}
        p_{j, s} \leftarrow \prox_{\eta\tilde{\lambda}_{p}\abs{\cdot}}(p'_{j, s}) &=\argmin_{q} \frac{1}{2}(q - p'_{j,s})^2 + \eta \tilde{\lambda}_{p}\abs{q} \\
        &=\sign(p'_{j, s}) \cdot \max (\abs{p'_{j,s}} - \eta\tilde{\lambda}_{p}, 0).
        \label{eq:prox_l1}
    \end{align}
    $\ell_1$-sparse FMs are intended to make not feature interaction matrix $\bm{W}=\bm{P}\bm{P}^\top$ row/column-wise sparse but $\bm{P}$ sparse.
    However, they practically work well for feature selection in FMs~\citep{pan2016sparse}.

\section{Proposed Scheme for Feature Interaction Selection}
    \label{sec:proposed}
    \paragraph{Feature Interaction Selection: its Motivation.} The existing sparse regularizers~\citep{pan2016sparse,zhao2017meta,xu2016synergies} can improve the performance and interpretability of FMs by selecting only relevant features.
    However, because FMs use second-order feature interactions, not feature-level but interaction-level relevance should be considered, in other words, feature interaction selection is preferable to feature selection.
    Assume that a subset $S$ of $[d]$ is selected as a set of relevant features.
    Then, for all $j \in [d] \setminus S$, all feature interactions with $j$-th feature are lost but they can contain important feature interactions.
    Moreover, FMs use all feature interactions from $S$ and they can contain irrelevant feature interactions.
    Therefore, the existing methods tend to produce all-zeros $\bm{W}$ to remove all irrelevant interactions or all-non-zeros (dense) $\bm{W}$ to select all relevant interactions.
    Many relevant interactions are lost in the former case and many irrelevant interactions are used in the latter case.
    Actually, FMs with feature selection can work well only if all relevant interactions are those among a subset of features, but this is not necessarily the case in practice.
    
    In this section, firstly, we briefly verify whether the existing regularizers based on sparsity-inducing norms can select feature interactions or not, experimentally.
    We next introduce a preferable but hard to optimize regularizer $\Omega_*$ for feature interaction selection in FMs.
    Then, we present a relationship between norms and $\Omega_*$.
    We next present a relationship between \emph{squares} of norms and $\Omega_*$, and finalize our scheme: using the square of a \emph{sparsity-inducing} (quasi-)norm.

\subsection{Can $\ell_1$ and $\ell_{2,1}$ regularizers select feature interactions?}
\label{subsec:verification}
    \begin{figure}[t]
        \centering
        \subfloat{
        \includegraphics[width=80mm]{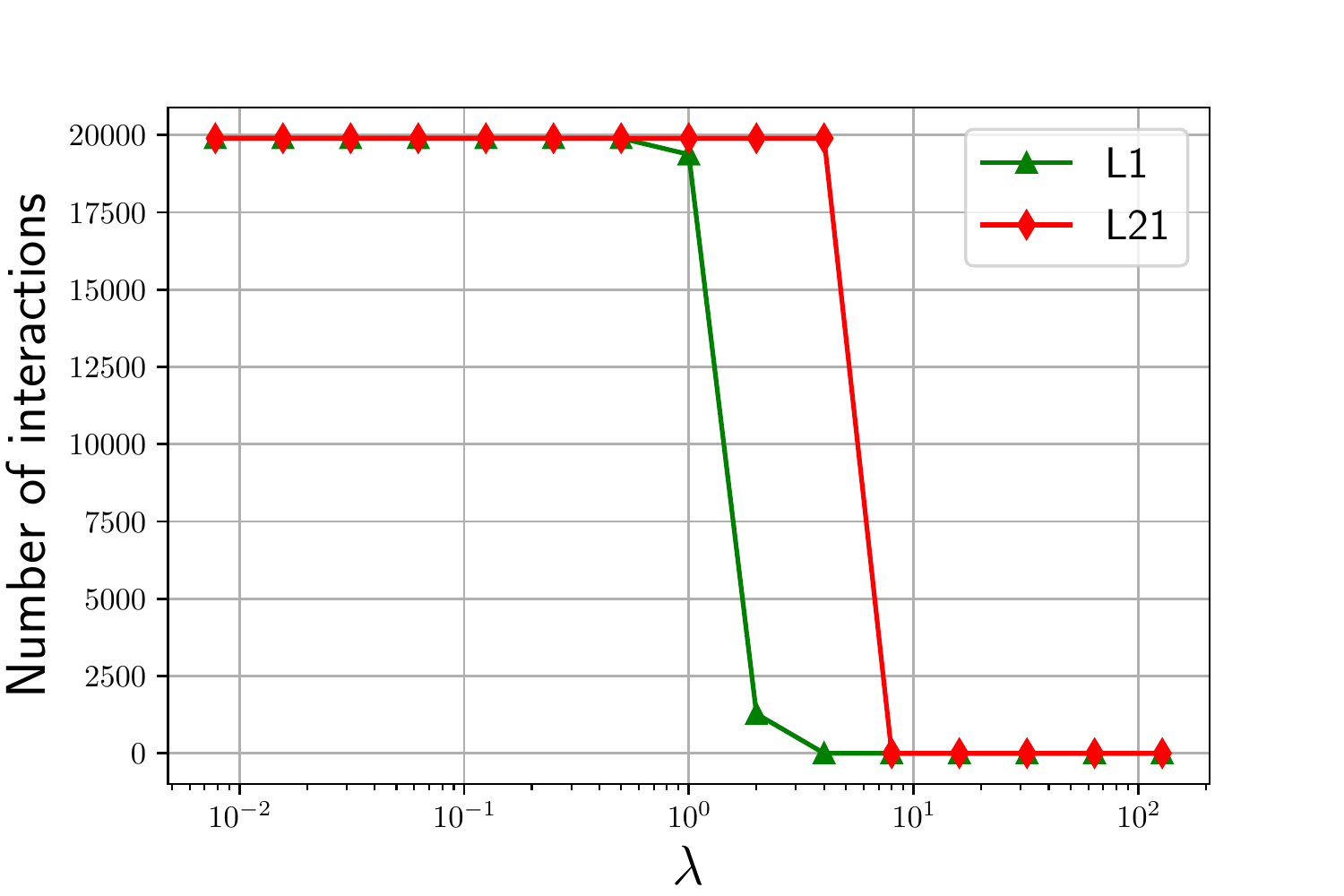} 
        \includegraphics[width=80mm]{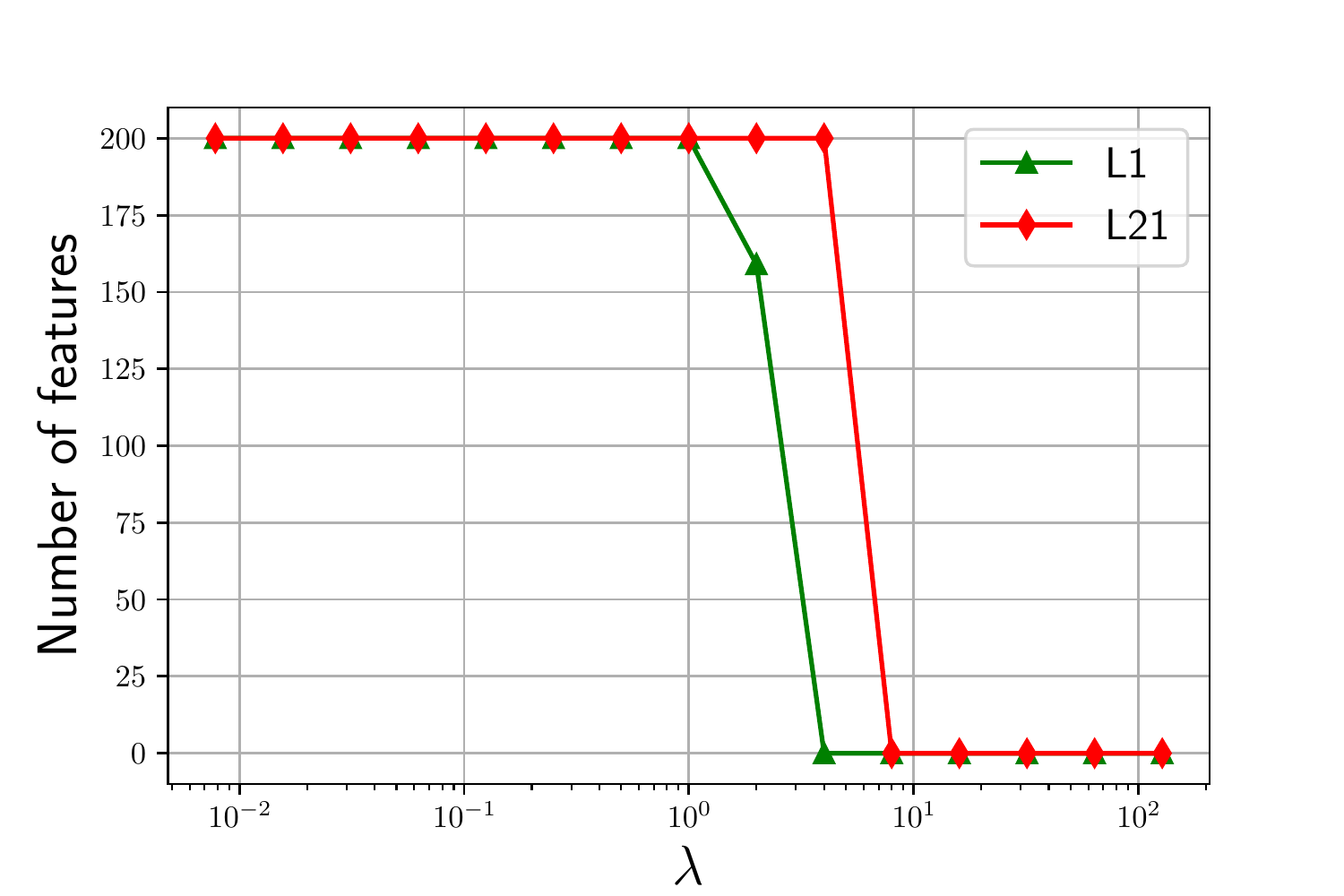}}
        \caption{Comparison of proximal operators associated with \textbf{L1}, and \textbf{L21} regularizer. We evaluated the proximal operators at a randomly sampled $\bm{P}$ with various $\lambda$. Left graph shows the number of used feature interactions and right graph shows the number of used features in $\bm{Q}^*(\bm{Q}^*)^\top$, where $\bm{Q}^*$ is the output of the proximal operator: $\bm{Q}^* = \prox_{\lambda \Omega(\cdot)}(\bm{P})$ and $\Omega$ is $\ell_1$ (\textbf{L1}~\citep{pan2016sparse}) or $\ell_{2,1}$ (\textbf{L21}~\citep{xu2016synergies,zhao2017meta}).}
        \label{fig:toy}
    \end{figure}
    We verify whether the existing regularizers can select feature interactions or not.
    Because the objective functions with the existing regularizers are typically optimized by the PGD algorithm, we compared the output of their proximal operators for verification.
    We sampled $\bm{P} \in \real^{200 \times 30}$ with $p_{j,s} \sim \mathcal{N}(0, 1.0)$ for all $j \in [200]$, $s \in [30]$, and evaluated proximal operators $\bm{Q}^* = \prox_{\lambda \Omega(\cdot)}(\bm{P})$ with various $\Omega$: $\ell_1$ (\textbf{L1}~\citep{pan2016sparse}) and $\ell_{2,1}$ (\textbf{L21}~\citep{xu2016synergies,zhao2017meta}).
    Their corresponding proximal operators are~\eqref{eq:prox_l1} and~\eqref{eq:prox_l21} respectively.
    Regarding $\bm{P}$ as the parameter of FMs, we computed the number of used interactions (i.e., $\abs{\{(j_1, j_2) : \inner{\bm{q}_{j_1}^*}{\bm{q}_{j_2}^*}\neq 0, j_2>j_1\}})$ and the number of used features (i.e., the number of non-zero rows in $\bm{Q}^*$).
    We set $\lambda$ to be $2^{-7}, 2^{-6}, \ldots$, and $2^{7}$.
    
    Results are shown in~\cref{fig:toy}.
    Both \textbf{L1} and \textbf{L21} tended to produce completely dense (all-non-zeros) or all-zeros feature interaction matrices.
    This result indicates that it is difficult for the existing methods to select feature interactions.
    We will see later in~\cref{subsec:comparison_proximal_operator}, the proximal operators of the proposed regularizers can produce moderately sparse feature interaction matrices and therefore more useful for feature interaction selection and feature selection in FMs.
    Moreover, we will show that one of the proposed regularizers can select relevant feature interactions and all of the proposed regularizers can select relevant features in~\cref{subsec:synthetic}.

\subsection{$\ell_1$ Norm for Feature Interaction Weight Matrix}
    \label{subsec:exactl1}
    We here introduce a preferable but hard to optimize regularizer for feature interaction selection in FMs.
    
    Selecting feature interactions necessarily means making the feature interaction weight matrix $\bm{W}$ sparse, so our goal is learning sparse $\bm{W}=\bm{P}\bm{P}^\top$ in FMs.
    Although the existing regularizers are intended to make $\bm{P}$ sparse, the sparsity of $\bm{P}$ does not necessarily imply the sparsity of $\bm{W}$.
    Thus, our basic idea is to use a regularization inducing sparsity $\bm{W}$ rather than $\bm{P}$.
    Especially, we propose  using $\ell_1$ regularization~\citep{tibshirani1996regression} for the strictly upper triangular elements (or equivalently, non-diagonal elements) in $\bm{W}$, i.e.,
    \begin{align}
        \sum_{j_2>j_1}\abs{w_{j_1, j_2}} = \sum_{j_2>j_1}\abs{\inner{\bm{p}_{j_1}}{\bm{p}_{j_2}}} \eqqcolon \Omega_*(\bm{P}),
    \end{align}
    because $\ell_1$ regularization is the well-known and one of the promising regularization for inducing sparsity.
    The corresponding objective function is
    \begin{align}
        L_{\mathrm{FM}}^*(\bm{w}, \bm{P}; \lambda_{w}, \lambda_{p}, \tilde{\lambda}_p) \coloneqq L_{\mathrm{FM}}(\bm{w}, \bm{P}; \lambda_{w}, \lambda_{p}) + \tilde{\lambda}_{p} \Omega_*(\bm{P}).
        \label{eq:objective_omegastar}
    \end{align}
    Unfortunately, this objective function is hard to optimize w.r.t $\bm{P}$.
    In the following, we introduce three well-known algorithms for minimizing a sum of a differentiable loss and a non-smooth regularization like~\eqref{eq:objective_omegastar} and show that the use of them is unrealistic.
    
    \paragraph{Subgradient Descent Algorithm.}
    Consider the use of the subgradient descent (SubGD) algorithm for minimizing~\eqref{eq:objective_omegastar}.
    $\Omega_*$ is non-convex and thus its subdifferential cannot be defined.
    Fortunately, $\norm{\bm{P}\bm{P}^\top}_1/2 = \Omega_*(\bm{P}) + \norm{\bm{P}}_2^2 / 2$ is convex, so its subdifferential can be defined.
    Therefore, consider the minimization of $L_{\mathrm{FM}}^*(\bm{w}, \bm{P}; \lambda_{w}, \lambda_{p} + \tilde{\lambda}_p/2, \tilde{\lambda}_p)$, i.e.,
    \begin{align}
         L_{\mathrm{FM}}^*(\bm{w}, \bm{P}; \lambda_{w}, \lambda_{p} + \tilde{\lambda}_p/2, \tilde{\lambda}_p) = L_{\mathrm{FM}}(\bm{w}, \bm{P}; \lambda_{w}, \lambda_{p}) + \frac{\tilde{\lambda}_p}{2}\norm{\bm{P}\bm{P}^\top}_1.
        \label{eq:objective_omegastar_convex}
    \end{align}
    At each iteration, the SubGD algorithm for minimizing~\eqref{eq:objective_omegastar_convex} picks a subgradient $\bm{G} \in  \partial_{\bm{P}} \norm{\bm{P}\bm{P}^\top}_1$ and updates the parameter $\bm{P}$ as
    \begin{align}
        \bm{P} \leftarrow \bm{P} - \eta \left(\nabla_{\bm{P}} L + \frac{\tilde{\lambda}_p}{2}\bm{G}\right).
        \label{eq:subgd}
    \end{align}
    The subdifferential of $\norm{\bm{P}\bm{P}^\top}_1$ is defined as~\citep{li2020nonconvex}
    \begin{align}
        \partial\norm{\bm{P}\bm{P}^\top}_1 = \{2\bm{Z} \bm{P}: \bm{Z} \in \partial_{\bm{S}} \norm{\bm{S}}_1, \bm{S}=\bm{P}\bm{P}^\top\}.
    \end{align}
    Therefore, picking a subgradient $\bm{G} \in \partial_{\bm{P}} \norm{\bm{P}\bm{P}^\top}_1$ requires $O(d^2k)$ computational cost (for computing $\bm{P}\bm{P}^\top$), so it might be prohibitive to use the SubGD algorithm for a high-dimensional case.
    To be more precise, the computational cost of the SubGD algorithm at each iteration is $O(T(\nnz{\bm{X}}k + d^2k))$, where $T$ is the number of line search iterations.
    Moreover, in general, the SubGD algorithm cannot produce a sparse solution and therefore it is not suitable for feature interaction selection~\citep{bach2011optimization}.
    
    \paragraph{Inexact PGD Algorithm.}
    To obtain a sparse $\bm{W}$, we consider the use of a PGD algorithm for~\eqref{eq:objective_omegastar_convex}.
    At each iteration, the PGD algorithm for minimizing~\eqref{eq:objective_omegastar_convex} requires the evaluation of the following proximal operator
    \begin{align}
        \prox_{\lambda \norm{\cdot \cdot^\top}_1}(\bm{P})\coloneqq \argmin_{\bm{Q} \in \real^{d\times k}}\frac{1}{2}\norm{\bm{P}-\bm{Q}}_2^2 + \lambda \norm{\bm{Q}\bm{Q}^\top}_1
        \label{eq:prox_omegastar}
    \end{align}
    with some $\lambda > 0$.
    This proximal problem~\eqref{eq:prox_omegastar} is convex but unfortunately cannot be evaluated analytically.
    The PGD algorithm with \textit{inexact} evaluation of proximal operator is called inexact PGD algorithm and we here consider the use of the inexact PGD algorithm~\citep{gu2018inexact,yao2017efficient,schmidt2011convergence}.
    Because~\eqref{eq:prox_omegastar} is a $(d \times k)$-dimensional non-smooth convex optimization problem, for example, it can be optimized using the SubGD algorithm~\citep{gu2018inexact}.
    Unfortunately, the SubGD algorithm for~\eqref{eq:prox_omegastar} requires $O(d^2 k)$ computational cost at each iteration, which is free from $\bm{X}$ but depends on $d$ quadratically.
    Thus, one iteration of this inexact PGD algorithm for~\eqref{eq:objective_omegastar_convex} using the SubGD algorithm for inexact evaluation of~\eqref{eq:prox_omegastar} takes $O(T(\nnz{\bm{X}}k + T'd^2k))$ time, where $T$ is the number of line search iteration and $T'$ is the number of iterations of the SubGD algorithm for~\eqref{eq:prox_omegastar}.
    Moreover, the precision of the inexact proximal operator should be high in practice and must be controlled carefully for convergence.
    Furthermore, the convergence rate of the SubGD algorithm for a convex optimization problem is $O(1/\epsilon^2)$~\citep{nesterov2018lectures}.
    Thus, we must set $T'$ to be a large value for a good solution, and then it is also not practical for~\eqref{eq:objective_omegastar_convex} to use the inexact PGD algorithm for a high-dimensional case.
    
    \paragraph{Inexact PCD Algorithm.}
    If we assume the use of proximal CD (PCD) algorithm for the regularized objective~\eqref{eq:objective_omegastar}, the proximal operator evaluated at $p'_{j, s} \coloneqq p_{j, s} - \eta \partial L_{\mathrm{FM}}(\bm{P})/\partial p_{j, s}$ is 
    \begin{align}
        \argmin_{q} \Biggl\{\frac{1}{2}\left(q - p'_{j, s}\right)^2 + \eta\tilde{\lambda}_{p} \sum_{i \in [d]\setminus\{j\}}\Bigl|qp_{i, s} + r_{j, i, s}\Bigr|\Biggr\}
        \label{eq:prox_l1_for_interaction_weights},
    \end{align}
    where $r_{j,i,s}=\sum_{s' \in [k]\setminus\{s\}}p_{j,s'}p_{i,s'}$.
    The second term in Equation~\eqref{eq:prox_l1_for_interaction_weights} takes the form of the sum of the absolute deviations and can be rewritten as a $d$-dimensional linear programming problem with inequality constraints~\citep{boyd2004convex}.
    Thus, the optimization problem in this proximal operator is a typical quadratic programming problem and can be solved by some well-known methods (e.g., an interior-point method).
    Alternatively, one can also solve~\eqref{eq:prox_l1_for_interaction_weights} using the SubGD algorithm or the alternating direction of direction method of multipliers (ADMM) algorithm~\citep{boyd2011distributed}.
    However, in any case, $r_{i, j, s}$ must be computed for all $i \in [d] \setminus \{j\}$ and it requires $O(dk)$ computational cost.
    Thus, the inexact PCD algorithm for~\eqref{eq:objective_omegastar} requires $\Omega(dk(dk))=\Omega(d^2k^2)$ computational cost only for evaluating~\eqref{eq:prox_l1_for_interaction_weights} per epoch.
    It might be prohibitive for a high-dimensional case.

\subsection{Upper Bound Regularizers of $\Omega_*$}
    \label{subsec:squares_of_norms}
    As described above, $\Omega_*$ regularizer seems appropriate for feature interaction selection but unfortunately it is hard to optimize.
    Thus, we consider the use of an upper bound regularizer being easy to optimize and ensuring sparsity to $\bm{W}$.
    The use of an easy-to-optimize upper bound is a common approach for minimizing a hard-to-optimize objective function~\citep{zhou2012learning,liu2011learning,chen2019rafm}.
    
\subsubsection{Non-equivalence of Norms and $\Omega_*$}
    \label{subsubsec:non_equivalence}
    Firstly, are existing $\ell_{2, 1}$ and $\ell_1$ regularizers upper bounds of $\Omega_{*}$?
    Unfortunately, not only them but also any norm on $\real^{d \times k}$ can be neither an upper bound nor a lower bound; i.e., all norms are not equivalent to $\Omega_{*}$.
    \begin{thm}
        \label{thm:non-equivalence_norm}
        Let $\norm{\cdot}$ be a norm on $\real^{d \times k}$. Then, for any $C > 0$, there exists $\bm{P}, \bm{Q} \in \real^{d \times k}$ such that 
        \begin{align}
            \Omega_*(\bm{P}) < C\norm{\bm{P}}, \Omega_*(\bm{Q}) > C\norm{\bm{Q}}.
            \label{eq:non-equivalence_norm}
        \end{align}
    \end{thm}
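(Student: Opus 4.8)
The plan is to exploit the mismatch in homogeneity degrees between $\Omega_*$ and any norm. Note first that $\Omega_*$ is positively homogeneous of degree $2$: for any $t \in \real$ and $\bm{P} \in \real^{d\times k}$,
\begin{align}
\Omega_*(t\bm{P}) = \sum_{j_2 > j_1} \abs{\inner{t\bm{p}_{j_1}}{t\bm{p}_{j_2}}} = t^2\, \Omega_*(\bm{P}),
\end{align}
whereas every norm satisfies $\norm{t\bm{P}} = \abs{t}\,\norm{\bm{P}}$. Hence, along a ray $\{t\bm{P}_0 : t \ge 0\}$ through a fixed $\bm{P}_0$ with $\Omega_*(\bm{P}_0) > 0$, the map $t \mapsto \Omega_*(t\bm{P}_0) - C\norm{t\bm{P}_0} = t\bigl(t\,\Omega_*(\bm{P}_0) - C\norm{\bm{P}_0}\bigr)$ is negative for small $t > 0$ and positive for large $t$, which will give us the two desired points.

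First I would exhibit a matrix $\bm{P}_0$ with $\Omega_*(\bm{P}_0) > 0$. Assuming $d \ge 2$, take $\bm{p}_1 = \bm{p}_2$ equal to some nonzero vector in $\real^k$ and set the remaining rows to $\bm{0}$, so that $\Omega_*(\bm{P}_0) = \norm{\bm{p}_1}_2^2 > 0$. Since $\bm{P}_0 \neq \bm{0}$ and $\norm{\cdot}$ is a norm, $\norm{\bm{P}_0} > 0$, so $\tau \coloneqq C\norm{\bm{P}_0}/\Omega_*(\bm{P}_0)$ is a well-defined positive real. Choosing $\bm{Q} = t\bm{P}_0$ for any $t > \tau$ yields $\Omega_*(\bm{Q}) = t^2\Omega_*(\bm{P}_0) > t\,C\norm{\bm{P}_0} = C\norm{\bm{Q}}$, and choosing $\bm{P} = t\bm{P}_0$ for any $t \in (0,\tau)$ yields $\Omega_*(\bm{P}) = t^2\Omega_*(\bm{P}_0) < t\,C\norm{\bm{P}_0} = C\norm{\bm{P}}$, which is exactly~\eqref{eq:non-equivalence_norm}.

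I do not expect any genuinely hard step here: the content is just that a positively-homogeneous-of-degree-$2$ functional cannot be trapped between two multiples of a degree-$1$ one. The only points requiring a word of care are (i) the degenerate case $d = 1$, in which $\Omega_* \equiv 0$ and the second inequality in~\eqref{eq:non-equivalence_norm} is unachievable, so the statement is implicitly understood under the standing assumption $d \ge 2$; and (ii) invoking positive-definiteness of the norm to ensure $\norm{\bm{P}_0} > 0$, so that $\tau$ is finite and strictly positive.
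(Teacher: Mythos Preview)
Your proof is correct and follows essentially the same approach as the paper: both exploit the mismatch between the degree-$2$ homogeneity of $\Omega_*$ and the degree-$1$ homogeneity of any norm by scaling a fixed matrix $\bm{P}_0$ with $\Omega_*(\bm{P}_0)>0$ and $\norm{\bm{P}_0}>0$, choosing the scale factor below or above the threshold $C\norm{\bm{P}_0}/\Omega_*(\bm{P}_0)$. Your version is slightly more explicit in exhibiting such a $\bm{P}_0$ and in flagging the degenerate case $d=1$, but the argument is otherwise the same.
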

    
    \begin{proof}
        Since $\norm{\cdot}$ is a norm on $\real^{d \times k}$, it is absolutely homogeneous $\norm{a \bm{P}} = \abs{a}\norm{P}$ for all $a \in \real$ and $\norm{\bm{P}}=0 \iff \bm{P}=\bm{0}$.
        On the other hand, $\Omega_{*}$ is $2$-homogeneous: 
        \begin{align}
            \Omega_{*}(a \bm{P}) = \sum_{j_1=1}^d\sum_{j_2>j_1}\abs{\inner{a\bm{p}_{j_1}}{a\bm{p}_{j_2}}}=a^2\Omega_{*}(\bm{P}),
        \end{align}
        and $\Omega_{*}(\bm{P})=0 \iff \inner{\bm{p}_{j_1}}{\bm{p}_{j_2}} = 0$ for all $j_1\neq j_2 \in [d]$. Thus, we can take $\bm{P'} \in \real^{d\times k}$ such that $\norm{\bm{P'}}\neq 0$ and $\Omega_{*}(\bm{P'}) \neq 0$. Given $C>0$, we take a positive number $a$ such that $0 < a < C \norm{P'} / \Omega_*(\bm{P'})$. Then, $\Omega_*(a \bm{P'}) = a^2 \Omega_{*}(\bm{P'}) < a\left[C \norm{P'} / \Omega_*(\bm{P'})\right]\Omega_{*}(\bm{P'}) = C\norm{a\bm{P'}}$, which is surely the first inequality in~\eqref{eq:non-equivalence_norm} ($\bm{P}=a\bm{P}'$). Similarly, if we take $a > C \norm{\bm{P}'} / \Omega_*(\bm{P'})$, we can derive the second inequality in~\eqref{eq:non-equivalence_norm}.
    \end{proof}
    In some cases, the fact that any norm cannot be an upper bound of $\Omega_{*}$ is crucial.
    Suppose that one wants FMs with $\bm{P}$ such that $\Omega_{*}(\bm{P}) \le \lambda$; i.e., one solves the constrained minimization problem.
    Since this problem is also hard to optimize, one can replace $\Omega_{*}$ with $\norm{\cdot}$, and the revised problem may be easier to optimize.
    However, it is not guaranteed that the solution $\bm{P}$ satisfies $\Omega_{*}(\bm{P}) \le \lambda$ because $\norm{\cdot}$ cannot be an upper bound of $\Omega_{*}$.
    
    The existing methods using sparsity-inducing norms produce completely dense (all-non-zeros) or all-zeros feature interaction matrices as shown in~\cref{fig:toy}.
    This phenomenon can be explained by~\cref{thm:non-equivalence_norm}. 
    From the proof of~\cref{thm:non-equivalence_norm}, we have $\norm{\bm{P}} \gg \Omega_*(\bm{P})$, i.e, the regularization strength of norm is much greater than that of $\Omega_*$, if the absolute value of each element in $\bm{P}$ is sufficiently small.
    Thus, when $\lambda$ is large, the existing methods using norm regularizers can produce all-zeros matrices.
    Similarly, we have $\norm{\bm{P}} \ll \Omega_*(\bm{P})$ if the absolute value of each element in $\bm{P}$ is sufficiently large.
    Thus, when $\lambda$ is small, the existing methods using norm regularizers can produce completely dense matrices.

\subsubsection{Upper Bound Regularizers of $\Omega_*$ by Squares of (Quasi-)norms}
    In this section, we present how to construct an upper bound of $\Omega_*$.
    We first define $m$-homogeneous quasi-norms.
    \begin{defi}
        \label{def:m-homogeneous_quasi-norm}
        We say a function $\Omega: \real^{d \times k} \to \real_{\ge 0}$ is an $m$-homogeneous quasi-norm if, for all $\bm{P}, \bm{Q} \in \real^{d \times k}$, (i) $\Omega(\bm{P}) \ge 0$, $\Omega(\bm{P}) = 0 \iff \bm{P} = \bm{0}$, (ii) there exists $m\in \natu_{>0}$ for all $a \in \real$ such that $\Omega(a\bm{P}) = \abs{a}^m \Omega(\bm{P})$, and (iii) there exists $K>0$ ($K \ge 2^{m-1}$) such that $\Omega(\bm{P} + \bm{Q}) \le K(\Omega(\bm{P}) + \Omega(\bm{Q}))$.
        Note that $m=1$ implies that $\Omega$ is a quasi-norm.
    \end{defi}
    
    There is an important relationship between $\Omega_*$ and $2$-homogeneous quasi-norms: unlike norms, any $2$-homogeneous quasi-norm can be an upper bound of $\Omega_*$.
    \begin{thm}
        \label{thm:bound_by_square_of_norm}
        For any $2$-homogeneous quasi-norm $\Omega$, there exists $C > 0$ such that $\Omega_{*}(\bm{P}) \le C \Omega(\bm{P})$ for all $\bm{P} \in \real^{d \times k}$.
    \end{thm}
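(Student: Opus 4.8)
The plan is to exploit the common feature that $\Omega_*$ and $\Omega$ are both positively homogeneous of degree $2$, and to bridge between them through the squared Frobenius norm $\norm{\cdot}_2^2$.

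First I would establish the elementary upper bound $\Omega_*(\bm{P}) \le \frac{d-1}{2}\norm{\bm{P}}_2^2$ for every $\bm{P} \in \real^{d\times k}$. This follows from Cauchy--Schwarz and AM--GM: $\abs{\inner{\bm{p}_{j_1}}{\bm{p}_{j_2}}} \le \norm{\bm{p}_{j_1}}_2 \norm{\bm{p}_{j_2}}_2 \le \frac{1}{2}\left(\norm{\bm{p}_{j_1}}_2^2 + \norm{\bm{p}_{j_2}}_2^2\right)$, and summing over the $d(d-1)/2$ unordered pairs $\{j_1,j_2\}$ each row-norm-square $\norm{\bm{p}_j}_2^2$ is counted $d-1$ times.

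Second I would produce a lower bound $\Omega(\bm{P}) \ge c\,\norm{\bm{P}}_2^2$ with $c>0$ depending only on $\Omega$. The key observation is that $\sqrt{\Omega}$ is a (genuine, $1$-homogeneous) quasi-norm on $\real^{d\times k}$: by (i) it is nonnegative and vanishes only at $\bm{0}$; by (ii), $\sqrt{\Omega(a\bm{P})} = \abs{a}\sqrt{\Omega(\bm{P})}$; and by (iii) together with subadditivity of $t\mapsto\sqrt{t}$, $\sqrt{\Omega(\bm{P}+\bm{Q})} \le \sqrt{K}\left(\sqrt{\Omega(\bm{P})}+\sqrt{\Omega(\bm{Q})}\right)$. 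I would then invoke the standard fact that all quasi-norms on a finite-dimensional real vector space are equivalent, which gives $\sqrt{\Omega(\bm{P})} \ge \sqrt{c}\,\norm{\bm{P}}_2$, i.e.\ $\Omega(\bm{P}) \ge c\,\norm{\bm{P}}_2^2$. Chaining the two bounds yields $\Omega_*(\bm{P}) \le \frac{d-1}{2}\norm{\bm{P}}_2^2 \le \frac{d-1}{2c}\,\Omega(\bm{P})$, so $C = \frac{d-1}{2c}$ works.

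The main obstacle is making the equivalence-of-quasi-norms step self-contained, since a quasi-norm need not be continuous and the quasi-triangle inequality carries a multiplicative constant. I would handle this in two parts. (a) Upper bound: expanding $\bm{P}$ in the standard coordinate basis and iterating (iii) $O(\log(dk))$ times gives $\sqrt{\Omega(\bm{P})} \le M\,\norm{\bm{P}}_2$ for some constant $M$ depending only on $\Omega$, $d$, $k$. (b) Lower bound via compactness: if no admissible $c$ existed, there would be $\bm{P}_n$ with $\norm{\bm{P}_n}_2=1$ and $\Omega(\bm{P}_n)\to 0$; passing to a subsequence converging to some $\bm{P}$ with $\norm{\bm{P}}_2=1$ (hence $\bm{P}\neq\bm{0}$ and $\sqrt{\Omega(\bm{P})}>0$), the estimate $\sqrt{\Omega(\bm{P})} \le \sqrt{K}\left(\sqrt{\Omega(\bm{P}-\bm{P}_n)}+\sqrt{\Omega(\bm{P}_n)}\right) \le \sqrt{K}\left(M\,\norm{\bm{P}-\bm{P}_n}_2+\sqrt{\Omega(\bm{P}_n)}\right)\to 0$ gives a contradiction. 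Note this uses only the elementary upper bound from (a), not continuity of $\Omega$ itself, which is why the argument goes through cleanly.
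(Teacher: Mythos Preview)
Your proposal is correct and follows essentially the same route as the paper: recognize that $\sqrt{\Omega}$ is a quasi-norm (the paper's \cref{thm:pow_of_quasi_norm}), invoke equivalence of quasi-norms on a finite-dimensional space (the paper's \cref{lemm:equivalence_of_quasi_norm}, whose compactness proof matches your self-contained argument), and bridge through a specific squared norm bounding $\Omega_*$. The only cosmetic difference is that you use $\norm{\cdot}_2^2$ with the explicit constant $(d-1)/2$ from Cauchy--Schwarz plus AM--GM, whereas the paper uses $\norm{\cdot}_1^2$ via $\Omega_* \le \Omega_{\mathrm{TI}} \le \norm{\bm{P}}_1^2$.
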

    Moreover, one can construct an $m$-homogeneous quasi-norm by $m$-th power of a (quasi-)norm.
    \begin{restatable}{thm}{powofquasinorm}
        \label{thm:pow_of_quasi_norm}
        $\Omega: \real^{d \times k} \to \real_{\ge 0}$ is an $m$-homogeneous quasi-norm if and only if there exists a quasi-norm $\norm{\cdot}'$ such that $\Omega(\cdot) = (\norm{\cdot}')^{m}$.
    \end{restatable}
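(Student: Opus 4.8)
The plan is to prove both implications by directly checking the three defining properties in \cref{def:m-homogeneous_quasi-norm}, the only substantive ingredients being two elementary power inequalities valid for integer $m \ge 1$.

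For the ``if'' direction, I would start from a quasi-norm $\norm{\cdot}'$ with triangle constant $K' \ge 1$ and set $\Omega := (\norm{\cdot}')^m$. Property (i) is immediate since $(\norm{\bm{P}}')^m \ge 0$ and $(\norm{\bm{P}}')^m = 0 \iff \norm{\bm{P}}' = 0 \iff \bm{P} = \bm{0}$; property (ii) is $\Omega(a\bm{P}) = (\abs{a}\norm{\bm{P}}')^m = \abs{a}^m \Omega(\bm{P})$. For (iii), combine the quasi-triangle inequality with the convexity bound $(a+b)^m \le 2^{m-1}(a^m+b^m)$ for $a,b \ge 0$ to obtain $\Omega(\bm{P}+\bm{Q}) \le (K')^m(\norm{\bm{P}}' + \norm{\bm{Q}}')^m \le (K')^m 2^{m-1}(\Omega(\bm{P}) + \Omega(\bm{Q}))$, so $K := (K')^m 2^{m-1}$ works, and $K \ge 2^{m-1}$ as required since $K' \ge 1$.

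For the ``only if'' direction, given an $m$-homogeneous quasi-norm $\Omega$ with constant $K \ge 2^{m-1}$, I would define $\norm{\bm{P}}' := \Omega(\bm{P})^{1/m}$, so that $\Omega = (\norm{\cdot}')^m$ holds by construction and it remains only to verify $\norm{\cdot}'$ is a quasi-norm. Positive definiteness and absolute homogeneity of $\norm{\cdot}'$ transfer from (i) and (ii) by taking $m$-th roots, exactly the previous computations run backwards. For the quasi-triangle inequality I would apply the subadditivity bound $(a+b)^{1/m} \le a^{1/m} + b^{1/m}$ (valid for $a,b \ge 0$, $m \ge 1$) to (iii): $\norm{\bm{P}+\bm{Q}}' = \Omega(\bm{P}+\bm{Q})^{1/m} \le K^{1/m}(\Omega(\bm{P}) + \Omega(\bm{Q}))^{1/m} \le K^{1/m}(\norm{\bm{P}}' + \norm{\bm{Q}}')$, where $K^{1/m} \ge 1$ because $K \ge 2^{m-1} \ge 1$.

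The (very mild) work is in the two auxiliary inequalities: $(a+b)^m \le 2^{m-1}(a^m + b^m)$ is Jensen's inequality for the convex map $x \mapsto x^m$ at the midpoint $(a+b)/2$, and $(a+b)^{1/m} \le a^{1/m} + b^{1/m}$ follows from concavity of $x \mapsto x^{1/m}$ on $[0,\infty)$ (alternatively, raise both sides to the $m$-th power and note that all cross terms are nonnegative). Since $m \in \natu_{>0}$ we always have $m \ge 1$, so both are legitimate. I do not anticipate a genuine obstacle here; the one thing to watch is the bookkeeping of constants --- keeping the quasi-norm constant $\ge 1$ and the $m$-homogeneity constant $\ge 2^{m-1}$ --- so that the objects produced in each direction actually satisfy the stated definitions (and, as a sanity check, the case $m = 1$ should reduce to the tautology that a quasi-norm is a quasi-norm).
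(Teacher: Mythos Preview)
Your proposal is correct and follows essentially the same approach as the paper: in each direction you take the $m$-th power or $m$-th root and verify the three axioms, using exactly the same two elementary inequalities (Jensen for $(a+b)^m \le 2^{m-1}(a^m+b^m)$ and subadditivity of $x \mapsto x^{1/m}$) that the paper uses. Your constant bookkeeping is slightly more explicit than the paper's, but the arguments are otherwise identical.
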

    Thus, one can construct an upper bound of $\Omega_*$ by the square of a (quasi-)norm.
    The regularizer in canonical FMs, $\ell_{2}^2$, is clearly the square of the norm but it does not produce sparse feature interaction matrix $\bm{W}=\bm{P}\bm{P}^\top$ (as indicated in our experimental results in~\cref{sec:experiments}).
    It means that using an upper bound of $\Omega_*$ is not sufficient for feature interaction selection.
    We thus propose especially using the square of a \emph{sparsity-inducing} (quasi-)norm, which can make $\bm{W}=\bm{P}\bm{P}^\top$ sparse by making $\bm{P}$ sparse since $\supp(\bm{p}_{j_1}) \cap \supp(\bm{p}_{j_2}) = \emptyset$ implies $w_{j_1, j_2} = 0$.
    As described above, the sparsity of $\bm{P}$ does not necessarily imply the sparsity of $\bm{W}$.
    However, the square of a sparsity-inducing (quasi-)norm can be more useful for feature interaction selection than a sparsity-inducing norm because the square of a norm can be an upper bound of $\Omega_*$.
    In the following sections, we present such regularizers.
    
\subsection{Comparison of Norm and Squared Norm}
    Before presenting the proposed regularizers, we discuss relationships between regularizations based on squared norms and those based on norms.
    Consider the optimization problem with the regularization based on the squared norm
    \begin{align}
        \min_{\bm{w}, \bm{P}} L_{\mathrm{FM}}(\bm{w}, \bm{P}; \lambda_{w}, \lambda_{p}) + \lambda \norm{\bm{P}}^2
        \label{eq:squared_norm_optimization_problem}
    \end{align}
    and one of its stationary points $\{\hat{\bm{w}}, \hat{\bm{P}}\}$.
    Then, $\{\hat{\bm{w}}, \hat{\bm{P}}\}$ is also one of the stationary points of the optimization problem with the regularization based on the (non-squared) norm
    \begin{align}
        \min_{\bm{w}, \bm{P}} L_{\mathrm{FM}}(\bm{w}, \bm{P}; \lambda_{w}, \lambda_{p}) + \left(2 \lambda \norm{\hat{\bm{P}}}\right) \norm{\bm{P}}
        \label{eq:norm_optimization_problem}
    \end{align}
    since $\partial \norm{\bm{P}}^2 = 2 \norm{\bm{P}} \partial \norm{\bm{P}}$ (from~\cref{lemm:subdifferential_pow_norm} in~\cref{subsec:subdifferential}).
    Therefore, one might consider that the use of the squared norm $\norm{\cdot}^2$ is essentially equivalent to the use of the (non-squared) norm $\norm{\cdot}$.
    However, the optimal solutions of~\eqref{eq:squared_norm_optimization_problem} and~\eqref{eq:norm_optimization_problem} are not necessarily equivalent to each other because $L_{\mathrm{FM}}(\bm{w}, \bm{P}; \lambda_{w}, \lambda_{p})$ is non-convex w.r.t $\bm{P}$.
    To the best of our knowledge, there are no known relationships between the optimal solutions of the existing methods~\citep{pan2016sparse,xu2016synergies,zhao2017meta} and those of the QR with the $\ell_1$ regularization for $\bm{W}$.
    On the other hand, under some conditions, the optimal solutions of one of the proposed methods are equivalent to those of the QR with the $\ell_1$ regularization for $\bm{W}$, which is shown in~\cref{sec:ti}.

    As described above, the squared norm regularization is not necessarily equivalent to the norm regularization.
    On the other hand, the squared norm regularization can be interpreted as the norm regularization with an adaptive regularization-strength since $\lambda \norm{\bm{P}}^2 = (\lambda \norm{\bm{P}})\norm{\bm{P}}$.
    Indeed, as we will see later in~\cref{sec:ti} and~\cref{sec:cs}, the proposed TI (CS) regularizer based on squared norms shrink elements (row vectors) of relatively small absolute value ($\ell_2$ norm) to $0$ ($\bm{0}$).
    On the other hand, the existing methods shrink absolutely small elements to $0$ (i.e., elements that are smaller than a fixed threshold are shrunk to $0$).
    Therefore, regularizers based on squared norms (the proposed methods) can be less sensitive to the choice of the regularization-strength hyperparameter than those based on (non-squared) norms (the existing methods).
    
\section{TI Upper Bound Regularizer}
    \label{sec:ti}
    We first propose using $\tilde{\ell}_{1,2}^2$ as an upper bound regularizer of $\Omega_*$.
    We call this regularizer $\tilde{\ell}_{1,2}^2$ regularizer or triangular inequality (TI) regularizer and call FMs with this regularization $\tilde{\ell}_{1,2}^2$-sparse FMs or TI-sparse FMs since this regularization can also be derived using the triangle inequality:
    \begin{align}
         \frac{1}{2}\norm{\bm{P}^\top}_{1,2}^2  = \frac{1}{2} \sum_{s=1}^k \norm{\bm{p}_{:, s}}_1^2 &= \sum_{j_2>j_1}\sum_{s=1}^k \abs{p_{j_1, s}p_{j_2, s}} + \frac{1}{2}\norm{\bm{P}}_2^2 \\
         &\eqqcolon \Omega_{\mathrm{TI}}(\bm{P}) + \frac{1}{2}\norm{\bm{P}}_2^2\ge \Omega_{*}(\bm{P}) + \frac{1}{2}\norm{\bm{P}}_2^2.
        \label{eq:ti}
    \end{align}
    Because $\norm{\bm{P}^\top}_{1,2}^2 = 2\Omega_{\mathrm{TI}}(\bm{P}) + \norm{\bm{P}}_2^2$ and $\norm{\bm{P}}_2^2$ can be taken into $L_{\mathrm{FM}}$, we will sometimes discuss not TI-sparse FMs but rather $\Omega_{\mathrm{TI}}$-sparse FMs (FMs with $\Omega_{\mathrm{TI}}$ regularization).

    We here discuss the relationship between $\Omega_{\mathrm{TI}}$-sparse FMs and the QR~\eqref{eq:quadratic} with $\ell_2^2$ regularization for $\bm{w}$ and $\ell_{1}$ regularization for $\bm{W}$.
    \Cref{thm:equivalence_tisfm_qr} states that the optimal $\Omega_{\mathrm{TI}}$-sparse FMs are equivalent (or better in the sense of the objective value) to the optimal QR with such regularizations when the rank hyperparameter $k$ is sufficiently large.
    We also obtain a similar relationship between $\Omega_{\mathrm{TI}}$ -sparse FMs and $\Omega_*$-sparse FMs.
    It is shown in~\cref{subsec:proof_analysis}.
    Note that $\Omega_{\mathrm{TI}}$-sparse FMs can be regarded as TI-sparse FMs ($\tilde{\ell}_{1,2}^2$-sparse FMs) when $\lambda_p \ge \tilde{\lambda}_p / 2$.
    
    \begin{restatable}{thm}{equivalenceqr}
        \label{thm:equivalence_tisfm_qr}
        $L_{\mathrm{QR}}(\bm{w}, \bm{W}; \lambda_{w})$ be the objective function of the QR with $\ell_2^2$ regualrization for $\bm{w}$:
        \begin{align}
            L_{\mathrm{QR}}(\bm{w}, \bm{W}; \lambda_{w}) \coloneqq \sum_{n=1}^N \ell (f_{\mathrm{QR}}(\bm{x}_n), y_n)/N + \lambda_{w} \norm{\bm{w}}_2^2.
        \end{align}
        Then, for any $\lambda_{w}, \lambda_{p}, \tilde{\lambda}_{p} \ge 0$, there exists $k' \le d(d-1)/2$ such that for all $k \ge k'$,
        \begin{align}
            & \min_{\bm{w}\in \real^d, \bm{P} \in \real^{d \times k}} L_{\mathrm{FM}}(\bm{w}, \bm{P}; \lambda_{w}, \lambda_{p}) + \tilde{\lambda}_{p}\Omega_{\mathrm{TI}}(\bm{P}) \le \min_{\bm{w}\in \real^d, \bm{W} \in \real^{d \times d}} L_{\mathrm{QR}}(\bm{w}, \bm{W}; \lambda_{w}) + (\tilde{\lambda}_{p}+2\lambda_p) \norm{\bm{W}}_{1}.
            \label{eq:equivalence_tisfm_qr}
        \end{align}
        Moreover, if $\lambda_{p}=0$, the equality holds, and $f_{\mathrm{FM}}(\bm{x}; \bm{w}^*_{\mathrm{TI}}, \bm{P}^*_{\mathrm{TI}}) = f_{\mathrm{QR}}(\bm{x}; \bm{w}^*_{\mathrm{QR}}, \bm{W}^*_{\mathrm{QR}})$ for all $\bm{x} \in \real^d$, where $\{\bm{w}^*_{\mathrm{TI}}, \bm{P}^*_{\mathrm{TI}}\}$ and $\{\bm{w}^*_{\mathrm{QR}}, \bm{W}^*_{\mathrm{QR}}\}$ are the solutions of the left- and right-hand sides, respectively, of Equation~\eqref{eq:equivalence_tisfm_qr}.
    \end{restatable}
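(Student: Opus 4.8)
The plan is to prove both directions of~\eqref{eq:equivalence_tisfm_qr} by explicit weight-transfer constructions between the QR parametrization $\bm{W}$ and the factorized parametrization $\bm{P}$, using throughout that the QR prediction~\eqref{eq:quadratic} depends on $\bm{W}$ only through its strictly upper triangular entries $\{w_{j_1,j_2}:j_1<j_2\}$, whereas $\|\bm{W}\|_1=\sum_{j_1,j_2}|w_{j_1,j_2}|\ge\sum_{j_1<j_2}|w_{j_1,j_2}|$. From this I would first argue that a minimizer $\{\bm{w}^*_{\mathrm{QR}},\bm{W}^*_{\mathrm{QR}}\}$ of the right-hand side may be taken strictly upper triangular, so that $\|\bm{W}^*_{\mathrm{QR}}\|_1=\sum_{j_1<j_2}|w^*_{j_1,j_2}|$; this reduction is what ties $\|\bm{W}\|_1$ to $\Omega_*$.

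For the ``$\le$'' direction I would build $\bm{P}$ one rank-one piece at a time: for each pair $j_1<j_2$ with $w^*_{j_1,j_2}\neq 0$ introduce a column of $\bm{P}$ supported on $\{j_1,j_2\}$ with entries $\sqrt{|w^*_{j_1,j_2}|}$ and $\sign(w^*_{j_1,j_2})\sqrt{|w^*_{j_1,j_2}|}$. Using at most $d(d-1)/2$ such columns (padding with zero columns when $k$ is larger) gives $\bm{P}\in\real^{d\times k}$ with $(\bm{P}\bm{P}^\top)_{j_1,j_2}=w^*_{j_1,j_2}$ for every $j_1<j_2$, since distinct pairs use distinct columns and a column supported on $\{j_1,j_2\}$ contributes to no other off-diagonal position; hence $f_{\mathrm{FM}}(\bm{x};\bm{w}^*_{\mathrm{QR}},\bm{P})=f_{\mathrm{QR}}(\bm{x};\bm{w}^*_{\mathrm{QR}},\bm{W}^*_{\mathrm{QR}})$ for all $\bm{x}$. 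The bookkeeping I would then carry out is that this $\bm{P}$ has $\|\bm{P}\|_2^2=2\sum_{j_1<j_2}|w^*_{j_1,j_2}|=2\|\bm{W}^*_{\mathrm{QR}}\|_1$ (the $s$-th column has squared norm $2|w^*_{j_1,j_2}|$) and $\Omega_{\mathrm{TI}}(\bm{P})=\sum_{j_1<j_2}|w^*_{j_1,j_2}|=\|\bm{W}^*_{\mathrm{QR}}\|_1$; substituting into $L_{\mathrm{FM}}(\bm{w}^*_{\mathrm{QR}},\bm{P};\lambda_w,\lambda_p)+\tilde\lambda_p\Omega_{\mathrm{TI}}(\bm{P})$ and matching the loss and linear terms reproduces exactly $L_{\mathrm{QR}}(\bm{w}^*_{\mathrm{QR}},\bm{W}^*_{\mathrm{QR}};\lambda_w)+(\tilde\lambda_p+2\lambda_p)\|\bm{W}^*_{\mathrm{QR}}\|_1$, so taking the minimum over $\bm{w},\bm{P}$ on the left yields~\eqref{eq:equivalence_tisfm_qr} with $k'\le d(d-1)/2$.

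For the ``$\ge$'' direction when $\lambda_p=0$, and for the prediction-matching claim, I would use the opposite map: given any $\bm{w},\bm{P}$, let $\bm{W}$ be the strictly upper triangular part of $\bm{P}\bm{P}^\top$, i.e.\ $w_{j_1,j_2}=\inner{\bm{p}_{j_1}}{\bm{p}_{j_2}}$ for $j_1<j_2$ and $0$ otherwise. Then $f_{\mathrm{QR}}(\bm{x};\bm{w},\bm{W})=f_{\mathrm{FM}}(\bm{x};\bm{w},\bm{P})$, the term $\|\bm{w}\|_2^2$ is unchanged, and $\|\bm{W}\|_1=\sum_{j_1<j_2}|\inner{\bm{p}_{j_1}}{\bm{p}_{j_2}}|=\Omega_*(\bm{P})\le\Omega_{\mathrm{TI}}(\bm{P})$ by the triangle inequality recorded in~\eqref{eq:ti}; hence $L_{\mathrm{QR}}(\bm{w},\bm{W};\lambda_w)+\tilde\lambda_p\|\bm{W}\|_1\le L_{\mathrm{FM}}(\bm{w},\bm{P};\lambda_w,0)+\tilde\lambda_p\Omega_{\mathrm{TI}}(\bm{P})$, and minimizing both sides gives the reverse inequality when $\lambda_p=0$, hence equality. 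For the prediction statement I would combine the two maps: applying the construction of the previous paragraph to a QR minimizer gives an FM minimizer (its objective value equals the common optimum) with identical predictions, and applying the map of this paragraph to a TI minimizer gives a QR minimizer with identical predictions.

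The main obstacle is not any single calculation but the interplay between the positive-semidefiniteness implicit in $\bm{W}=\bm{P}\bm{P}^\top$ and the $\ell_2^2$ penalty on $\bm{P}$: the factorization is forced to carry ``diagonal mass'' that has no counterpart on the QR side, and the content of the theorem is that with the per-interaction construction this mass equals exactly $2\|\bm{W}^*_{\mathrm{QR}}\|_1$, so it is precisely absorbed by the coefficient $2\lambda_p$ in the QR penalty (and vanishes entirely when $\lambda_p=0$, which is what upgrades the bound to an equality and makes $k'\le d(d-1)/2$ sharp). Getting that accounting exact — together with the Step-1 reduction that lets one discard the diagonal and lower triangle of $\bm{W}$ so that $\|\bm{W}\|_1$ collapses onto $\Omega_*\le\Omega_{\mathrm{TI}}$ — is the crux; everything else is routine substitution.
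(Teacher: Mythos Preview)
Your proposal is correct and follows essentially the same approach as the paper: both construct $\bm{P}$ by allocating one column per interaction pair $(j_1,j_2)$ with entries $\pm\sqrt{|w_{j_1,j_2}|}$ on rows $j_1,j_2$, verify $\Omega_{\mathrm{TI}}(\bm{P})=\|\bm{W}\|_1$ and $\|\bm{P}\|_2^2=2\|\bm{W}\|_1$, and obtain the reverse inequality at $\lambda_p=0$ via $\Omega_*\le\Omega_{\mathrm{TI}}$. Your presentation is in fact more direct than the paper's, which routes the same construction through an auxiliary $d\times d^2$ matrix before deleting its $d(d+1)/2$ zero columns.
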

    
    We next consider TI ($\tilde{\ell}_{1,2}^2$) regularized objective function, i.e., $L_{\mathrm{FM}}(\bm{w}, \bm{P}; \lambda_{w}, \lambda_{p}) + \tilde{\lambda}_{p} \norm{\bm{P}^\top}_{1,2}^2$.
    Since $\norm{\bm{P}^\top}_{1,2}^2 = \norm{\bm{P}}_{2}^2 + 2 \Omega_{\mathrm{TI}}(\bm{P})$, it can be written as the $\Omega_{\mathrm{TI}}$ regularized objective function: $L_{\mathrm{FM}}(\bm{w}, \bm{P}; \lambda_{w}, \lambda_{p}+\tilde{\lambda}_{p}) + 2\tilde{\lambda}_{p} \Omega_{\mathrm{TI}}(\bm{P})$.
    The optimization problem of the TI regularized objective function can be written as
    \begin{align}
        &\min_{\bm{w}\in \real^d, \bm{P} \in \real^{d \times k}} L_{\mathrm{FM}}(\bm{w}, \bm{P}; \lambda_{w}, \lambda_{p}) + \tilde{\lambda}_{p} \norm{\bm{P}^\top}_{1,2}^2
        \label{eq:squaredl12_regularized_problem}\\
        &= \min_{\bm{w}\in \real^d, \bm{P} \in \real^{d \times k}} L_{\mathrm{QR}}\left(\bm{w}, \bm{P}\bm{P}^\top; \lambda_{w}\right) +  \sum_{s=1}^k \left\{\lambda_{p} \norm{\bm{p}_{:, s}}_{2}^2 + \tilde{\lambda}_{p}\norm{\bm{p}_{:, s}}_{1}^2\right\},\\
        & = \min_{\bm{w}\in \real^d, \bm{W} \in \real^{d \times d}} L_{\mathrm{QR}}(\bm{w}, \bm{W}; \lambda_{w}) + \Omega_{\mathrm{MF}}\left(\bm{W}; k, \lambda_p, \tilde{\lambda}_p\right),
    \end{align}
    where
    \begin{align}
         \Omega_{\mathrm{MF}}\left(\bm{W}; k, \lambda_p, \tilde{\lambda}_p\right) \coloneqq \inf_{r\in [k]} \inf_{\substack{\bm{U}, \bm{V} \in \real^{d \times r},\\ \bm{U}\bm{V}^\top=\bm{W}}} \sum_{s=1}^r \left\{\lambda_{p} \norm{\bm{u}_{:, s}}_{2}^2 + \tilde{\lambda}_{p}\norm{\bm{u}_{:, s}}_{1}^2 + I_{\{\bm{u}_{:,s}\}}(\bm{v})\right\},
    \end{align}
    and $I_C: \real^d \to \{0, \infty\}$ is the indicator function on $C \subseteq \real^d$.
    When the rank hyperparameter $k$ is sufficiently large, $\Omega_{\mathrm{MF}}(\bm{W}; k, \lambda_p, \tilde{\lambda}_p)$ satisfies the condition as matrix factorization regularizer~\citep{haeffele2019structured}.
    Thus, from Proposition 2, Theorem 1, and Theorem 2 in~\citep{haeffele2019structured}, we conclude that all local minima of the TI regularized objective function are global minima when $k$ is sufficiently large.
    \begin{thm}[\citet{haeffele2019structured}]
        When $k \ge d^2$, all local minima of~\eqref{eq:squaredl12_regularized_problem} are global minima.
    \end{thm}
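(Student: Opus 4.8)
The plan is to obtain this statement as a direct instance of the structured low-rank matrix factorization framework of \citet{haeffele2019structured}, applied to the convex reformulation of~\eqref{eq:squaredl12_regularized_problem} as $\min_{\bm{w}, \bm{W}} L_{\mathrm{QR}}(\bm{w}, \bm{W}; \lambda_w) + \Omega_{\mathrm{MF}}(\bm{W}; k, \lambda_p, \tilde{\lambda}_p)$ derived above. First I would identify the two ingredients of their framework. The loss is $F(\bm{w}, \bm{W}) \coloneqq L_{\mathrm{QR}}(\bm{w}, \bm{W}; \lambda_w)$, which is jointly convex and continuously differentiable in $(\bm{w}, \bm{W})$ since $f_{\mathrm{QR}}$ is affine in $(\bm{w}, \bm{W})$ and $\ell$ is $\mu$-smooth and convex. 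The column-wise factorization penalty is $\theta(\bm{u}, \bm{v}) \coloneqq \lambda_p \norm{\bm{u}}_2^2 + \tilde{\lambda}_p \norm{\bm{u}}_1^2 + I_{\{\bm{u}\}}(\bm{v})$, i.e., the symmetric-factorization penalty $g(\bm{u}) \coloneqq \lambda_p\norm{\bm{u}}_2^2 + \tilde{\lambda}_p\norm{\bm{u}}_1^2$ together with the hard constraint $\bm{v} = \bm{u}$ that enforces $\bm{W} = \bm{P}\bm{P}^\top$, so that $\Omega_{\mathrm{MF}}$ is exactly their variational matrix-factorization regularizer built from $\theta$.

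Next I would check that $\theta$ meets the structural hypotheses their results require: (i) $\theta \ge 0$; (ii) $\theta$ is positively homogeneous of degree two, $\theta(a\bm{u}, a\bm{v}) = a^2\theta(\bm{u}, \bm{v})$ for $a > 0$, which is immediate from the $2$-homogeneity of $\norm{\cdot}_2^2$ and $\norm{\cdot}_1^2$ and from the fact that the diagonal $\{\bm{v} = \bm{u}\}$ is a cone; (iii) $\theta$ is lower semicontinuous with $\theta(\bm{u}, \bm{v}) = 0$ iff $\bm{u} = \bm{v} = \bm{0}$ (using that $g$ is a positive combination of squared norms, hence positive definite whenever $\lambda_p + \tilde{\lambda}_p > 0$); and (iv) $g$ grows quadratically, $g(\bm{u}) \ge \lambda_p\norm{\bm{u}}_2^2$, which makes the infimum defining $\Omega_{\mathrm{MF}}$ attained. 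With these in place, \citet{haeffele2019structured}'s Proposition~2 shows that for $k$ large enough $\Omega_{\mathrm{MF}}(\cdot; k, \lambda_p, \tilde{\lambda}_p)$ is a proper convex lower-semicontinuous function of $\bm{W}$ that no longer depends on $k$, with the infimum attained by finitely many rank-one terms; hence $F(\bm{w}, \bm{W}) + \Omega_{\mathrm{MF}}(\bm{W}; k, \ldots)$ is convex in $(\bm{w}, \bm{W})$ and~\eqref{eq:squaredl12_regularized_problem} is precisely the associated non-convex factorized problem with finite factorization size $k$. Their Theorem~1 then provides the key mechanism: any local minimizer $(\bm{w}^*, \bm{P}^*)$ of~\eqref{eq:squaredl12_regularized_problem} whose factor $\bm{P}^*$ has a zero column is a global minimizer, because the corresponding $(\bm{w}^*, \bm{P}^*\bm{P}^{*\top})$ then minimizes the convex surrogate. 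Finally, their Theorem~2 supplies the dimension count: since $\bm{W}$ ranges over (symmetric) $d \times d$ matrices, the attained representation in $\Omega_{\mathrm{MF}}$ uses at most $d^2$ rank-one terms (a Carathéodory-type bound; $\frac{d(d+1)}{2} \le d^2$ in the symmetric case), so once $k \ge d^2$ every local minimizer must have a zero column, and combined with Theorem~1 this yields that every local minimum of~\eqref{eq:squaredl12_regularized_problem} is a global minimum.

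I expect the main obstacle to be verifying — rather than merely quoting — that our particular $\theta$ lies in the penalty class of \citet{haeffele2019structured}: the $\tilde{\ell}_1$-type term $\tilde{\lambda}_p\norm{\bm{u}}_1^2$ and especially the hard symmetry constraint encoded by $I_{\{\bm{u}\}}(\bm{v})$ must still produce a well-defined, attained, convex $\Omega_{\mathrm{MF}}$ for large $k$, and one must confirm the extra non-factorized block $\bm{w}$ is permitted. The latter is routine, as $F$ is jointly convex and their framework admits additional unfactorized convex variables (alternatively, when $\lambda_w > 0$ one can profile out $\bm{w}$ to get a convex, continuously differentiable $F(\bm{W})$). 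The homogeneity, positive-definiteness, and coercivity checks for $g$ are elementary because $g$ is a nonnegative combination of squares of norms; the only genuinely delicate point is the interplay of the symmetry constraint with the attainment and convexity claims of their Proposition~2, which is where I would spend most of the effort.
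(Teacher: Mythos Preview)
Your proposal is correct and follows exactly the route the paper takes: the paper's ``proof'' consists solely of rewriting~\eqref{eq:squaredl12_regularized_problem} via $\Omega_{\mathrm{MF}}$ and then invoking Proposition~2, Theorem~1, and Theorem~2 of \citet{haeffele2019structured}, which is precisely your plan. If anything, you are more explicit than the paper about checking the structural hypotheses on $\theta$ (homogeneity, positive definiteness, and the indicator-encoded symmetry constraint), whereas the paper simply asserts that the conditions of the matrix-factorization-regularizer framework are met.
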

    Note that for any $m \in \natu_{> 0}$ all local minima of $\tilde{\ell}_{m,2}^2$ regularized optimization problem are global minima under the same condition.

\subsection{PGD/PSGD-based Algorithm for TI Regularizer}
    \label{subsec:pgd_ti}
    Because the TI regularizer is continuous but non-differentiable, we consider the use of the PGD algorithm for optimizing TI-sparse FMs similarly to $\ell_1$-sparse FMs and $\ell_{2,1}$-sparse FMs.
    The PGD algorithm for TI-sparse FMs solves the following optimization problem at $t$-th iteration:
    \begin{align}
        \min_{\bm{Q}\in \real^{d\times k}} \inner{\nabla\ell_{\mathcal{D}}^{(t-1)}}{\bm{Q}-\bm{P}^{(t-1)}} + \frac{1}{2\eta}\norm{\bm{Q}-\bm{P}^{(t-1)}}_2^2+ \lambda_{p} \norm{\bm{Q}}_2^2 + \tilde{\lambda}_{p} \norm{\bm{Q}^\top}_{1,2}^2,
        \label{eq:pgd_ti}
    \end{align}
    where $\nabla\ell_{\mathcal{D}}^{(t-1)} \coloneqq \sum_{n=1}^N\nabla\ell\left(f_{\mathrm{FM}}\left(\bm{x}_n; \bm{P}^{(t-1)}\right), y_n\right) / N$ and $\bm{P}^{(t-1)}$ is the parameter matrix $\bm{P}$ after $(t-1)$-th iteration (we omit $\bm{w}^{(t-1)}$ for simplicity).
    Fortunately, the TI regularizer is separable w.r.t each column vector: $\norm{\bm{P}^\top}_{1,2}^2 = \sum_{s=1}^k \norm{\bm{p}_{:, s}}_1^2$.
    We thus can separate the optimization problem~\eqref{eq:pgd_ti} w.r.t each column as
    \begin{align}
        \min_{\bm{q}_{:, s}\in \real^{d}} \inner{\nabla_{\bm{p}_{:, s}}\ell_{\mathcal{D}}^{(t-1)}}{\bm{q}_{:,s}-\bm{p}_{:, s}^{(t-1)}} + \frac{1}{2\eta}\norm{\bm{q}_{:,s}-\bm{p}_{:, s}^{(t-1)}}_2^2+ \lambda_{p} \norm{\bm{q}_{:,s}}_2^2 + \tilde{\lambda}_{p} \norm{\bm{q}_{:, s}}_{1,2}^2,
        \label{eq:pgd_ti_separate}
    \end{align}
    where $\nabla_{\bm{p}_{:,s}}\ell_{\mathcal{D}}^{(t-1)} \coloneqq \sum_{n=1}^N\nabla_{\bm{p}_{:,s}} \ell(f_{\mathrm{FM}}(\bm{x}_n; \bm{P}^{(t-1)}), y_n) / N$ and solve this separated problem for all $s \in [k]$ at $t$-th iteration.
    Therefore, the PGD algorithm for TI-sparse FMs surely solves the following type of optimization problem for all $s \in [k]$:
    \begin{align}
        \prox_{\lambda \norm{\cdot}_1^2}(\bm{p}) = \argmin_{\bm{q} \in \real^{d}} \frac{1}{2} \norm{\bm{q}-\bm{p}}_2^2 + \lambda \norm{\bm{q}}_1^2 \eqqcolon \bm{q}^{*},
        \label{eq:prox_ti}
    \end{align}
    where $\lambda \ge 0$.
    The following theorem gives us the insight needed for constructing an algorithm for computing this proximal operator~\citep{filipe2011online}.
    \begin{thm}[\citet{filipe2011online}]
        \label{thm:solution_prox_ti}
        Assume that $\bm{p}\in \real^d$ is sorted in descending order by absolute value: $\abs{p_1} \ge \abs{p_2} \ge \cdots \ge \abs{p_d}$.
        Then, the solution to the proximal problem~\eqref{eq:prox_ti} $\bm{q}^* \in \real^d$ is
        \begin{align}
            q^*_j = \sign(p_j)\max\{\abs{p_j} - 2\lambda S_{\theta}, 0\} \ \forall j \in [d],
            \label{eq:solution_prox_ti}
        \end{align}
        where $\theta = \max\{j : \abs{p_j} - 2\lambda S_j \ge 0\}$, and $S_j = \sum_{i=1}^j \abs{p_i}/(1+2\lambda j)$.
    \end{thm}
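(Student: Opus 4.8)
The plan is to characterize $\bm{q}^*$ through its subgradient optimality condition, deduce that $\bm{q}^*$ must be a soft-thresholding of $\bm{p}$ at a single, a priori unknown level $\tau^* = 2\lambda\norm{\bm{q}^*}_1$, and then pin down $\tau^*$ by solving a one-dimensional piecewise-linear fixed-point equation whose breakpoints are the sorted values $\abs{p_1} \ge \cdots \ge \abs{p_d}$.

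First I would observe that $F(\bm{q}) \coloneqq \frac{1}{2}\norm{\bm{q}-\bm{p}}_2^2 + \lambda\norm{\bm{q}}_1^2$ is strictly convex and coercive, so its minimizer $\bm{q}^*$ is unique and is characterized by $\bm{0} \in \bm{q}^* - \bm{p} + \partial\big(\lambda\norm{\cdot}_1^2\big)(\bm{q}^*)$. By the chain rule for the nondecreasing convex outer map $t \mapsto t^2$ composed with $\norm{\cdot}_1 \ge 0$, we have $\partial\big(\lambda\norm{\cdot}_1^2\big)(\bm{q}) = 2\lambda\norm{\bm{q}}_1\,\partial\norm{\bm{q}}_1$. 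Setting $\tau^* \coloneqq 2\lambda\norm{\bm{q}^*}_1 \ge 0$ and reading the condition coordinate by coordinate: if $q_j^* \neq 0$ it becomes $q_j^* + \tau^*\sign(q_j^*) = p_j$, whose left side is a nonzero real of sign $\sign(q_j^*)$ and magnitude $\abs{q_j^*}+\tau^*$, forcing $\sign(q_j^*) = \sign(p_j)$ and $\abs{q_j^*} = \abs{p_j} - \tau^* > 0$; if $q_j^* = 0$ it becomes $\abs{p_j} \le \tau^*$. In either case $q_j^* = \sign(p_j)\max\{\abs{p_j}-\tau^*, 0\}$, so $\bm{q}^*$ is exactly the soft-threshold of $\bm{p}$ at level $\tau^*$.

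It then remains to identify $\tau^*$. Taking $\ell_1$ norms of the last formula yields the consistency relation $\tau^* = 2\lambda\,\phi(\tau^*)$, where $\phi(\tau) \coloneqq \sum_{j=1}^d \max\{\abs{p_j}-\tau, 0\}$. Put $g(\tau) \coloneqq \tau - 2\lambda\,\phi(\tau)$; since $\phi$ is continuous, nonincreasing and piecewise linear, $g$ is continuous and strictly increasing with $g(0) = -2\lambda\norm{\bm{p}}_1 \le 0$ and $g(\abs{p_1}) = \abs{p_1} \ge 0$, hence has a unique root $\tau^* \in [0, \abs{p_1}]$. On each bracket $I_j \coloneqq [\abs{p_{j+1}}, \abs{p_j}]$ (with $\abs{p_{d+1}} \coloneqq 0$) the ordering of $\bm{p}$ gives $\phi(\tau) = \sum_{i=1}^j\abs{p_i} - j\tau$, so the affine piece of $g$ on $I_j$ vanishes at $\tau = 2\lambda S_j$ with $S_j = \sum_{i=1}^j\abs{p_i}/(1+2\lambda j)$. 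The task is to show that $\theta = \max\{j : \abs{p_j} - 2\lambda S_j \ge 0\}$ is precisely the bracket containing the root, i.e.\ $\abs{p_{\theta+1}} \le 2\lambda S_\theta \le \abs{p_\theta}$. The right inequality is the definition of $\theta$, and $\theta \ge 1$ because $\abs{p_1} - 2\lambda S_1 = \abs{p_1}/(1+2\lambda) \ge 0$. For the left inequality (when $\theta < d$; if $\theta = d$ then $2\lambda S_d \ge 0 = \abs{p_{d+1}}$), I would use the one-step identity $S_{\theta+1} = \big((1+2\lambda\theta)S_\theta + \abs{p_{\theta+1}}\big)/\big((1+2\lambda\theta)+2\lambda\big)$, which displays $S_{\theta+1}$ as a strict convex combination of $S_\theta$ and $\abs{p_{\theta+1}}/(2\lambda)$; maximality of $\theta$ gives $2\lambda S_{\theta+1} > \abs{p_{\theta+1}}$, and a strict convex combination that exceeds one of its two endpoints must have its other endpoint exceed it too, so $2\lambda S_\theta > \abs{p_{\theta+1}}$. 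Hence $2\lambda S_\theta \in I_\theta$ is the unique root of $g$, so $\tau^* = 2\lambda S_\theta$ and the soft-thresholding formula is exactly~\eqref{eq:solution_prox_ti}. The degenerate cases $\bm{p}=\bm{0}$ and $\lambda=0$ are immediate and consistent with the formula.

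I expect the last step --- the combinatorial identification of the index $\theta$ --- to be the only real obstacle, everything earlier being standard subdifferential calculus; the key move there is to recognize $S_{\theta+1}$ as a convex combination of $S_\theta$ and $\abs{p_{\theta+1}}/(2\lambda)$ and to exploit the maximality in the definition of $\theta$.
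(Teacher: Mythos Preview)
Your proof is correct and follows essentially the same route as the paper (which proves the general $\ell_1^m$ case in the appendix and specializes to $m=2$): derive the soft-thresholding form from the optimality condition using $\partial\norm{\cdot}_1^2 = 2\norm{\cdot}_1\,\partial\norm{\cdot}_1$, then pin down the threshold level via the self-consistency relation. Your identification of $\theta$ through the one-step convex-combination identity $S_{\theta+1} = \alpha S_\theta + (1-\alpha)\,\abs{p_{\theta+1}}/(2\lambda)$ is a slightly cleaner variant of the paper's monotonicity-of-$S_j$ plus contradiction argument, but the overall structure is the same.
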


    This theorem states that, for arbitrary $\bm{p} \in \real^d$, the proximal operator~\eqref{eq:prox_ti} can be computed in $O(d \log d)$ time by first sorting $\bm{p}$ by absolute value and then computing $S_j$ for all $j \in [d]$ and $\theta$.
    In fact, this proximal operator can be evaluated in $O(d)$ time in expectation by randomized-median-finding-like method~\citep{duchi2008efficient}.
    For more detail, please see our appendix (\cref{alg:prox_ti_sort} and~\cref{alg:prox_ti_randomize} in~\cref{sec:impl}).
    The proximal operator of $\ell_1$ regularizer~\eqref{eq:prox_l1} shrinks each element in $\bm{p}_{:,s}$: $q^{*}_{j,s} = 0$ if $\abs{p_{j,s}} \le \lambda$ for all $j \in [d]$ and the threshold $\lambda$ does not depend on $\bm{P}$.
    Similarly, the proximal operator of $\tilde{\ell}_{1,2}^{2}$~\eqref{eq:prox_ti} also shrinks each element in $\bm{p}_{:,s}$.
    However, its threshold depends on $\bm{p}_{:,s}$: $q^{*}_{j,s} = 0$ if $\abs{p_{j,s}} \le 2\lambda S_{\theta}$ and $S_{\theta}$ depends on $\bm{p}_{:,s}$.
    That is, intuitively, $\tilde{\ell}_{1,2}^2$ regularizer shrinks elements of relatively small absolute value among $p_{1, s}, \ldots, p_{d, s}$ to $0$.

    Clearly, one can construct a proximal SGD (PSGD) algorithm by replacing $\nabla\ell_{\mathcal{D}}^{(t-1)}$ in~\eqref{eq:pgd_ti} with a stochastic gradient.
    The PSGD-based algorithms are typically more useful than the PGD-based (i.e., batch) algorithms when the number of instances $N$ is large~\citep{bottou2012stochastic,bottou2018optimization,allen2018natasha}.
    For the (not necessarily convex) smooth optimization problem $\min_{\bm{z}\in \real^d} \sum_{n=1}^N f_i(\bm{z})/N, f_i: \real^d \to \real$ for all $i \in [N]$, the GD and the SGD require respectively $O\left(1/\varepsilon^2\right)$ and $O\left(V/\varepsilon^4\right)$ iterations to get an $\varepsilon$-approximate stationary point~\citep{allen2018natasha}, where $V$ is the upper bound of the variance of the stochastic gradients.
    In most cases, the GD requires to compute $N$ gradients while the SGD requires one gradient at each iteration.
    Thus, when $N$ is large and $\varepsilon$ is moderate, the SGD is superior to the GD.
    Unfortunately, the PSGD algorithm and its variants with~\cref{alg:prox_ti_randomize} cannot leverage the sparsity of data: they require $O(dk)$ time for each iteration and $O(Ndk)$ time for each epoch even if a dataset $\bm{X}$ is sparse.
    The $O(dk)$ computational cost is due to the evaluation of the proximal operator (\Cref{alg:prox_ti_randomize}).
    A workaround for this issue is the use of mini-batches.
    The use of mini-batch reduces the variance of the stochastic gradient and it hence reduces the number of iteration for convergence, but in general it increases the computational cost for each iteration.
    In our setting, if one chooses a mini-batch such that its number of non-zero elements is $O(d)$, the mini-batch PSGD also runs in $O(dk)$ for one iteration, that is, the use of appropriate-size-mini-batches can reduce the variance of the stochastic gradients without changing the computational complexity for one iteration.
    However, it does not solve the issue completely: the cost for one iteration, $O(dk)$, is not so improved compared to the that of the PGD algorithm $O(\nnz{\bm{X}}k)$ when $\bm{X}$ is very sparse.
    Thus, the (mini-batch) PSGD algorithm should be used only when $\nnz{\bm{X}}/d$ is large.

\subsection{Efficient PCD Algorithm for TI Regularizer}
    \label{subsec:pcd_ti}
    Here, we present an efficient PCD algorithm for TI-sparse FMs, which is often used for minimizing the objective function with non-smooth regularization and has several advantages compared to the PGD algorithm introduced in~\cref{subsec:pgd_ti}.
    Firstly, it does not require tuning nor using line search technique for the step size, and this is its most important advantage compared with the PGD/PSGD-based algorithms.
    Secondly, it can leverage sparsity of data: it runs in $O(\nnz{\bm{X}}k)$ for one epoch.
    Thirdly, it is easy to implement: its implementation is simple and almost the same as the CD algorithm for canonical FMs and the PCD algorithm for $\ell_1$-sparse FMs.
    Fourthly, it can be easily extended to other related models as shown in~\cref{sec:extension}.
    Strictly speaking, $\tilde{\ell}_{1,2}^2$ is not separable w.r.t $p_{1,1}, \ldots, p_{d, k}$ and thus the convergence of the PCD algorithm is not guaranteed.
    However, actually, it doesn't matter much that the convergence of the PCD algorithm is not guaranteed.
    Because the objective function of FMs is non-convex, a global minimum solution cannot be obtained even if the PGD algorithm is used.
    
    Because $\norm{\bm{P}^\top}_{1,2}^2 = 2\Omega_{\mathrm{TI}}(\bm{P}) + \norm{\bm{P}}_2^2$ and $\norm{\bm{P}}_2^2$ can be taken into $L_{\mathrm{FM}}$, for simplify we consider $\Omega_{\mathrm{TI}}$ regularized objective function $L_{\mathrm{FM}}(\bm{w}, \bm{P}; \lambda_{w}, \lambda_{p})+\tilde{\lambda}_{p}\Omega_{\mathrm{TI}}(\bm{P})$, focusing on one parameter $p_{j,s}$ as the optimized parameter.
    Then, the $\Omega_{\mathrm{TI}}$ regularizer can be regarded as a $\ell_1$ regularizer such that its regularization strength is $\sum_{i\in [d] \setminus \{j\}}|p_{i, s}|$:
    \begin{align}
        \Omega_{\mathrm{TI}}(\bm{P}) = \left(\sum_{i\in [d] \setminus \{j\}}|p_{i, s}|\right)|p_{j, s}| + \mathrm{const}.
    \end{align}
    Therefore, given this regularization strength, $\sum_{i\in [d] \setminus \{j\}}\abs{p_{i, s}}$, the procedure of the PCD algorithm for the $\Omega_{\mathrm{TI}}$ (and of course TI ($\tilde{\ell}_{1,2}^2$)) regularized objective function is the same as that for the $\ell_1$ regularized objective function.
    Fortunately, by caching $c_{s} \coloneqq \sum_{j=1}^d \abs{p_{j, s}}$ before updating $p_{1, s}\ldots p_{d, s}$, the algorithm can compute $\sum_{i\in [d] \setminus \{j\}}\abs{p_{i, s}}$ in $O(1)$ time at each iteration. Given $c_s$, one can compute $\sum_{i\in [d] \setminus \{j\}}\abs{p_{i, s}}$ by using $c_s - \abs{p_{j, s}}$, and $c_s$ for the next iteration (i.e., for updating $p_{j+1, s}$) can be computed by using $c_s - \abs{p_{j, s}^{\mathrm{old}}} + \abs{p_{j, s}^{\mathrm{new}}}$.
    \Cref{alg:pcd_ti} shows the procedure for the PCD algorithm for the TI regularized objective function.
    The computational cost of~\cref{alg:pcd_ti} for one epoch is $O(\nnz{\bm{X}}k)$, which is the same as those of the CD and SGD algorithms for canonical FMs.
    For more detail about the CD algorithm for canonical FMs, please see~\citep{blondel2016higher,rendle2011fast,rendle2012factorization} or~\cref{sec:impl}.
    
    \begin{algorithm}[t]
        \caption{PCD algorithm for TI-sparse FMs}
        \label{alg:pcd_ti}
        \begin{algorithmic}[1]
            \Input{$\{(\bm{x}_n, \bm{y}_n) \}_{n=1}^{N}$, $k \in \natu_{>0}$, $\lambda_{w}, \lambda_{p}, \tilde{\lambda}_{p} \ge 0$}
            \State{Initialize $\bm{P} \in \real^{d\times k}$, $\bm{w} \in \real^d$;}
            \State{Compute caches as in canonical FM;}
            \While{not convergence}
                \State{Optimize $\bm{w}$ and update caches as in canonical FM;}
                \For{$s=1, \ldots, k$}
                    \State{\textcolor{blue}{$c_s \leftarrow \sum_{j=1}^d \abs{p_{j, s}}$};}\Comment{Cache for $\Omega_{\mathrm{TI}}$}
                    \For{$j=1, \ldots, d$}
                        \State{\textcolor{blue}{$c_s \leftarrow c_s - \abs{p_{j, s}}$};}\Comment{$c_s = \sum_{i \in [d] \setminus \{j\}}\abs{p_{i,s}}$}
                        \State{$\eta \leftarrow (\mu \sum_{n \in \supp(\bm{x}_{:, j})}(\partial f_{\mathrm{FM}}(\bm{x}_n)/\partial p_{j,s})^2/N + 2\lambda_{p})^{-1}$;}
                        \State{Update $p_{j, s}$ as in canonical FM;}
                        \State{\textcolor{blue}{$p_{j, s} \leftarrow \prox_{\eta \tilde{\lambda}_{p} c_s \abs{\cdot}}(p_{j, s})=\sign(p_{j,s})\max (\abs{p_{j, s}} - \eta\tilde{\lambda}_{p} c_s, 0)$};}
                        \State{Update caches as in canonical FM;}
                        \State{\textcolor{blue}{$c_s \leftarrow c_s + \abs{p_{j, s}}$};} \Comment{Update cache for $p_{j+1, s}$}
                    \EndFor
                \EndFor
            \EndWhile
            \Output{Learned $\bm{P}$ and $\bm{w}$}
        \end{algorithmic}
    \end{algorithm}

\section{CS Upper Bound Regularizer}
    \label{sec:cs}
    We next propose using $\ell_{2,1}^2$ as an upper bound regularizer of $\Omega_*$.
    Because $\ell_{2,1}^2$ is an upper bound of $\Omega_*$ and its corresponding proximal operator outputs row-wise sparse $\bm{P}$ (we will see it in~\cref{subsec:pgd_cs}), it can be better for feature selection in FMs (i.e., it can select better features in terms of the prediction performance) than the existing regularizers and TI regularizer.
    We call this regularizer $\ell_{2,1}^2$ regularizer or Cauchy–Schwarz (CS) regularizer and call FMs with this regularization $\ell_{2,1}^2$-sparse FMs or CS-sparse FMs since this regularization is derived using the Cauchy–Schwarz inequality:
    \begin{align}
         \frac{1}{2}\norm{\bm{P}}_{2,1}^2 &= \sum_{j_2>j_1}\norm{\bm{p}_{j_2}}_2\norm{\bm{p}_{j_1}}_2 + \frac{1}{2}\norm{\bm{P}}_2^2 \\
         &\eqqcolon \Omega_{\mathrm{CS}}(\bm{P}) + \frac{1}{2}\norm{\bm{P}}_2^2 \ge \Omega_{*}(\bm{P}) + \frac{1}{2}\norm{\bm{P}}_2^2.
        \label{eq:cs}
    \end{align}
    
\subsection{PGD/PSGD-based Algorithm for CS Regularizer}
    \label{subsec:pgd_cs}
    Similarly to TI regularizer, we consider the use of the PGD algorithm for optimizing CS-sparse FMs.
    The PGD algorithm for CS-sparse FMs requires to compute the following proximal operator for a $d \times k$ matrix:
    \begin{align}
        \prox_{\lambda \norm{\cdot}_{2,1}^2}(\bm{P}) = \argmin_{\bm{Q} \in \real^{d \times k}} \frac{1}{2} \norm{\bm{Q}-\bm{P}}_2^2 + \lambda \norm{\bm{Q}}_{2,1}^2.
        \label{eq:prox_cs}
    \end{align}
    The following theorem states that the proximal operator~\eqref{eq:prox_cs} can be computed in $O(dk + d \log d)$ by~\cref{alg:prox_ti_sort} or $O(dk)$ time in expectation by~\cref{alg:prox_ti_randomize}.
    \begin{thm}
        \label{thm:solution_prox_cs}
       The solution to the proximal problem~\eqref{eq:prox_cs} $\bm{Q}^* \in \real^{d \times k}$ is
        \begin{align}
            \bm{q}^*_j = \begin{cases}
                \frac{c^*_j}{\norm{\bm{p}_j}_2}\bm{p}_j & \norm{\bm{p}_j}_2 \neq 0,\\
                \bm{0} & \norm{\bm{p}_j}_2 = 0,
            \end{cases}
            \label{eq:solution_prox_cs}
        \end{align}
        where $\bm{c}^*=\prox_{\lambda \norm{\cdot}_1^2}\left(\left(\norm{\bm{p}_1}_2, \ldots, \norm{\bm{p}_d}_2\right)^\top\right)$.
    \end{thm}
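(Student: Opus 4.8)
The plan is to reduce the matrix-valued proximal problem~\eqref{eq:prox_cs} to the vector-valued proximal problem $\prox_{\lambda\norm{\cdot}_1^2}$ of \cref{thm:solution_prox_ti}, exploiting that $\norm{\bm{Q}}_{2,1}$ sees $\bm{Q}$ only through its vector of row norms $\bm{c} \coloneqq (\norm{\bm{q}_1}_2, \ldots, \norm{\bm{q}_d}_2)^\top$, so that $\norm{\bm{Q}}_{2,1}^2 = \norm{\bm{c}}_1^2$. Since $\norm{\cdot}_{2,1}$ is a norm and $t\mapsto t^2$ is convex and nondecreasing on $\real_{\ge 0}$, the objective in~\eqref{eq:prox_cs} is $1$-strongly convex, hence has a unique minimizer $\bm{Q}^*$; it therefore suffices to exhibit one $\bm{Q}$ attaining the minimum.

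First I would rewrite the problem as a nested minimization,
\begin{align}
\min_{\bm{Q}\in\real^{d\times k}} \frac12\norm{\bm{Q}-\bm{P}}_2^2 + \lambda\norm{\bm{Q}}_{2,1}^2 = \min_{\bm{c}\in\real^{d}_{\ge 0}} \left( \lambda\norm{\bm{c}}_1^2 + \min_{\bm{Q}\,:\,\norm{\bm{q}_j}_2 = c_j\,\forall j} \frac12\sum_{j=1}^d \norm{\bm{q}_j - \bm{p}_j}_2^2 \right),
\end{align}
which is legitimate because every $\bm{Q}$ has a well-defined nonnegative row-norm vector. For fixed $\bm{c}$ the inner problem decouples over rows; expanding $\norm{\bm{q}_j-\bm{p}_j}_2^2 = c_j^2 - 2\inner{\bm{q}_j}{\bm{p}_j} + \norm{\bm{p}_j}_2^2$ and applying the Cauchy--Schwarz inequality $\inner{\bm{q}_j}{\bm{p}_j}\le c_j\norm{\bm{p}_j}_2$ (whence the name of the regularizer) shows the row-$j$ term is bounded below by $(c_j - \norm{\bm{p}_j}_2)^2$, with equality when $\bm{q}_j = (c_j/\norm{\bm{p}_j}_2)\bm{p}_j$ if $\bm{p}_j\neq\bm{0}$, and for any $\bm{q}_j$ of norm $c_j$ if $\bm{p}_j=\bm{0}$. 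Hence the outer problem reduces to
\begin{align}
\min_{\bm{c}\in\real^{d}_{\ge 0}} \frac12\norm{\bm{c}-\bm{b}}_2^2 + \lambda\norm{\bm{c}}_1^2, \qquad \bm{b}\coloneqq(\norm{\bm{p}_1}_2, \ldots, \norm{\bm{p}_d}_2)^\top.
\end{align}

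Next I would remove the sign constraint: since $\bm{b}\in\real^{d}_{\ge 0}$, the closed form of \cref{thm:solution_prox_ti} gives each coordinate of $\prox_{\lambda\norm{\cdot}_1^2}(\bm{b})$ as $\sign(b_j)\max\{\abs{b_j} - 2\lambda S_{\theta}, 0\}\ge 0$, so the unconstrained minimizer of $\tfrac12\norm{\bm{c}-\bm{b}}_2^2 + \lambda\norm{\bm{c}}_1^2$ is already feasible for the constrained problem and therefore solves it; write $\bm{c}^* = \prox_{\lambda\norm{\cdot}_1^2}(\bm{b})$. Substituting back through the row-wise optimality condition yields $\bm{q}^*_j = (c^*_j/\norm{\bm{p}_j}_2)\bm{p}_j$ when $\norm{\bm{p}_j}_2\neq 0$; and when $\norm{\bm{p}_j}_2 = 0$ we have $b_j = 0$, hence $c^*_j = 0$, forcing $\bm{q}^*_j = \bm{0}$. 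This is exactly~\eqref{eq:solution_prox_cs}. I expect the only delicate point to be the bookkeeping for zero rows --- reconciling the case $\bm{p}_j = \bm{0}$ (where the row-wise minimizer is non-unique until $c_j$ is pinned down, but the optimal $c_j^*$ is forced to $0$) with the case $c_j^* = 0$, $\bm{p}_j\neq\bm{0}$ (where the stated formula still produces $\bm{q}^*_j = \bm{0}$) --- together with confirming that the nested-minimization rewriting loses nothing.
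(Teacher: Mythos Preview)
Your proof is correct and takes a genuinely different route from the paper's. The paper (which actually proves the more general \cref{thm:solution_prox_higher_order_cs} and obtains \cref{thm:solution_prox_cs} as the case $m=2$) first develops the subdifferential $\partial\norm{\bm{P}}_{2,1}^m = m\norm{\bm{P}}_{2,1}^{m-1}\partial\norm{\bm{P}}_{2,1}$ (\cref{lemm:subdifferential_pow_norm}, \cref{coro:subdifferential_pow_l21}), then reads off from the first-order optimality condition $\bm{0}\in\bm{q}_j^*-\bm{p}_j+\lambda m\norm{\bm{Q}^*}_{2,1}^{m-1}\partial\norm{\bm{q}_j^*}_2$ that each nonzero $\bm{q}_j^*$ is a nonnegative multiple of $\bm{p}_j$, and only then substitutes $\bm{q}_j^* = C_j^*\bm{p}_j$ to reduce to the scalar problem. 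Your nested-minimization argument bypasses the subdifferential machinery entirely: you parametrize $\bm{Q}$ by its row-norm vector, solve the inner sphere-constrained least-squares row by row via Cauchy--Schwarz, and land directly on the same reduced problem $\min_{\bm{c}\ge\bm{0}}\tfrac12\norm{\bm{c}-\bm{b}}_2^2+\lambda\norm{\bm{c}}_1^2$. Your approach is more elementary and self-contained for this theorem; the paper's subdifferential route is heavier up front but pays off in uniformity, since the same \cref{lemm:subdifferential_pow_norm} drives the higher-order versions \cref{thm:solution_prox_higher_order_ti} and \cref{thm:solution_prox_higher_order_cs}. One small point where your write-up is actually more careful than the paper's: you explicitly justify dropping the constraint $\bm{c}\ge\bm{0}$ by invoking the sign structure in \cref{thm:solution_prox_ti}, whereas the paper passes from $\argmin_{\bm{c}\in\real^d_{\ge 0}}$ to $\argmin_{\bm{c}\in\real^d}$ without comment.
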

    The proximal operator of $\ell_{2,1}$~\eqref{eq:prox_l21} shrinks row vectors in $\bm{P}$: $\bm{q}^*_j = \bm{0}$ if $\norm{\bm{p}_j} \le \lambda$ and the threshold $\lambda$ does not depend on $\bm{P}$, so it sets row vectors of absolutely small $\ell_2$ norm to be $\bm{0}$.
    Since $\bm{c}^* = \prox_{\lambda \norm{\cdot}_1^2}\left(\left(\norm{\bm{p}_1}_2, \ldots, \norm{\bm{p}_d}_2\right)^\top\right)$ can be sparse, $\ell_{2,1}^{2}$~\eqref{eq:prox_cs} also shrinks row vectors in $\bm{P}$: if $\norm{\bm{p}_j}_2$ is smaller than the threshold, then $\bm{q}^*_j = \bm{0}$.
    However, the threshold depends on $\bm{P}$: $\bm{q}^{*}_{j} = \bm{0}$ if $\norm{\bm{p}_j}_2 \le 2\lambda S_{\theta}$ and $S_{\theta}$ depends on $\norm{\bm{p}_1}_2, \ldots, \norm{\bm{p}_{d}}_2$ as described in~\cref{thm:solution_prox_ti}.
    That is, $\ell_{2,1}^2$ regularizer shrinks row vectors of relatively small $\ell_2$ norm among $\bm{p}_1, \ldots, \bm{p}_{d}$ to $\bm{0}$.
    Clearly, CS-sparse FMs also cannot achieve feature interaction selection like $\ell_{2,1}$-sparse FMs~\citep{xu2016synergies}: they produce a row-wise sparse $\bm{P}$.
    However, the CS regularizer is more useful than the $\ell_{2,1}$ one for feature selection in FMs since it is also an upper bound of the $\ell_{2,1}$ norm for $\bm{P}\bm{P}^\top$ without diagonal elements, which seems appropriates for feature selection in FMs.
    
    Unfortunately, PSGD-based algorithms for the CS regularizer using~\cref{alg:prox_ti_randomize} cannot leverage the sparsity of dataset $\bm{X}$.
    Thus, PSGD-based algorithms should be used only when the number of instances is large and dataset $\bm{X}$ is dense like those for the TI regularized objective function as described in~\cref{sec:ti}.

\subsection{PBCD Algorithm for CS Regularizer}
    \label{subsec:pbcd_cs}
    We here propose an efficient PBCD algorithm for CS-sparse FMs that optimizes each row vector in $\bm{P}$ at each iteration.
    Like the PCD algorithm for TI-sparse FMs, strictly speaking, $\ell_{2,1}^2$ is not separable w.r.t $\bm{p}_1, \ldots, \bm{p}_d$ and thus it should not be used but it has several advantages.

    We consider $\Omega_{\mathrm{CS}}$ regularized objective function $L_{\mathrm{FM}}(\bm{w}, \bm{P}; \lambda_{w}, \lambda_{p})+\tilde{\lambda}_{p} \Omega_{\mathrm{CS}}(\bm{P})$ because $\norm{\bm{P}}_2^2$ can be taken into $L_{\mathrm{FM}}$ as in~\cref{subsec:pgd_ti}, focusing on the $j$-th row vector in $\bm{P}$ as the optimized parameter.
    Then, the $\Omega_{\mathrm{CS}}$ regularizer can be regarded as a $\ell_{2}$ regularizer such that its regularization strength is $\sum_{i \in [d]\setminus \{j\}}\norm{\bm{p}_i}_2$:
    \begin{align}
        \Omega_{\mathrm{CS}}(\bm{P}) = \left(\sum_{i \in [d]\setminus\{j\}}\norm{\bm{p}_i}_2\right)\norm{\bm{p}_j}_2 + \mathrm{const}.
    \end{align}
    Therefore, as the PCD algorithm for TI-sparse FMs is almost the same as that for $\ell_1$-sparse FMs, the PBCD algorithm for CS-sparse FMs is almost the same as that for $\ell_{2,1}$-sparse FMs~\citep{xu2016synergies}.
    Our remaining task is to design an algorithm for computing regularization strength $\sum_{i \in [d]\setminus\{j\}}\norm{\bm{p}_i}_2$ in $O(k)$ time.
    Fortunately, given $c \coloneqq \sum_{i=1}^d \norm{\bm{p}_i}_2$, we can compute $c_j \coloneqq \sum_{i \in [d]\setminus\{j\}}\norm{\bm{p}_i}_2$ in $O(k)$ time as $c_j = c - \norm{\bm{p}_j}_2$.
    The gradient of $\sum_{n=1}^N \ell(f(\bm{x}_n), y_n)/N$ w.r.t $\bm{p}_j$ is Lipschitz continuous with constant $\mu \sum_{n \in \supp(\bm{x}_{:, j})}\norm{\nabla_{\bm{p}_j}f_{\mathrm{FM}}(\bm{x}_n)}_2^2/N$ since FMs are linear w.r.t $\bm{p}_j$.
    Thus, for determining step size $\eta$, we do not have to use the line search technique~\citep{tseng2009coordinate}, which might require a high computational cost.
    For more detail, please see~\cref{sec:impl}.
    
\section{Extensions to Related Models}
    \label{sec:extension}
    In this section, we extend $\Omega_{\mathrm{TI}}, \tilde{\ell}_{1,2}^2, \Omega_{\mathrm{CS}}$ and $\ell_{2,1}^2$ to other related models.
    
\subsection{Higher-order FMs}
    \citet{blondel2016higher} proposed higher-order FMs (HOFMs), which use not only second-order feature interactions but also higher-order feature interactions.
    $M$-order HOFMs predict the target of $\bm{x}$ as
    \begin{align}
        f^M_{\mathrm{HOFM}}(\bm{x}; \bm{w}, \bm{P}^{(2)}, \ldots, \bm{P}^{(M)}) \coloneqq \inner{\bm{x}}{\bm{w}} + \sum_{m=2}^M \sum_{s=1}^k\anova^m\left(\bm{x}, \bm{p}^{(m)}_{:, s}\right),
        \label{eq:hofm}
    \end{align}
    where $\bm{P}^{(2)}, \ldots, \bm{P}^{(M)} \in \real^{d \times k}$ are learnable parameters for $2, \ldots, M$-order feature interactions, respectively, and $\anova^m: \real^d \times \real^d \to \real$ is the $m$-order ANOVA kernel:
    \begin{align}
        \anova^m(\bm{x}, \bm{p}) \coloneqq \sum_{j_m > j_{m-1} > \cdots > j_1}x_{j_1}p_{j_1}\cdots x_{j_m}p_{j_m}.
        \label{eq:anova}
    \end{align}
    $M$-order HOFMs clearly use from second to $M$-order feature interactions.
    Although the evaluation of HOFMs~\eqref{eq:hofm} seems to take $O\left(d^mk\right)$ time at first glance, it can be completed in $O\left(dkM^2\right)$ (strictly speaking, $O(\nnz{\bm{x}}kM^2)$) time since $m$-order ANOVA kernels can be evaluated in $O(dm)$ (strictly speaking, $O(\nnz{\bm{x}}m)$) time by using dynamic programming~\citep{blondel2016higher,shawe2004kernel}.
    \citet{blondel2016higher} also proposed efficient CD and SGD-based algorithms.

    \subsubsection{Extension of $\Omega_{\mathrm{TI}}$ and $\tilde{\ell}_{1,2}^2$ for HOFMs}
    The output of $M$-order HOFMs~\eqref{eq:hofm} can be rearranged as
    \begin{align}
        f^M_{\mathrm{HOFM}}(\bm{x}) \coloneqq \inner{\bm{x}}{\bm{w}} + \sum_{m=2}^M \sum_{j_m>\cdots >j_1} w_{j_1, \ldots, j_m} x_{j_1}\cdots x_{j_m},\quad \text{where } w_{j_1, \ldots, j_m} = \sum_{s=1}^k p^{(m)}_{j_1, s}\cdots p^{(m)}_{j_m, s}.
        \label{eq:hofm_pr}
    \end{align}
    Thus, we extend $\Omega_{*}$ and $\Omega_{\mathrm{TI}}$ for higher-order feature interactions as
    \begin{align} 
        \Omega_{*}^{m}(\bm{P}) \coloneqq \sum_{j_m > \cdots > j_1}\abs{\sum_{s=1}^kp_{j_1, s}\cdots p_{j_m, s}} \le \sum_{j_m > \cdots > j_1}\sum_{s=1}^k\abs{p_{j_1, s}\cdots p_{j_m, s}} \eqqcolon \Omega_{\mathrm{TI}}^{m}(\bm{P}).
        \label{eq:Omega_higher_order_ti}
    \end{align}
    Obviously, $\Omega_{\mathrm{TI}} = \Omega_{\mathrm{TI}}^2$ holds.
    We hence propose using the following regularization for higher-order feature interactions in $M$-order HOFMs:
    \begin{align}
        \Omega_{\mathrm{TI}}^2\left(\bm{P}^{(2)}\right) + \cdots + \Omega_{\mathrm{TI}}^M\left(\bm{P}^{(M)}\right).
    \end{align}
    $\Omega_{\mathrm{TI}}^m$ can be represented by the ANOVA kernel:
    \begin{align}
        \Omega_{\mathrm{TI}}^m(\bm{P}) =\sum_{s=1}^k \anova^m\left(\absop\left(\bm{p}_{:, s}\right), \bm{1}\right).
        \label{eq:sum_of_Omega_higher_order_ti}
    \end{align}
    By using multi-linearity of the ANOVA kernel, we can rewrite $\Omega_{\mathrm{TI}}^m(\bm{P})$ as
    \begin{align}
        \Omega_{\mathrm{TI}}^m(\bm{P}) = \abs{p_{j, s}}\frac{\partial}{\partial \abs{p_{j, s}}}\anova^m\left(\absop\left(\bm{p}_{:, s}\right), \bm{1}\right) + \mathrm{const}.
    \end{align}
    Hence, $\Omega_{\mathrm{TI}}^m(\bm{P})$ is also regarded as $\ell_1$ regularization for one parameter $p_{j, s}$, like $\Omega_{\mathrm{TI}}$, and the PCD algorithm for the $\Omega_{\mathrm{TI}}^m$ regularized objective function is almost the same as that for TI-sparse FMs: at each iteration, the algorithm first updates the parameter as in the canonical HOFMs and next applies $\prox_{\eta\tilde{\lambda}_p \cdot c\abs{\cdot}}$ to updated $p^{(m)}_{j, s}$, where $c=\partial\anova^m\left(\absop\left(\bm{p}^{(m)}_{:, s}\right), \bm{1}\right)/ \partial \abs{p^{(m)}_{j, s}}$.
    Given $\anova^m\left(\absop(\bm{p}^{(m)}_{:, s}), \bm{1}\right)$ and additional caches, we can compute $\partial \anova^m\left(\absop\left(\bm{p}^{(m)}_{:, s}\right), \bm{1}\right) / \partial \abs{p^{(m)}_{j, s}}$ in $O\left(m\right)$ time~\citep{atarashi2020link}.
    Therefore, we can update $\bm{p}_{j, s}^{(m)}$ in $O\left(m\nnz{\bm{x}_{:, j}}k\right)$ time, which is the same as that for the PCD algorithm for canonical HOFMs.
    
   Next, we consider the extension of $\tilde{\ell}_{1,2}^2$ for higher-order feature interactions.
   We first present a generalization of~\cref{thm:bound_by_square_of_norm}.
   \begin{restatable}{thm}{boundbypowernorm}
        \label{thm:bound_by_power_of_norm}
        For any $m$-homogeneous quasi-norm $\Omega^m$, there exists $C > 0$ such that $\Omega_{*}^m(\bm{P}) \le C \Omega^m(\bm{P})$ for all $\bm{P} \in \real^{d \times k}$.
    \end{restatable}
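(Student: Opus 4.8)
The plan is to mimic the structure of the proof of \cref{thm:bound_by_square_of_norm}: first bound $\Omega_*^m$ above by a fixed power of a concrete norm, then use \cref{thm:pow_of_quasi_norm} together with the fact that on the finite-dimensional space $\real^{d\times k}$ every quasi-norm dominates the $\ell_1$ norm up to a positive constant. The Cauchy--Schwarz step appropriate for $m=2$ gets replaced by an elementary-symmetric-polynomial (AM--GM type) estimate.

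\textbf{Step 1: bound $\Omega_*^m$ by a power of $\norm{\cdot}_1$.} Using the triangle inequality inside the absolute value exactly as in~\eqref{eq:Omega_higher_order_ti},
\begin{align}
\Omega_*^m(\bm{P}) \le \Omega_{\mathrm{TI}}^m(\bm{P}) = \sum_{s=1}^k \sum_{j_m > \cdots > j_1} \abs{p_{j_1, s}}\cdots \abs{p_{j_m, s}}.
\end{align}
For fixed $s$ the inner sum is the $m$-th elementary symmetric polynomial of $\abs{p_{1,s}}, \dots, \abs{p_{d,s}}$; expanding $(\sum_{j=1}^d \abs{p_{j,s}})^m$ into monomials, which are all nonnegative, each squarefree monomial appears exactly $m!$ times, so the inner sum is at most $\frac{1}{m!}\norm{\bm{p}_{:,s}}_1^m$. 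Summing over $s$ and using $\sum_{s} a_s^m \le (\sum_s a_s)^m$ for $a_s\ge 0$ together with $\sum_{s}\norm{\bm{p}_{:,s}}_1 = \norm{\bm{P}}_1$ then gives $\Omega_*^m(\bm{P}) \le \frac{1}{m!}\norm{\bm{P}}_1^m$.

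\textbf{Step 2: reduce the quasi-norm to $\norm{\cdot}_1$.} By \cref{thm:pow_of_quasi_norm} I would write $\Omega^m = (\norm{\cdot}')^m$ for a quasi-norm $\norm{\cdot}'$ with quasi-triangle constant $K$, and establish a constant $c>0$ with $c\norm{\bm{P}}_1 \le \norm{\bm{P}}'$ for all $\bm{P}$. Expanding $\bm{P}$ in the coordinate matrices $\bm{E}_{j,s}$ and applying the quasi-triangle inequality $dk-1$ times gives the easy half, $\norm{\bm{P}}' \le C_0\norm{\bm{P}}_1$ with $C_0 = K^{dk-1}\max_{j,s}\norm{\bm{E}_{j,s}}'$. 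For the other half, if $\inf\{\norm{\bm{P}}' : \norm{\bm{P}}_1 = 1\} = 0$ I would take $\bm{P}_n$ with $\norm{\bm{P}_n}_1=1$ and $\norm{\bm{P}_n}'\to 0$, pass (by compactness of the $\ell_1$ unit sphere) to a $\norm{\cdot}_1$-convergent subsequence with limit $\bm{P}_*$, $\norm{\bm{P}_*}_1=1$, and bound $\norm{\bm{P}_*}' \le K\norm{\bm{P}_n}' + KC_0\norm{\bm{P}_*-\bm{P}_n}_1 \to 0$, forcing $\bm{P}_* = \bm{0}$ --- a contradiction. Hence $\norm{\bm{P}}_1^m \le c^{-m}\Omega^m(\bm{P})$, and combining with Step 1 yields $\Omega_*^m(\bm{P}) \le \frac{1}{m!\,c^m}\,\Omega^m(\bm{P})$ for all $\bm{P}$ (trivially also at $\bm{P}=\bm{0}$), which is the claim with $C = 1/(m!\,c^m)$.

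The main obstacle is the lower bound $c\norm{\bm{P}}_1 \le \norm{\bm{P}}'$, since a quasi-norm is not assumed continuous and so need not attain a positive minimum on a sphere directly. The way around it is exactly as above: prove the trivial upper bound $\norm{\cdot}' \le C_0\norm{\cdot}_1$ first and then use it only to control the residual $\norm{\bm{P}_*-\bm{P}_n}'$ at the limit point, which is enough to close the compactness argument; this is implicitly what the proof of \cref{thm:bound_by_square_of_norm} does for $m=2$. Everything else --- the symmetric-polynomial estimate and the final arithmetic --- is routine.
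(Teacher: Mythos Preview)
Your proposal is correct and follows essentially the same route as the paper: bound $\Omega_*^m$ by $\Omega_{\mathrm{TI}}^m \le \norm{\bm{P}}_1^m$ (your Step~1, with the sharper constant $1/m!$), then invoke \cref{thm:pow_of_quasi_norm} and the equivalence of all quasi-norms on $\real^{d\times k}$ (your Step~2). The only cosmetic difference is that the paper isolates the quasi-norm equivalence as a separate lemma (\cref{lemm:equivalence_of_quasi_norm}) with the same compactness-plus-quasi-triangle argument you sketch inline, and the paper compares against $\Omega_\infty$ rather than $\norm{\cdot}_1$ as the reference norm.
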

    Thus, we simply propose the use of $\tilde{\ell}_{1, m}^m$ as an extension of $\tilde{\ell}_{1,2}^2$ for $\bm{P}^{(m)}$, $m \in \{2, \ldots, M\}$.
    Because $\tilde{\ell}_{1, m}^m$ is column-wise separable, PGD/PSGD-based algorithms for HOFMs with $\tilde{\ell}_{1, m}^m$ require to evaluate the following proximal operator:
    \begin{align}
        \prox_{\lambda \norm{\cdot}_{1}^m}(\bm{p}_{:, s}) = \argmin_{\bm{q} \in \real^d} \frac{1}{2} \norm{\bm{p}-\bm{q}} + \lambda \norm{\bm{q}}_1^m
        \label{eq:prox_higher_order_ti}
    \end{align}
    For this proximal operator, we show a generalization of~\cref{thm:solution_prox_ti}.
    \begin{restatable}{thm}{proxhigherorderti}
        \label{thm:solution_prox_higher_order_ti}
        Assume that $\bm{p}\in \real^d$ is sorted in descending order by absolute value: $\abs{p_1} \ge \abs{p_2} \ge \cdots \ge \abs{p_d}$.
        Then, the solution to the proximal problem~\eqref{eq:prox_higher_order_ti} $\bm{q}^* \in \real^d$ is
        \begin{align}
            q^*_j &= \begin{cases}
            \sign(p_j)\left[|p_j| - \lambda m S_{\theta}^{m-1} \right]& j \le \theta,\\
            0 & \text{otherwise},
            \end{cases}
            \label{eq:solution_prox_higher_order_ti}
        \end{align}
        \begin{align}
            S_{j} \in \left[0, \sum_{i=1}^{j}|p_i|\right] \ \text{s.t.}\ \lambda m j S_{j}^{m-1} + S_{j} - \sum_{i=1}^j |p_i|=0
            \label{eq:condition_S}
        \end{align}
        and $\theta = \max\{j : |p_j| - \lambda m S_j^{m-1} \ge 0\}$.
    \end{restatable}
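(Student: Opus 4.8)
The plan is to prove optimality of the stated $\bm{q}^{*}$ by checking the Karush--Kuhn--Tucker (KKT) conditions of the convex proximal problem~\eqref{eq:prox_higher_order_ti}; since its objective is strictly convex in $\bm{q}$ (the quadratic term is strictly convex and $\lambda\norm{\bm{q}}_1^m$ is convex), any KKT point is its unique minimizer. First I would remove signs: replacing $q_j$ by $\sign(p_j)\abs{q_j}$ does not increase $\frac{1}{2}(p_j-q_j)^2$ and leaves $\norm{\bm{q}}_1$ unchanged, so the minimizer has the form $q_j^{*}=\sign(p_j)r_j$ with $\bm{r}\ge\bm{0}$ solving $\min_{\bm{r}\ge\bm{0}}\frac{1}{2}\sum_j(\abs{p_j}-r_j)^2+\lambda(\sum_j r_j)^m$. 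Writing $t=\sum_j r_j$, the stationarity conditions of this reduced problem are $-(\abs{p_j}-r_j)+\lambda m t^{m-1}=\nu_j$, $\nu_j\ge 0$, $\nu_j r_j=0$, equivalently $r_j=\max\{\abs{p_j}-\lambda m t^{m-1},0\}$ subject to the self-consistency $t=\sum_j r_j$. So the whole statement reduces to showing this fixed point equals $S_\theta$ and that the active coordinates are exactly $1,\dots,\theta$.

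Next I would record basic facts about $S_j$. For fixed $j$, $h_j(S)\coloneqq\lambda m jS^{m-1}+S-\sum_{i=1}^{j}\abs{p_i}$ is continuous and strictly increasing on $[0,\infty)$ with $h_j(0)\le 0$ and $h_j(\sum_{i=1}^{j}\abs{p_i})\ge 0$, so~\eqref{eq:condition_S} has a unique solution $S_j\in[0,\sum_{i=1}^j\abs{p_i}]$. The key identity is the recursion $h_{j+1}(S)=h_j(S)+\lambda m S^{m-1}-\abs{p_{j+1}}$. Using it with the sorting $\abs{p_1}\ge\cdots\ge\abs{p_d}$, I would show the active set $A\coloneqq\{j:\abs{p_j}-\lambda m S_j^{m-1}\ge 0\}$ is an initial segment $\{1,\dots,\theta\}$: if $j\in A$ and $j\ge 2$, then $h_{j-1}(S_j)=\abs{p_j}-\lambda m S_j^{m-1}\ge 0$, hence $S_{j-1}\le S_j$, hence $\abs{p_{j-1}}\ge\abs{p_j}\ge\lambda m S_j^{m-1}\ge\lambda m S_{j-1}^{m-1}$, so $j-1\in A$; and $1\in A$ whenever $\bm{p}\ne\bm{0}$ because $\lambda m S_1^{m-1}=\abs{p_1}-S_1\le\abs{p_1}$. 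This identifies $\theta=\max A$ with the index in the statement and gives $S_1\le\cdots\le S_\theta$.

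The crucial step, which I expect to be the main obstacle, is to show that the single threshold $\tau\coloneqq\lambda m S_\theta^{m-1}$ separates the coordinates correctly, i.e.\ $\abs{p_j}\le\tau$ for every $j>\theta$; by the sorting it is enough to prove $\abs{p_{\theta+1}}\le\tau$ when $\theta<d$. Suppose instead $\abs{p_{\theta+1}}>\tau$. The recursion gives $h_{\theta+1}(S_\theta)=\tau-\abs{p_{\theta+1}}<0$, so $S_{\theta+1}>S_\theta$. Subtracting the defining equations of $S_\theta$ and $S_{\theta+1}$ yields
\begin{align}
\abs{p_{\theta+1}}-\lambda m S_{\theta+1}^{m-1}=\lambda m\theta\left(S_{\theta+1}^{m-1}-S_\theta^{m-1}\right)+\left(S_{\theta+1}-S_\theta\right),
\end{align}
whose right-hand side is strictly positive since $S_{\theta+1}>S_\theta\ge 0$ and $m\ge 2$; hence $\theta+1\in A$, contradicting $\theta=\max A$. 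Therefore $\abs{p_{\theta+1}}\le\tau$, while for $j\le\theta$ we already have $\abs{p_j}\ge\abs{p_\theta}\ge\tau$ because $\theta\in A$.

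To finish, I would take $t=S_\theta$. From $h_\theta(S_\theta)=0$ one gets $\sum_{j=1}^{\theta}(\abs{p_j}-\tau)=S_\theta$, so the candidate $r_j=\max\{\abs{p_j}-\tau,0\}$, which equals $\abs{p_j}-\tau$ for $j\le\theta$ and $0$ for $j>\theta$ by the previous paragraph, satisfies $\sum_j r_j=S_\theta=t$; stationarity then holds with $\nu_j=0$ for $j\le\theta$ and $\nu_j=\tau-\abs{p_j}\ge 0$ for $j>\theta$, and $\nu_j r_j=0$ in all cases. By strict convexity, restoring signs shows $q_j^{*}=\sign(p_j)r_j$ is the unique minimizer, which is exactly~\eqref{eq:solution_prox_higher_order_ti}; the degenerate case $\bm{p}=\bm{0}$ (so $\bm{q}^{*}=\bm{0}$ and $\theta=d$) is immediate. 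Everything besides the third paragraph is routine sign-reduction and KKT bookkeeping.
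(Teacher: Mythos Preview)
Your proposal is correct and follows essentially the same route as the paper: both arguments reduce to the first-order optimality conditions, establish existence/uniqueness and the monotonicity $S_1\le\cdots\le S_\theta$ on the active set, and pin down $\theta$ by the same contradiction (showing that $|p_{\theta+1}|>\lambda m S_\theta^{m-1}$ would force $\theta+1\in A$). The only cosmetic difference is that you first strip signs and apply KKT to the smooth constrained problem in $\bm r\ge\bm 0$, whereas the paper works directly with the subdifferential $\partial\|\cdot\|_1^m=m\|\cdot\|_1^{m-1}\partial\|\cdot\|_1$ after separately proving the sign- and order-preservation properties of $\bm q^{*}$.
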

    Like~\eqref{eq:prox_ti},~\eqref{eq:prox_higher_order_ti} can be evaluated in $O(d \log d)$ time or $O(d)$ time in expectation if $S_j$ can be computed in $O(1)$.
    Unfortunately, $S_j$ in~\eqref{eq:condition_S} cannot be computed analytically when $m > 5$.
    However, HOFMs with $M=3$ is typically recommended~\citep{blondel2016higher} and then it can be computed analytically.
    Even if $m > 5$, one can approximately compute $S_j$ by using a numerical method, e.g., Newton's method.
    
    \subsubsection{Extension of $\Omega_{\mathrm{CS}}$ and $\ell_{2,1}^2$ for HOFMs}
    We next extend $\Omega_{\mathrm{CS}}$ for higher-order feature interactions as
    \begin{align}
        \Omega_{\mathrm{CS}}^{m}(\bm{P}) \coloneqq \sum_{j_m > \cdots > j_1}\norm{\bm{p}_{j_1}}_2\norm{\bm{p}_{j_{2}}}_2\cdots \norm{\bm{p}_{j_m}}_2 = \anova^m\left((\norm{\bm{p}_1}_2, \ldots, \norm{\bm{p}_{d}}_2)^\top, \bm{1}\right),
        \label{eq:Omega_higher_order_cs}
    \end{align}
    and we propose using the following regularization for (order-wise) feature selection in $M$-order HOFMs:
    \begin{align}
        \Omega_{\mathrm{CS}}^2\left(\bm{P}^{(2)}\right) + \cdots + \Omega_{\mathrm{CS}}^M\left(\bm{P}^{(M)}\right),
        \label{eq:sum_of_Omega_higher_order_cs}
    \end{align}
    Clearly, $\Omega_{\mathrm{CS}} = \Omega_{\mathrm{CS}}^2$ holds.
    By using the multi-linearity, we can write it as
    \begin{align}
        \Omega_{\mathrm{CS}}^{m}(\bm{P})  = \norm{\bm{p}_{j}}_2\frac{\partial}{\partial \norm{\bm{p}_j}_{2}} \anova^m\left((\norm{\bm{p}_1}_2, \ldots, \norm{\bm{p}_{d}}_2)^\top, \bm{1}\right) + \mathrm{const}.
    \end{align}
    Therefore, $\Omega_{\mathrm{CS}}^m(\bm{P}^{(m)})$ is also regarded as $\ell_{2,1}$ regularization for one row vector $\bm{p}_j^{(m)}$ like $\Omega_{\mathrm{CS}}$.
    Thus, the PBCD algorithm for HOFMs with~\eqref{eq:sum_of_Omega_higher_order_cs} can be extended similarly as the PCD algorithm for HOFMs with~\eqref{eq:sum_of_Omega_higher_order_ti}.
    
    Like the extension of $\tilde{\ell}_{1, 2}^2$, we simply propose using $\ell_{2, 1}^m$ as an extension of $\ell_{2, 1}^2$.
    Then, PSGD/PGD-based algorithm requires to evaluate the following proximal operator for $\bm{P}^{(m)}$:
    \begin{align}
        \prox_{\lambda \norm{\cdot}_{2,1}^m}(\bm{P}) = \argmin_{\bm{Q} \in \real^{d \times k}} \frac{1}{2} \norm{\bm{Q}-\bm{P}}_2^2 + \lambda \norm{\bm{Q}}_{2,1}^m.
        \label{eq:prox_higher_order_cs}
    \end{align}
    The following is a generalization of~\cref{thm:solution_prox_cs} and states that~\eqref{eq:prox_higher_order_cs} can be evaluated analytically in $O(dk)$ when $m < 6$.
    \begin{restatable}{thm}{proxhigherordercs}
        \label{thm:solution_prox_higher_order_cs}
        The solution to the proximal problem~\eqref{eq:prox_higher_order_cs} $\bm{Q}^* \in \real^{d \times k}$ is
        \begin{align}
            \bm{q}^*_j = \begin{cases}
                \frac{c^*_j}{\norm{\bm{p}_j}_2}\bm{p}_j & \norm{\bm{p}_j}_2 \neq 0,\\
                \bm{0} & \norm{\bm{p}_j}_2 = 0,
            \end{cases}
            \label{eq:solution_prox_higher_order_cs}
        \end{align}
        where $\bm{c}^*=\prox_{\lambda \norm{\cdot}_1^{\textcolor{blue}{m}}}\left(\left(\norm{\bm{p}_1}_2, \ldots, \norm{\bm{p}_d}_2\right)^\top\right)$.
    \end{restatable}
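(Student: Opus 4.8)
The plan is to reduce the $(d\times k)$-dimensional problem \eqref{eq:prox_higher_order_cs} to the $d$-dimensional problem \eqref{eq:prox_higher_order_ti} on the vector of row norms, exactly as in the proof of \cref{thm:solution_prox_cs} for $m=2$. First I would record two facts: (i) $\norm{\bm{Q}}_{2,1}^m$ depends on $\bm{Q}$ only through $\bm{\rho}(\bm{Q})\coloneqq(\norm{\bm{q}_1}_2,\ldots,\norm{\bm{q}_d}_2)^\top$, since $\norm{\bm{Q}}_{2,1}^m=\norm{\bm{\rho}(\bm{Q})}_1^m$; and (ii) $\norm{\cdot}_{2,1}^m$ is convex, being the composition of the nondecreasing convex map $t\mapsto t^m$ on $\real_{\ge 0}$ with the norm $\norm{\cdot}_{2,1}$, so the objective in \eqref{eq:prox_higher_order_cs} is strongly convex and has a unique minimizer. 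It therefore suffices to exhibit one $\bm{Q}^*$ of the claimed form and verify optimality.

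Next I would split the minimization as $\min_{\bm{Q}\in\real^{d\times k}}=\min_{\bm{r}\in\real^d_{\ge 0}}\min_{\bm{Q}:\,\bm{\rho}(\bm{Q})=\bm{r}}$. For fixed $\bm{r}$ the inner problem decouples over rows, and for each $j$ a Cauchy--Schwarz computation gives $\min_{\norm{\bm{q}_j}_2=r_j}\frac{1}{2}\norm{\bm{q}_j-\bm{p}_j}_2^2=\frac{1}{2}(r_j-\norm{\bm{p}_j}_2)^2$: expanding the square, minimizing $\frac12\norm{\bm{q}_j-\bm{p}_j}_2^2$ means maximizing $\inner{\bm{q}_j}{\bm{p}_j}$ over $\norm{\bm{q}_j}_2=r_j$, which is attained at $\bm{q}_j=\frac{r_j}{\norm{\bm{p}_j}_2}\bm{p}_j$ when $\bm{p}_j\neq\bm{0}$, while if $\bm{p}_j=\bm{0}$ the value is $\frac{1}{2}r_j^2=\frac{1}{2}(r_j-\norm{\bm{p}_j}_2)^2$ for every admissible $\bm{q}_j$. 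Hence \eqref{eq:prox_higher_order_cs} reduces to
\[
  \min_{\bm{r}\in\real^d_{\ge 0}}\ \frac{1}{2}\norm{\bm{r}-\bm{\rho}}_2^2+\lambda\norm{\bm{r}}_1^m,\qquad \bm{\rho}\coloneqq(\norm{\bm{p}_1}_2,\ldots,\norm{\bm{p}_d}_2)^\top,
\]
and $\bm{Q}^*$ is recovered by radially rescaling $\bm{p}_j$ to length $r^*_j$ (with $\bm{q}^*_j=\bm{0}$ when $\bm{p}_j=\bm{0}$), which is precisely \eqref{eq:solution_prox_higher_order_cs} once $\bm{c}^*$ is identified with the minimizer $\bm{r}^*$.

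It remains to show $\bm{r}^*=\prox_{\lambda\norm{\cdot}_1^m}(\bm{\rho})=\bm{c}^*$, i.e. that the constraint $\bm{r}\ge\bm{0}$ is inactive. Here I would apply \cref{thm:solution_prox_higher_order_ti} to (a descending sort of) the nonnegative vector $\bm{\rho}$; since $\prox_{\lambda\norm{\cdot}_1^m}$ commutes with coordinate permutations, it suffices to check nonnegativity of its output there. Its formula \eqref{eq:solution_prox_higher_order_ti} gives the $j$-th component as $\sign(\rho_j)(\rho_j-\lambda m S_\theta^{m-1})$ for $j\le\theta$ and $0$ otherwise, and, using that $\bm{\rho}$ is sorted and nonnegative, $\rho_j\ge\rho_\theta\ge\lambda m S_\theta^{m-1}\ge 0$ for all $j\le\theta$, so every component is $\ge 0$. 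Thus $\prox_{\lambda\norm{\cdot}_1^m}(\bm{\rho})$ is feasible for the reduced constrained problem; as the constrained problem merely adds the constraint $\bm{r}\ge\bm{0}$ to the (same-objective) unconstrained one, a feasible unconstrained minimizer is also the constrained minimizer, whence $\bm{r}^*=\bm{c}^*$. Combined with the preceding paragraph and uniqueness of the prox minimizer, this gives \eqref{eq:solution_prox_higher_order_cs}. I do not expect a genuine obstacle: the argument is the $m=2$ proof of \cref{thm:solution_prox_cs} with $t\mapsto t^2$ replaced by $t\mapsto t^m$ and \cref{thm:solution_prox_cs} replaced by \cref{thm:solution_prox_higher_order_ti}; the only points needing care are the row-decoupling step and the degenerate rows $\bm{p}_j=\bm{0}$, both dispatched by the Cauchy--Schwarz identity above, and the constraint-inactivity claim, which relies on the sign-preserving structure of \cref{thm:solution_prox_higher_order_ti}.
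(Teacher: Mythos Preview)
Your proposal is correct and follows the same overall strategy as the paper: reduce the matrix prox to the vector prox \eqref{eq:prox_higher_order_ti} on the row norms, then recover $\bm{Q}^*$ by radial rescaling. The difference is in how the first step is carried out. The paper establishes $\bm{q}^*_j = C^*_j\bm{p}_j$ via the first-order optimality condition $\bm{0}\in\partial g_{\lambda\norm{\cdot}_{2,1}^m}(\bm{Q}^*;\bm{P})$, invoking \cref{coro:subdifferential_pow_l21} (and hence \cref{lemm:subdifferential_pow_norm}) to identify $\partial\norm{\bm{Q}^*}_{2,1}^m$; you instead split the minimization as an inner problem over $\{\bm{Q}:\bm{\rho}(\bm{Q})=\bm{r}\}$ and an outer problem over $\bm{r}\ge\bm{0}$, solving the inner one by Cauchy--Schwarz. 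Your route is more elementary in that it avoids the subdifferential machinery entirely, at the cost of having to argue uniqueness separately (which you do via strong convexity). You are also more explicit than the paper about why the nonnegativity constraint $\bm{r}\ge\bm{0}$ can be dropped: the paper simply passes from $\argmin_{\bm{c}\in\real^d_{\ge 0}}$ to $\argmin_{\bm{c}\in\real^d}$ without comment, whereas you justify it using the sign structure in \cref{thm:solution_prox_higher_order_ti} (equivalently, one could cite \cref{prop:properties_prox_ti}(i)--(iii)). Both arguments are sound; yours is self-contained while the paper's leverages the subdifferential lemmas it has already developed.
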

    Therefore,~\eqref{eq:prox_higher_order_cs} can be evaluated by (i) computing the $\ell_2$ norm of each row vector ($O(dk)$), (ii) evaluating $\bm{c}^*=\prox_{\lambda \norm{\cdot}_1^{\textcolor{blue}{m}}}\left(\left(\norm{\bm{p}_1}_2, \ldots, \norm{\bm{p}_d}_2\right)^\top\right)$ ($O(d)$ in expectation), and (iii) computing $\bm{q}_{j}^*$ as~\eqref{eq:solution_prox_higher_order_cs} for all $j \in [d]$ ($O(dk)$).
    When $m > 5$,~\eqref{eq:prox_higher_order_cs} cannot be evaluated analytically since the evaluation of~\eqref{eq:prox_higher_order_ti} is required.

\subsection{All-subsets Model}
    For using HOFMs, a machine learning user must determine the maximum order of interactions, $M$.
    A machine learning user might want to consider all feature interactions.
    Although $d$-order HOFMs use all feature interactions, they require $O(dk(d^2))=O(d^3k)$ computational cost and it might be prohibitive for a high-dimensional case.
    To overcome this problem, \citet{blondel2016higher} also proposed the all-subsets model, which uses all feature interactions efficiently.
    The output of the all-subsets model is defined by
    \begin{align}
        f_{\mathrm{all}}(\bm{x}; \bm{P}) \coloneqq \sum_{s=1}^k K_{\mathrm{all}}(\bm{x}, \bm{p}_{:, s}) = \sum_{m=0}^d \sum_{s=1}^k \anova^m(\bm{p}_{:, s}, \bm{x}),
        \label{eq:all_subsets}
    \end{align}
    where $\bm{P} \in \real^{d \times k}$ is the learnable parameter, and $K_{\mathrm{all}}: \real^d \times \real^d \to \real$ is the all-subsets kernel~\citep{blondel2016higher}:
    \begin{align}
        K_{\mathrm{all}}(\bm{x}, \bm{p}) \coloneqq \prod_{j=1}^d (1+x_jp_j) = \sum_{S \in 2^{[d]}}\prod_{j \in S}x_jp_j. 
        \label{eq:all_subsets_kernel}
    \end{align}
    Clearly, the all-subsets kernel~\eqref{eq:all_subsets_kernel} can be evaluated in $O(d)$ (strictly speaking, $O(\nnz{\bm{x}})$) time, so the all-subsets model~\eqref{eq:all_subsets} can be evaluated in $O(dk)$ (strictly speaking, $O(\nnz{\bm{x}})k$) time.
    \citet{blondel2016higher} also proposed efficient CD and SGD-based algorithms for the all-subsets model.
    
\subsubsection{Extension of $\Omega_{\mathrm{TI}}$ and $\tilde{\ell}_{1,2}^2$ for the all-subsets model}
    The output of the all-subsets model~\eqref{eq:all_subsets} can be rearranged as
    \begin{align}
        f_{\mathrm{all}}(\bm{x}; \bm{P}) = \sum_{S \in 2^{[d]}}w_{S}\prod_{j \in S} x_j, \quad \text{where } w_{S} = \sum_{s=1}^k \prod_{j \in S} p_{j, s}.
    \end{align}
    Thus, we propose $\Omega_{\mathrm{TI}}^{\mathrm{all}}$, which is an extension of $\Omega_{\mathrm{TI}}$ to the all-subsets model:
    \begin{align}
        \Omega_{\mathrm{TI}}^{\mathrm{all}}(\bm{P}) \coloneqq \sum_{S \in 2^{[d]}}\sum_{s=1}^k \prod_{j \in S} \abs{p_{j, s}} = \sum_{s=1}^k \sum_{S \in 2^{[d]}}\prod_{j \in S} \abs{p_{j, s}}\cdot 1 = \sum_{s=1}^k K_{\mathrm{all}}(\absop(\bm{p}_{:, s}), \bm{1}) = \sum_{m=1}^d \Omega_{\mathrm{TI}}^{m}(\bm{P}).
        \label{eq:Omega_all_ti}
    \end{align}
    Since the output of the all-subsets model is multi-linear and $\Omega_{\mathrm{TI}}^{\mathrm{all}}(\bm{P})$ can be regarded as a $\ell_{1}$ regularization for one parameter, the all-subsets model with this regularization can be optimized efficiently by using the PCD algorithm. For $p_{j, s}$, $\Omega_{\mathrm{TI}}^{\mathrm{all}}(\bm{P})$ is written as $\{\partial K_{\mathrm{all}}(\absop(\bm{p}_{:, s}), \bm{1})/\partial \abs{p_{j, s}}\}\cdot \abs{p_{j,s}} + \mathrm{const}$, and $\partial K_{\mathrm{all}}(\absop(\bm{p}_{:, s}), \bm{1})/\partial \abs{p_{j, s}}$ can be computed in $O(1)$ time if $K_{\mathrm{all}}(\absop(\bm{p}_{:, s}), \bm{1})$ is given~\citep{blondel2016higher}.
    Thus, we can extend the PCD algorithm for the canonical all-subsets model~\citep{blondel2016higher} to the PCD algorithm for the all-subsets model with $\Omega_{\mathrm{TI}}^{\mathrm{all}}$ regularization similarly as for FMs and HOFMs.
    
    We next extend $\tilde{\ell}_{1,2}^2$ to the all-subsets model.
    Based on~\cref{thm:bound_by_power_of_norm} and~\eqref{eq:Omega_all_ti}, we extend it as $\sum_{m=1}^{d} \norm{\bm{P}^{\top}}_{1, m}^m$.
    We leave the development of an efficient algorithm for evaluating the corresponding proximal problem for future work.
    Because the all-subsets model is multi-linear w.r.t $\bm{p}_{1}, \ldots, \bm{p}_{d}$ (and of course $p_{1, 1}, \ldots, p_{d, k}$), it is also optimized by using the CD algorithm efficiently~\citep{blondel2016higher}.
    
\subsubsection{Extension of $\Omega_{\mathrm{CS}}$ and $\ell_{2,1}^2$ for the all-subsets model}
    Next, we extend $\Omega_{\mathrm{CS}}$ to all-subsets model as
    \begin{align}
        \Omega_{\mathrm{CS}}^{\mathrm{all}}(\bm{P}) \coloneqq \sum_{m=1}^d \Omega_{\mathrm{CS}}^m(\bm{P}) = K_{\mathrm{all}}\left((\norm{\bm{p}_1}_2, \ldots, \norm{\bm{p}_{d}}_2)^\top, \bm{1}\right).
        \label{eq:Omega_all_cs}
    \end{align}
    The all-subsets model with $\Omega_{\mathrm{CS}}^{\mathrm{all}}$ is efficiently optimized by using the PBCD algorithm in a similar manner because the all-subsets model is multi-linear w.r.t $\bm{p}_j$ and $\Omega_{\mathrm{CS}}^{\mathrm{all}}$ is also multi-linear w.r.t $\norm{\bm{p}_j}_2$ for all $j \in [d]$.
    
    We also extend $\ell_{2,1}^2$ to the all-subsets model as $\sum_{m=1}^{d}\norm{\bm{P}}_{2, 1}^m$ but we leave the development of an efficient algorithm for evaluating the corresponding proximal problem for future work.

\section{Related Work}
    \label{sec:related}
    Because FMs are equivalent to the QR with low-rank factorized $\bm{W}$ and the QR is essentially a linear model, one can na\"ively use any feature selection method~\citep{tibshirani1996regression,liu2017dual,mallat1993matching} for feature interaction selection in the QR.
    Especially, the QR with the $\ell_1$ norm and the trace (nuclear) norm $\norm{\cdot}_{\mathrm{tr}}$ regularization is one of the natural choice for learning a low-rank $\bm{W}$ with feature interaction selection.
    Formally, the corresponding objective function is
    \begin{align}
        L_{\mathrm{QR}}(\bm{w}, \bm{W}; \lambda_{w}) +  \lambda_{\mathrm{tr}}\norm{\bm{W}}_{\mathrm{tr}} + \tilde{\lambda} \norm{\bm{W}}_1,
        \label{eq:objective_slqr}
    \end{align}
    where $\lambda_w, \lambda_{\mathrm{tr}},$ and $\tilde{\lambda} > 0$ are regularization-strength hyperparameters.
    We call the QR learned by minimizing~\eqref{eq:objective_slqr} the sparse and low-rank QR (SLQR).
    \citet{richard2012estimation} firstly proposed proximal algorithms for convex objective functions with $\norm{\cdot}_1$ and $\norm{\cdot}_{\mathrm{tr}}$ such as~\eqref{eq:objective_slqr} for estimating a simultaneously sparse and low-rank matrix.
    The incremental PGD algorithm proposed by~\citet{richard2012estimation} updates $\bm{W}$ as
    \begin{align}
        \bm{W} \leftarrow \prox_{\eta\tilde{\lambda}}\left(\prox_{\eta \lambda_{\mathrm{tr}}\norm{\cdot}_{\mathrm{tr}}}\left(\bm{W}-\eta\nabla L_{\mathrm{QR}}\right)\right).
        \label{eq:prox_slqr}
    \end{align}
    Because the objective function~\eqref{eq:objective_slqr} is convex, its all local minima are global minima unlike the FM-based existing methods.
    It is a great advantage of the QR-based methods compared to the FM-based methods.
    However, the QR (and of course the SLQR) requires $O(d^2)$ memory and $O(\nnz{\bm{x}}^2)$ time for evaluation, so it is hard to use QR-based methods for a high-dimensional case.
    Moreover, the evaluation of~\eqref{eq:prox_slqr} takes $O(d^3)$ computational cost~\citep{parikh2014proximal}.
    Therefore, it is harder to use the SLQR for a high-dimensional case.
    
    \citet{agrawal2019kernel,yang2019interaction,morvan2018whinter}, and~\citet{suzumura2017selective} proposed feature interaction selection methods in the QR and/or QR-like models.
    They can be more efficient than above-mentioned na\"ive methods~\citep{tibshirani1996regression,liu2017dual,mallat1993matching}.
    However, the methods of~\citet{agrawal2019kernel} and~\citet{yang2019interaction} require super linear computational cost w.r.t $d$ or $N$, and those of~\citet{morvan2018whinter} and~\citet{suzumura2017selective} can be used only when $\bm{x} \in [0, 1]^d$.
    Moreover, as described in~\cref{sec:background}, the QR cannot estimate the weights for interactions that are not observed from the training data.

    \citet{cheng2014gradient} proposed a greedy (forward) feature interaction selection algorithm in FMs for context-aware recommendation.
    They call FMs with their algorithm gradient boosting FMs (GBFMs). 
    In general, greedy selection algorithms produce a sub-optimal solution and are often inferior to shrinkage methods (e.g., methods based on sparse regularization)~\citep{hastie2009elements}.
    Moreover, at each greedy interaction selection step, their algorithm sequentially selects each feature that constructs the interaction, namely, their greedy step is approximately greedy.
    
    \citet{chen2019bayesian} proposed a Bayesian feature interaction selection method in FMs for personalized recommendation.
    Assume that there are $U$ users and $I$ items, and $\bm{x}_i, \ldots, \bm{x}_{I} \in \real^{d}$ are feature vectors of items.
    Then, they proposed (Bayesian) personalized FMs, which predict the preference of each user for each item.
    The output of personalized FMs for $u$-th user $f_u: \real^d \to \real$ is defined as
    \begin{align}
        f_{u}(\bm{x}; \bm{\pi}_{u}, \bm{\mu}, \bm{P}, \phi) =  \inner{\bar{\bm{w}}_u}{\bm{x}} + \sum_{j_2 > j_1} \bar{w}_{u, j_1, j_2} x_{j_1}x_{j_2},
        \label{eq:personailzed_fm}
    \end{align}
    where
    \begin{align*}
        \bar{w}_{u, j} &= \pi_{u, j} \mu_{j},\ \bar{w}_{u, j_1, j_2} = \pi_{u, j_1, j_2}\mu_{j_1, j_2}, \\
        \pi_{u, j_1, j_2} &= \pi_{u, j_1}\pi_{u, j_2}[1 + (1 - \pi_{u,j_1}\pi_{u, j_2})(\pi_{u,j_1} + \pi_{u, j_2})],
    \end{align*}
    where $\bm{\mu} \in \real^d$, $\bm{\pi}_u \in [0, 1]^{d}$ and $\bm{P}_{u} \in \real^{d \times k}$ are learnable parameters and $\mu_{j_1, j_2} \in \real^d$ is computed defined by using $\bm{\mu}$, $\bm{p}_{j_1}$, and $\bm{p}_{j_2}$ (for more detail, please see~\citep{chen2019bayesian}).
    Intuitively, $\pi_{u, j}$ represents the selection probability of $j$-th feature.
    Similarly, $\pi_{u, j_1, j_2}$ represents that of the interaction between $j_1$-th and $j_2$-th features.
    Unfortunately, their method cannot actually select feature interactions without selecting features because $\pi_{u, j_1, j_2} = 0$ if and only if $\pi_{u, j_1}=0$ or $\pi_{u, j_2}=0$ (on $\pi_{u,j} \in [0, 1]$), namely, their method actually is for feature selection.
    
    Several researchers proposed FMs and deep-neural-network-extension of FMs that adapt feature interaction weights depending on an input feature vector $\bm{x}$~\citep{xiao2017attentional,song2019autoint,hong2019interaction,xue2020autohash}.
    While such methods outperformed FMs on some recommender system tasks, they also require $O\left(\nnz{\bm{x}}^2\right)$ time for evaluation.
    Moreover, their feature interaction weights cannot be completely zero.
    
\section{Experiments}
    \label{sec:experiments}
    In this section, we experimentally demonstrate the advantages of the proposed methods for feature interaction selection (and additionally feature selection) compared with existing methods. 
    Firstly, we show the results for synthetic datasets. 
    These results indicate that (i) TI-sparse FMs are useful for feature interaction selection in FMs while existing methods are not and (ii) CS-sparse FMs are more useful than existing methods for feature selection in FMs.
    Secondly, we show the results for some real-world datasets on an interpretability constraint setting.
    Finally, we compares the convergence speeds of some optimization algorithms for proposed methods on both synthetic and real-world datasets.

    We implemented the existing and proposed methods in Nim~\footnote{\url{https://nim-lang.org/}} and ran the experiments on an Arch Linux desktop with an Intel Core i7-4790 (3.60 GHz) CPU and 16 GB RAM.
    The \textbf{FM} implementation was as fast as libFM~\citep{rendle2012factorization}, which is the implementation of FMs in C++.
    Our implementation is available at \url{https://github.com/neonnnnn/nimfm}.

\subsection{Comparison of Proximal Operators}
    \label{subsec:comparison_proximal_operator}
    \begin{figure}[t]
        \centering
        \subfloat{
        \includegraphics[width=80mm]{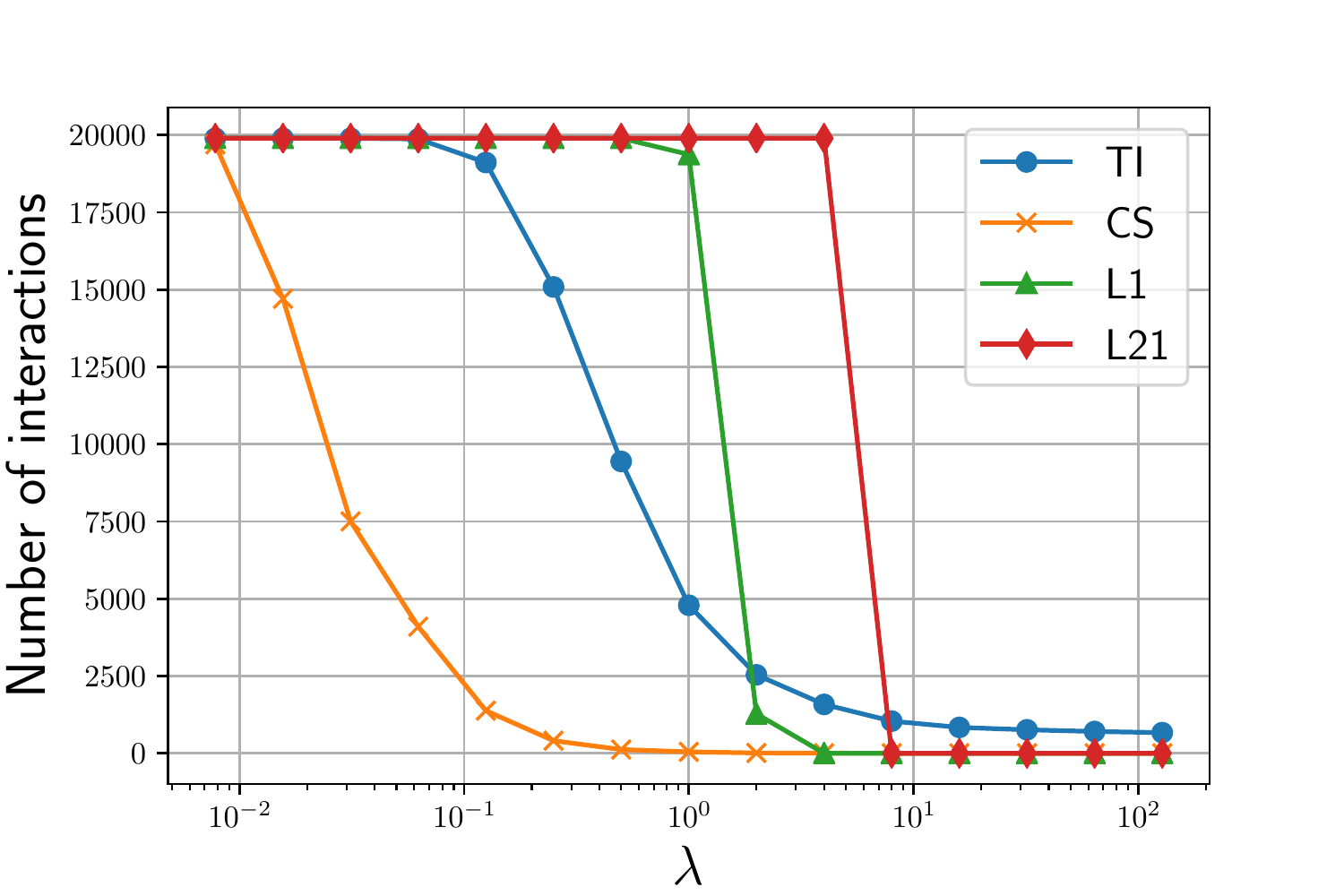} 
        \includegraphics[width=80mm]{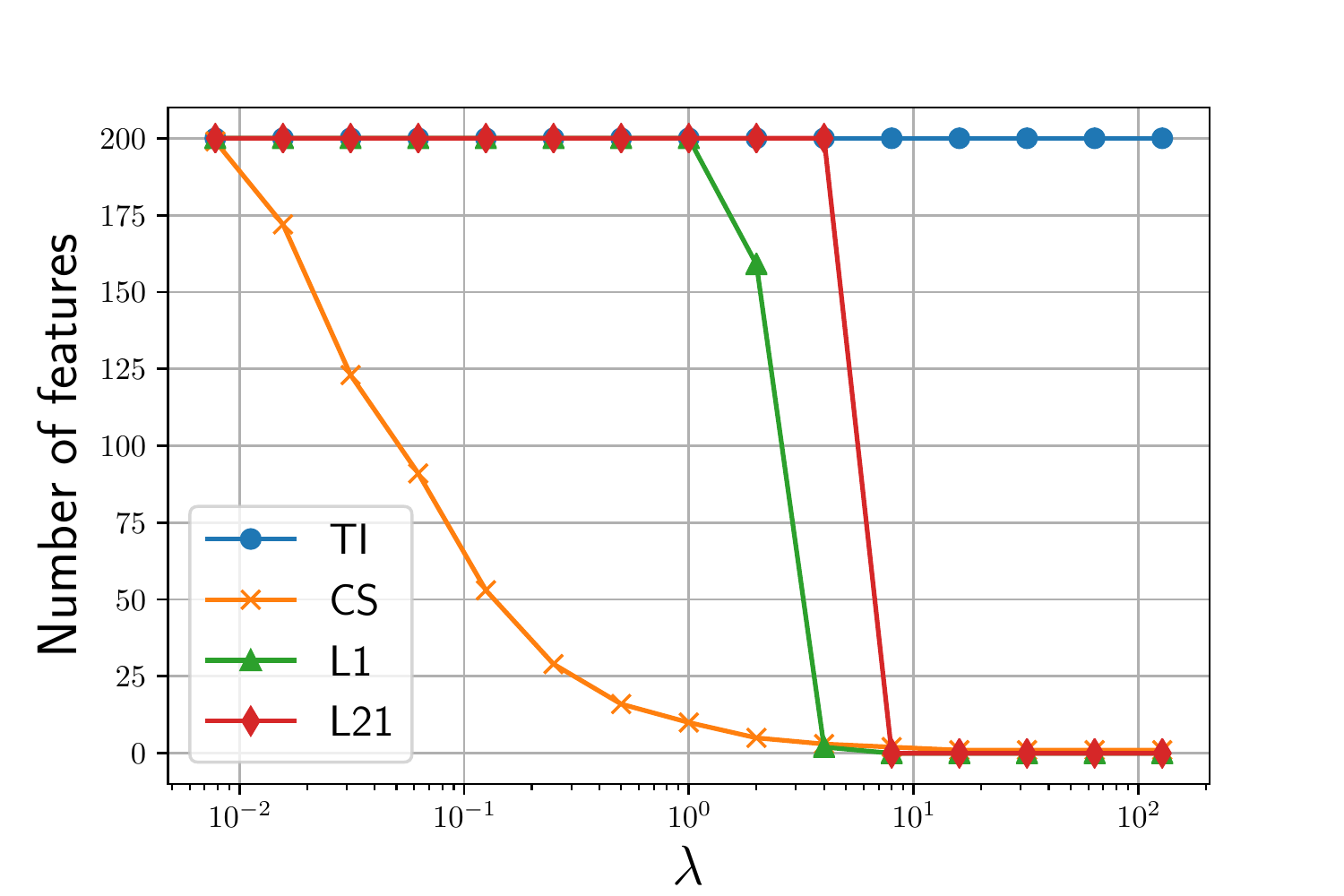}}
        \caption{Comparison of proximal operators associated with \textbf{TI}, \textbf{CS}, \textbf{L21}, and \textbf{L1} regularizer. We evaluated the proximal operators at a randomly sampled $\bm{P}$ with various $\lambda$. Left graph shows the number of used feature interactions and right graph shows the number of used features in $\bm{Q}^*(\bm{Q}^*)^\top$, where $\bm{Q}^*$ is the output of the proximal operator: $\bm{Q}^* = \prox_{\lambda \Omega(\cdot)}(\bm{P})$ and $\Omega$=$\ell_1$ (\textbf{L1}~\citep{pan2016sparse}), $\ell_{2,1}$ (\textbf{L21}~\citep{xu2016synergies,zhao2017meta}), $\tilde{\ell}_{1,2}^2$ (\textbf{TI}, proposed in~\cref{sec:ti}), or $\ell_{2,1}^2$~(\textbf{CS}, proposed in~\cref{sec:cs}).}
        \label{fig:toy_with_proposed}
    \end{figure}
    Firstly, we compared the outputs of proximal operators of the existing methods and the proposed methods in the same way as in~\cref{subsec:verification}.
    We evaluated proximal operators of not only \textbf{L1}~\citep{pan2016sparse} and \textbf{L21}~\citep{xu2016synergies,zhao2017meta} but also $\tilde{\ell}_{1,2}^2$ (\textbf{TI}) and $\ell_{2,1}^2$ (\textbf{CS}).
    Their corresponding proximal operators are~\eqref{eq:prox_l1},~\eqref{eq:prox_l21},~\eqref{eq:prox_ti}, and~\eqref{eq:prox_cs}, respectively.

    Results are shown in~\cref{fig:toy_with_proposed}.
    Unlike the existing methods (\textbf{L1} and \textbf{L21}), the proposed methods (\textbf{TI} and \textbf{CS}) could produce sparse but moderately sparse feature interaction matrices.
    Moreover, the number of used features in \textbf{TI} was $200$ for all $\lambda \in \{2^{-7}, \ldots, 2^{7}\}$.
    It means that \textbf{TI} successfully selects feature interactions in FMs.
    The number of used features in \textbf{CS} decreased gradually as $\lambda$ increased.
    It indicates that \textbf{CS} can be more useful than \textbf{TI} and existing methods for feature selection.
    
\subsection{Synthetic Datasets}
    \label{subsec:synthetic}
    We next evaluated the performance of the proposed methods and existing methods on feature interaction selection and feature selection problems using synthetic datasets.
    We ran the experiments on an Ubuntu 18.04 server with an AMD EPYC 7402P 24-Core Processor (2.80 GHz) and 128 GB RAM.
    
\subsubsection{Settings}
    \paragraph{Datasets.} To evaluate the proposed TI regularizer and CS regularizer in the feature interaction selection and feature selection scenarios, we created datasets such that true models used partial second-order feature interactions. We defined the true model as the QR without the linear term with the block diagonal matrix $\bm{W}$. We defined each block diagonal matrix as a $(d_{\mathrm{true}}/b, d_{\mathrm{true}}/b)$ all-ones matrix, where $b$ is the number of blocks (we set $b$ such that $d_{\mathrm{true}}$ was dividable by $b$). Intuitively, there were $b$ distinct groups of features, and $w_{j_1, j_2}=1$ if $j_1$ and $j_2$ were in the same group of features and equaled zero otherwise. Precisely, if $\lceil j_1 / (d_{\mathrm{true}}/b) \rceil = \lceil j_2 / (d_{\mathrm{true}}/b) \rceil$, $j_1$ and $j_2$ were in the same group of features. For the distribution of feature vector $\bm{x}$, we used a Gaussian distribution, $\mathcal{N}(\bm{\mu}, \bm{\Sigma})$. We set $\bm{\mu} = \bm{0}$ and $\Sigma_{j, j} =1$ for all $j \in [d_{\mathrm{true}}]$ and set $\Sigma_{j_1, j_2} = 0.2$ if $j_1\neq j_2$ were in the same feature group and zero otherwise. Moreover, we concatenated $d_{\mathrm{noise}}$-dimensional noise features to feature vector $\bm{x}$ and used the concatenated vector as the observed feature vector (namely, the dimension of the observed feature vectors $d=d_{\mathrm{true}}+d_{\mathrm{noise}}$). We set the distribution of each noise feature to $\mathcal{N}(0, 1)$ (the noise features were independent of each other). Furthermore, we added noise to the observation of target $f_{\mathrm{true}}(\bm{x})$. We used $\mathcal{N}(0, 0.1^2)$ for the target noises. We considered two settings.
    \begin{itemize}
        \item \textbf{Feature interaction selection setting}: $d_{\mathrm{true}}=80$, $b=8$, and $d_{\mathrm{noise}}=20$. In this setting, there were eight groups of features, so the methods that perform only feature selection in FMs like CS-sparse FMs and $\ell_{2,1}$-sparse FMs were not useful. Again, our main goal is to develop sparse FMs that are useful in this setting.
        \item \textbf{Feature selection setting}: $d_{\mathrm{true}}=20$, $b=1$, and $d_{\mathrm{noise}}=80$. In this setting, there was only one group of features, so the methods that perform only feature selection in FMs were also useful.
    \end{itemize}
     
    \paragraph{Evaluation Metrics.} We mainly used three metrics.
    They were computed using the parameters $\bm{W}$ of the true models.
    \begin{itemize}
        \item \textbf{Estimation error}: $\lVert\bm{W}-\hat{\bm{P}}\hat{\bm{P}}^\top\rVert_{2, >}/ \norm{\bm{W}}_{2, >}$, where $\norm{\cdot}_{2, >}$ is the $\ell_2$ norm for only the strictly upper triangular elements, $\bm{W}$ is the true feature interaction matrix, and $\hat{\bm{P}}$ is the learned parameter in FMs and sparse FMs. Lower is better.
        \item \textbf{F1-score}: the F1-score of the support prediction problem. To be more precise, we regarded $(j_1, j_2)$ as a positive instance if $w_{j_1, j_2} \neq 0$ and as a negative instance otherwise, and we regarded $(j_1, j_2)$ as a positive predicted instance if $\inner{\hat{\bm{p}}_{j_1}}{\hat{\bm{p}}_{j_2}} \neq 0$ and as  a negative predicted instance otherwise for all $j_2 > j_1$. Higher is better.
        \item \textbf{Percentage of successful support recovery (PSSR)}~\citep{liu2017dual}: the percentages of the results such that $\{(j_1, j_2) : w_{j_1, j_2} \neq 0, j_2 > j_1\} = \{(j_1, j_2) : \inner{\hat{\bm{p}}_{j_1}}{\hat{\bm{p}}_{j_2}} \neq 0, j_2 > j_1\}$ among the different datasets. Higher is better.
    \end{itemize}
     
    \paragraph{Methods Compared.} We compared the following eight methods.
    \begin{itemize}
        \item \textbf{TI}: $\tilde{\ell}_{1,2}^{2}$-sparse FMs (TI-sparse FMs) optimized using PCD algorithm.
        \item \textbf{CS}: $\ell^2_{2,1}$-sparse FMs (CS-sparse FMs) optimized using PBCD algorithm.
        \item \textbf{L21}: $\ell_{2,1}$-sparse FMs~\citep{xu2016synergies,zhao2017meta} optimized using PBCD algorithm.
        \item \textbf{L1}: $\ell_1$-sparse FMs~\citep{pan2016sparse} optimized using PCD algorithm.
        \item \textbf{$\Omega_*$-nmAPGD}: $\Omega_*$-sparse FMs optimized using non-monotone accelerated inexact PGD algorithm~\citep{li2015accelerated}.
        \item \textbf{$\Omega_*$-SubGD}: $\Omega_*$-sparse FMs optimized using SubGD algorithm.
        \item \textbf{SLQR}: the SLQR optimized using the incremental PGD algorithm~\citep{richard2012estimation} with acceleration~\citep{nesterov1983method,beck2009fast} and restart~\citep{giselsson2014monotonicity}.
        The PGD algorithm with acceleration is known as a fast iterative shrinkage thresholding algorithm (FISTA)~\citep{beck2009fast}.
        \item \textbf{FM}: canonical FMs~\citep{rendle2010factorization} optimized using CD algorithm.
    \end{itemize}
    For all methods, we omitted the linear term $\inner{\bm{w}}{\bm{x}}$ since the true models do not use it. Since the target values are in $\real$, we used the squared loss for $\ell(\cdot, \cdot)$.

\subsubsection{Results with Tuned Hyperparameters}
    \label{subsubsec:synthetic_tuned}
    \begin{figure*}[t]
        \centering
        \subfloat[Feature interaction selection setting: $d_{\mathrm{true}}=80$, $b=8$, and $d_{\mathrm{noise}}=20$.]{
        \includegraphics[width=52mm]{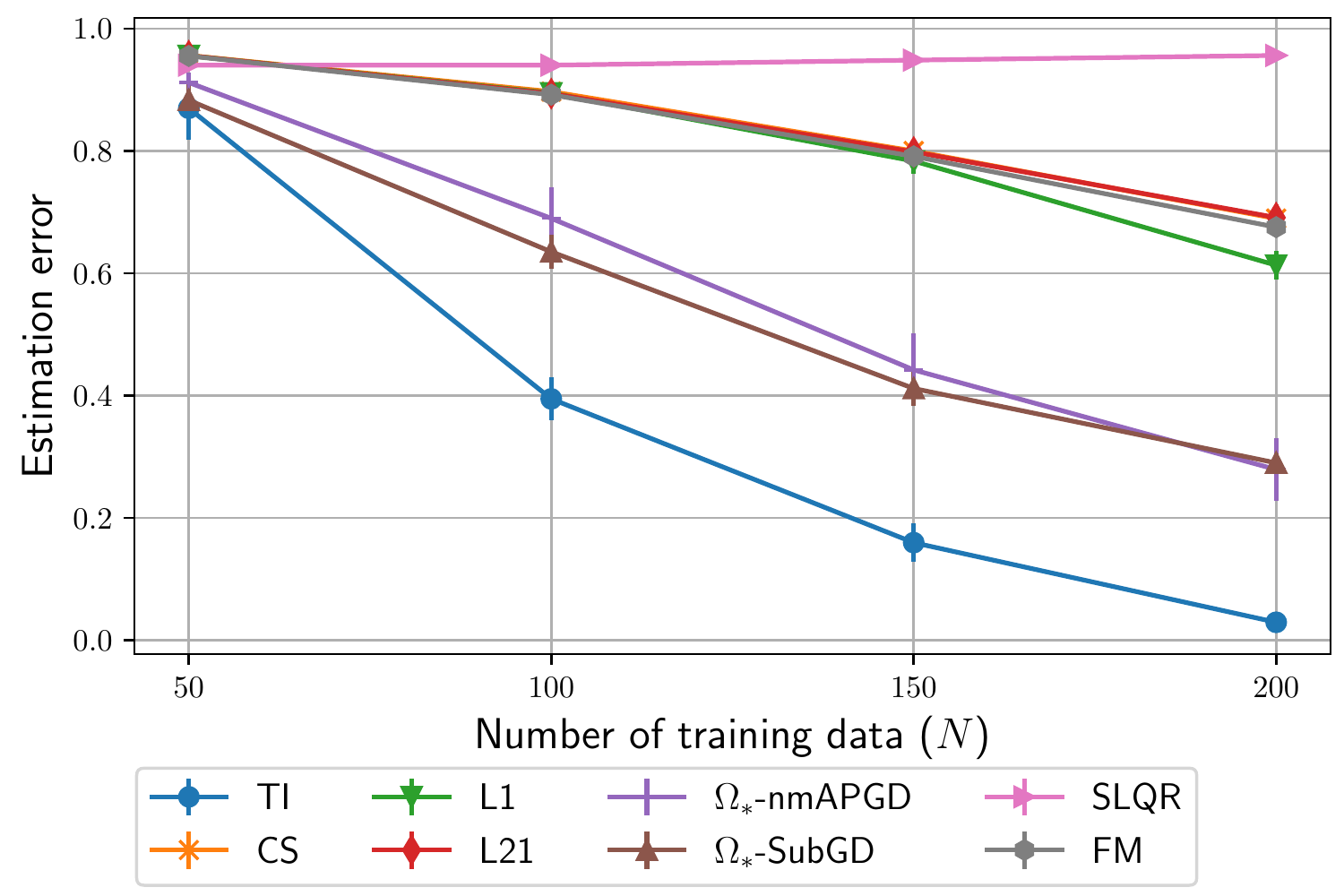} 
        \includegraphics[width=52mm]{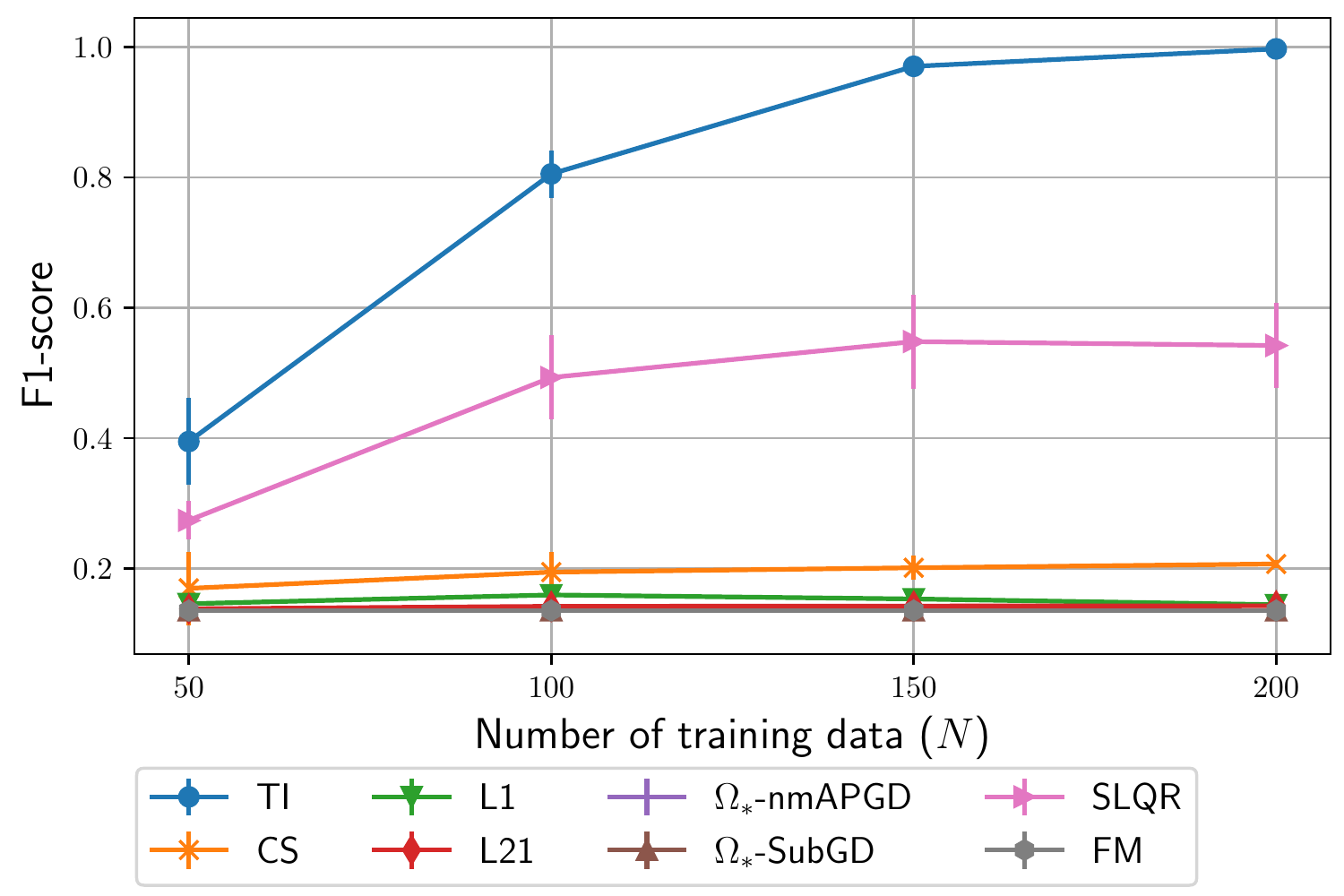}
        \includegraphics[width=52mm]{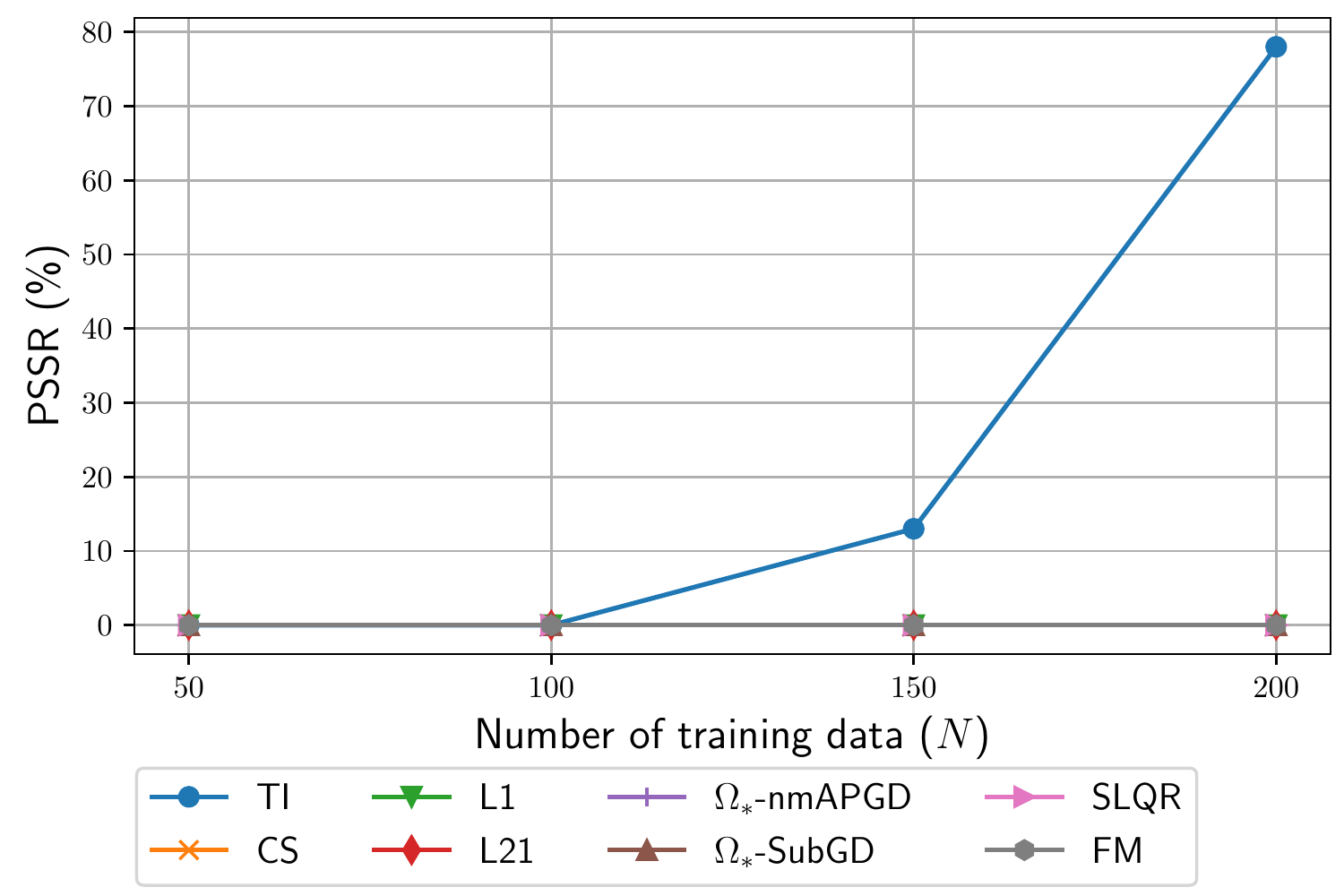}
        \label{fig:synthetic_tuned_feature_interaction_selection}
        }\\
        \subfloat[Feature selection setting: $d_{\mathrm{true}}=20$, $b=1$, and $d_{\mathrm{noise}}=80$.]{
        \includegraphics[width=52mm]{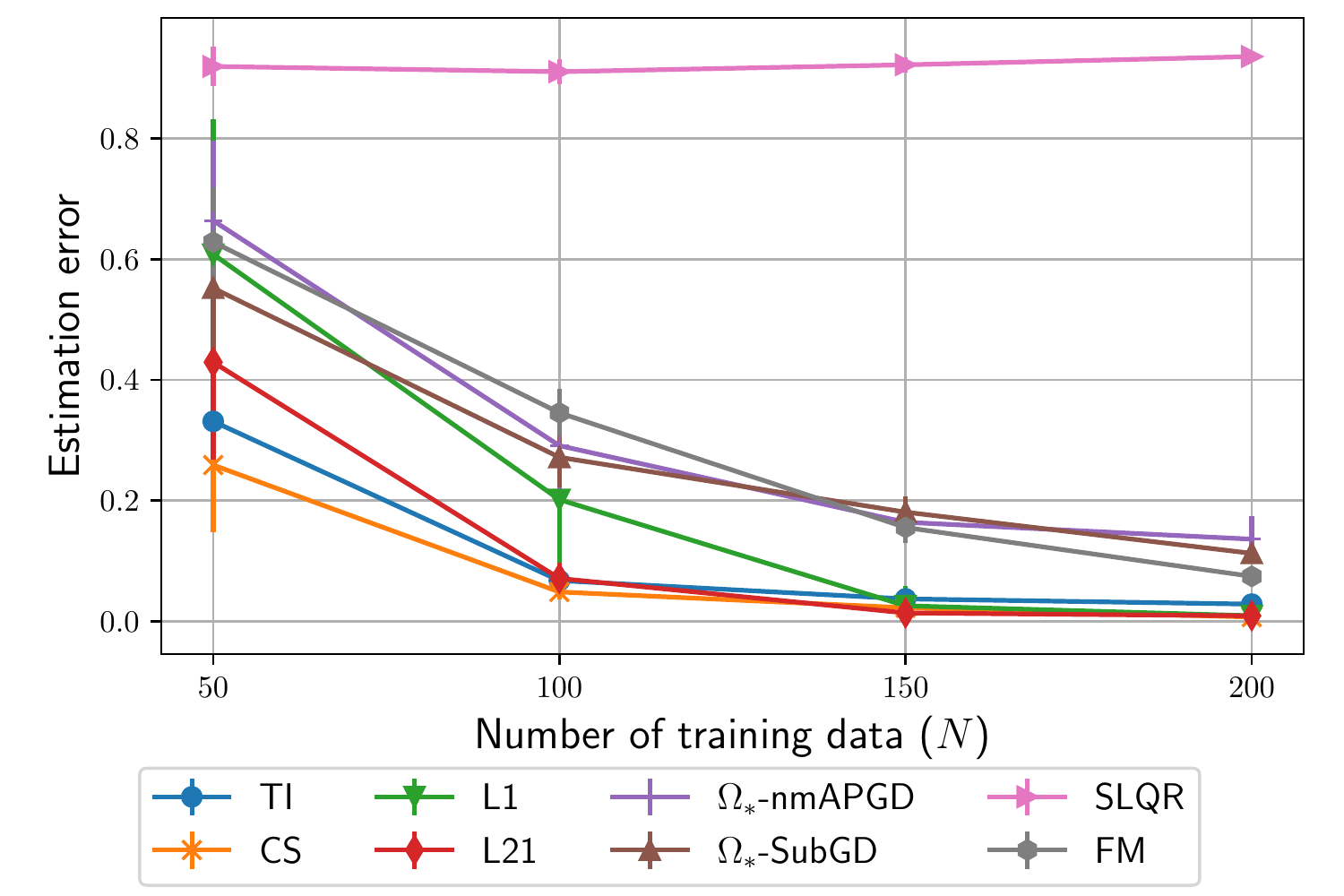} 
        \includegraphics[width=52mm]{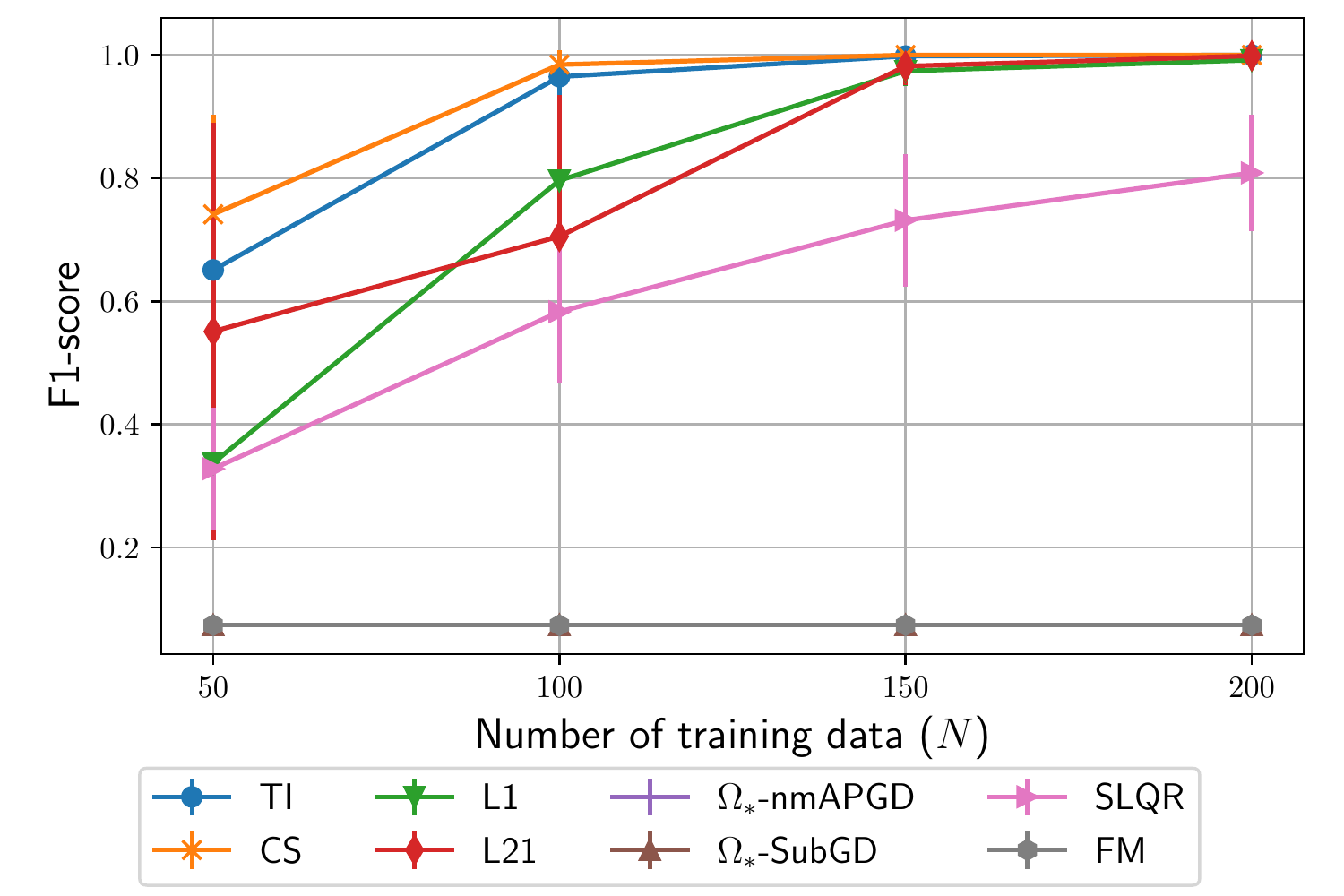}
        \includegraphics[width=52mm]{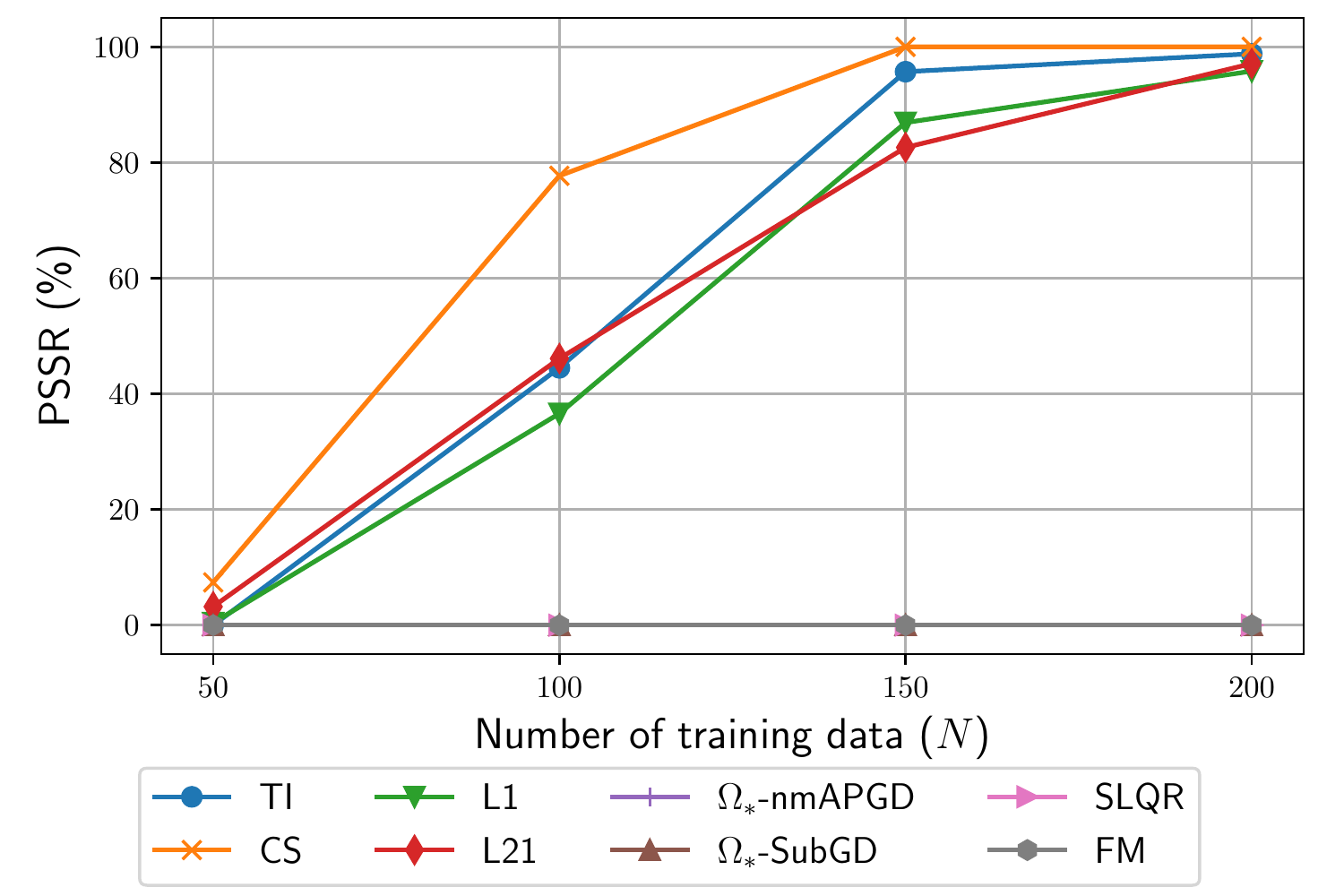} 
        \label{fig:synthetic_tuned_feature_selection}
        }
        \caption{Experimental results with tuned parameters for \textbf{TI}, \textbf{CS}, \textbf{L21}, \textbf{L1}, \textbf{$\Omega_*$-nmAPGD}, \textbf{$\Omega_*$-SubGD}, \textbf{SLQR}, and \textbf{FM} methods on synthetic datasets using different numbers of training examples: (a) feature interaction selection setting datasets; (b) feature selection setting datasets. Left graphs show estimation error (lower is better), center graphs show F1-score (higher is better), and right graphs show PSSR (higher is better). In (a), the plots of estimation errors of \textbf{CS} and \textbf{L21}, those of F1-scores of \textbf{L1}, \textbf{L21}, \textbf{$\Omega_*$-nmAPGD}, and \textbf{$\Omega_*$-SubGD}, and those of PSSRs except for \textbf{TI} overlap, respectively. In (b), similarly, the plots of F1-scores of \textbf{$\Omega_*$-nmAPGD} and \textbf{$\Omega_*$-SubGD} and those of PSSRs of \textbf{SLQR}, \textbf{$\Omega_*$-nmAPGD}, and \textbf{$\Omega_*$-SubGD} overlap, respectively.}
        \label{fig:synthetic_tuned}
    \end{figure*}
    We compared the above-mentioned three metrics among the eight methods.
    Since these metrics use the parameter of the true model, $\bm{W}$, and the results clearly depended on the hyperparameter settings, we followed~\citet{liu2017dual} for evaluation: created $150$ datasets (not the number of instances in a dataset but the number of datasets), divided them into $50$ validation datasets and $100$ test datasets, tuned the hyperparameters on the validation datasets (which also used $\bm{W}$), and finally learned models on the test datasets with tuned hyperparameter and evaluated them.
    We tuned hyperparameters $\lambda_p$ and $\tilde{\lambda}_p$.
    For the sparse FM methods (i.e., \textbf{TI}, \textbf{CS}, \textbf{L21}, \textbf{L1}, \textbf{$\Omega_*$-nmAPGD}, and \textbf{$\Omega_*$-SubGD}), we set them to $10^{-2}, 10^{-1}, \ldots, 10^2$.
    Since the \textbf{FM} method had only $\lambda_p$, we tuned it more carefully than the sparse FM methods: $10^{-3+0\cdot 7/24}, 10^{-3 + 1\cdot 7/24}, \ldots$, $10^{-3 + 24\cdot 7/24}=10^4$.
    We set rank hyperparameter $k$ to $30$ for \textbf{FM} and the sparse FM methods and used $\mathcal{N}(0, 0.01^2)$ to initialize $\bm{P}$.
    We used line search techniques for computing step sizes in \textbf{$\Omega_*$-SubGD} and \textbf{$\Omega_*$-nmAPGD}.
    We used the SubGD method for solving the proximal operator~\eqref{eq:prox_omegastar} in \textbf{$\Omega_*$-nmAPGD}.
    In this SubGD method, we used a diminishing step size: $\eta = 0.1 / \sqrt{t}$ at the $t$-th iteration, and set the number of iteration to $20$.
    In \textbf{SLQR}, we set $\lambda_{\mathrm{tr}}$ and $\tilde{\lambda}$ to $10^{-2}, 10^{-1}, \ldots, 10^{2}$.
    We ran the experiment $10$ times with different initial random seeds for FMs and sparse FMs since their learning results depend on the initial value of $\bm{P}$ (we thus learned and evaluated above-mentioned methods $100 \times 10 = 1,000$ times).
    We also ran it with different numbers of instances in each dataset: $N=50, 100, 150$, and $200$.
    For fair comparison, we set the time budgets for optimization to $N / 50$ second (CPU time) for all methods.
    However, we stopped optimization if the absolute difference between the current parameter and previous parameter was less than $10^{-3}$ for \textbf{FM}, \textbf{L1}, \textbf{L21}, \textbf{TI}, and \textbf{CS}, and $10^{-7}$ for \textbf{SLQR}, \textbf{$\Omega_*$-SubGD}, and \textbf{$\Omega_*$-nmAPGD}.
        
    As shown in~\cref{fig:synthetic_tuned_feature_interaction_selection}, \textbf{TI} performed the best on the feature interaction selection setting datasets for all metrics.
    Note that the plots of estimation errors of \textbf{CS} and \textbf{L21}, those of F1-scores of \textbf{L1}, \textbf{L21}, \textbf{$\Omega_*$-nmAPGD}, and \textbf{$\Omega_*$-SubGD}, and those of PSSRs except for \textbf{TI} overlap, respectively.
    Only \textbf{TI} successfully selected feature interactions in this setting: the F1-score and PSSR of \textbf{TI} increased with $N$, and \textbf{TI} achieved about 80\% of PSSR when $N=200$.
    The F1-scores and PSSRs of \textbf{CS}, \textbf{L21}, \textbf{L1}, \textbf{$\Omega_*$-nmAPGD}, \textbf{$\Omega_*$-SubGD}, and \textbf{FM} did not increase with $N$.
    Although \textbf{SLQR} achieved the better F1-scores than the other methods except for \textbf{TI}, its estimation errors were worst and its PSSRs were zero for all $N$.
    We observed that \textbf{SLQR} could achieve as low estimation errors as \textbf{$\Omega_*$-nmAPGD} and \textbf{$\Omega_*$-SubGD} if we set time budgets more than $10 \times N / 50$ second.
    Namely, \textbf{SLQR} could select feature interactions and learn a good $\bm{W}$ but it was inefficient compared to \textbf{TI}.
    \textbf{$\Omega_*$-nmAPGD} and \textbf{$\Omega_*$-SubGD} achieved lower estimation errors than the existing methods but their F1-scores and PSSRs were as low as those of the existing methods.
    This is because the nmAPGD algorithm with an inexact proximal operator and the SubGD algorithm do not produce a sparse $\bm{P}$ and a sparse $\bm{W}$, and the PSSR and F1-score of a dense $\bm{W}$ are low by definition.
    In contrast, not only \textbf{TI} but also \textbf{CS}, \textbf{L21}, and \textbf{L1} selected features correctly for the feature selection setting datasets (\cref{fig:synthetic_tuned_feature_selection}).
    Note that the plots of F1-scores of \textbf{$\Omega_*$-nmAPGD} and \textbf{$\Omega_*$-SubGD} and those of PSSRs of \textbf{SLQR}, \textbf{$\Omega_*$-nmAPGD}, and \textbf{$\Omega_*$-SubGD} overlap, respectively.
    \textbf{CS} performed the best on all metrics for the feature selection setting datasets and it seems reasonable because \textbf{CS} used the upper bound of $\Omega_*$ and \textbf{CS} produces a row-wise sparse $\bm{P}$ (i.e., \textbf{CS} is more preferable for feature selection than \textbf{TI}).
    
\subsubsection{Sensitivity to Regularization-strength Hyperparameter}
\label{subsubsec:sensitivity}
    \begin{figure*}[t]
        \centering
        \subfloat{
        \includegraphics[width=52mm]{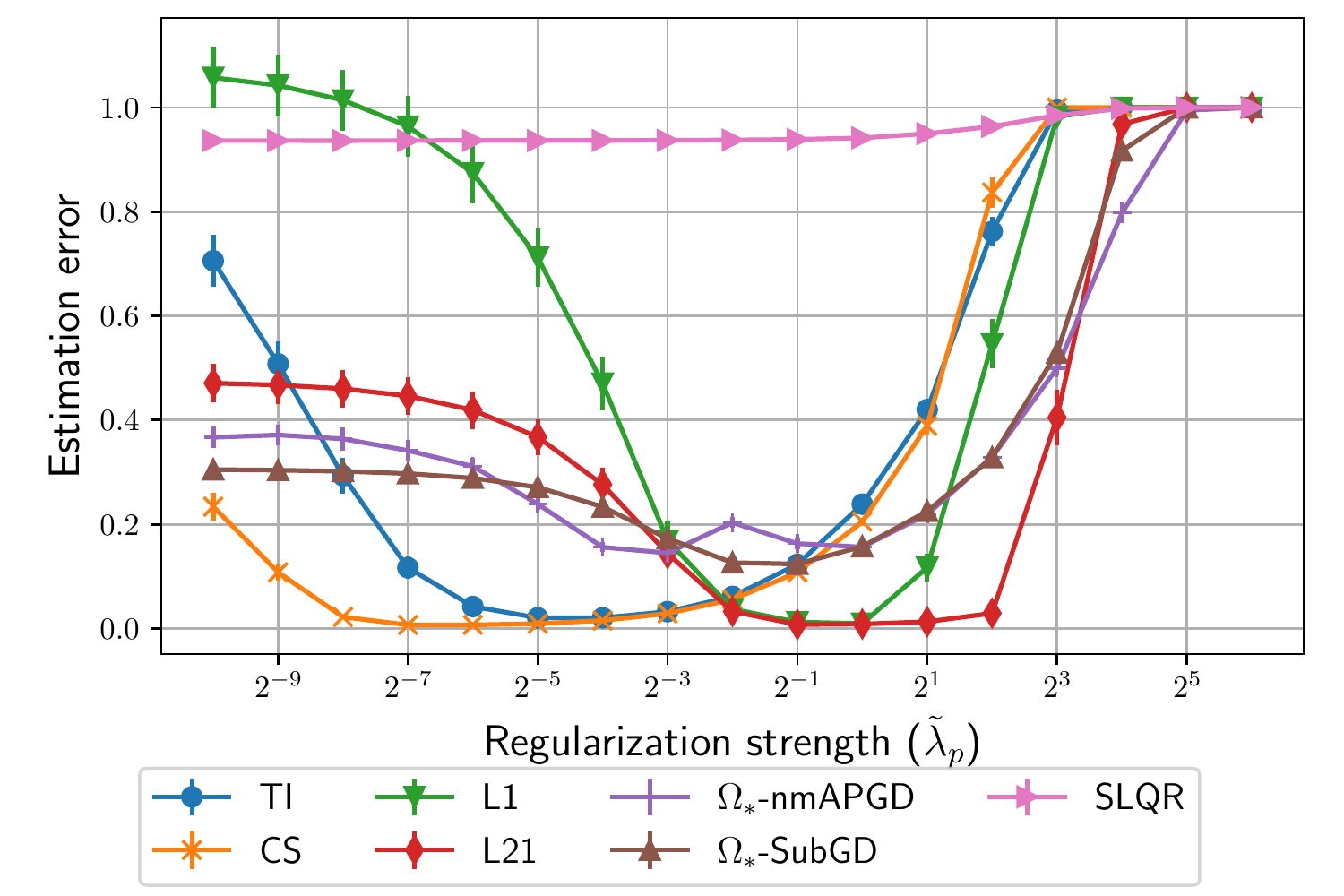} 
        \includegraphics[width=52mm]{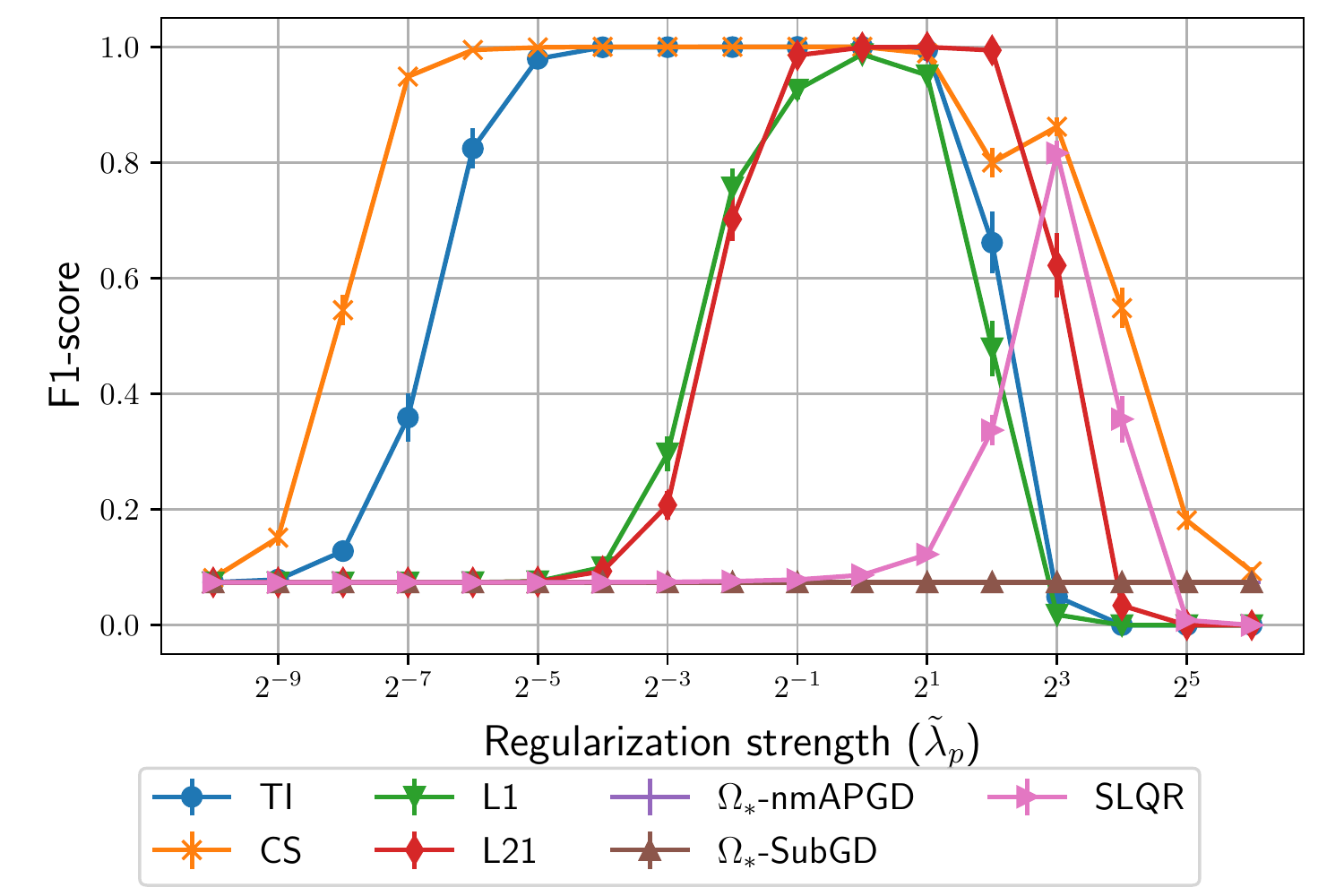} 
        \includegraphics[width=52mm]{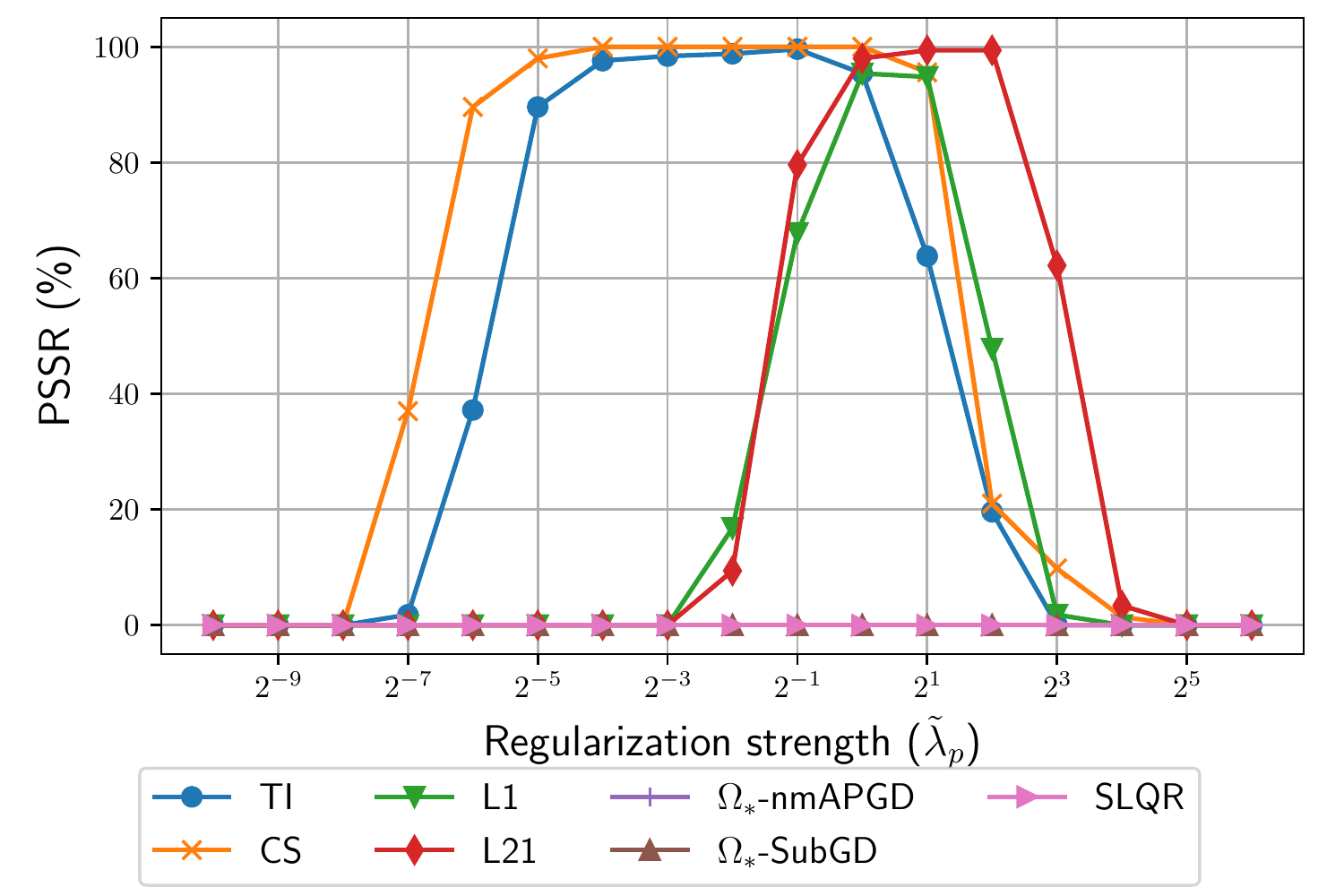}}
        \caption{Sensitivity to regularization-strength hyperparameter $\tilde{\lambda}_p$ for sparse regularization for \textbf{TI}, \textbf{CS}, \textbf{L21}, and \textbf{L1} methods on feature selection setting synthetic datasets with $d_{\mathrm{true}}=20$, $b=1$, $d_{\mathrm{noise}}=80$, and $N=200$. Left graph shows estimation error (lower is better), center graph shows F1-score (higher is better), and right graph shows PSSR (higher is better). The plots of F1-scores of \textbf{$\Omega_*$-nmAPGD} and \textbf{$\Omega_*$-SubGD} and those of PSSRs of \textbf{$\Omega_*$-nmAPGD}, \textbf{$\Omega_*$-SubGD}, and \textbf{SLQR} overlap, respectively.}
        \label{fig:robustness}
    \end{figure*}
    In this experiment, we investigated sensitivity to the regularization-strength hyperparameter for sparse regularization in the existing and proposed sparse FM methods.
    We evaluated and compared the estimation errors, F1-scores, and PSSRs for various $\tilde{\lambda}_p$, which is the regularization strength for sparse regularization.
    In this experiment, we used $50$ feature selection datasets with $N=200$ since the results on the feature interaction selection datasets were bad except \textbf{TI} even with tuned $\tilde{\lambda}_p$, as discussed in~\cref{subsubsec:synthetic_tuned}.
    We set $\tilde{\lambda}_p$ and $\tilde{\lambda}$ to $2^{-10}, 2^{-9}, \ldots, 2^{6}$ and set $\lambda_p=\lambda_{\mathrm{tr}}=0.1$.
    The other settings (rank-hyper parameter, initialization, and stopping criterion) were the same as those described in~\cref{subsubsec:synthetic_tuned}.
    Again, we ran the experiment $10$ times with different initial random seeds for FM-based methods.
    
    As shown in~\cref{fig:robustness}, although the regions of an adequate $\tilde{\lambda}_p$ differed among methods, that of \textbf{CS} was wider than those of the other methods for all metrics.
    The regions of an adequate $\tilde{\lambda}_p$ of \textbf{TI} for the F1-score and PSSR were also wider than those of other methods.
    Thus, our proposed methods are less sensitive to $\tilde{\lambda}_p$ than the other methods.
    This is important in real-world applications, especially large-scale applications that require high computational costs for hyperparameter tuning.
 
\subsubsection{Efficiency and Scalability}
\label{subsubsec:efficiency}
    \begin{figure*}[t]
        \centering
        \subfloat[Feature interaction selection setting: $d_{\mathrm{true}}=80$, $b=8$ and $d_{\mathrm{noise}}=20$ with $\tilde{\lambda}_p=0.1$ (left) and $1.0$ (right).]{
        \includegraphics[width=80mm]{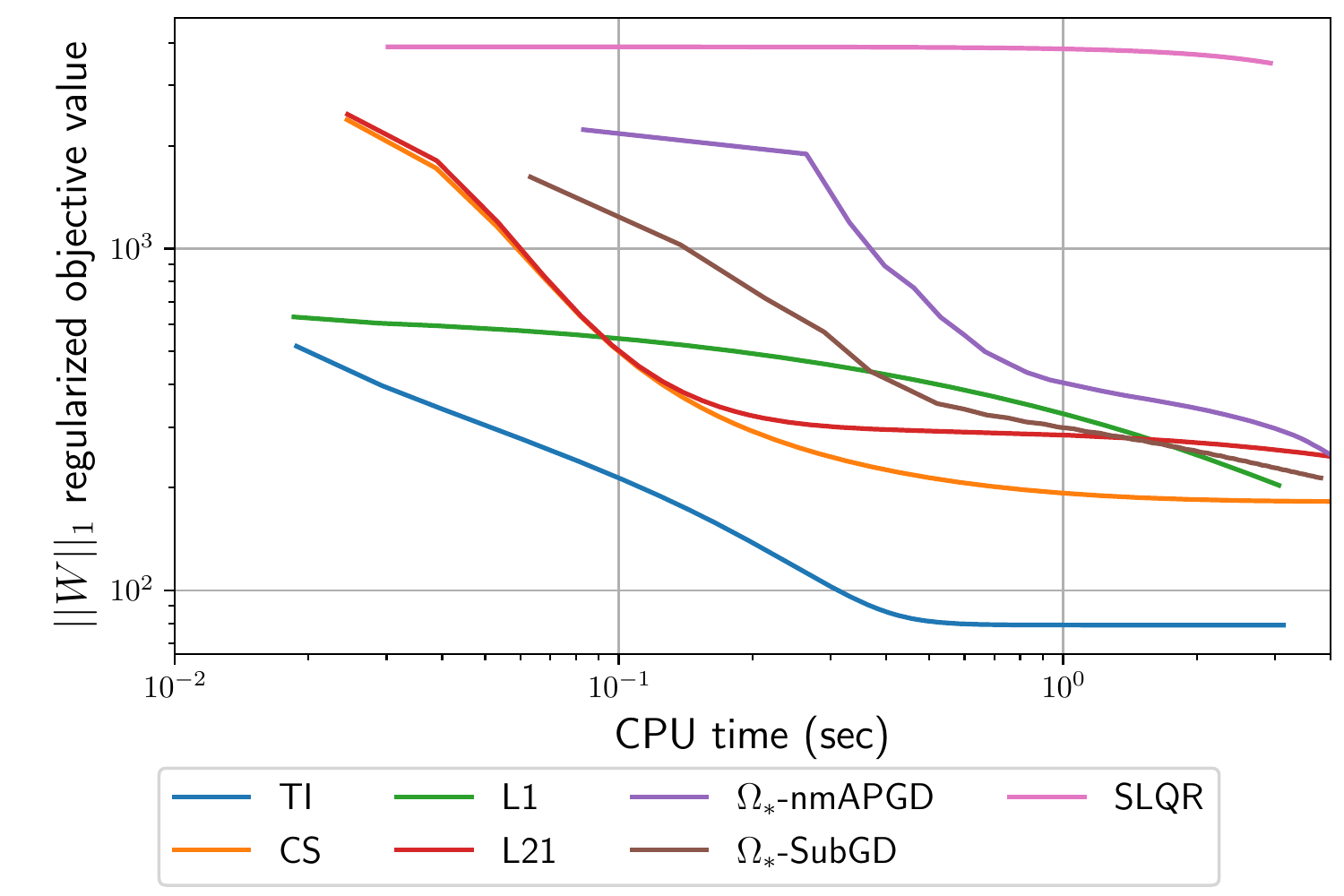}
        \includegraphics[width=80mm]{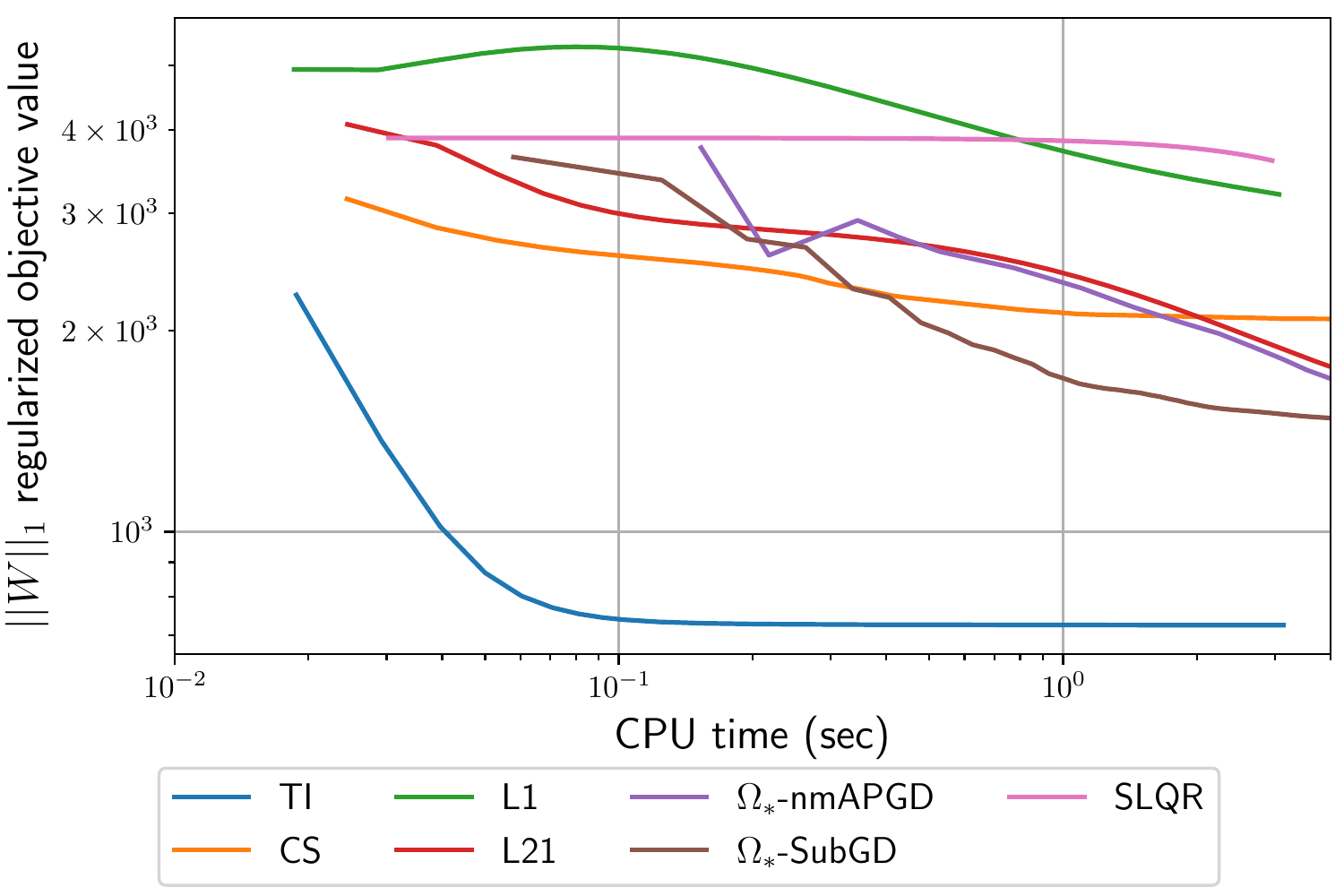}
        \label{fig:exact_feature_interaction_selection}
        }\\
        \subfloat[Feature selection setting: $d_{\mathrm{true}}=20$, $b=1$ and $d_{\mathrm{noise}}=80$ with $\tilde{\lambda}_p=0.1$ (left) and $1.0$ (right).]{
        \includegraphics[width=80mm]{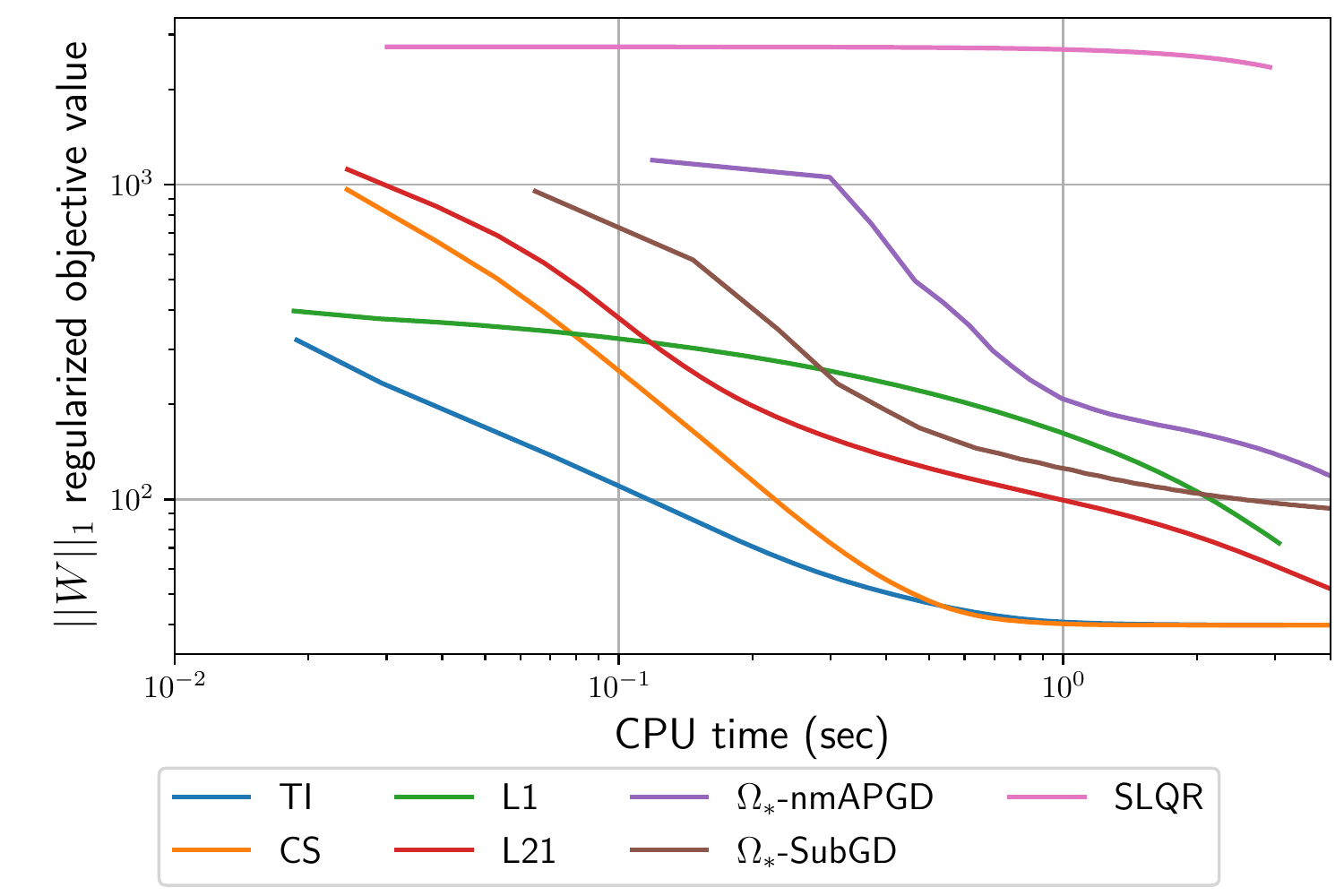}
        \includegraphics[width=80mm]{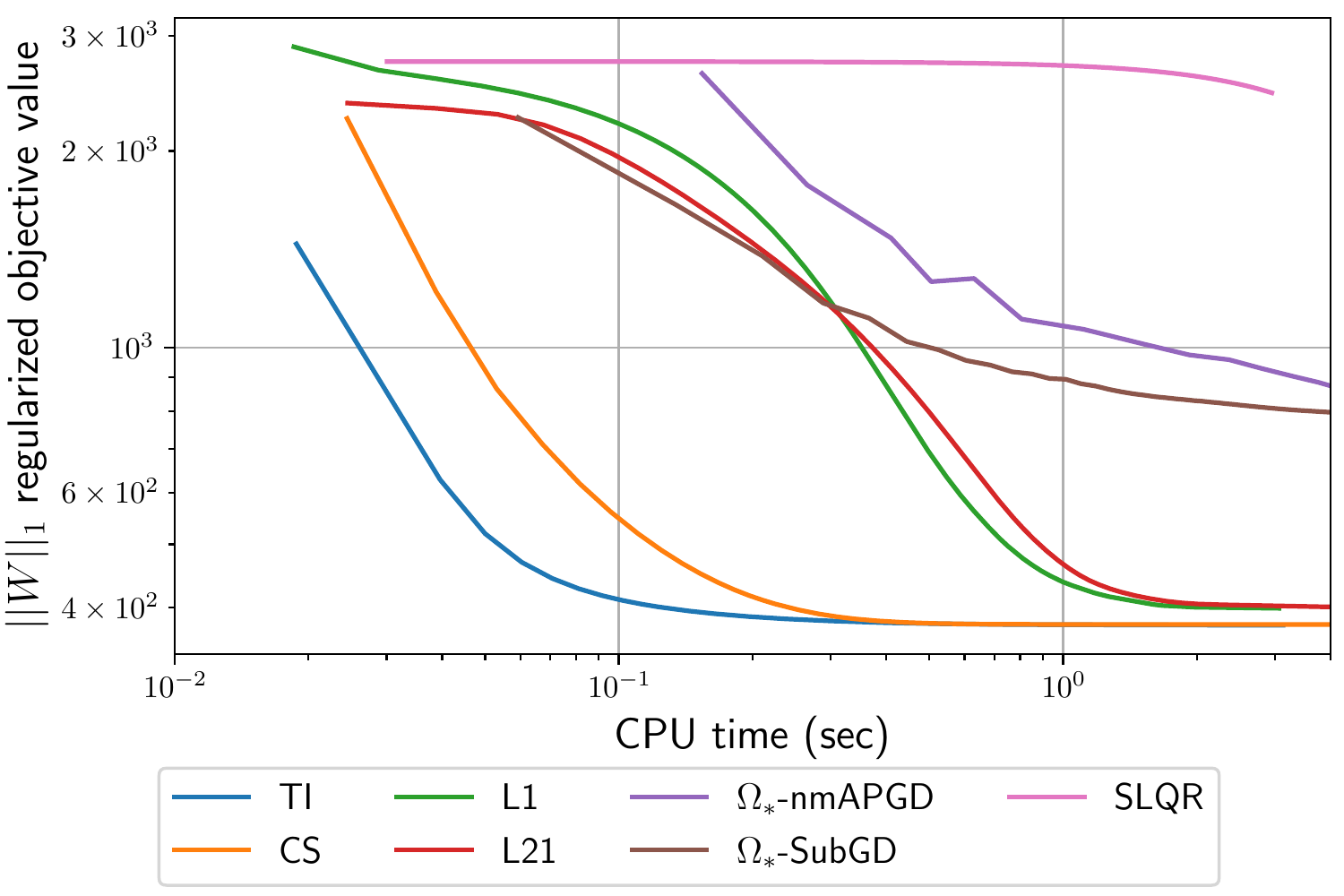}
        \label{fig:exact_feature_selection}
        }
        \caption{Trajectories of $\norm{\bm{W}}_1$ regularized objective value for \textbf{TI}, \textbf{CS}, \textbf{L21}, \textbf{L1}, \textbf{$\Omega_*$-nmAPGD}, \textbf{$\Omega_*$-SubGD}, and \textbf{SLQR} methods on (a) feature interaction selection setting synthetic datasets and (b) feature selection setting synthetic datasets with $N=200$.}
        \label{fig:exact}
    \end{figure*}

    \begin{figure*}
        \centering
        \includegraphics[width=80mm]{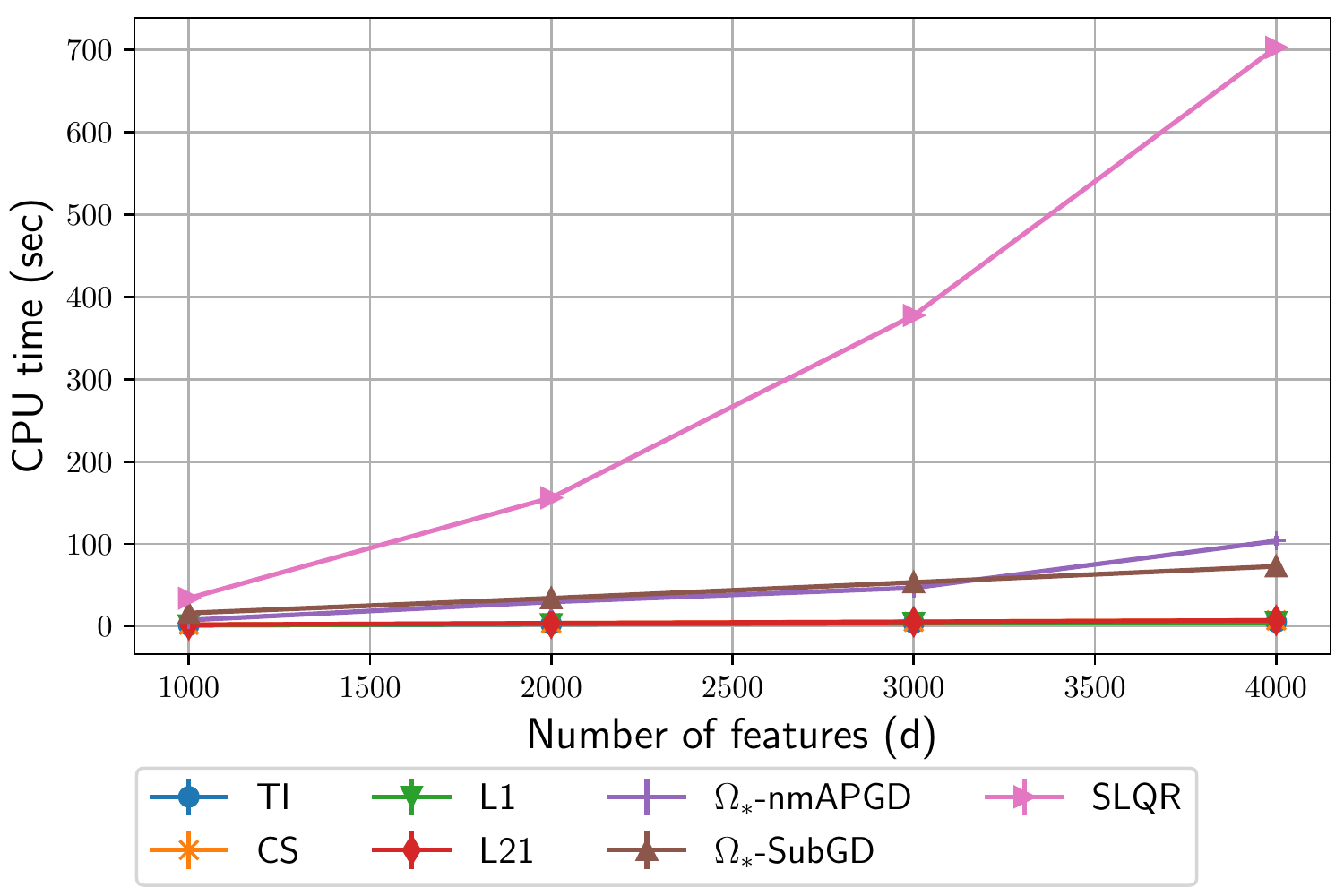}
        \includegraphics[width=80mm]{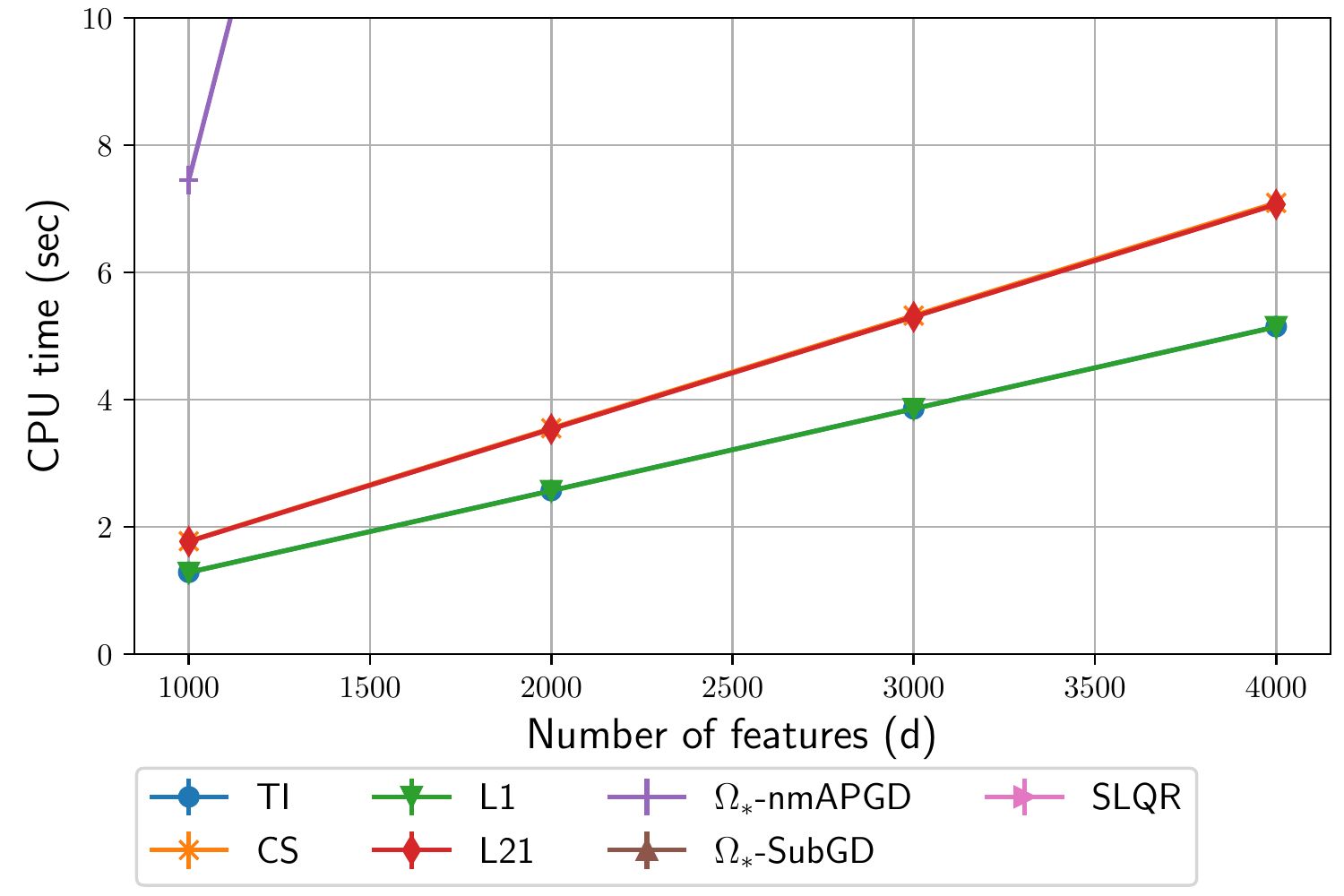}
        \label{fig:runtime_varying_d}
        \caption{Running times for one epoch in \textbf{TI}, \textbf{CS}, \textbf{L21}, \textbf{L1}, \textbf{$\Omega_*$-nmAPGD}, \textbf{$\Omega_*$-SubGD}, and \textbf{SLQR} methods on synthetic datasets.
        The right figure is a zoomed-in-version of the left figure.
        The plots of \textbf{TI} and \textbf{L1} overlap and those of \textbf{CS} and \textbf{L21} overlap.}
         \label{fig:running_time_varying_d}
    \end{figure*}
    
    In the third experiment, we evaluated the efficiency of the proposed and existing methods.
    We first compared convergences of the $\norm{\bm{W}}_1$ (it is equivalent to $2\Omega_*(\bm{P}) + \norm{\bm{P}}_2^2$ in FMs as shown in~\cref{subsec:exactl1}) regularized objective function value among \textbf{TI}, \textbf{CS}, \textbf{L21}, \textbf{L1}, \textbf{$\Omega_*$-nmAPGD}, \textbf{$\Omega_*$-SubGD}, and \textbf{SLQR} methods.
    We tracked the value in the optimization processes using $50$ feature interaction selection setting datasets and feature selection setting datasets with $N=200$.
    Since the appropriate $\tilde{\lambda}_p$ differed among methods, as mentioned in~\cref{subsubsec:sensitivity}, we ran the experiment with $\tilde{\lambda}_p=0.1$ and $1.0$.
    For both settings, we set $\lambda_p=\lambda_{\mathrm{tr}}=0.1$ and the other settings were the same as those described in~\cref{subsubsec:synthetic_tuned}.
    Note that we show the results of this experiment on some real-world datasets in~\Cref{subsec:efficiency_real_world}.
     
    As shown in~\cref{fig:exact_feature_interaction_selection}, the proposed \textbf{TI} method achieved the lowest $\norm{\bm{W}}_1$ regularized objective value on the feature interaction selection datasets.
    The difference was remarkable for $\tilde{\lambda}_p = 1.0$.
    As shown in~\cref{fig:exact_feature_selection}), the proposed \textbf{TI} and \textbf{CS} methods achieved lower $\norm{\bm{W}}_1$ regularized objective values on the feature selection datasets than the other methods for all parameter settings.
    Moreover, the objective values converged faster with \textbf{TI} and \textbf{CS} than with the other methods all parameter settings.
    Thus, our proposed sparse FMs are more attractive alternatives to $\Omega_*$-sparse FMs than the existing sparse FMs.

    We next compared the scalability of the existing and proposed methods w.r.t the number of features $d$.
    We created synthetic datasets with varying $d$ and compared the running time for one epoch among \textbf{TI}, \textbf{CS}, \textbf{L21}, \textbf{L1}, \textbf{$\Omega_*$-nmAPGD}, \textbf{$\Omega_*$-SubGD}, and \textbf{SLQR} methods.
    We changed $d_{\mathrm{true}}$ as $1000$, $2000$, $3000$, and $4000$.
    We set $b= d_{\mathrm{true}}/100$, $d_{\mathrm{noise}}=0$, and $N=2500$.
    We created ten datasets with different random seeds for all $d$ and report the average running times.
    The other settings were the same as those described in~\cref{subsubsec:synthetic_tuned}.
    
    As shown in~\cref{fig:running_time_varying_d}, the running time of \textbf{TI}, \textbf{CS}, \textbf{L1} and \textbf{L21} linearly increased w.r.t $d$.
    On the other hand, that of \textbf{$\Omega_*$-nmAPGD} and \textbf{$\Omega_*$-SubGD} increased quadratically, and that of \textbf{SLQR} increased cubically w.r.t $d$.
    When $d=4000$, \textbf{SLQR} ran more than 100 times slower than \textbf{TI}, \textbf{CS}, \textbf{L1}, and \textbf{L21}.
    Thus, the proposed \textbf{TI} and \textbf{CS} are better than \textbf{$\Omega_*$-nmAPGD}, \textbf{$\Omega_*$-SubGD}, and \textbf{SLQR} for a high-dimensional case.
    
\subsection{Real-world Datasets}
\label{subsec:real}
    Next, we used real-world datasets to demonstrate the usefulness of the proposed methods.

    \paragraph{Settings and Datasets.} We compared existing and proposed sparse FMs on an interpretability-constraint setting: the number of interactions in sparse FMs were constrained to be (about) $1,000$.
    We used one regression dataset, the MovieLens 100K (ML100K)~\citep{harper2016movielens} dataset, and three binary classification datasets, a9a, RCV1~\citep{chang2011libsvm}, and Flight Delay (FD)~\citep{ke2017lightgbm}.
    \cref{tab:datasets} summarizes the details of these datasets.
     \begin{table}[t]
        \centering
        \caption{Datasets used to demonstrate usefulness of proposed methods.}
        \begin{tabular}{c|c|cccc}
            Dataset & Task & $d$ & $N_{\mathrm{train}}$ & $N_{\mathrm{valid}}$ &$N_{\mathrm{test}}$ \\\hline
            ML100K~\citep{harper2016movielens} & Regression & 2,703 & 64,000 & 16,000 & 20,000\\
            a9a~\citep{chang2011libsvm} & Classification & 123 & 26,048 & 6,513 & 16,281\\
            RCV1~\citep{chang2011libsvm} & Classification & 47,236 & 16,193 & 4,049 & 677,399\\
            FD~\citep{ke2017lightgbm} & Classification & 696 & 16,000 & 4,000 & 80,000\\
        \end{tabular}
        \label{tab:datasets}
    \end{table}
    With the ML100K dataset, which is used for movie recommendation, we considered a regression problem: predicting the score given to a movie by a user.
    Possible scores were $1, 2, \ldots, 5$.
    We created feature vectors by following the method of~\citet{blondel2016higher}.
    We divided the $100,000$ user-item scores in the dataset into sets of $64,000$, $16,000$, and $20,000$ for training, validation, and testing, respectively.
    For the a9a and RCV1 datasets, we used feature vectors and targets that are available from the LIBSVM~\citep{chang2011libsvm} datasets repository.~\footnote{https://www.csie.ntu.edu.tw/~cjlin/libsvmtools/datasets/}
    Both a9a and RCV1 have already been divided into training and testing datasets; we used $20\%$ of the training dataset as the validation dataset and the remaining $80\%$ as the training dataset in this experiment.
    For the FD dataset, we considered the classification task to predict whether a flight will be delayed by more than 15 minutes.
    We used the scripts provided in \url{https://github.com/szilard/benchm-ml}: ran \texttt{2-gendata.txt} and randomly sampled train, valid, and test datasets from \texttt{train-1m.csv} and \texttt{test.csv}.
    Then, each instance had eight attributes and we encoded them to one-hot features except \texttt{DepTime} and \texttt{Distance}.
    \texttt{DepTime} represents	an actual departure time and consists of four integers, HHMM.
    We split such \texttt{DepTime} values to HH and MM, and encoded them to one-hot features (MM values were encoded to six-dimensional one-hot features based on their tens digit).
    For \texttt{Distance} values, we regarded them as numerical values and used their logarithm values.
     
    \paragraph{Evaluation Metrics.} As metrics, we used the root mean squared error (RMSE) for the ML100K dataset and the area under the receiver operating characteristic curve (ROC-AUC) for the a9a, RCV1, and FD datasets. Lower is better for the RMSE, and higher is better for the ROC-AUC.
    
    \paragraph{Hyperparameter Settings.} As in the experiments using synthetic datasets, we compared the \textbf{TI}, \textbf{CS}, \textbf{L21}, \textbf{L1}, and \textbf{FM} methods.
    We used the squared error as loss function $\ell$.
    We set rank-hyper parameter $k$ to $30$.
    We used the linear term $\inner{\bm{w}}{\bm{x}}$ and introduced a bias (intercept) term.
    We initialized each element in $\bm{P}$ by using $\mathcal{N}(0, 0.01^2)$ as in the experiments using synthetic datasets.
    We initialized $\bm{w}$ to $\bm{0}$ and the bias term to $0$.
    We ran the experiment five times with different random seeds and calculated the average values of the evaluation metrics.
    For the \textbf{FM} method, we chose $\lambda_w$ and $\lambda_p$ from $0.5\times10^{-7}, 0.5\times10^{-6}, \ldots, 0.5\times10^{-2}$. 
    For the \textbf{TI}, \textbf{CS}, \textbf{L21}, and \textbf{L1} methods, we set $\lambda_w=\lambda_w^{\mathrm{FM}}$ and $\lambda_p=\lambda_p^{\mathrm{FM}}/10$, where $\lambda_w^{\mathrm{FM}}$ and $\lambda_p^{\mathrm{FM}}$ are the tuned $\lambda_w$ and $\lambda_p$ in the \textbf{FM} method.
    Since sparse FMs have additional regularizers, we set $\lambda_p$ to $\lambda_p^{\mathrm{FM}}/10$, not $\lambda_p^{\mathrm{FM}}$.
    As described above, because we constrained the number of used interactions in sparse FMs to be about $1,000$, the method for tuning $\tilde{\lambda}_p$ was complicated.
    We searched for the appropriate $\tilde{\lambda}_p$ by binary search since the number of used interactions (i.e., number of non-zero elements among the strictly upper triangular elements in $\bm{P}\bm{P}^\top$) tended to be monotonically non-decreasing in $\tilde{\lambda}_p$.
    For each sparse FM, the initial range (i.e., upper bound and lower bound) of the binary search was chosen from $10^{-7}, 10^{-6}, \ldots, 10^{-2}$.
    After the initial range was chosen, we searched for the appropriate $\tilde{\lambda}_p$ by binary search.
    Since it was hard to achieve the number of used interactions to be exactly $1,000$, we accepted the models with the number of used interactions to be in $[990, 1,035]$.
    The reason why we set the acceptable range to be $[990, 1,035]$ is that \textbf{CS} and \textbf{L21} achieve only feature selection, and $990$ and $1,035$ are the nearest binomial coefficients to $1,000$: $990=\binom{45}{2}$ and $1,035=\binom{46}{2}$.
    Moreover, if the gap between an upper and a lower bound in binary search was lower than $10^{-12}$, we gave up tuning $\tilde{\lambda}_p$ for such models and set their scores to be N/A.
    Although \textbf{FM} did not select feature interactions, we showed results of it for comparison.
    We also show the results of \textbf{TI}, \textbf{CS}, \textbf{L1} and \textbf{L21} with tuned $\tilde{\lambda}_p \in \{10^{-7}, 10^{-6}, \ldots, 10^{-2}\}$ (i.e., the best results before doing binary search) for comparison although the numbers of used interactions in them were not close to $1,000$.

    \begin{table}[t]
        \centering
        \caption{Comparison of test RMSE for ML100K dataset and test ROC-AUC for a9a, RCV1, and FD datasets. (a) Results on interpretability constraint setting. (b) Results on no interpretability constraint setting ($\tilde{\lambda}_p \in \{10^{-7}, 10^{-6}, \ldots, 10^{-2}\}$). We report not only RMSE and ROC-AUC (upper value in each cell) but also the number of used interactions (lower value in each cell). Lower is better for RMSE, and higher is better for ROC-AUC.}
        \subfloat[Interpretability constraint setting.]{
            \begin{tabular}{c|c|c|c|c}
                 Method & ML100K (RMSE) & a9a (ROC-AUC) & RCV1 (ROC-AUC) & FD (ROC-AUC) \\ \hline
                 \multirow{2}{*}{\textbf{TI}} & \textbf{0.93018} & \textbf{0.90301} & \multirow{2}{*}{N/A} & 0.71373 \\
                 & 991.4 & 1,001.6 & & 1,004.4\\ \hline
                 \multirow{2}{*}{\textbf{CS}} & 0.93127 & 0.90259 & \textbf{0.99197} & \textbf{0.71437} \\
                 & 1,026.0 & 1,035.0 & 1,035.0 & 1,035.0\\ \hline
                 \multirow{2}{*}{\textbf{L1}} & \multirow{2}{*}{N/A} & 0.90197 & \multirow{2}{*}{N/A} & \multirow{2}{*}{N/A} \\
                 & & 1,026.0 &  & \\ \hline
                 \multirow{2}{*}{\textbf{L21}} & 0.93302 & 0.90259 & \multirow{2}{*}{N/A} & \multirow{2}{*}{N/A} \\
                 & 1,030.2 & 990.0 &  &
            \end{tabular}
            \label{tab:real_constraint}
        }
        
        \subfloat[No interpretability constraint setting ($\tilde{\lambda}_p$ was tuned from $\{10^{-7}, 10^{-6}, \ldots, 10^{-2}\}$).]{
        \begin{tabular}{c|c|c|c|c}
             Method & ML100K (RMSE) & a9a (ROC-AUC) & RCV1 (ROC-AUC) & FD (ROC-AUC) \\ \hline
             \multirow{2}{*}{\textbf{TI}} & 0.91848 & \textbf{0.90288} & \textbf{0.99258} & \textbf{0.71378} \\
             & 186,600.4 & 1,408.2 & 663,793.8 & 2,471.4\\ \hline
             \multirow{2}{*}{\textbf{CS}} & \textbf{0.91610} & 0.90250 & 0.99243 & 0.71057 \\
             & 7264.0 & 903.0 & 170,820.0 & 0.0\\ \hline
             \multirow{2}{*}{\textbf{L1}} & 0.92612 & 0.90193 & 0.99195 & 0.71057 \\
             & 1,616,087.2 & 946.0 & 0.0 & 0.0 \\ \hline
             \multirow{2}{*}{\textbf{L21}} & 0.91975 & 0.90269 & 0.99195 & 0.71057 \\
             & 1,775,679.2 & 1,081.0 & 0.0 & 0.0 \\ \hline
             \multirow{2}{*}{\textbf{FM}} & 0.91734 & 0.90280 & 0.99195 & 0.71334 \\
             & 3,483,480.0 & 7,503.0 & 0.0 & 206,403.0
             \label{tab:real_no_constraint}
        \end{tabular}}
        \label{tab:real_results}
    \end{table}
    
    \paragraph{Results.} As shown in~\cref{tab:real_constraint}, although the differences were not remarkable, our proposed methods achieved the best performance for each dataset.
    For ML100K and a9a datasets, \textbf{TI} achieved the best performance.
    This results match the experimental results in~\cref{subsubsec:synthetic_tuned}.
    Among \textbf{TI}, \textbf{CS}, \textbf{L21}, and \textbf{L1}, only \textbf{TI} can perform feature interaction selection.
    For the RCV1 dataset, only \textbf{CS} succeeded in learning models using approximately $1,000$ feature interactions.
    For the FD dataset, \textbf{TI} and \textbf{CS} succeeded in learning such models but \textbf{L21} and \textbf{L1} did not.
    This result matches the experimental results in~\cref{subsubsec:synthetic_tuned} and~\cref{subsubsec:sensitivity}.
    \textbf{TI} and \textbf{CS}, especially \textbf{CS}, are less sensitive to regularization-strength hyperparameter and can perform better feature selection than \textbf{L21} and \textbf{L1}.
    Moreover, as shown in~\cref{tab:real_no_constraint}, \textbf{TI} achieved the best performance for a9a, RCV1, and FD datasets and \textbf{CS} achieved the best performance for ML100K dataset on no interpretability constraint setting.
    They used not all but partial feature interactions except \textbf{CS} for FD dataset.
    On the other hand, \textbf{FM} used all feature interactions for ML100K, a9a, and FD datasets and used no feature interactions for RCV1 dataset (in our training-validation-test splitting, the training ML100K dataset contained only 2,640 features and the training FD dataset contained only 643 features).
    \textbf{L1} and \textbf{L21} also used no feature interactions for RCV1 and FD datasets.
    From these results, we conclude that our proposed methods could select better (important) features and feature interactions in terms of the prediction performance.

\section{Conclusion}
    \label{sec:conclusion}
    In this paper, we have presented new sparse regularizers for feature interaction selection and feature selection in FMs, the TI regulazier and the CS regularizer, respectively, as well as efficient proximal optimization methods for these proposed methods.
    Our basic idea is the use of $\ell_1$ regularizer for feature interaction weight matrix computed from the parameter matrix of FMs.
    This regularization seems appropriate for feature interaction selection in FMs because it is reported as one of the most promising sparse regularizers and selecting feature interactions necessarily means making $\bm{W}$ sparse.
    Unfortunately, the associated objective function is hard to optimize w.r.t the parameter matrix of FMs.
    To overcome this difficulty, we have proposed the use of squares of sparsity-inducing (quasi-)norms as an upper bound of the $\ell_1$ norm for $\bm{W}$, and we have presented such regularizers concretely, the TI regularizer and the CS regularizer.
    The TI enables feature interaction selection without feature selection, and the CS can select better (more important) features.
    Fortunately, the associated objective functions are now easy to optimize, because TI-sparse FMs and CS-sparse FMs can be optimized at the same computational cost as canonical FMs. 
    We have demonstrated the effectiveness of the proposed methods on synthetic and real-world datasets.

    As future work, we would like to (i) develop more efficient PSGD-based algorithms for TI/CS-sparse FMs and (ii) investigate the theoretical properties of the proposed methods.

\section*{Acknowledgements}
    This work was partially supported by JSPS KAKENHI Grant Number JP20J13620 and by the Global Station for Big Data and Cybersecurity, a project of the Global Institution for Collaborative Research and Education at Hokkaido University.

\newpage
\bibliographystyle{plainnat}

\newpage
\appendix
\section{Proofs}
\paragraph{Additional Notation.}
    For given two matrices $\bm{P}, \bm{Q} \in \real^{n \times n}$, $\bm{P} =_{>} \bm{Q}$ means $p_{i, j}=q_{i, j}$ for all $0 < i < j \le n$, that is, $\bm{P}$ and $\bm{Q}$ have same values in their strictly upper triangular elements.
    We use $\bm{e}_{j}^{d} \in \{0, 1\}^d$ for the $d$-dimensional standard basis vector whose $j$-th element is one and the others are zero.
    For given a scalar $\alpha \in \real$ and a subset $C$ of $\real^d$, we define $\alpha C \coloneqq \{\alpha \bm{c} : \bm{c} \in C\}$.
    For given a $d$-dimensional vector $\bm{x} \in \real^d $ and a subset $C$ of $\real^d$, we use $\bm{x} + C$ as $\{\bm{x}+\bm{c}: \bm{c} \in C\}$.

\subsection{Subdifferentials of Powers of Norms}
    \label{subsec:subdifferential}
    For~\cref{thm:solution_prox_higher_order_ti} and~\cref{thm:solution_prox_higher_order_cs} (and of course~\cref{thm:solution_prox_cs}), we first derive the subdifferentials of powers of norms.
    \begin{lemm}
        \label{lemm:subdifferential_pow_norm}
        Let $\bm{x} \in \real^d$ and $\norm{\cdot}$ be a norm on $\real^d$.
        Then, for all $m \ge 1$, the subdifferential of $\norm{\cdot}^m$ at $\bm{x}$, $\partial \norm{\bm{x}}^m$ is
        \begin{align}
            \partial \norm{\bm{x}}^m = m \norm{\bm{x}}^{m-1} \partial \norm{\bm{x}} = m \norm{\bm{x}}^{m-1} \{\bm{z} \in \real^d: \inner{\bm{z}}{\bm{x}} = \norm{\bm{x}}, \norm{\bm{z}}_* \le 1\},
            \label{eq:subdifferential_pow_norm}
        \end{align}
        where $\norm{\cdot}_{*}: \real^d \to \real$ is the dual norm of $\norm{\cdot}: \norm{\bm{x}}_* = \sup_{\bm{z} \in \real^d, \norm{\bm{z}} \le 1} \inner{\bm{z}}{\bm{x}}$.
    \end{lemm}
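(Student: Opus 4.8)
The plan is to reduce the statement to two standard ingredients: the subdifferential of a norm, and the subdifferential chain rule for the composition of a nondecreasing convex scalar function with a convex function.

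First I would record that, for any $\bm{x} \in \real^d$, $\partial \norm{\bm{x}} = \{\bm{z} \in \real^d : \inner{\bm{z}}{\bm{x}} = \norm{\bm{x}},\ \norm{\bm{z}}_* \le 1\}$. The inclusion ``$\subseteq$'' follows by substituting $\bm{y} = \bm{0}$ and $\bm{y} = 2\bm{x}$ into the subgradient inequality $\norm{\bm{y}} \ge \norm{\bm{x}} + \inner{\bm{z}}{\bm{y} - \bm{x}}$, which pins down $\inner{\bm{z}}{\bm{x}} = \norm{\bm{x}}$ and hence $\inner{\bm{z}}{\bm{y}} \le \norm{\bm{y}}$ for all $\bm{y}$, i.e.\ $\norm{\bm{z}}_* \le 1$; ``$\supseteq$'' is immediate from $\inner{\bm{z}}{\bm{y}} \le \norm{\bm{z}}_*\norm{\bm{y}} \le \norm{\bm{y}}$. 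At $\bm{x} = \bm{0}$ both sides equal the dual unit ball, so the formula is uniform.

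Next, I would write $\norm{\bm{x}}^m = \phi(\norm{\bm{x}})$ with $\phi(t) \coloneqq (\max\{t, 0\})^m$, which is a finite, convex, nondecreasing function on $\real$ agreeing with $t \mapsto t^m$ on $[0,\infty)$ --- the only region that matters since $\norm{\cdot} \ge 0$, and taking the positive part sidesteps the fact that $t^m$ need not be defined for $t < 0$ when $m$ is non-integer. Applying the standard composition rule $\partial(\phi \circ \norm{\cdot})(\bm{x}) = \bigcup_{s \in \partial\phi(\norm{\bm{x}})} s\,\partial\norm{\bm{x}}$, the case $\bm{x} \ne \bm{0}$ is finished at once: then $\norm{\bm{x}} > 0$, $\phi$ is differentiable there with $\phi'(\norm{\bm{x}}) = m\norm{\bm{x}}^{m-1}$, so $\partial\norm{\bm{x}}^m = m\norm{\bm{x}}^{m-1}\partial\norm{\bm{x}}$, and combining with the first paragraph gives \eqref{eq:subdifferential_pow_norm}.

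The only delicate point, and where I expect to have to be careful, is $\bm{x} = \bm{0}$. For $m = 1$ the claim is exactly the norm subdifferential from the first step. For $m > 1$, I would argue directly that $\partial\norm{\bm{0}}^m = \{\bm{0}\}$: if $\bm{z}$ is a subgradient then $\norm{\bm{y}}^m \ge \inner{\bm{z}}{\bm{y}}$ for all $\bm{y}$, and choosing $\bm{y} = \varepsilon\bm{u}$ with $\inner{\bm{z}}{\bm{u}} > 0$ and letting $\varepsilon \downarrow 0$ gives $\inner{\bm{z}}{\bm{u}} \le \varepsilon^{m-1}\norm{\bm{u}}^m \to 0$, forcing $\bm{z} = \bm{0}$; conversely $\bm{0}$ is trivially a subgradient. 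This matches $m\norm{\bm{0}}^{m-1}\partial\norm{\bm{0}}$ under the convention that this product is $\{\bm{0}\}$. As a fully self-contained alternative avoiding the cited chain rule, one can verify \eqref{eq:subdifferential_pow_norm} by a two-sided check of the subgradient inequality for $\bm{x} \ne \bm{0}$: the ``$\supseteq$'' direction reduces to the elementary inequality $t^m - m a^{m-1} t + (m-1)a^m \ge 0$ for $t \ge 0$, $a > 0$ (a convex function of $t$ vanishing at $t = a$), and the ``$\subseteq$'' direction reduces to minimizing $t^{m-1}\norm{\bm{y}}^m + (m-1)\norm{\bm{x}}^m/t$ over $t > 0$, whose minimum is exactly $m\norm{\bm{x}}^{m-1}\norm{\bm{y}}$, yielding $\norm{\bm{z}}_* \le m\norm{\bm{x}}^{m-1}$.
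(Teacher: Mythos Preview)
Your proof is correct but takes a different route from the paper. The paper verifies both inclusions bare-hands: for $\supseteq$ it combines H\"older's inequality with the scalar convexity inequality $b^m \ge a^m + m a^{m-1}(b-a)$; for $\subseteq$ it fixes $\bm{z} \in \partial\norm{\bm{x}}^m$ and argues by contradiction, first plugging in $\bm{y}=c\bm{x}$ and analysing the resulting one-variable function to force $\inner{\bm{z}}{\bm{x}}=m\norm{\bm{x}}^m$, then choosing a $\bm{y}$ attaining the dual norm to force $\norm{\bm{z}}_*\le m\norm{\bm{x}}^{m-1}$. By contrast, you invoke the standard chain rule $\partial(\phi\circ f)=\bigcup_{s\in\partial\phi(f(\bm{x}))} s\,\partial f(\bm{x})$ for nondecreasing convex $\phi$ composed with convex $f$, which collapses the $\bm{x}\neq\bm{0}$ case to one line, and treat $\bm{x}=\bm{0}$ separately. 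Your approach is shorter and more conceptual at the cost of citing an external convex-analysis fact; the paper's is longer but entirely self-contained from the definition of a subgradient. Your optional direct alternative is close in spirit to the paper's argument, and the optimisation trick $\min_{t>0}\{t^{m-1}\norm{\bm{y}}^m+(m-1)\norm{\bm{x}}^m/t\}=m\norm{\bm{x}}^{m-1}\norm{\bm{y}}$ is in fact a little slicker than the paper's contradiction for the dual-norm bound. One small omission in that sketch: to land in $m\norm{\bm{x}}^{m-1}\partial\norm{\bm{x}}$ you also need $\inner{\bm{z}}{\bm{x}}=m\norm{\bm{x}}^m$, which follows by testing $\bm{y}=c\bm{x}$ and letting $c\to 1^\pm$ (the paper's step 1 of part (ii)); you should say so explicitly.
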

    
    \begin{proof}
        (i) We first show that $\partial \norm{\bm{x}}^m \supseteq  m \norm{\bm{x}}^{m-1} \partial \norm{\bm{x}}$.
        For all $\bm{x}, \bm{y} \in \real^d$ and $\bm{z} \in \partial \norm{\bm{x}}$, from the definition of $\partial\norm{\bm{x}}$ and the H\"older's inequality ($\inner{\bm{z}}{\bm{y}} \le \norm{\bm{z}}_*\norm{\bm{y}}$), we have
        \begin{align}
            &\norm{\bm{x}}^m + \inner{m\norm{\bm{x}}^{m-1}\bm{z}}{\bm{y}-\bm{x}} = \norm{\bm{x}}^m + m\norm{\bm{x}}^{m-1}(\inner{\bm{z}}{\bm{y}} - \inner{\bm{z}}{\bm{x}}) \\
            &= \norm{\bm{x}}^m + m\norm{\bm{x}}^{m-1}(\inner{\bm{z}}{\bm{y}} - \norm{\bm{x}}) \le \norm{\bm{x}}^m + m\norm{\bm{x}}^{m-1}\norm{\bm{z}}_*\norm{\bm{y}} - m\norm{\bm{x}}^m\\
            &\le (1-m)\norm{\bm{x}}^m + m \norm{\bm{x}}^{m-1}\norm{\bm{y}}.
        \end{align}
        Moreover, from the convexity of $x^m$ on $x \ge 0$ ($m \ge 1$) and the non-negativity of the norm, we have
        \begin{align}
            (\norm{\bm{y}})^m \ge (\norm{\bm{x}})^m + m (\norm{\bm{x}})^{m-1}(\norm{\bm{y}} - \norm{\bm{x}}) = (1-m)\norm{\bm{x}}^m + m \norm{\bm{x}}^{m-1}\norm{\bm{y}}.
        \end{align}
        Combining them, we obtain
        \begin{align}
            \norm{\bm{y}}^m \ge \norm{\bm{x}}^m + \inner{m\norm{\bm{x}}^{m-1}\bm{z}}{\bm{y}-\bm{x}}.
        \end{align}
        The above inequality means $(m \norm{\bm{x}}^{m-1}\bm{z}) \in \partial \norm{\bm{x}}^m$.
        Therefore $\partial \norm{\bm{x}}^m \supseteq  m \norm{\bm{x}}^{m-1} \partial \norm{\bm{x}}$.
        
        (ii) We next prove that $\partial \norm{\bm{x}}^m \subseteq  m \norm{\bm{x}}^{m-1} \partial \norm{\bm{x}}$ by contradiction.
        \begin{enumerate}
            \item $\inner{\bm{z}}{\bm{x}} = m \norm{\bm{x}}^{m-1}\norm{\bm{x}}=m\norm{\bm{x}}^m$ for all $\bm{z} \in \partial \norm{\bm{x}}^m$.
            It clearly holds when $\bm{x}=\bm{0}$, so we consider the case where $\bm{x}\neq \bm{0}$.
            Assume that $\inner{\bm{z}}{\bm{x}} = m \norm{\bm{x}}^{m} + \varepsilon$, where $\varepsilon > 0$, and we show that then there exists $\bm{y} \in \real^d$ such that $\norm{\bm{y}}^m < \norm{\bm{x}}^m + \inner{\bm{z}}{\bm{y}-\bm{x}}$ (this contradicts to the definition of $\partial \norm{\bm{x}}^m$: if $\bm{z} \in \partial \norm{\bm{x}}^m$, then $\norm{\bm{y}}^m \ge \norm{\bm{x}}^m + \inner{\bm{z}}{\bm{y}-\bm{x}}$ for all $\bm{y} \in \real^d$).
            Let $\bm{y} = c \bm{x}$, $c > 0$, and define $h:\real \to \real$ as
            \begin{align}
                h(c) \coloneqq \norm{\bm{x}}^m + \inner{\bm{z}}{\bm{y}-\bm{x}} - \norm{\bm{y}}^m = (1-c^m) \norm{\bm{x}}^m - (1-c)(m \norm{\bm{x}}^m + \varepsilon).
            \end{align}
            $h(1)=0$, $h'(c) = - mc^{m-1}\norm{\bm{x}}^m + (m\norm{\bm{x}}^m+\varepsilon)$, and $h'(c) > 0$ on $c^{m-1} \in (0, 1 + \varepsilon / (m \norm{\bm{x}}^m))$, so $h(c) > 0$ on $c \in (1, \{1+ \varepsilon / (m \norm{\bm{x}}^m)\}^{1/(m-1)})$.
            This means that there exists $\bm{y} \in \real^d$ such that $\norm{\bm{y}}^m < \norm{\bm{x}}^m + \inner{\bm{z}}{\bm{y}-\bm{x}}$, and this contradicts to the definition of the subdifferential.
            The contradiction can be derived under the assumption $\inner{\bm{z}}{\bm{x}} = m \norm{\bm{x}}^{m} - \varepsilon$ in a similar manner.
            Thus, $\inner{\bm{z}}{\bm{x}} = m\norm{\bm{x}}^m$ for all $\bm{z} \in \partial \norm{\bm{x}}^m$.
            
            \item $\norm{\bm{z}}_* \le m \norm{\bm{x}}^{m-1}$ for all $\bm{z} \in \partial \norm{\bm{x}}^m$.
            Assume that $\norm{\bm{z}}_* = m \norm{\bm{x}}^{m-1} + \varepsilon$, $\varepsilon > 0$.
            From the H\"older's inequality, $\inner{\bm{z}}{\bm{y}} \le \norm{\bm{z}}_* \norm{\bm{y}}$ for all $\bm{y} \in \real^d$.
            We can take $\bm{y} \in \real^d$ such that $\inner{\bm{z}}{\bm{y}} = \norm{\bm{z}}_* \norm{\bm{y}}$.
            Then,
            \begin{align}
                \norm{\bm{x}}^m + \inner{\bm{z}}{\bm{y}-\bm{x}} &= \norm{\bm{x}}^m + \norm{\bm{z}}_*\norm{\bm{y}} - m\norm{\bm{x}}^m \\
                & = (1-m) \norm{\bm{x}}^m + (m\norm{\bm{x}}^{m-1} + \varepsilon)\norm{\bm{y}}.
            \end{align}
            We first consider the case where $\bm{x}\neq \bm{0}$.
            If we choose $\bm{y}$ such that $\norm{\bm{y}}=\norm{\bm{x}}$, then
            \begin{align}
                (1-m)\norm{\bm{x}}^m + (m\norm{\bm{x}}^{m-1} + \varepsilon)\norm{\bm{y}} = (1+\varepsilon) \norm{\bm{y}} > \norm{\bm{y}}.
            \end{align}
            This contradicts to the definition of the subdifferential.
            We next consider the case where $\bm{x} = \bm{0}$.
            If we choose $\bm{y}$ such that $\varepsilon > \norm{\bm{y}}^{m-1}$, then
            \begin{align}
                (1-m)\norm{\bm{x}}^m + (m\norm{\bm{x}}^{m-1} + \varepsilon)\norm{\bm{y}} = \varepsilon \norm{\bm{y}} > \norm{\bm{y}}^{m-1}\norm{\bm{y}} = \norm{\bm{y}}^m,
            \end{align}
            and this also contradicts to the definition of the subdifferential.
            Thus $\norm{\bm{z}}_* \le m \norm{\bm{x}}^{m-1}$ for all $\bm{z} \in \partial \norm{\bm{x}}^m$.
        \end{enumerate}
        
        These results imply $\partial \norm{\bm{x}}^m \subseteq  m \norm{\bm{x}}^{m-1} \partial \norm{\bm{x}}$.
        
        From (i) and (ii), we have $\partial \norm{\bm{x}}^m = m \norm{\bm{x}}^{m-1} \partial \norm{\bm{x}}$.
    \end{proof}

    From~\cref{lemm:subdifferential_pow_norm}, we have the following corollaries.
    \begin{coro}
        \label{coro:subdifferential_pow_l1}
        For all $m \ge 1$, the subdifferential of $\norm{\cdot}_1^m$ at $\bm{p} \in \real^d$, $\partial \norm{\bm{p}}_1^m$ is
        \begin{align}
            \partial \norm{\bm{p}}_1^m = m \norm{\bm{p}}_1^{m-1} \partial \norm{\bm{p}}_1 = m \norm{\bm{p}}_1^{m-1} \prod_{i=1}^d J(p_i),\quad \text{where } J(p) = \begin{cases}
            \{1\} & p > 0,\\
            \{-1\} & p < 0,\\
            [-1, 1] & p = 0.
            \end{cases}
            \label{eq:subdifferential_pow_l1}
        \end{align}
    \end{coro}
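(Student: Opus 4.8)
The plan is to obtain the statement as an immediate specialization of~\cref{lemm:subdifferential_pow_norm} to the case $\norm{\cdot}=\norm{\cdot}_1$, combined with the classical description of the subdifferential of the $\ell_1$ norm. First I would note that $\norm{\cdot}_1$ is a norm on $\real^d$ whose dual norm is $\norm{\cdot}_\infty$, so~\cref{lemm:subdifferential_pow_norm} gives at once
\begin{align}
\partial\norm{\bm{p}}_1^m = m\norm{\bm{p}}_1^{m-1}\,\partial\norm{\bm{p}}_1 = m\norm{\bm{p}}_1^{m-1}\{\bm{z}\in\real^d : \inner{\bm{z}}{\bm{p}}=\norm{\bm{p}}_1,\ \norm{\bm{z}}_\infty\le 1\},
\end{align}
which is the first equality of~\eqref{eq:subdifferential_pow_l1}. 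It then remains only to show that $\{\bm{z} : \inner{\bm{z}}{\bm{p}}=\norm{\bm{p}}_1,\ \norm{\bm{z}}_\infty\le 1\} = \prod_{i=1}^d J(p_i)$.

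For that identification I would argue componentwise using the elementary bound $z_i p_i\le |z_i|\,|p_i|\le |p_i|$, valid whenever $|z_i|\le 1$. Summing over $i$ shows $\inner{\bm{z}}{\bm{p}}\le\norm{\bm{p}}_1$ for every $\bm{z}$ with $\norm{\bm{z}}_\infty\le 1$, with equality precisely when $z_i p_i=|p_i|$ in each coordinate. Unpacking this coordinatewise gives $z_i=\sign(p_i)$ when $p_i\neq 0$ and no constraint beyond $z_i\in[-1,1]$ when $p_i=0$, which is exactly the condition $z_i\in J(p_i)$ for all $i$; the reverse inclusion is the same computation read backwards. Combining the two equalities proves the corollary.

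The main (and essentially only) point needing care will be the treatment of the coordinates with $p_i=0$, where the equality constraint $z_i p_i=|p_i|$ degenerates and must be recognized as imposing merely $z_i\in[-1,1]$ rather than a singleton; everything else is a one-line application of the equality case of H\"older's inequality, so I do not anticipate any genuine obstacle.
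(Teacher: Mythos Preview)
Your proposal is correct and matches the paper's approach exactly: the paper presents this corollary as an immediate consequence of \cref{lemm:subdifferential_pow_norm} without further proof, and your argument simply spells out that specialization together with the standard coordinatewise description of $\partial\norm{\bm{p}}_1$. The only difference is that you supply the details the paper omits.
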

    
    \begin{coro}
        \label{coro:subdifferential_pow_l21}
        For all $m \ge 1$, the subdifferential of $\norm{\cdot}_{2,1}^m$ at $\bm{P} \in \real^{d \times k}$, $\partial \norm{\bm{P}}_{2,1}^m$ is
        \begin{align}
            \partial \norm{\bm{P}}_{2,1}^m = m \norm{\bm{P}}_{2,1}^{m-1} \partial \norm{\bm{P}}_{2,1} = m \norm{\bm{P}}_{2,1}^{m-1} \{\bm{Z} \in \real^{d\times k}: \bm{z}_i \in \partial \norm{\bm{p}_i}_2, i \in [d]\}.
            \label{eq:subdifferential_pow_l21}
        \end{align}
    \end{coro}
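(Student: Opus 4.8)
The plan is to invoke \cref{lemm:subdifferential_pow_norm} with the ambient space $\real^{d\times k}$ (identified with $\real^{dk}$) and the norm $\norm{\cdot}_{2,1}$. First one checks that $\norm{\cdot}_{2,1}$ really is a norm on $\real^{d\times k}$: it is nonnegative and vanishes only at $\bm{0}$, it is absolutely homogeneous, and the triangle inequality follows from that for $\norm{\cdot}_1$ on $\real^d$ composed with the monotone, subadditive map $\bm{P}\mapsto(\norm{\bm{p}_1}_2,\dots,\norm{\bm{p}_d}_2)^\top$. Then \cref{lemm:subdifferential_pow_norm} applies verbatim (with $m\ge 1$) and yields $\partial\norm{\bm{P}}_{2,1}^m = m\norm{\bm{P}}_{2,1}^{m-1}\partial\norm{\bm{P}}_{2,1}$, which is the first claimed equality.

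It then remains to identify $\partial\norm{\bm{P}}_{2,1}$. Write $\norm{\bm{P}}_{2,1}=\sum_{i=1}^d g_i(\bm{P})$ with $g_i(\bm{P})=\norm{\bm{p}_i}_2$; this is a sum of finite convex functions, the $i$-th of which depends only on the $i$-th row of $\bm{P}$, and the rows form disjoint coordinate blocks. By additivity of the subdifferential for sums of finite convex functions together with this block separability, $\partial\norm{\bm{P}}_{2,1}=\{\bm{Z}\in\real^{d\times k}:\bm{z}_i\in\partial\norm{\bm{p}_i}_2\text{ for all }i\in[d]\}$, which is the second equality. If one wishes, one can further substitute the standard formula $\partial\norm{\bm{p}_i}_2=\{\bm{p}_i/\norm{\bm{p}_i}_2\}$ when $\bm{p}_i\neq\bm{0}$ and $\partial\norm{\bm{0}}_2=\{\bm{z}\in\real^k:\norm{\bm{z}}_2\le 1\}$ otherwise.

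There is essentially no real obstacle: the corollary is a direct specialization of \cref{lemm:subdifferential_pow_norm}, and the only thing needing verification beyond that lemma is the row-wise decomposition of $\partial\norm{\cdot}_{2,1}$, which is immediate from the disjoint coordinate-block structure of the $\ell_{2,1}$ norm. (The same argument gives \cref{coro:subdifferential_pow_l1}, with the blocks now being the individual entries and $\partial\norm{\cdot}_1$ the product of the sign sets $J(p_i)$.)
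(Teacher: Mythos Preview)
Your proposal is correct and matches the paper's approach: the corollary is stated without proof in the paper, as it follows directly from \cref{lemm:subdifferential_pow_norm} applied to the norm $\norm{\cdot}_{2,1}$, together with the standard row-separable description of $\partial\norm{\cdot}_{2,1}$. You have simply spelled out the two ingredients (that $\norm{\cdot}_{2,1}$ is a norm, and the block-separable subdifferential formula) that the paper leaves implicit.
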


\subsection{Proximal Operator for $\tilde{\ell}_{1,m}^m$}
    In this section, we prove~\cref{thm:solution_prox_higher_order_ti}.
    We first present some properties of $\bm{q}^* = \prox_{\lambda \norm{\cdot}_1^m}(\bm{p})$.
    \begin{prop}
        \label{prop:properties_prox_ti}
        Let $\bm{q}^* = \prox_{\lambda \norm{\cdot}_1^m}(\bm{p}) \in \real^d$.
        Then, the followings hold for all $i, j \in [d]$:
        \begin{enumerate}[(i)]
            \item $p_i = 0$ $\rightarrow$ $q^*_i = 0$,
            \item $p_i > 0$ $\rightarrow$ $q^*_i \ge 0$,
            \item $p_i < 0$ $\rightarrow$ $q^*_i \le 0$,
            \item $|p_i| \ge |p_j|$ $\rightarrow$ $|q^*_i| \ge |q^*_j|$,
        \end{enumerate}
        \end{prop}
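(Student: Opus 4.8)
The plan is to read the proposition directly off the first-order optimality condition for the strongly convex proximal objective, using the subdifferential formula already established in Corollary~\ref{coro:subdifferential_pow_l1}. First I would note that $g(\bm q) \coloneqq \frac{1}{2}\norm{\bm q-\bm p}_2^2 + \lambda\norm{\bm q}_1^m$ is strongly convex: the quadratic term is $1$-strongly convex, and $\norm{\cdot}_1^m$ is convex for $m\ge 1$, being the composition of the convex nondecreasing map $t\mapsto t^m$ on $\real_{\ge 0}$ with a norm. Hence $\bm q^*$ is the unique minimizer and is characterized by $\bm 0\in\partial g(\bm q^*)=(\bm q^*-\bm p)+\lambda\,\partial\norm{\bm q^*}_1^m$. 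By Corollary~\ref{coro:subdifferential_pow_l1}, $\partial\norm{\bm q^*}_1^m = m\norm{\bm q^*}_1^{m-1}\prod_{i}J(q_i^*)$, so, setting $c\coloneqq\lambda m\norm{\bm q^*}_1^{m-1}\ge 0$, there are $z_i\in J(q_i^*)$ with
\begin{align}
  p_i = q_i^* + c\,z_i,\qquad i\in[d].
  \label{eq:plan_opt_prop}
\end{align}
The crucial structural feature is that the \emph{same} nonnegative scalar $c$ multiplies the subgradient term in every coordinate.

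Claims (i)--(iii) then fall out of a three-way case split in~\eqref{eq:plan_opt_prop}. If $q_i^*>0$ then $z_i=1$ and $p_i=q_i^*+c>0$; if $q_i^*<0$ then $z_i=-1$ and $p_i=q_i^*-c<0$; if $q_i^*=0$ then $p_i=cz_i$ with $|z_i|\le 1$, so $|p_i|\le c$. Contraposing the first two statements gives $p_i>0\Rightarrow q_i^*\ge 0$ and $p_i<0\Rightarrow q_i^*\le 0$, i.e.\ (ii) and (iii); taking both contrapositives together gives $p_i=0\Rightarrow q_i^*=0$, i.e.\ (i). For the rearrangement claim (iv) I would convert~\eqref{eq:plan_opt_prop} into an explicit formula for the magnitudes. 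If $q_i^*\ne 0$, then $p_i$ and $q_i^*$ have the same sign by (ii)--(iii), so $|p_i|=|q_i^*|+c$ and $|q_i^*|=|p_i|-c>0$; if $q_i^*=0$, then $|p_i|\le c$ and $\max\{|p_i|-c,0\}=0=|q_i^*|$. In both cases $|q_i^*|=\max\{|p_i|-c,0\}$ with one common $c\ge 0$, and since $t\mapsto\max\{t-c,0\}$ is nondecreasing on $\real_{\ge 0}$, $|p_i|\ge|p_j|$ immediately yields $|q_i^*|\ge|q_j^*|$.

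I expect the only real care needed is bookkeeping: confirming that a single constant $c$ is legitimately shared across all coordinates (which is exactly what Corollary~\ref{coro:subdifferential_pow_l1} delivers, since the scalar $m\norm{\bm q^*}_1^{m-1}$ factors out of the whole subdifferential) and treating the boundary case $q_i^*=0$ cleanly in both the sign and the magnitude steps. If one prefers to avoid subdifferentials, (i)--(iii) also follow from an elementary perturbation argument --- zeroing a coordinate, or flipping its sign toward that of $p_i$, strictly decreases $g$ unless the claim already holds --- and (iv) from swapping the magnitudes $|q_i^*|$ and $|q_j^*|$ (with signs matched to $\bm p$ via (i)--(iii)): this leaves $\norm{\cdot}_1$, hence $\lambda\norm{\cdot}_1^m$, unchanged while not increasing $\frac{1}{2}\norm{\cdot-\bm p}_2^2$, strictly decreasing it when $|p_i|>|p_j|$; the tie $|p_i|=|p_j|$ is then settled by uniqueness of the minimizer.
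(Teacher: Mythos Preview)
Your argument is correct, but it proceeds by a genuinely different route than the paper. The paper proves (ii)--(iv) by elementary perturbation arguments: for (ii) it flips the sign of a putative negative $q_i$ to show the objective strictly drops, and for (iv) it swaps the magnitudes $|q_i|$ and $|q_j|$ (signs aligned with $\bm p$) to show the quadratic term strictly decreases while $\norm{\cdot}_1^m$ is unchanged. You instead read everything off the first-order optimality condition via Corollary~\ref{coro:subdifferential_pow_l1}, extracting the single shared threshold $c=\lambda m\norm{\bm q^*}_1^{m-1}$ and arriving at the closed form $|q_i^*|=\max\{|p_i|-c,0\}$, from which (i)--(iv) are immediate. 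This is cleaner and in fact anticipates the structure of the proof of Theorem~\ref{thm:solution_prox_higher_order_ti}, where the paper itself invokes Corollary~\ref{coro:subdifferential_pow_l1} in exactly this way; the cost is that your argument depends on that corollary, whereas the paper's perturbation proof of the proposition is self-contained. Your closing paragraph sketching the sign-flip and magnitude-swap alternative is precisely the paper's approach, including the handling of the tie $|p_i|=|p_j|$ by uniqueness.
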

    \begin{proof}
        (i) is trivial.
        
        (ii) Assume $p_i > 0$.
        We prove (ii) by showing that $\bm{q}$ is not an optimal value if $q_i < 0$.
        We consider the objective function in the proximal operation: $g_{\lambda \norm{\cdot}_1^m}(\bm{q}; \bm{p}) = \norm{\bm{p}-\bm{q}}_2^2/2 + \lambda\norm{\bm{q}}_1^m$.
        By construction $\bm{q}^* = \argmin_{\bm{q}} g_{\lambda \norm{\cdot}_1^m}(\bm{q}; \bm{p})$.
        Let $\bm{q}$ be the $d$-dimensional vector with $q_i < 0$ and $\bm{q}'$ be the $d$-dimensional vector such that $q'_i = |q_i|$ and $q'_j = q'_j$ for all $j \in [d] \setminus \{i\}$ (i.e, the sign of $i$-th element is reversed compared to $\bm{q}$).
        Then, $g_{\lambda \norm{\cdot}_1^m}(\bm{q}';\bm{p}) < g_{\lambda \norm{\cdot}_1^m}(\bm{q};\bm{p})$ since $\norm{\bm{q}'}_1^m = \norm{\bm{q}}_1^m$ but $\norm{\bm{p}-\bm{q}'}_2^2 < \norm{\bm{p}-\bm{q}}_2^2$.
        That is, when $p_i > 0$, $\bm{q}$ with $q_i < 0$ is not an optimal solution.
        It implies (ii).
        
        (iii) can be derived as in (ii).
        
        (iv) Assume $|p_i| > |p_j|$ and we prove (iv) as in the proof of (ii).
        Let $\bm{q}$ be the $d$-dimensional vector such that $|q_i| < |q_j|$ and $\sign(p_i)\cdot \sign(q_i) \ge 0$, $\sign(p_j)\cdot \sign(q_j) \ge 0$ (i.e., $q_i$ and $q_j$ satisfy (i)-(iii)).
        Moreover, let $\bm{q}'$ be the $d$-dimensional vector such that $q'_i = \sign(p_i)|q_j|$, $q'_j = \sign(p_j)|q_i|$, and $q'_{j'} = q_{j'}$ for all $j' \in [d] \setminus \{i, j\}$ (namely, $\bm{q}'$ is defined by exchanging the absolute value of $i$-th and $j$-th element in $\bm{q}$).
        Then, $g_{\lambda \norm{\cdot}_1^m}(\bm{q}';\bm{p}) < g_{\lambda \norm{\cdot}_1^m}(\bm{q};\bm{p})$ like (ii): the values of second terms are same ($\norm{\bm{q}'}_1^m = \norm{\bm{q}}_1^m$) but
        \begin{align}
            &\norm{\bm{p}-\bm{q}'}_2^2 - \norm{\bm{p}-\bm{q}}_2^2 = -2|p_i||q'_i| - 2|p_j||q'_j| + 2|p_i||q_i| + 2|p_j||q_j|\\
            &=  -2|p_i||q_j| - 2|p_j||q_i| + 2|p_i||q_i| + 2|p_j||q_j| = -2 (|p_i| - |p_j|)(|q_j| - |q_i|) < 0.
        \end{align}
        That is, when $|p_i| > |p_j| $, $\bm{q}$ with $|q_i| < |q_j|$ is not an optimal solution, and it implies (iv).
    
    \end{proof}
    
    Finally, we prove~\cref{thm:solution_prox_higher_order_ti}.
    \proxhigherorderti*
    \begin{proof}
        We first prove that there exists $\theta \in 
        [d]$ such that Equation~\eqref{eq:solution_prox_higher_order_ti} holds.
        $\bm{q}^* = \prox_{\lambda \norm{\cdot}_1^m}(\bm{p})$ means the subdifferential of $g_{\lambda \norm{\cdot}_1^m}$ at $\bm{q}^*$ includes $\bm{0}$.
        From $\partial g_{\lambda \norm{\cdot}_1^m}(\bm{q}^*) = \bm{q}^* - \bm{p} + \lambda \partial \norm{\bm{q}^*}_1^m$ and \cref{coro:subdifferential_pow_l1}, we have
        \begin{align}
            q^*_j = \begin{cases}
            p_j - \lambda m \norm{\bm{q}^*}_1^{m-1}& q_j^* > 0,\\
            p_j + \lambda m \norm{\bm{q}^*}_1^{m-1} & q_j^* < 0.
            \end{cases}
            \label{eq:optimality_condition1}
        \end{align}
        Moreover, from~\cref{prop:properties_prox_ti}, $\bm{q}^*$ is also sorted by absolute value and there exists $\theta \in [d]$ such that $|q_j| > 0$ for all $j \le \theta$ and $|q_j| = 0$ for all $j > \theta$.
        Therefore, Equation~\eqref{eq:optimality_condition1} can be rewritten as
        \begin{align}
            q^*_j &= \begin{cases}
            \sign(p_j)\left[|p_j| - \lambda m \left(\sum_{i=1}^{\theta}|q^*_i|\right)^{m-1} \right]& j \le \theta,\\
            0 & \text{otherwise},
            \end{cases}
            \label{eq:optimality_condition2}
        \end{align}
        Here, about the summation of $|q^*_1|, \ldots, |q^*_{\theta}|$, we have
        \begin{align}
            & \sum_{i=1}^{\theta} |q^*_i| = \sum_{i=1}^{\theta} \sign(q^*_i)q^*_i = \sum_{i=1}^{\theta} \sign(p_i)q^*_i = \sum_{i=1}^{\theta} |p_i| - \lambda m \theta \left(\sum_{i=1}^{\theta}|q_i^*|\right)^{m-1}.
        \end{align}
        Thus,~\eqref{eq:optimality_condition2} is rewritten as
        \begin{align}
            q^*_j &= \begin{cases}
            \sign(p_j)\left[|p_j| - \lambda m S_{\theta}^{m-1}\right] & j \le \theta,\\
            0 & \text{otherwise},
            \end{cases}
            \tag{\ref{eq:prox_higher_order_ti}}
        \end{align}
        where
        \begin{align}
            S_{j} \in \left[0, \sum_{i=1}^{j}|p_i|\right] \ \text{s.t.}\ \lambda m j S_{j}^{m-1} + S_{j} - \sum_{i=1}^j |p_i|=0.
            \tag{\ref{eq:condition_S}}
        \end{align}
        Such $S_{j}$ always exists uniquely for all $j \in [d]$ because $\lambda m j \cdot 0^{m-1} + 0 - \sum_{i=1}^j |p_i| \le 0$, $\lambda m j ( \sum_{i=1}^j|p_i|)^{m-1} +  \sum_{i=1}^j |p_i| -  \sum_{i=1}^j |p_i| \ge 0$, and $\lambda m j S^{m-1} + S$ is monotonically increasing w.r.t $S$ when $S\ge 0$.
        
        It is worth noting that $S_j$ is monotonically non-decreasing for all $j$ such that $|p_{j}| - \lambda m S_{j}^{m-1} \ge 0$.
        Assume that $|p_{j'}| - \lambda m S_{j'}^{m-1} \ge 0$.
        Then, for all $j \in [j']$, we have
        \begin{align}
            \lambda m j S_{j'}^{m-1} + S_{j'} &= \sum_{i=1}^{j} |p_i| + \left[\sum_{i=j+1}^{j'} |p_i| + \lambda m (j - j') S_{j'}^{m-1} \right] \\
            &= \sum_{i=1}^{j} |p_i| + \left[\sum_{i=j+1}^{j'} (|p_i| - \lambda m S_{j'}^{m-1})\right] \ge \sum_{i=1}^{j} |p_i| = \lambda m j S_{j}^{m-1} + S_{j},
            \label{eq:property_S}
        \end{align}
        where the inequality follows from the assumption $|p_j| \ge |p_{j'}|$ for all $j \in [j']$.
        Because $\lambda m j S^{m-1} + S$ is monotonically increasing w.r.t $S$ (when $S \ge 0$),~\eqref{eq:property_S} implies $S_{j'} \ge S_{j}$.
        Obviously, it also implies $|p_j| - \lambda m S_j \ge 0$ since $|p_j| \ge |p_{j'}|$.
        Moreover, if $|p_{j+1}| - \lambda m S_{j+1}^{m-1} = 0$, $S_{j+1} = S_j$.
        Assume that $|p_{j+1}| - \lambda m S_{j+1}^{m-1} = 0$.
        Then, $\lambda m (j+1)S_{j+1}^{m-1} + S_{j+1} - \sum_{i=1}^{j+1}|p_i| = \lambda m j S_{j+1}^{m-1} + S_{j+1} - \sum_{i=1}^{j}|p_i| = 0$ and thus $S_{j+1} = S_j$.
        
        Finally, we prove $\theta = \max\{j : |p_j| - \lambda m S_j^{m-1} \ge 0\}$ by contradiction.
        Since $|p_1| - \lambda m S_1^{m-1} = S_1$, the maximum value exists.
        Let $\theta' = \max\{j : |p_j| - \lambda m S_j^{m-1} \ge 0\}$.
        First suppose $\theta > \theta'$ (of course we assume that $\theta' < d$).
        Then, from the assumption $|p_\theta| - \lambda m S_{\theta}^{m-1} < 0$, we have $\sign(p_\theta) \sign(q_\theta) = -1$.
        This contradicts (i)-(iii) in~\cref{prop:properties_prox_ti}.
        Next suppose $\theta < \theta'$ (we assume $\theta' > 1$). 
        Then, the subdifferential of $g_{\lambda \norm{\cdot}_1^m}$ at $\bm{q}^*$ is
        \begin{align}
            \partial g_{\lambda \norm{\cdot}_1^m}(\bm{q}^*) & = \bm{q}^* - \bm{p} + \lambda \partial \norm{\bm{q}^*}_1^m = \bm{q}^* - \bm{p} + \lambda m \norm{\bm{q}^*}_1^{m-1} \prod_{i=1}^d J_i(q^*_i) \\
            & = \bm{q}^* - \bm{p} + \lambda m S_{\theta}^{m-1} \prod_{i=1}^d J(q^*_i).
        \end{align}
        If $S_{\theta} = S_{\theta'}$, we can clearly replace $\theta$ with $\theta'$ and thus we assume $S_{\theta} < S_{\theta'}$ ($S_j$ is monotonically non-decreasing for all $j \in [\theta']$).
        Then, $|p_\theta'| \ge \lambda m S_{\theta'}^{m-1} > \lambda m S_{\theta}^{m-1}$ and hence for all $\alpha \in [-1, 1]$
        \begin{align}
            q^*_{\theta'} - p_{\theta'} + \lambda m S_{\theta}^{m-1} \alpha = 0 - p_{\theta'} + \lambda m S_{\theta}^{m-1}\alpha \neq 0.
        \end{align}
        It implies $\bm{0} \not \in  \partial g_{\lambda \norm{\cdot}_1^m}(\bm{q}^*; \bm{p}) \iff \bm{q}^* \neq \argmin g_{\lambda \norm{\cdot}_1^m}(\bm{q}; \bm{p})$, and this is a contradiction.
    \end{proof}

\subsection{Proximal Operator for $\ell_{2, 1}^m$}
    \proxhigherordercs*
    \begin{proof}
        If $\bm{p}_j = \bm{0}$, clearly $\bm{q}_j^* = \bm{0}$.
        Then, we can eliminate $j$-th row vector from the proximal problem~\eqref{eq:prox_higher_order_cs}.
        Hence, with out loss of generality, we assume that $\bm{p}_j \neq \bm{0}$ for all $j \in [d]$.
        
        We first show that
        \begin{align}
            \exists C_j^* \ge 0 \ \text{s.t.}\ \bm{q}^*_j = C_j^* \bm{p}_j \ \forall j \in [d].
            \label{eq:prox_cs_property_propto}
        \end{align}
        If $\bm{q}^*_j = \bm{0}$, then $\bm{q}^*_j = C^*_j \bm{p}_j$ holds with $C^*_j=0$.
        Thus, we next consider the case where $\bm{q}^*_j \neq \bm{0}$.
        Let $g_{\lambda \norm{\cdot}_{2,1}^m}(\bm{Q}; \bm{P}) = \norm{\bm{P}-\bm{Q}}_2^2/2 + \lambda\norm{\bm{Q}}_{2,1}^m$.
        By construction, $\bm{Q}^* = \argmin_{\bm{Q}} g_{\lambda \norm{\cdot}_{2,1}^m}(\bm{Q}; \bm{P})$ and $\bm{0} \in \partial g_{\lambda \norm{\cdot}_{2,1}^m}(\bm{Q}^*; \bm{P})$.
        From~\cref{coro:subdifferential_pow_l21}, we have
        \begin{align}
            \bm{0} \in - \bm{p}_j + \bm{q}^*_j + \lambda m \norm{\bm{Q}^*}_{2,1}^{m-1}\partial \norm{\bm{q}^*_j}_2, \forall j \in [d].
        \end{align}
        By the assumption $\bm{q}^*_j \neq 0$, $\partial \norm{\bm{q}^*_j}_2 = \left\{\bm{q}^*_j / \norm{\bm{q}^*_j}_2\right\}$.
        Therefore, we obtain
        \begin{align}
            \bm{0} = - \bm{p}_j + \bm{q}^*_j + \lambda m \norm{\bm{Q}^*}_{2,1}^{m-1} \frac{1}{\norm{\bm{q}^*_j}_2}\bm{q}^*_j \rightarrow \left(1 + \frac{\lambda m \norm{\bm{Q}^*}_{2,1}^{m-1}}{\norm{\bm{q}^*_j}_2}\right)\bm{q}^*_j = \bm{p}_j.
        \end{align}
        It implies~\eqref{eq:prox_cs_property_propto}.
        
        Since~\eqref{eq:prox_cs_property_propto} holds, we consider the following optimization problem instead of~\eqref{eq:prox_higher_order_cs}:
        \begin{align}
            \bm{C}^* = \argmin_{\bm{C} \in \real^d_{\ge 0}} \frac{1}{2} \sum_{j=1}^d \left(\norm{\bm{p}_j}_2 - C_j\norm{\bm{p}_j}_2\right)^2 + \lambda \norm{\left(\norm{C_1\bm{p}_1}_2, \ldots, \norm{C_d\bm{p}_d}_2\right)^\top}_1^m.
        \end{align}
        Let $c^*_j = C^*_j  \norm{\bm{p}_j}_2$ for all $j \in [d]$.
        Then, we have $\bm{q}^*_j = (c^*_j/\norm{\bm{p}_j}_2)\cdot \bm{p}_j$, where
        \begin{align}
            \bm{c}^* &= \argmin_{\bm{c} \in \real^d_{\ge 0}} \frac{1}{2} \sum_{j=1}^d \left(\norm{\bm{p}_j}_2 - c_j\right)^2 + \lambda \norm{\bm{c}}_1^m
            \\
            &= \argmin_{\bm{c} \in \real^d} \frac{1}{2} \norm{(\norm{\bm{p}_1}_2, \ldots, \norm{\bm{p}_d}_2)^\top - \bm{c}}_2^2 + \lambda \norm{\bm{c}}_1^m\\
            &= \prox_{\lambda \norm{\cdot}_1^m}\left((\norm{\bm{p}_1}_2, \ldots, \norm{\bm{p}_d}_2)^\top\right),
        \end{align}
        which concludes the proof.
    \end{proof}
    Note that~\cref{thm:solution_prox_higher_order_cs} is a generalization of~\cref{thm:solution_prox_cs}.

\subsection{Regularization by Powers of Norms}
    \powofquasinorm*
    \begin{proof}
        ($\Rightarrow$) Let $\Omega:\real^{d \times k} \to \real_{\ge 0}$ be an $m$-homogeneous quasi-norm with $\Omega(\bm{P} + \bm{Q}) \le K (\Omega(\bm{P}) + \Omega(\bm{Q}))$ for all $\bm{P}, \bm{Q} \in \real^{d \times k}$.
        We show that $\sqrt[m]{\Omega}$ satisfies the axiom of quasi-norm.
        For all $\bm{P}, \bm{Q} \in \real^{d \times k}$, we have
        \begin{itemize}
            \item $\sqrt[m]{\Omega(\bm{P})} \ge 0$ and $\sqrt[m]{\Omega(\bm{P})} = 0 \iff \bm{P} = \bm{0}$ since $\Omega(\bm{P}) \ge 0$ and $\Omega(\bm{P}) = 0 \iff \bm{P} = \bm{0}$,
            \item $\sqrt[m]{\Omega(\alpha \bm{P})} = \sqrt[m]{\abs{\alpha}^m \Omega(\bm{P})} = \abs{\alpha}\sqrt[m]{\Omega(\bm{P})}$ for all $\alpha \in \real$, and
            \item $\sqrt[m]{\Omega(\bm{P} + \bm{Q})} \le \sqrt[m]{K(\Omega(\bm{P}) + \Omega(\bm{Q}))} \le \sqrt[m]{K}\left\{\sqrt[m]{\Omega(\bm{P})} + \sqrt[m]{\Omega(\bm{Q})}\right\}$.
        \end{itemize}
        Thus $\sqrt[m]{\Omega}$ is a quasi-norm.
        
        ($\Leftarrow$) Let $\norm{\cdot}'$ be a quasi-norm with $\norm{\bm{P} + \bm{Q}}' \le K (\norm{\bm{P}}' + \norm{\bm{P}}')$ for all $\bm{P}, \bm{Q} \in \real^{d \times k}$.
        We show that $(\norm{\cdot}')^m$ satisfies the axiom of $m$-homogeneous quasi-norm.
        For all $\bm{P}, \bm{Q} \in \real^{d \times k}$, the following holds:
        \begin{itemize}
            \item $(\norm{\bm{P}}')^m \ge 0$ and $(\norm{\bm{P}}')^m= 0 \iff \bm{P} = \bm{0}$ since $\norm{\bm{P}}' \ge 0$ and $\norm{\bm{P}}' = 0 \iff \bm{P} = \bm{0}$,
            \item $(\norm{\alpha \bm{P}}')^m=(|\alpha|\norm{\bm{P}}')^{m}=|\alpha|^m (\norm{\bm{P}}')^m$ for all $\alpha \in \real$, and
            \item since $\norm{\bm{P}}' \ge 0$ for all $\bm{P} \in \real^{d \times k}$ and $x^m$ is convex and monotone for all $x \ge 0$ and $m \ge 1$,
            \begin{align}
                (\norm{\bm{P}+\bm{Q}}')^{m} & \le \left\{K\left(\norm{\bm{P}}' + \norm{\bm{Q}}'\right)\right\}^m = (2K)^m\left\{\frac{1}{2}\norm{\bm{P}}' + \frac{1}{2}\norm{\bm{Q}}'\right\}^m\\
                & \le (2K)^m \left\{\frac{1}{2}\left(\norm{\bm{P}}'\right)^m + \frac{1}{2}\left(\norm{\bm{Q}}'\right)^m\right\} \\
                &= (2^{\frac{m-1}{m}}K)^m \left\{(\norm{\bm{P}}')^m + (\norm{\bm{Q}}')^m\right\},
            \end{align}
            where the last inequality follows from Jensen's inequality.
        \end{itemize}
    \end{proof}
    
    Before proving~\cref{thm:bound_by_power_of_norm}, we show that all quasi-norms on a finite-dimensional vector space (in this paper we consider $\real^{d \times k}$) are equivalent to each other.
    \begin{lemm}
        \label{lemm:equivalence_of_quasi_norm}
        For given two quasi-norms on $\real^{d \times k}$, $\Omega, \Omega': \real^{d \times k} \to \real$, there exists $c, C \in \real_{>0}$ and for all $\bm{P}$ such that
        \begin{align}
            c\Omega(\bm{P}) \le \Omega'(\bm{P}) \le C\Omega(\bm{P}).
            \label{eq:equivalence_of_quasi_norm}
        \end{align}
    \end{lemm}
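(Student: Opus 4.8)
The plan is to reduce the statement to the single special case in which one of the two quasi-norms is the Euclidean (Frobenius) norm $\norm{\cdot}_2$ on $\real^{d\times k}$, and then exploit transitivity: if $c_1\norm{\bm{P}}_2 \le \Omega(\bm{P}) \le C_1\norm{\bm{P}}_2$ and $c_2\norm{\bm{P}}_2 \le \Omega'(\bm{P}) \le C_2\norm{\bm{P}}_2$ hold for all $\bm{P}$, then $(c_1/C_2)\,\Omega'(\bm{P}) \le \Omega(\bm{P}) \le (C_1/c_2)\,\Omega'(\bm{P})$, which is exactly \eqref{eq:equivalence_of_quasi_norm}. Hence it suffices to show that an arbitrary quasi-norm $\Omega$ on $\real^{d\times k}$, with quasi-triangle constant $K$ (so that $\Omega(\bm{P}+\bm{Q}) \le K(\Omega(\bm{P})+\Omega(\bm{Q}))$ for all $\bm{P},\bm{Q}$), is squeezed between two positive multiples of $\norm{\cdot}_2$.

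For the upper bound I would write $\bm{P} = \sum_{i\in[d],\,j\in[k]} p_{i,j}\bm{E}^{(i,j)}$, where $\bm{E}^{(i,j)}$ is the $d\times k$ matrix with a single $1$ in position $(i,j)$, and then apply the quasi-triangle inequality dyadically to this sum of $dk$ terms, padding with zero matrices so that the number of summands is a power of two (harmless since $\Omega(\bm{0})=0$). This gives $\Omega(\bm{P}) \le K^{\lceil\log_2(dk)\rceil}\sum_{i,j}\Omega(p_{i,j}\bm{E}^{(i,j)})$; using absolute homogeneity $\Omega(p_{i,j}\bm{E}^{(i,j)}) = \abs{p_{i,j}}\,\Omega(\bm{E}^{(i,j)})$, bounding each $\Omega(\bm{E}^{(i,j)})$ by $M \coloneqq \max_{i,j}\Omega(\bm{E}^{(i,j)})$, and using $\sum_{i,j}\abs{p_{i,j}} = \norm{\bm{P}}_1 \le \sqrt{dk}\,\norm{\bm{P}}_2$, I obtain $\Omega(\bm{P}) \le C\norm{\bm{P}}_2$ with $C \coloneqq K^{\lceil\log_2(dk)\rceil}M\sqrt{dk}$. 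This part is routine bookkeeping.

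The lower bound is the main obstacle, and the point where quasi-norms behave differently from norms: the quasi-triangle inequality only yields one-sided estimates carrying a spurious factor $K$, so one cannot simply invoke continuity of $\Omega$ on the compact unit sphere as one does for norms. I would argue by contradiction. Suppose $c \coloneqq \inf\{\Omega(\bm{P}) : \norm{\bm{P}}_2 = 1\}$ equals $0$, and pick $\bm{P}_n$ with $\norm{\bm{P}_n}_2 = 1$ and $\Omega(\bm{P}_n)\to 0$. Since the $\norm{\cdot}_2$-unit sphere of $\real^{d\times k}$ is compact (Heine--Borel), a subsequence $\bm{P}_{n_\ell}$ converges in $\norm{\cdot}_2$ to some $\bm{P}_*$ with $\norm{\bm{P}_*}_2 = 1$, so $\bm{P}_* \neq \bm{0}$ and $\Omega(\bm{P}_*) > 0$ by axiom (i) of Definition~\ref{def:m-homogeneous_quasi-norm}. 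But the quasi-triangle inequality gives $\Omega(\bm{P}_*) = \Omega(\bm{P}_{n_\ell} + (\bm{P}_* - \bm{P}_{n_\ell})) \le K\,\Omega(\bm{P}_{n_\ell}) + K\,\Omega(\bm{P}_* - \bm{P}_{n_\ell})$, and the upper bound of the previous paragraph gives $\Omega(\bm{P}_* - \bm{P}_{n_\ell}) \le C\norm{\bm{P}_* - \bm{P}_{n_\ell}}_2$, so $\Omega(\bm{P}_*) \le K\,\Omega(\bm{P}_{n_\ell}) + KC\norm{\bm{P}_* - \bm{P}_{n_\ell}}_2 \to 0$, contradicting $\Omega(\bm{P}_*) > 0$. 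Hence $c > 0$, and by absolute homogeneity $\Omega(\bm{P}) \ge c\,\norm{\bm{P}}_2$ for all $\bm{P}$. Combining this with the upper bound and the reduction of the first paragraph yields \eqref{eq:equivalence_of_quasi_norm}. The decisive point is that this contradiction only uses the cheap direction of the quasi-triangle inequality together with the already-established domination $\Omega \le C\norm{\cdot}_2$, which is precisely what lets us bypass the missing continuity.
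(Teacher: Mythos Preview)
Your proof is correct and follows essentially the same route as the paper's: reduce to equivalence with a single reference norm (the paper uses $\Omega_\infty$, you use $\norm{\cdot}_2$), obtain the upper bound by expanding in the standard basis and iterating the quasi-triangle inequality, and obtain the lower bound by a Bolzano--Weierstrass contradiction that combines the quasi-triangle inequality with the already-established upper bound $\Omega \le C\norm{\cdot}$. Your dyadic application of the quasi-triangle inequality even yields a tighter constant ($K^{\lceil\log_2(dk)\rceil}$ versus the paper's $K^{dk}$), and your use of continuity of $\norm{\cdot}_2$ to conclude $\bm{P}_*\neq\bm{0}$ is cleaner than the paper's pigeonhole-on-coordinates step, but the underlying argument is the same.
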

    \begin{proof}
    We consider only $\bm{P} \neq \bm{0}$ since it trivially holds for $\bm{P}=\bm{0}$.
    We can prove it like the equivalence of norms on a finite-dimensional vector space.
    Let $\Omega_{\infty}(\bm{P}) \coloneqq \max_{j \in [d], s \in [k]} |p_{j,s}|$.
    $\Omega_{\infty}$ is clearly (quasi-)norm and it is sufficient to prove that any quasi-norm is equivalent to $\Omega_{\infty}$: if for given two quasi-norms $\Omega, \Omega': \real^{d \times k} \to \real$ there exists $c, C, c', C' > 0$ such that $c\Omega_{\infty}(\bm{P}) \le \Omega(\bm{P}) \le C\Omega_{\infty}(\bm{P})$ and $c'\Omega_{\infty}(\bm{P}) \le \Omega'(\bm{P}) \le C'\Omega_{\infty}(\bm{P})$ for all $\bm{P} \in \real^{d \times k}$, then~\eqref{eq:equivalence_of_quasi_norm} holds: $C/c' \Omega(\bm{P}) \le \Omega'(\bm{P}) \le C'/c \Omega(\bm{P})$.
    
    We first show that for any quasi-norm $\Omega$ there exists $C > 0$ such that $\Omega(\bm{P}) \le C\Omega_{\infty}(\bm{P})$ for all $\bm{P}$.
    Assume that $\Omega$ is quasi-norm with $\Omega(\bm{P} + \bm{Q})\le K (\Omega(\bm{P}) + \Omega(\bm{Q}))$ ($K \ge 1$).
    Let $\bm{E}^{d, k}_{j, s}$ be the $d \times k$ matrix such that its $(j, s)$ element is $1$ and others are $0$.
    Then, we have
    \begin{align}
        \Omega(\bm{P}) &= \Omega\left(\sum_{j \in [d], s \in [k]} p_{j, s}\bm{E}^{d,k}_{j, s}\right) \le  K^{dk}\sum_{j\in [d], s \in [k]} \abs{p_{j, s}} \Omega(\bm{E}^{d,k}_{j, s})  \\ 
        &\le \left(K^{dk}\sum_{j\in [d], s \in [k]} \Omega(\bm{E}^{d,k}_{j, s})\right) \max_{j \in [d], s \in [k]} \abs{p_{j, s}} = \left(K^{dk}\sum_{j\in [d], s \in [k]} \Omega(\bm{E}^{d,k}_{j, s})\right) \Omega_{\infty}(\bm{P}).
    \end{align}
    Since $K^{dk}\sum_{j\in [d], s \in [k]} \Omega(\bm{E}^{d,k}_{j, s})$ does not depend on $\bm{P}$, setting $C$ to be $K^{dk}\sum_{j\in [d], s \in [k]} \Omega(\bm{E}^{d, k}_{j, s})$ produces $\Omega(\bm{P}) \le C\Omega_{\infty}(\bm{P})$ for all $\bm{P}$.

    We next show that for any quasi-norm $\Omega$ there exists $c > 0$ such that $ c\Omega_{\infty}(\bm{P}) \le \Omega (\bm{P})$ for all $\bm{P}$.
    We prove it by contradiction: suppose that there does not exist $c > \real$ such that $c\Omega_{\infty}(\bm{P}) \le \Omega(\bm{P})$.
    This implies for all $c > 0$ there exists $\bm{P}$ such that $\Omega_{\infty}(\bm{P}) / \Omega (\bm{P}) > 1/c$, i.e., $\sup \Omega_{\infty}(\bm{P}) / \Omega (\bm{P}) = \infty$.
    Thus, for all $n \in \natu$, there exists $\bm{P}^{(n)}$ such that $\Omega_{\infty}(\bm{P}^{(n)})/\Omega(\bm{P}^{(n)}) > n$, and without loss of generality we can assume that $\Omega_{\infty}(\bm{P}^{(n)})=1$ and $\Omega(\bm{P}^{(n)}) < 1/n$ (if $\Omega_{\infty}(\bm{Q}^{(n)})/\Omega(\bm{Q}^{(n)}) > n$, then define $\bm{P}^{(n)}$ as $\bm{Q}^{n} / \Omega_{\infty}(\bm{Q}^{(n)})$).
    We show the contradiction by constructing a convergent sequence $\lim_{n\to \infty} \bm{P}^{(n)} = \bm{0}$.
    
    Since $\Omega_{\infty}(\bm{P}^{(n)}) = \sup_{j \in [d], s \in [k]} |p_{j,s}|$ for all $n \in \natu$, there exists $(j', s') \in [d] \times [k]$ such that $p_{j', s'}^{(n)}=1$ for infinitely many $n \in \natu$.
    We assume $(j', s')=(1,1)$ without loss of generality and $\{\bm{P}^{(n)}\}$ be the sequence such that $\Omega_{\infty}(\bm{P}^{(n)}) = 1$, $\Omega(\bm{P}^{(n)}) < 1/n$, and $p_{1,1}^{(n)}=1$.
    Then, $\{\bm{P}^{(n)}\}$ is a bounded sequence in $\real^{d \times k}$ and hence it has a convergent subsequence on the normed space $(\real^{d \times k}, \norm{\cdot}_2)$ (Bolzano–Weierstrass theorem).
    Let $\bm{P}$ be the limit of that subsequence and we re-define $\{\bm{P}^{(n)} \}$ as the corresponding convergent subsequence.
    Then, $\Omega_{\infty}(\bm{P} - \bm{P}^{(n)}) \to 0$ as $n \to \infty$ since $\Omega_{\infty}$ is a norm and we have
    \begin{align}
        \Omega(\bm{P}) &\le K (\Omega (\bm{P} - \bm{P}^{(n)}) + \Omega(\bm{P}^{(n)}))\\
        &\le K \left(C\Omega_{\infty}(\bm{P} - \bm{P}^{(n)}) + \frac{1}{n}\right).
    \end{align}
    Since $n$ is arbitrary, $\Omega(\bm{P}) = 0$ and hence $\bm{P}=\bm{0}$.
    However, $p^{(n)}_{1,1}=1$ for all $n \in \natu$.
    It implies $p_{1,1}=1$ and $\bm{P} \neq \bm{0}$.
    This is a contradiction and thus the assumption is wrong.
    Therefore, there exists $c > \real$ such that $c\Omega_{\infty}(\bm{P}) \le \Omega(\bm{P})$.
    \end{proof}
    
    Finally, we prove~\cref{thm:bound_by_power_of_norm}.
    \boundbypowernorm*
    \begin{proof}
        From~\cref{thm:pow_of_quasi_norm} and~\cref{lemm:equivalence_of_quasi_norm}, it is sufficient to prove there exists a quasi-norm $\norm{\cdot}$ such that $\Omega^m_*(\bm{P}) \le C \norm{\bm{P}}^m$ for all $\bm{P} \in \real^{d \times k}$, and here $\norm{\cdot}_1$ corresponds to such a norm: $\Omega_{*}^m(\bm{P}) \le \Omega_{\mathrm{TI}}^m(\bm{P}) \le \norm{\bm{P}}_1^m$.
    \end{proof}

\subsection{Justification for TI Regularizer}
    \label{subsec:proof_analysis}
    \equivalenceqr*
    \begin{proof}
        Without loss of generality, we can omit the linear term.
        We first consider the case $\lambda_p = 0$.
        We prove~\eqref{eq:equivalence_tisfm_qr} with $\lambda_p=0$ by showing that for any strictly upper triangular matrix $\bm{W} \in \real^{d \times d}$ there exists $\bm{P} \in \real^{d \times d(d-1)/2}$ such that
        \begin{align}
            \bm{P}\bm{P}^\top =_{>} \bm{W},\ \Omega_{\mathrm{TI}}(\bm{P}) = \Omega_{*}(\bm{W}).
        \end{align}
        It is sufficient for~\eqref{eq:equivalence_tisfm_qr} since $\bm{P}\bm{P}^\top =_{>} \bm{W}$ implies $L_{\mathrm{FM}}(\bm{w}, \bm{P}; \mathcal{D}, \lambda_{w}, 0) =  L_{\mathrm{QR}}(\bm{w}, \bm{W}; \mathcal{D}, \lambda_{w}, 0)$ and $\bm{W}^*_{\mathrm{QR}}$ is always strictly upper triangular matrix since lower triangle elements are not used in $f_{\mathrm{QR}}$.
        Fix $\bm{W}$ be a strictly upper triangular matrix and let $\bm{Q}$ be the $d \times d^2$ matrix with
        \begin{align}
            \bm{q}_j = \mathrm{vec}\left(\left(\underbrace{\sqrt{|w_{1, j}|}\bm{e}^d_j,\ldots, \sqrt{|w_{j-1, j}|}\bm{e}^d_j}_{j-1}, \sign(\bm{w}_j)\circ\sqrt{\mathrm{abs}(\bm{w}_j)}, \underbrace{\bm{0}, \ldots, \bm{0}}_{d-j}\right)\right) (\in \real^{d^2}),
        \end{align}
        where $\sqrt{\cdot}$ for a vector is the element-wise square root.
        Then, for all $0 < j_1 < j_2 \le d$,
        \begin{align}
            \inner{\bm{q}_{j_1}}{\bm{q}_{j_2}} &= 
            \sum_{i=1}^{j_1-1} \inner{\sqrt{\abs{w_{i,j_1}}}\bm{e}^d_{j_1}}{\sqrt{\abs{w_{i,j_2}}}\bm{e}^d_{j_2}} 
            + \inner{\sign(\bm{w}_{j_1})\circ\sqrt{\mathrm{abs}(\bm{w}_{j_1})}}{\sqrt{\abs{w_{j_1,j_2}}}\bm{e}^d_{j_2}} 
            \nonumber \\
            &\quad  + \sum_{i=j_1+1}^{j_2-1}\inner{\bm{0}}{\sqrt{\abs{w_{i,j_2}}}\bm{e}^d_{j_2}} + \inner{\bm{0}}{\sign(\bm{w}_{j_2})\circ\sqrt{\mathrm{abs}(\bm{w}_{j_2})}} + \sum_{i=j_2+1}^d \inner{\bm{0}}{\bm{0}} \\
            &= (d^2-1)\cdot 0 + \sign(w_{j_1, j_2})\sqrt{\abs{w_{j_1, j_2}}}\sqrt{\abs{w_{j_1, j_2}}}= w_{j_1, j_2} \iff \bm{Q}\bm{Q}^\top =_{>} \bm{W},\\
            \sum_{i=1}^{d^2} \abs{q_{j_1, i}}\abs{q_{j_2, i}} &=\abs{w_{j_1, j_2}} \iff \Omega_{\mathrm{TI}}(\bm{Q}) = \Omega_{*}(\bm{W}).
        \end{align}
        This proves~\eqref{eq:equivalence_tisfm_qr} with $\lambda_p=0$ when $k \ge d^2$.
        
        Here, we show that $\bm{Q}$ has $d(d+1)/2$ all-zeros columns.
        Let $\bm{Q}^j = (\bm{q}_{:, d(j-1)+1}, \cdots, \bm{q}_{:, d(j-1)+d}) \in \real^{d\times d}$, i.e., $\bm{Q} = (\bm{Q}^1, \cdots, \bm{Q}^d)$
        Then, $1, \ldots, j$-th columns in $\bm{Q}^j$ are all-zeros vectors since the row vectors in $\bm{Q}^{j}$ are
        \begin{align}
            &\bm{q}^{j}_{j_1} = (q_{j_1, d(j-1)+1} \ldots, q_{j_1, d(j-1)+d})^\top=\bm{0} \text{ for all } j_1 < j,\\
            &\bm{q}^{j}_{j} = \bm{w}_j = (0, \ldots, 0, w_{j,  j+1}, \ldots, w_{j, d})^\top,\\
            &\bm{q}^j_{j_2} = \bm{e}^d_{j_2} = (\underbrace{0, \ldots, 0}_{j_2-1}, 1, 0, \ldots, 0)^\top\text{ for all } j_2 > j.
        \end{align}
        Thus, $\bm{Q}$ has $1+2+\cdots +d=d(d+1)/2$ all-zeros columns and let $\bm{P} \in \real^{d (d-1)/2}$ be the sub-matrix of $\bm{Q}$ such that its all-zeros columns are removed.
        Then $\bm{P}\bm{P}^\top =_{>} \bm{W}$ and $\Omega_{\mathrm{TI}}(\bm{P}) = \norm{\bm{W}}_1$.
        It proves~\eqref{eq:equivalence_tisfm_qr} with $\lambda_p = 0$.
        Furthermore, since $\norm{\bm{W}}_1 \le \Omega_{\mathrm{TI}}(\bm{P})$ for all $\bm{P}\bm{P}^\top =_{>} \bm{W}$, the inverse inequality clearly holds if $\lambda_p=0$:
        \begin{align}
            \min_{\bm{w}\in \real^d, \bm{P} \in \real^{d \times k}} L_{\mathrm{FM}}(\bm{w}, \bm{P}; \mathcal{D}, \lambda_{w}, 0) + \tilde{\lambda}_p \Omega_{\mathrm{TI}}(\bm{P}) \ge \min_{\bm{w}\in \real^d, \bm{W} \in \real^{d \times d}} L_{\mathrm{QR}}(\bm{w}, \bm{W}; \mathcal{D}, \lambda_{w}) + \tilde{\lambda}_p \norm{\bm{W}}_1.
        \end{align}
        It implies the equality holds in~\eqref{eq:equivalence_tisfm_qr} and $f_{\mathrm{FM}}(\bm{x}; \bm{w}^*_{\mathrm{TI}}, \bm{P}^*_{\mathrm{TI}}) = f_{\mathrm{QR}}(\bm{x}; \bm{w}^*_{\mathrm{QR}}, \bm{W}^*_{\mathrm{QR}})$ for all $\bm{x} \in \real^d$.
        
        Next, we prove~\eqref{eq:equivalence_tisfm_qr} with $\lambda_p \ge 0$.
        For $\bm{P}$ as defined above, we have
        \begin{align}
            \norm{\bm{P}}_2^2 = \sum_{j=1}^d \norm{\bm{p}_j}_2^2 = \sum_{j=1}^d\left\{\sum_{i=1}^{j-1}|w_{i, j}| + \sum_{i=1}^{d}|w_{j, i}| \right\} = 2 \norm{\bm{W}}_1.
            \label{eq:squared_l2_l1}
        \end{align}
        Combining it with the above-mentioned proof for $\lambda_p=0$ implies~\eqref{eq:equivalence_tisfm_qr} for all $\lambda_p \ge 0$.
    \end{proof}
    We also obtain a similar relationship between TI-sparse FMs and $\Omega_*$-sparse FMs.
    \begin{restatable}{thm}{equivalenceoptim}
        \label{thm:equivalence_tisfm_optimsfm}
        For any $\lambda_{w}, \lambda_{p}, \tilde{\lambda}_{p} \ge 0$ and $k_* \in \natu_{>0}$, there exists $k' \le d(d-1)/2$ such that for all $k \ge k'$,
        \begin{align}
            &\min_{\bm{w}\in \real^d, \bm{P} \in \real^{d \times k}} L_{\mathrm{FM}}(\bm{w}, \bm{P}; \lambda_{w}, \lambda_{p}) + \tilde{\lambda}_p\Omega_{\mathrm{TI}}(\bm{P}) \nonumber \\
            &\le \min_{\bm{w}\in \real^d, \bm{P} \in \real^{d \times k_*}} L_{\mathrm{FM}}(\bm{w}, \bm{P}; \lambda_{w}, (d-1)\lambda_{p}) + \tilde{\lambda}_p\Omega_{*}(\bm{P}) .
            \label{eq:equivalence_tisfm_optimsfm}
        \end{align}
    \end{restatable}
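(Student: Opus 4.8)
The plan is to imitate the proof of \cref{thm:equivalence_tisfm_qr}, reusing its explicit factorization verbatim and supplementing it with one elementary estimate that accounts for the extra factor $d-1$ multiplying $\lambda_p$. As there, I would drop the linear term without loss of generality.

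First I would fix an arbitrary $\bm{w} \in \real^d$ and $\bm{P} \in \real^{d \times k_*}$ and pass to the strictly upper triangular matrix $\bm{W} \in \real^{d \times d}$ defined by $w_{j_1, j_2} = \inner{\bm{p}_{j_1}}{\bm{p}_{j_2}}$ for $j_1 < j_2$. Because $f_{\mathrm{FM}}(\cdot; \bm{w}, \bm{P})$ depends on $\bm{P}$ only through the off-diagonal entries of $\bm{P}\bm{P}^\top$, this gives $f_{\mathrm{FM}}(\bm{x}; \bm{w}, \bm{P}) = f_{\mathrm{QR}}(\bm{x}; \bm{w}, \bm{W})$ for all $\bm{x}$, and $\Omega_*(\bm{P}) = \sum_{j_2 > j_1}\abs{w_{j_1, j_2}} = \norm{\bm{W}}_1$. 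I would then feed $\bm{W}$ into the construction from the proof of \cref{thm:equivalence_tisfm_qr}, which yields $\bm{P}' \in \real^{d \times d(d-1)/2}$ with $\bm{P}'(\bm{P}')^\top =_{>} \bm{W}$, $\Omega_{\mathrm{TI}}(\bm{P}') = \norm{\bm{W}}_1$, and, by \eqref{eq:squared_l2_l1}, $\norm{\bm{P}'}_2^2 = 2\norm{\bm{W}}_1$. Hence $f_{\mathrm{FM}}(\cdot; \bm{w}, \bm{P}') \equiv f_{\mathrm{FM}}(\cdot; \bm{w}, \bm{P})$ and $\Omega_{\mathrm{TI}}(\bm{P}') = \Omega_*(\bm{P})$.

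The key step is the bound $\norm{\bm{P}'}_2^2 \le (d-1)\norm{\bm{P}}_2^2$. From the previous paragraph $\norm{\bm{P}'}_2^2 = 2\Omega_*(\bm{P}) = 2\sum_{j_2 > j_1}\abs{\inner{\bm{p}_{j_1}}{\bm{p}_{j_2}}}$, so by Cauchy--Schwarz and then AM--GM, $\abs{\inner{\bm{p}_{j_1}}{\bm{p}_{j_2}}} \le \norm{\bm{p}_{j_1}}_2\norm{\bm{p}_{j_2}}_2 \le \frac12\big(\norm{\bm{p}_{j_1}}_2^2 + \norm{\bm{p}_{j_2}}_2^2\big)$; summing over the $\binom{d}{2}$ unordered pairs, in which every index $j$ appears exactly $d-1$ times, gives $2\Omega_*(\bm{P}) \le (d-1)\sum_{j=1}^d \norm{\bm{p}_j}_2^2 = (d-1)\norm{\bm{P}}_2^2$. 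To finish, I would set $k' := d(d-1)/2$ and, for $k \ge k'$, pad $\bm{P}'$ with $k - k'$ zero columns (which changes none of $\bm{P}'(\bm{P}')^\top$, $\Omega_{\mathrm{TI}}(\bm{P}')$, $\norm{\bm{P}'}_2^2$), so that
\begin{align}
    L_{\mathrm{FM}}(\bm{w}, \bm{P}'; \lambda_w, \lambda_p) + \tilde{\lambda}_p \Omega_{\mathrm{TI}}(\bm{P}')
    &= L_{\mathrm{FM}}(\bm{w}, \bm{P}; \lambda_w, 0) + \lambda_p\norm{\bm{P}'}_2^2 + \tilde{\lambda}_p \Omega_*(\bm{P}) \nonumber \\
    &\le L_{\mathrm{FM}}(\bm{w}, \bm{P}; \lambda_w, (d-1)\lambda_p) + \tilde{\lambda}_p \Omega_*(\bm{P}).
\end{align}
Since this holds for every $(\bm{w}, \bm{P}) \in \real^d \times \real^{d \times k_*}$ while the left-hand side is bounded below by $\min_{\bm{w} \in \real^d, \bm{P} \in \real^{d \times k}}\big[L_{\mathrm{FM}}(\bm{w},\bm{P};\lambda_w,\lambda_p) + \tilde{\lambda}_p\Omega_{\mathrm{TI}}(\bm{P})\big]$, taking the infimum over the right-hand side delivers \eqref{eq:equivalence_tisfm_optimsfm}.

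I do not expect a serious obstacle here once \cref{thm:equivalence_tisfm_qr} is available; the one point requiring care is the behavior of the $\ell_2^2$ regularizer under the factorization --- one must notice that $\norm{\bm{P}'}_2^2 = 2\Omega_*(\bm{P})$, not $2\norm{\bm{P}}_2^2$ --- and then extracting precisely the constant $d-1$ from the Cauchy--Schwarz/AM--GM pair-counting argument of the third paragraph.
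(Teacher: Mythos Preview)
Your proposal is correct and follows essentially the same approach as the paper: both reuse the explicit factorization from \cref{thm:equivalence_tisfm_qr} to obtain $\bm{P}'$ with $\norm{\bm{P}'}_2^2 = 2\Omega_*(\bm{P})$, and both extract the factor $d-1$ via Cauchy--Schwarz followed by AM--GM on the pairwise terms. The only cosmetic difference is that the paper works directly with the minimizer $\bm{P}^*_{\Omega_*}$ of the right-hand side, whereas you work with an arbitrary $(\bm{w},\bm{P})$ and take the infimum at the end; this is immaterial.
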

    \begin{proof}
        Let $\bm{P}_{\Omega_{*}}^*$ be the optimal solution of the RHS in~\eqref{eq:equivalence_tisfm_optimsfm}.
        Then, we easily obtain~\eqref{eq:equivalence_tisfm_optimsfm} with $\lambda_p = 0$ by substituting the strictly upper triangular elements of $\bm{P}_{\Omega_{*}}^*(\bm{P}_{\Omega_{*}}^*)^\top$ to those of $\bm{W}$ in the proof of~\cref{thm:equivalence_tisfm_qr}.
        Thus, for $\lambda_p \ge 0$, we show that $\norm{\bm{P}}_2^2 \le (d-1)\norm{\bm{P}^*_{\Omega_*}}_2^2$, where $\bm{P} \in \real^{d \times d(d-1)/2}$ is constructed as in the proof of~\cref{thm:equivalence_tisfm_qr}.
        It is sufficient for~\eqref{eq:equivalence_tisfm_optimsfm}.
        From~\eqref{eq:squared_l2_l1}, we have
        \begin{align}
            \norm{\bm{P}}_2^2 &= 2\sum_{j_2>j_1}\abs{w_{j_1, j_2}} = 2\sum_{j_2>j_1}\abs{\inner{\bm{p}^*_{\Omega_*, j_1}}{\bm{p}^*_{\Omega_*, j_2}}} \le  2 \sum_{j_2 > j_1} \norm{\bm{p}^*_{\Omega_*, j_1}}_2 \norm{\bm{p}^*_{\Omega_*, j_2}}_2\\
            &\le  \sum_{j_2 > j_1} \norm{\bm{p}^*_{\Omega_*, j_1}}^2_2  + \norm{\bm{p}^*_{\Omega_*, j_2}}^2_2 = (d-1)\norm{\bm{P}_{\Omega_*}^*}_2^2.
        \end{align}
    \end{proof}
\section{Implementation Details}
    \label{sec:impl}
    In this section, we briefly review and show the implementation details of the existing and proposed algorithms.
    
    \paragraph{$O(d \log d)$ Time Algorithm for Proximal Operator~\eqref{eq:prox_ti}~\citep{filipe2011online}.} \Cref{alg:prox_ti_sort} shows the procedure for solving~\eqref{eq:prox_ti} in $O(d \log d)$ time, where $p_{(i)}$ is the $i$-th largest value of $p_{(\theta)}$ in absolute value; i.e., $|p_{(1)}| \ge |p_{(2)}| \ge \cdots \ge |p_{(d)}|$ and $\forall j \in [d]$ $\exists$ $i \in [d]$ s.t. $p_j = p_{(i)}$.
    \begin{algorithm}[t]
        \caption{Computation of the proximal operator~\eqref{eq:prox_ti} with sorting $\bm{p}$ ($O(d \log d)$).}
        \label{alg:prox_ti_sort}
        \begin{algorithmic}[1]
            \Input{$\bm{p} \in \real^d$, $\lambda \ge 0$}
            \State{$\tilde{\bm{p}} \leftarrow (p_{(1)}, \ldots, p_{(d)})^\top$;}\Comment{$O(d \log d)$}
            \State{$S_0\leftarrow 0$;}
            \For{$j = 1, \ldots, d$}
                \State{$S_j \leftarrow S_{j-1} + \abs{\tilde{p}_j}$;}
            \EndFor
            \State{$S_j \leftarrow S_{j} / (1 + 2\lambda j)$ for all $j \in [d]$;}
            \State{$\theta \leftarrow \max\{j \in [d]: \abs{\tilde{p}_j} - 2 \lambda S_j \ge 0\}$;}
            \State{$q^*_j \leftarrow \sign(p_j) \max \{\abs{p_j} - 2 \lambda S_{\theta}, 0\}$ for all $j \in [d]$;}
            \Output{$\bm{q}^* (= \prox_{\lambda \norm{\cdot}_1^2}(\bm{p}))$}
        \end{algorithmic}
    \end{algorithm}
    
    \paragraph{$O(d)$ Time Algorithm for Proximal Operator~\eqref{eq:prox_ti}~\citep{filipe2011online}.} In fact, the proximal operator~\eqref{eq:prox_ti} can be computed in $O(d)$ time.
    Given $p_{(\theta)}$ (not the index $\theta$, but the value $p_{(\theta)}$), one can compute $S_{\theta}$ in $O(d)$ time: $S_{G_{(\theta)}} \coloneqq \sum_{j \in G_{(\theta)}}|p_j| / (1+2\lambda |G_{(\theta)}|) = S_{\theta}$, where $G_{(\theta)} \coloneqq \{j \in [d]: |p_j| \ge |p_{(\theta)}|\}$ (clearly, $\theta = |G_{(\theta)}|$).
    Thus, even if $\bm{p}$ is not sorted by absolute value, one can compute $\bm{q}^*$ in $O(d)$ time if only $p_{(\theta)}$ is found. Fortunately, one can find $p_{(\theta)}$ in $O(d)$ time in expectation by using the randomized-median-finding-like algorithm~\citep{duchi2008efficient}.
    \Cref{alg:prox_ti_randomize} shows the procedure for computing the proximal operator~\eqref{eq:prox_ti} in $O(d)$ time.
    It finds $p_{(\theta)}$ and computes $S_{\theta}$ by repeating (i) randomly sampling $p_i$ from $\{p_i: j \in C\}$, (ii) determining whether $|p_i| \ge |p_{(\theta)}|$ (although $p_{(\theta)}$ is unknown), and (iii) reducing $C$ in accordance with whether $|p_i| \ge |p_{(\theta)}|$, where $C \subseteq [d]$ is the set of candidates of $\theta$ initialized as $[d]$.
    When $i \in C$ is sampled, the algorithm partitions the candidate index set $C$ into $G \coloneqq \{j \in C: |p_j| \ge |p_i|\}$ and $L\coloneqq \{j \in C: |p_j| < |p_i|\}$ and discards one of them as follows.
    If $|p_i| \ge \lambda S_{G_i}$, where $G_i \coloneqq \{j \in [d]: |p_j| \ge |p_i|\}$, $|p_{(\theta)}|$ is smaller than $|p_i|$, so the algorithm next searches for $p_{(\theta)}$ from $L$ (i.e., it discards $G$).
    In this case, the algorithm updates $S$ and $\theta$ as $S + \sum_{j \in G}|p_j|$ and $\theta +|G|$, respectively (both $S$ and $\theta$ are initially $0$ and are simply desired values when $C=\emptyset$). Maintaining $S$ and $\theta$ reduces the computation cost of $S_{G_i}$: $S_{G_i} = (S + \sum_{j \in G}|p_j|)/[1+2\lambda(\theta + |G|)]$.
    Otherwise (that is, if $|p_i| < 2\lambda S_{G_i}$), the algorithm discards $L$ and sets $C=G_i \setminus \{j \in G: |p_j| = |p_i|\}$ since $|p_{(\theta)}|$ is larger than $|p_i|$.

    \begin{algorithm}[t]
        \caption{Computation of the proximal operator~\eqref{eq:prox_ti} without sorting ($O(d)$)}
        \label{alg:prox_ti_randomize}
        \begin{algorithmic}[1]
            \Input{$\bm{p} \in \real^d$, $\lambda \ge 0$}
            \State{$C \leftarrow [d]$;}\Comment{Candidate index set}
            \State{$S \leftarrow 0$;}
            \State{$\theta \leftarrow 0$;}
            \While{$C \neq \emptyset$}\Comment{Find $p_{(\theta)}$}
                \State{Pick $i \in C$ at random;}
                \State{$G \leftarrow \{j \in C: |p_j| \ge |p_i|\}$}
                \State{$L \leftarrow \{j \in C: |p_j| < |p_i|\}$}
                \State{$S_{G_i} \leftarrow (S + \sum_{j \in G}|p_j|)/(1+2\lambda(\theta + |G|))$}
                \If{$|p_i| - 2\lambda S_{G_i}\ge 0$}\Comment{$|p_i| \ge |p_{(\theta)}|$}
                    \State{$C \leftarrow L$, $S \leftarrow S + \sum_{j \in G}|p_j|, \theta \leftarrow \theta + |G|;$}
                    \Else\Comment{$|p_i| < |p_{(\theta)}|$}
                    \State{$C \leftarrow G\setminus \{j \in G: |p_j| = |p_i|\}$;}
                \EndIf
            \EndWhile
            \State{$S_{\theta} \leftarrow S / (1+2\lambda\theta)$;}
            \State{$q^*_j \leftarrow \sign(p_j) \max \{|p_j| - 2\lambda S_{\theta}, 0\}$ for all $j \in [d]$;}
            \Output{$\bm{q}^* (= \prox_{\lambda \norm{\cdot}_1^2}(\bm{p}))$}
        \end{algorithmic}
    \end{algorithm}
    
    \paragraph{CD Algorithm for Canonical FMs.}~\cref{alg:cd_fm} shows the CD algorithm for objective function~\eqref{eq:objective_fm}.
    The CD algorithm requires the predictions of all training instances $f_n = f_{\mathrm{FM}}(\bm{x}_n; \bm{w}, \bm{P})$ for all $n \in \supp(\bm{x}_{:, j})$ for updating $p_{j, s}$.
    The na\"ive computation of such $f_n$ at each iteration is too costly, so a method for \emph{updating} (\emph{synchronizing}) predictions is essential for an efficient implementation.
    Typically, an efficient algorithm computes and caches $f_n$ for all $n \in [N]$ before starting the optimization and synchronizes them every time a parameter is updated.
    Fortunately, FMs are multi-linear w.r.t $w_1, \ldots, w_d$ and $p_{1,1}, \ldots, p_{d, k}$, so each prediction can be easily synchronized in $O(1)$.
    For $w_j$, the predictions are written as $f_n = x_{n, j}w_j + \mathrm{const}$ for all $n \in [N]$, so $f_n$ are synchronized as $f_n \leftarrow f_n - x_{n, j} \eta \delta$ after updating $w_j$ as $w_j \leftarrow w_j - \eta \delta$.
    To be more precise, the algorithm synchronizes $f_n$ for only $n \in \supp(\bm{x}_{:, j})$ since $f_n$ does not change for all $n \not \in \supp(\bm{x}_{:, j})$.
    For $p_{j, s}$, the predictions are written as $f_n = f'_{n,s}p_{j, s} + \mathrm{const}$, where $f'_{n,s} = x_{n, j}(a_{n,s} - x_{n,j}p_{j,s})$ and $a_{n,s} := \inner{\bm{x}_{n}}{\bm{p}_s}$.
    Thus the algorithm can synchronize $f_n$ as in the case for $w_j$ by caching $a_{n, s}$ for all $n \in [N]$, $s \in [k]$.
    Clearly, the gradient of the loss w.r.t $p_{j, s}$ can be efficiently computed by using $f'_{n,s}$: $\partial \ell (f_n, y_n) / \partial p_{j,s} = \partial \ell(f_n, y_n)/\partial f_n \cdot f'_{n,s}$.
    At each iteration, the algorithm requires $a_{n, s}$ for all $n \in [N]$ and therefore the additional space complexity for caches is $O(N)$ (the algorithm does not require $a_{n,s}$ for all $s \in [k]$ at the same time).
    
    \begin{algorithm}[t]
        \caption{CD algorithm for canonical FMs}
        \label{alg:cd_fm}
        \begin{algorithmic}[1]
            \Input{$\{(\bm{x}_n, \bm{y}_n) \}_{n=1}^{N}$, $k \in \natu_{>0}$, $\lambda_{w}, \lambda_{p} \ge 0$}
            \State{Initialize $\bm{P} \in \real^{d\times k}$, $\bm{w} \in \real^d$;}
            \State{Compute predictions: $f_{n} \leftarrow f_{\mathrm{FM}}(\bm{x}_n;\bm{w}, \bm{P})$ for all $n \in [N]$;}
            \While{not convergence}
                \For{$j=1, \ldots, d$}\Comment{Update $\bm{w}$}
                    \State{$\delta \leftarrow \sum_{i \in \supp(\bm{x}_{:, j})}\ell'(f_n, y_n) x_{n, j}/N + 2\lambda_{w} w_j$};
                    \State{$\eta \leftarrow (\mu \norm{\bm{x}_{:, j}}_2^2/N + 2\lambda_{w})^{-1}$;}
                    \State{$w_j \leftarrow w_j - \eta \delta$;}
                    \State{$f_n \leftarrow f_n -  x_{n,j}\eta \delta$ for all $n \in \supp(\bm{x}_{:, j})$;}
                \EndFor
                \For{$s=1, \ldots, k$}\Comment{Update $\bm{P}$}
                    \State{$a_{n, s} \leftarrow \inner{\bm{x}_n}{\bm{p}_{:, s}}$ for all $n \in [N]$;}\Comment{Cache for updating $\bm{p}_{:, s}$}
                    \For{$j=1, \ldots, d$}
                        \State{$f_{n,s}' \leftarrow x_{n, j} (a_{n,s} - x_{n, j}p_{j, s})$ for all $n \in \supp(\bm{x}_{:, j})$;}
                        \State{$\delta \leftarrow \sum_{i \in \supp(\bm{x}_{:, j})}\ell'(f_n, y_n)\cdot f_{n, s}'/N + 2\lambda_{p} p_{j, s}$;}
                        \State{$\eta \leftarrow [\mu \sum_{n \in \supp(\bm{x}_{:, j})} (f'_{n,s})^2/N + 2\lambda_{p}]^{-1}$;}
                        \State{$p_{j, s} \leftarrow p_{j, s} - \eta \delta$;}
                        \State{$f_{n} \leftarrow f_n - f'_{n, s}\eta \delta$ for all $n \in \supp(\bm{x}_{:, j})$;}
                        \State{$a_{n, s} \leftarrow a_{n, s} - x_{n, j}\eta \delta$ for all $n \in \supp(\bm{x}_{:, j})$;}
                    \EndFor
                \EndFor
            \EndWhile
            \Output{Learned $\bm{P}$ and $\bm{w}$}
        \end{algorithmic}
    \end{algorithm}
    
    \paragraph{SGD for Canonical FMs.}
    \Cref{alg:sgd_fm} shows the SGD algorithm for objective function~\eqref{eq:objective_fm}.
    Since it is almost straightforward, we only describe the \emph{lazy update} technique~\citep{duchi2011adaptive}, which improves the efficiency of the algorithm by leveraging the sparsity of a sampled instance.
    For all $j \not \in \supp(\bm{x}^t)$, where $\bm{x}^t$ is the sampled instance at $t$-th iteration, the gradients of loss function w.r.t $w_j$ and $\bm{p}_j$ are $2\lambda_w w_j$ and $2 \lambda_p \bm{p}_j$, and update rules are $w_j \leftarrow (1-2\eta^t_{w})w_j$ and $\bm{p}_j \leftarrow (1-2\eta^t_{p}) \bm{p}_j$, respectively.
    The lazy update technique enables us to omit updating such parameters and makes the computational cost of each iteration $O(\mathrm{nnz}(\bm{x}_{n})+\mathrm{nnz}(\bm{x}_{n})k)$ while a na\"ive implementation takes $O(d+dk)$.
    Although we hereinafter consider for only $\bm{w}$ for simplicity, the same holds for $\bm{P}$.
    Assume that the algorithm is at $t$-th iteration and $x^t_{j} \neq 0$, and also assume that there exists $t_j < t$ such that $x^{t_j}_j \neq 0$ and $x^{t'}_j = 0$ for all $t_j < t' < t$ (namely, $j$-th feature is $0$ from $(t_j+1)$-th until $(t-1)$-th iteration.
    Then, the value of $w_j$ at $(t-1)$-th iteration, $w^{t-1}_j$, is written as
    \begin{align}
        w^{t-1}_j &= (1-2\eta^{t-1}_w\lambda_w)w^{t-2}_j = \left\{\prod_{t'=t_j+1}^{t-1} (1-2\eta_w^{t'} \lambda_w)\right\} w_j^{t_j} \\
        & =\left\{\frac{\prod_{t'=1}^{t-1} (1-2\eta_w^{t'} \lambda_w)}{\prod_{t'=1}^{t_j} (1-2\eta_w^{t'} \lambda_w)}\right\} w_j^{t_j} =: \frac{\alpha^{t-1}_{w}}{\alpha^{t_j}_{w}} w_j^{t_j}.
    \end{align}
    Therefore, given $\alpha^{t-1}_w$ and $\alpha^{t_j}_{w}$, the algorithm can update $w^t_j$ from $w^{t_j}_j$ in $O(1)$.
    Since $x^{t'}_j = 0$ for all $t' \in \{t_j+1, \ldots, t-1\}$, omitting to update $w_j$ from $(t_j+1)$-th until $(t-1)$-th iteration does not affect the result.
    $\alpha_{w,j}$ in~\cref{alg:sgd_fm} corresponds to $\alpha_w^{t_j}$.
    In our implementation, if $\alpha_w < 10^{-9}$, algorithm updates all $w_j$ and resets $\alpha_w$ and $\alpha_{w,j}$ for all $j \in [d]$ in order to avoid numerical errors.
    The additional space complexity for caches is $O(d)$ (scaling values $\alpha_{w, j}, \alpha_{p,j}$ for all $j \in [d]$).

    \begin{algorithm}[t]
        \caption{SGD algorithm for canonical FMs}
        \label{alg:sgd_fm}
        \begin{algorithmic}[1]
            \Input{$\{(\bm{x}_n, \bm{y}_n) \}_{n=1}^{N}$, $k \in \natu_{>0}$, $\lambda_{w}, \lambda_{p} \ge 0$}
            \State{Initialize $\bm{P} \in \real^{d\times k}$, $\bm{w} \in \real^d$;}
            \State{$t \leftarrow 1;$}
            \State{$\alpha_w, \alpha_p, \alpha_{w,j}, \alpha_{p, j} \leftarrow 1$ for all $j \in [d]$;}\Comment{For lazy update}
            \While{not convergence}
                \State{Sample $(\bm{x}^t, y^t) \in \{(\bm{x}_1,y_1), \ldots, (\bm{x}_N, y_N)\}$;}
                \State{$w_j \leftarrow \alpha_w / \alpha_{w, j} w_j, \ \bm{p}_j \leftarrow \alpha_p / \alpha_{p, j} \bm{p}_j$ for all $j \in \supp(\bm{x}^t)$;}\Comment{Lazy update for regularization term}
                \State{Compute the prediction: $f^t \leftarrow f_{\mathrm{FM}}(\bm{x}^t;\bm{w}, \bm{P})$;}
                \State{$f'_{j, s} \leftarrow x^t_{j}[\inner{\bm{x}^t}{\bm{p}_{:, s}} - x^t_{j}p_{j,s}]$ for all $j \in \supp(\bm{x}^t)$, $s \in [k]$;}
                \State{$L'_{w,j}\leftarrow \ell'(f^t, y^t) \cdot x^t_j + 2\lambda_w w_j$ for all $j \in \supp(\bm{x}^t)$;}
                \State{$L'_{p, j, s} \leftarrow \ell'(f^t, y^t) \cdot f'_{j, s} + 2\lambda_p p_{j, s}$ for all $j \in \supp(\bm{x}^t)$, $s \in [k]$;}
                \State{Compute step size parameter $\eta^{t}_w, \eta^{t}_p$;}
                \State{$w_j \leftarrow w_j - \eta^t_w L'_{w, j}$ for all $j \in \supp(\bm{x}^t)$;}
                \State{$p_{j, s} \leftarrow p_{j, s} - \eta^t_p L'_{p, j, s}$; for all $j \in \supp(\bm{x}^t)$, $s \in [k]$;}
                \State{$\alpha_w \leftarrow (1-2\eta\lambda_w)\alpha_w, \ \alpha_p \leftarrow (1-2\eta\lambda_p)\alpha_p$;}
                \State{$\alpha_{w, j} \leftarrow \alpha_w,\ \alpha_{p, j} \leftarrow \alpha_p$ for all $j \in \supp(\bm{x}^t)$;}
                \State{$t \leftarrow t+1$;}
            \EndWhile
             \State{$w_j \leftarrow \alpha_w / \alpha_{w, j} w_j,\ \bm{p}_j \leftarrow \alpha_p / \alpha_{p, j} \bm{p}_j$ for all $j \in [d]$;}\Comment{Finalize}
            \Output{Learned $\bm{P}$ and $\bm{w}$}
        \end{algorithmic}
    \end{algorithm}

    \paragraph{PBCD Algorithm for Sparse FMs.}
    \Cref{alg:bcd_sfm} shows the PBCD algorithm for sparse FMs with/without the line search~\citep{tseng2009coordinate}, where $\Omega: \real^{d \times k}$ is a sparsity-inducing regularizer.
    The operator $\prox_{\Omega}(\cdot, j)$ in line 18 is the proximal operator for only $j$-th row vector.
    The PBCD algorithm for such sparse FMs updates not $p_{j, s}$ but $\bm{p}_j$ at each iteration and it takes $O(\mathrm{nnz}(\bm{x}_{:, j})k)$ computational cost (if the proximal operator can be evaluated in $O(k)$).
    In this algorithm, we show the two variants for choosing step size $\eta$: using the line search method proposed by~\citet{tseng2009coordinate} and using the Lipschitz constant of the gradient.
    $\sigma, \rho \in (0, 1)$ are hyperparameters for the line search.
    In our experiments, we used the BCD algorithm without the line search.
    The additional space complexity for caches is $O(Nk)$ (caching $\bm{a}_n \in \real^k$ for all $n \in [N]$ requires $O(Nk)$ and caching $\bm{q}, \bm{\delta}, \bm{d} \in \real^{k}$ requires $O(k)$).
    Some sparse regularizers (e.g, CS regularizer) require some additional caches and how to compute/use caches depends on the regularizer.
    Moreover, in the case of using the line search, some regularizers might also require a non-obvious efficient incremental evaluation method (i.e., na\"ive computation of $\Omega(\cdot)$ might take a high computational cost that is prohibitive at each line search iteration).
    
    \begin{algorithm}[t!]
        \caption{PBCD algorithm for sparse FMs}
        \label{alg:bcd_sfm}
        \begin{algorithmic}[1]
            \Input{$\{(\bm{x}_n, \bm{y}_n) \}_{n=1}^{N}$, $k \in \natu_{>0}$, $\lambda_{w}, \lambda_{p}, \tilde{\lambda}_{p} \ge 0$, optional: $\sigma, \rho \in (0, 1)$ (for line search)}
            \State{Initialize $\bm{P} \in \real^{d\times k}$, $\bm{w} \in \real^d$;}
            \State{Compute predictions: $f_{n} \leftarrow f_{\mathrm{FM}}(\bm{x}_n;\bm{w}, \bm{P})$ for all $n \in [N]$;}
            \While{not convergence}
                \State{Optimize $\bm{w}$ and update caches as in canonical FMs;}
                \State{$\bm{a}_{n} \leftarrow \bm{P}^\top \bm{x}_{n} \in \real^{s}$ for all $n \in [N]$;}\Comment{Cache for update $\bm{P}$}
                \If{Perform line search}
                    \State{$L \leftarrow \sum_{n=1}^N \ell(f_n, y_n)/N + \lambda_{p} \norm{\bm{P}}_2^2 + \tilde{\lambda}_{p} \Omega(\bm{P})$;}\Comment{Objective value used in line search}
                \EndIf
                \For{$j=1, \ldots, d$}
                    \State{$\bm{f}'_n \leftarrow x_{n, j}(\bm{a}_n - x_{n, j}\bm{p}_{j})$ for all $n \in \supp(\bm{x}_{:, j})$;}
                    \State{$\nabla_{\bm{p}_{j}}\ell_n \leftarrow \ell'(f_n, y_n) \bm{f}'_n$ for all $n \in \supp(\bm{x}_{:, j})$};
                    \State{$\bm{d} \leftarrow (\sum_{n \in \supp(\bm{x}_{:, j})}\nabla_{\bm{p}_{j}}\ell_n)/N + 2\lambda_{p} \bm{p}_{j}$;}
                    \If{Perform line search}
                        \State{$\eta^{-1} \leftarrow \max_{s \in [k]} \{2\lambda_{p} + \sum_{n \in \supp(\bm{x}_{:, j})}\ell''(f_n, y_n)(\partial f_{\mathrm{FM}}(\bm{x}_n)/\partial p_{j,s})^2/N\}$;}
                    \Else
                        \State{$\eta^{-1} \leftarrow 2\lambda_p + \mu \sum_{n \in \supp (\bm{x}_{:, j})} \norm{\nabla_{\bm{p}_j} \ell_n}_2^2 / N$;}
                    \EndIf
                    \State{$\bm{q} \leftarrow \prox_{\tilde{\lambda}_{p}\eta \Omega}(\bm{P} - \eta \bm{e}^d_j \bm{d}^\top; j)$;\Comment{Apply for only $j$-th row vector}}
                    \State{$\bm{\delta} \leftarrow \bm{p}_{j} - \bm{q}$;}
                    \If{Perform line search}
                        \State{$L_{\mathrm{new}}\leftarrow L + \sum_{n \in \supp(\bm{x}_{:, j})}[\ell(f_n-\inner{\bm{f}'_n}{\bm{\delta}}, y_n)-\ell(f_n, y_n)]/N$;}
                        \State{$L_{\mathrm{new}}\leftarrow L_{\mathrm{new}} + \lambda_{p} (\norm{\bm{q}}_2^2 - \norm{\bm{p}_j}_2^2) + \tilde{\lambda}_{p}\left[\Omega(\bm{P} - \bm{e}^d_{j} \bm{\delta}^\top) - \Omega(\bm{P})\right]$;}
                        \State{$\alpha \leftarrow 1$;}
                        \While{not $L_{\mathrm{new}} - L \le \sigma \alpha \{-\inner{\bm{d}}{\bm{\delta}} + \tilde{\lambda}_p[\Omega({\bm{P} -\bm{e}^d_j \bm{\delta}^\top})-\Omega(\bm{P})]\}$}
                            \State{$L_{\mathrm{new}}\leftarrow L_{\mathrm{new}} + \sum_{n \in \supp(\bm{x}_{:, j})}\left[\ell(f_n-\inner{\bm{f}'_n}{\alpha\rho\bm{\delta}}, y_n) - \ell(f_n-\inner{\bm{f}'_n}{\alpha\bm{\delta}}, y_n)\right]/N$;}                    \State{$L_{\mathrm{new}}\leftarrow L_{\mathrm{new}} + \lambda_{p} \left[\norm{\bm{p}_j-\alpha\rho\bm{\delta}}_2^2 - \norm{\bm{p}_j-\alpha\bm{\delta}}_2^2\right]$;}
                            \State{$L_{\mathrm{new}}\leftarrow L_{\mathrm{new}} + \tilde{\lambda}_{p}\left[\Omega(\bm{P} - \alpha \rho \bm{e}^d_{j} \bm{\delta}^\top) - \Omega(\bm{P} - \alpha \bm{e}^d_{j} \bm{\delta}^\top)\right]$;}
                            \State{$\alpha \leftarrow \alpha \rho$;}
                        \EndWhile
                    \State{$L \leftarrow L_{\mathrm{new}}$;}
                    \State{$\bm{\delta} \leftarrow \alpha \bm{\delta}$;}
                    \EndIf
                    \State{$\bm{p}_{j} \leftarrow \bm{p}_j - \bm{\delta}$;}
                    \State{$\bm{a}_{n} \leftarrow \bm{a}_n - x_{n,j}\bm{\delta}$ for all $n \in \supp(\bm{x}_{:, j})$;}
                    \State{$f_{n} \leftarrow f_n -  \inner{\bm{f}'_n}{\bm{\delta}}$ for all $n \in \supp(\bm{x}_{:, j})$;}
                \EndFor
            \EndWhile
            \Output{Learned $\bm{P}$ and $\bm{w}$}
        \end{algorithmic}
    \end{algorithm}
    
    \paragraph{PSGD Algorithm for Sparse FMs.}
    The extension of~\cref{alg:sgd_fm} to PSGD algorithm for sparse FMs is straightforward: evaluates a proximal operator for a sparse regularization after updating $\bm{P}$ at each iteration.
    For $\tilde{\ell}_{1,2}^2$-sparse FMs (TI-sparse FMs), all parameters must be the latest values at each iteration, i.e., a lazy update technique cannot be used.

    \paragraph{Extension to HOFMs.}
    Here we describe the extension of above described algorithms for $M$-order HOFMs.
    The extended algorithms update $\bm{w}$, $\bm{P}^{(2)}, \ldots, \bm{P}^{(M)}$ sequentially.
    HOFMs are also multi-linear w.r.t $\bm{w}, \bm{p}_{1}^{(2)}, \ldots, \bm{p}_{d}^{(M)}$ (and clearly $w_{0}, \ldots, w_{d}, p_{1,1}^{(2)}, \ldots, p_{d, k}^{(M)}$)~\citep{blondel2016higher} and thus the output of HOFMs are written as $f_{\mathrm{HOFM}}^M(\bm{x}) = \inner{\bm{\theta}}{\nabla_{\bm{\theta}}f_{\mathrm{HOFM}}^M(\bm{x})} + \mathrm{const}$ for a parameter $\bm{\theta} \in \{\bm{w}, \bm{p}_1^{(2)}, \ldots, \bm{p}_d^{(M)}\}$.
    Given $\nabla_{\bm{\theta}} f_{\mathrm{HOFM}}^M(\bm{x})$, the parameter $\bm{\theta}$ and predictions can be updated efficiently in both CD (updates only one element in $\bm{\theta}$) and BCD algorithms.
    For each case, $\partial f_{\mathrm{HOFM}}^M(\bm{x})/\partial \theta_j$ is written as
    \begin{align}
        \frac{\partial f_{\mathrm{HOFM}}^M(\bm{x})}{\partial \theta_j} = \begin{cases}
        x_j & \text{if } \theta_j = w_j, \ j \in [d],\\
        x_j\anova^{m-1}(\bm{x}_{\lnot j}, (\bm{p}_{:, s})_{\lnot j}) & \text{if } \theta_j = p_{j, s}^{(m)},\ m \in \{2, \ldots, M\},\ j \in [k],
        \end{cases}
    \end{align}
    where $\bm{x}_{\lnot j} \in \real^{d-1}$ is the $(d-1)$-dimensional vector with $x_j$ removed.
    Thus, the replacement of  $f'_{n, s}$ in~\cref{alg:cd_fm} ($\bm{f}'_{n}$ in~\cref{alg:bcd_sfm}) with $\partial f_{\mathrm{HOFM}}^m(\bm{x}_n)/\partial p_{j,s}^{(m)}$ ($\nabla_{\bm{p}_j^{(m)}} f_{\mathrm{HOFM}}^m(\bm{x}_n)$) produces the CD (BCD) algorithm for HOFMs.
    \Cref{alg:sgd_fm} can be also extended to the SGD algorithm for HOFMs similarly.
    Here, the issue is clearly how to compute $\partial f_{\mathrm{HOFM}}^M(\bm{x}_n)/\partial p_{j,s}^{(m)}$ efficiently.
    Fortunately,~\citet{blondel2016higher} and~\citet{atarashi2020link} proposed efficient computation algorithms for $\anova^{m-1}(\bm{x}_{\lnot j}, (\bm{p}_{:, s})_{\lnot j})$.
    For more detail, please see~\citep{blondel2016higher,atarashi2020link}.

\section{Additional Experiments}
\subsection{Efficiency Comparison on Real-world Dataset}
    \label{subsec:efficiency_real_world}
    \begin{figure*}[t]
        \centering
        \subfloat[ML100K Dataset with $\tilde{\lambda}_p=10^{-4}$ (left) and $10^{-5}$ (right).]{
        \includegraphics[width=80mm]{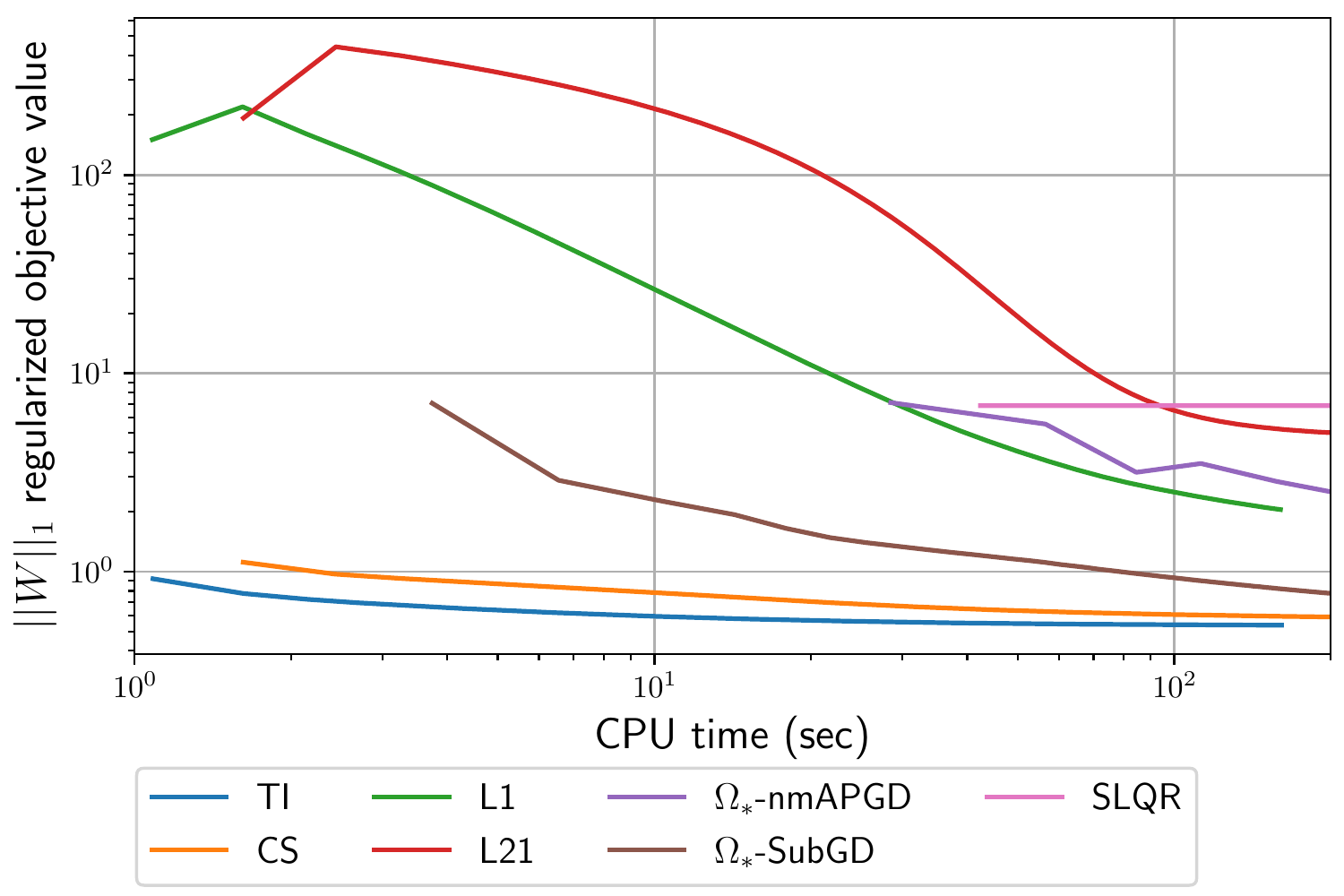}
        \includegraphics[width=80mm]{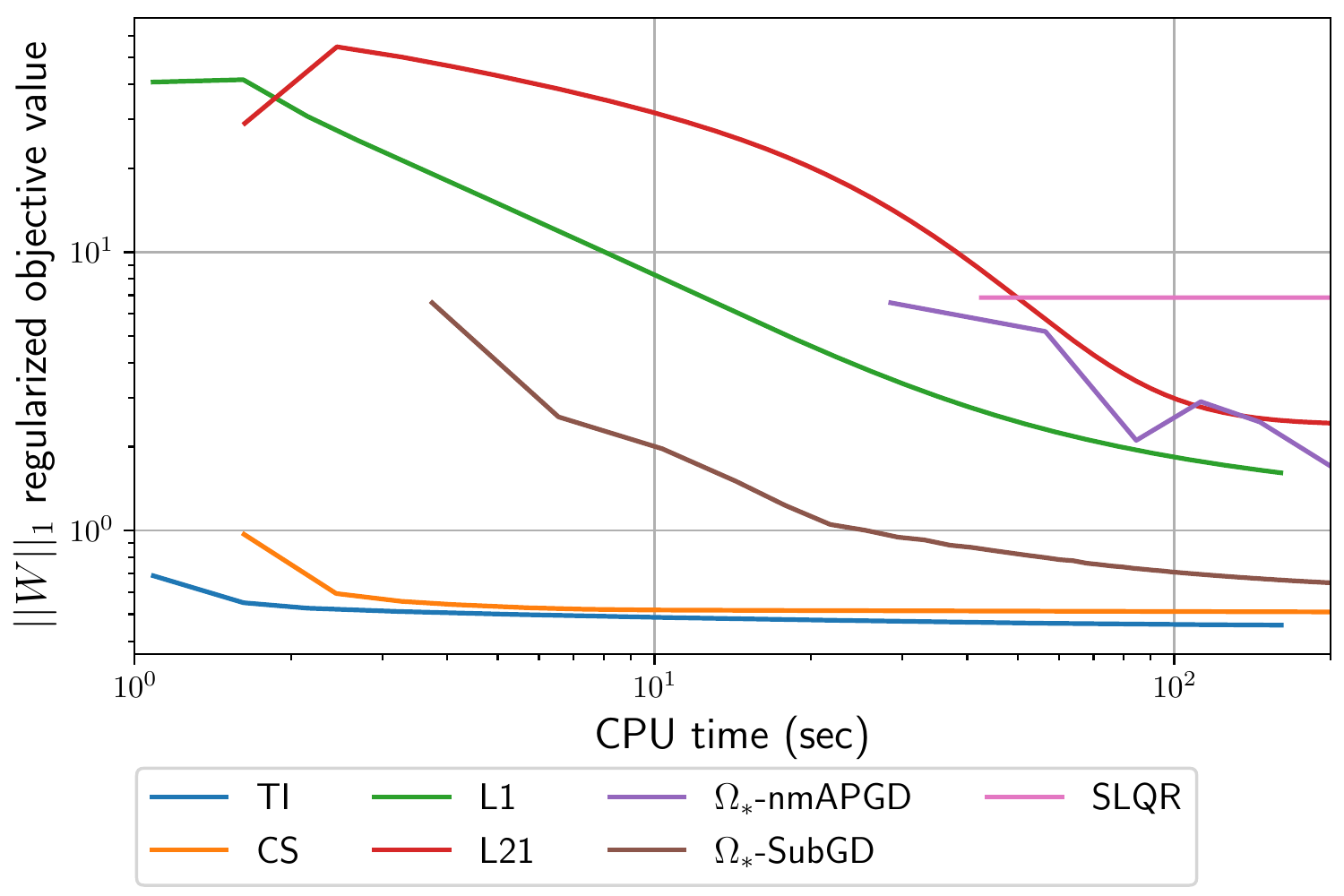}
        \label{fig:exact_ml100k}
        }\\
        \subfloat[a9a Dataset with $\tilde{\lambda}_p=10^{-4}$ (left) and $10^{-5}$ (right).]{
        \includegraphics[width=80mm]{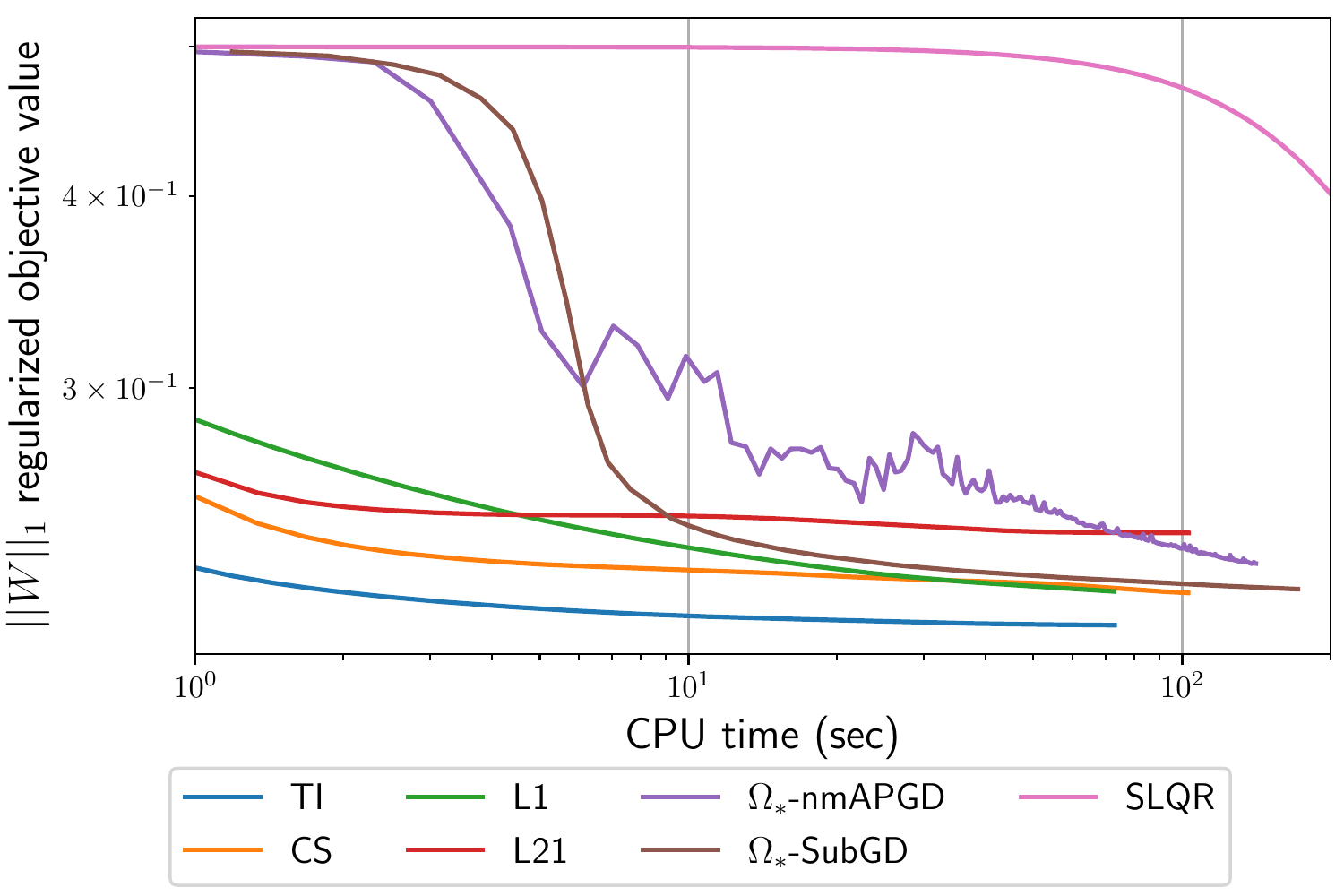}
        \includegraphics[width=80mm]{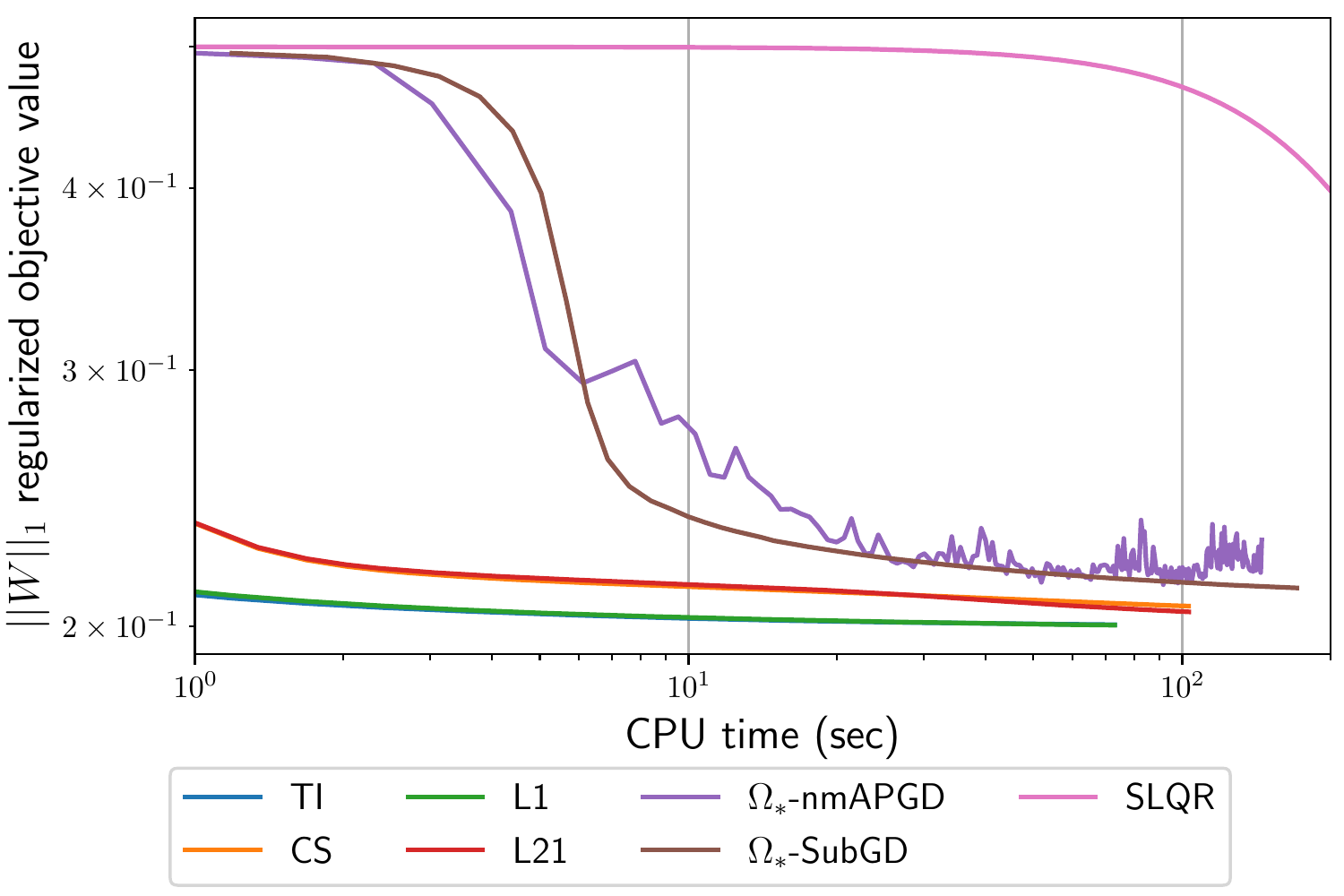}
        \label{fig:exact_a9a}
        }
        \caption{Trajectories of $\norm{\bm{W}}_1$ regularized objective value for \textbf{TI}, \textbf{CS}, \textbf{L21}, \textbf{L1}, \textbf{$\Omega_*$-nmAPGD}, \textbf{$\Omega_*$-SubGD}, and \textbf{SLQR} methods on (a) ML100K dataset and (b) a9a dataset.}
        \label{fig:exact_real}
    \end{figure*}

    We evaluated the efficiency of the proposed and existing methods on the ML100K dataset and the a9a dataset as in~\cref{subsubsec:efficiency}.
    We ran the experiment with $\tilde{\lambda}_p=10^{-4}$ and $10^{-5}$ on both datasets.
    We set $\lambda_p = \lambda_{\mathrm{tr}}=10^{-4}$ on the ML100K dataset and $\lambda_p=\lambda_{\mathrm{tr}}=10^{-3}$ on the a9a dataset.
    The other settings are the same those in~\cref{subsubsec:efficiency}.
    Because we compared the convergence speeds of objective values, we didn't separate datasets to training, validation, and testing datasets, i.e., we used $100,000$ and $48,642$ instances for training on the ML100K and the a9a datasets, respectively.
    
    As shown in~\cref{fig:exact_ml100k}, \textbf{TI} and \textbf{CS} converged much faster than the other methods in terms of $\Omega_*$ ($\norm{\bm{W}}_1$) regularized objective function on both the ML100K dataset and the a9a dataset.
    These results indicate that the proposed regularizers can be good alternative to $\Omega_*$ for not only synthetic datasets but also some real-world datasets.
    
\subsection{Optimization Methods Comparison}
    \label{subsec:solver}
    We compared the convergence speeds of the some algorithms for TI-sparse FMs and CS-sparse FMs on both synthetic and real-world datasets.
    We ran this experiment on an Arch Linux desktop with an Intel Core i7-4790 (3.60 GHz) CPU and 16 GB RAM.
    
    \paragraph{Methods Compared.}
    We compared the following algorithms for TI-sparse FMs and CS-sparse FMs:
    \begin{itemize}
        \item \textbf{PCD}: the proximal coordinate descent algorithm (only TI-sparse FMs).
        \item \textbf{PBCD}: the proximal block coordinate descent algorithm (only CS-sparse FMs).
        \item \textbf{APGD}: FISTA with restart.
        \item \textbf{nmAPGD}: the non-monotone APGD algorithm~\citep{li2015accelerated}.
        \item \textbf{PSGD}: the PSGD algorithm.
        \item \textbf{MB-PSGD}: the mini-batch PSGD algorithm (only on real-world datasets).
        \item \textbf{Katyusha}: the Katyusha algorithm proposed by~\citet{allen2017katyusha}. It is similar to an accelerated proximal stochastic variance reduction gradient algorithm~\citep{nitanda2014stochastic} but introduces an additional moment term, which is called Katyusha momentum.
        \item \textbf{MB-Katyusha}: the mini-batch Katyusha algorithm (only on real-world datasets).
    \end{itemize}
    We used line search techniques for the step size in \textbf{APGD} and \textbf{nmAPGD}~\citep{beck2009fast,li2015accelerated} but we did not use them in \textbf{PCD} and \textbf{PBCD} because their sub-problems are smooth.
    For the (mini-batch) PSGD-based algprithms, we ran the experiment using initial step size $\eta_{0}=1.0$, $0.1$ and $0.01$ but we show only the best results.
    In \textbf{PSGD} and \textbf{MB-PSGD}, rather than using a constant step size, we used a diminishing step size as proposed by~\citet{bottou2012stochastic}: $\eta = \eta_{0}(1+\eta_0 \lambda_p t)^{-1}$ at the $t$-th iteration.
    On the other hand, \textbf{Katyusha} and \textbf{MB-Katyusha} used a constant step size: $\eta=\eta_0$.
    In \textbf{MB-PSGD} and \textbf{MB-Katyusha}, we set the number of instances in one mini-batch $N_b$ to be $Nd / \nnz{\bm{X}}$, which reduces the computational cost per epoch from $O(NdK)$ to $O(\nnz{\bm{X}}k)$.
    For the evaluation of the proximal operator~\eqref{eq:prox_ti}, we used~\cref{alg:prox_ti_randomize}, which runs in $O(d)$ time in expectation.
    \begin{figure}[t!]
        \centering
        \subfloat[Feature interaction selection setting: $d_{\mathrm{true}}=80$, $b=8$ and $d_{\mathrm{noise}}=20$ with $N=200$ and $20,000$ for \textbf{TI} method.]{
        \includegraphics[width=80mm]{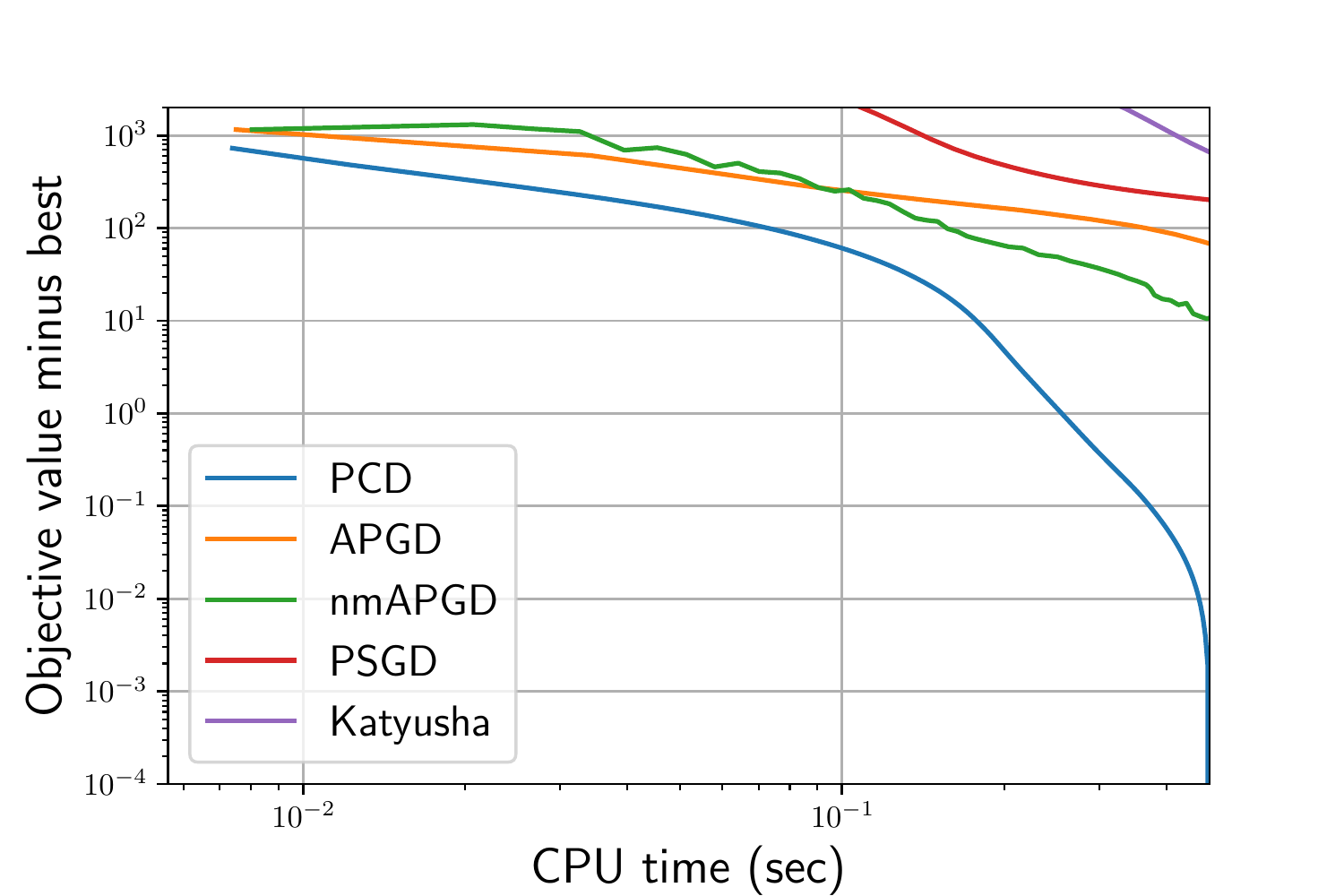} 
        \includegraphics[width=80mm]{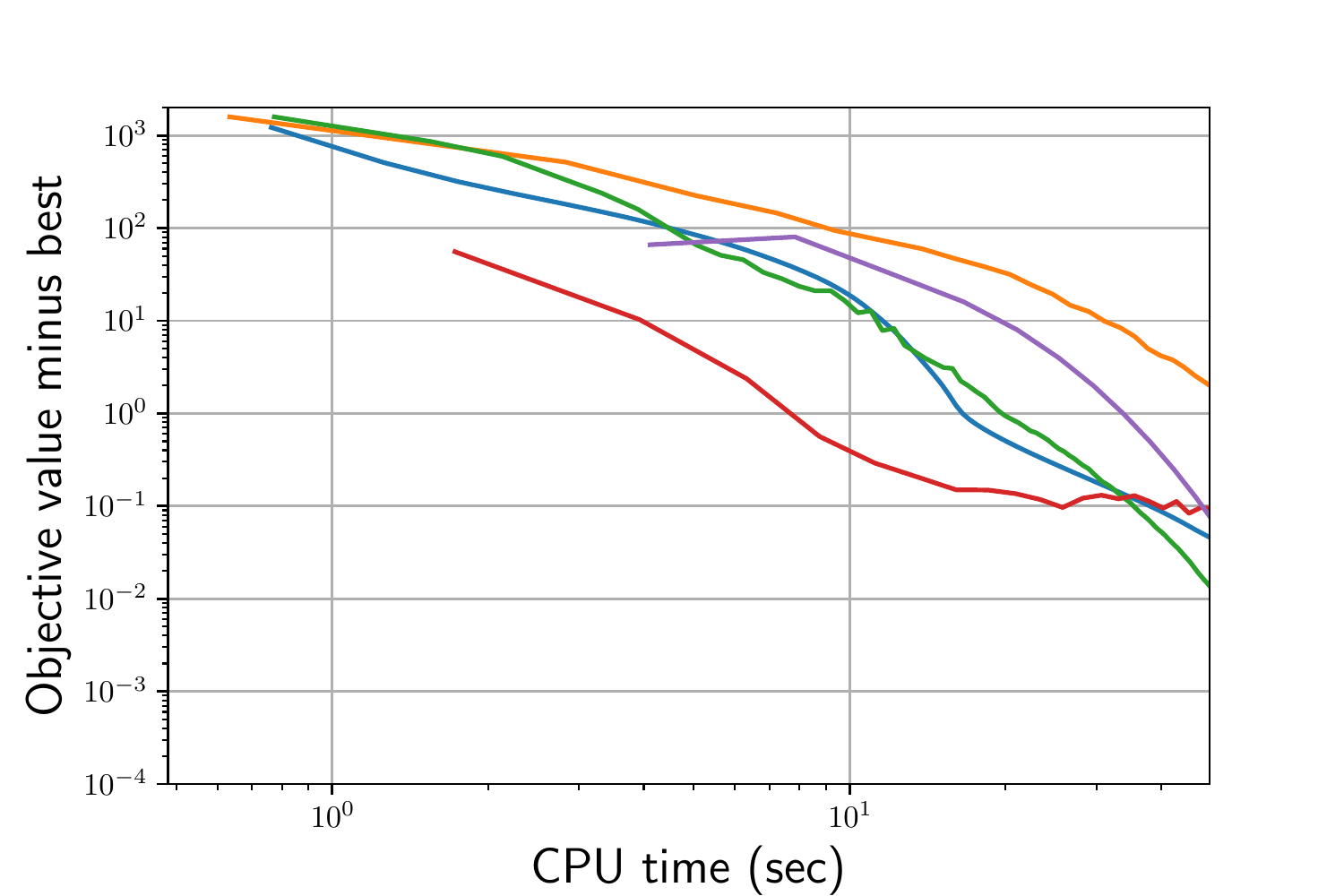}
        \label{fig:synthetic_solver_feature_interaction_selection}
        }\\
        \subfloat[Feature selection setting: $d_{\mathrm{true}}=20$, $b=1$ and $d_{\mathrm{noise}}=80$ with $N=200$ (left) and $20,000$ for \textbf{TI} method.]{
        \includegraphics[width=80mm]{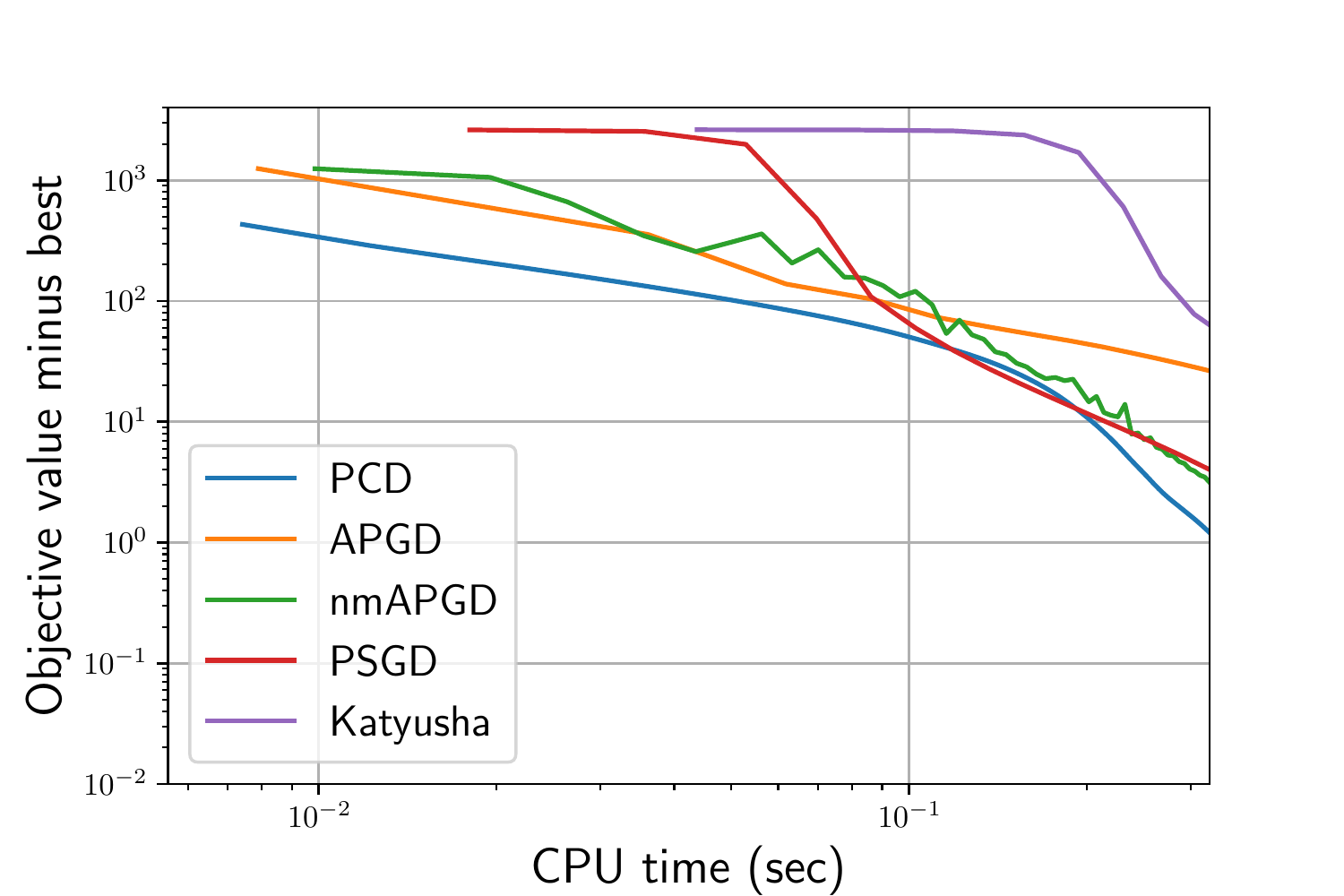} 
        \includegraphics[width=80mm]{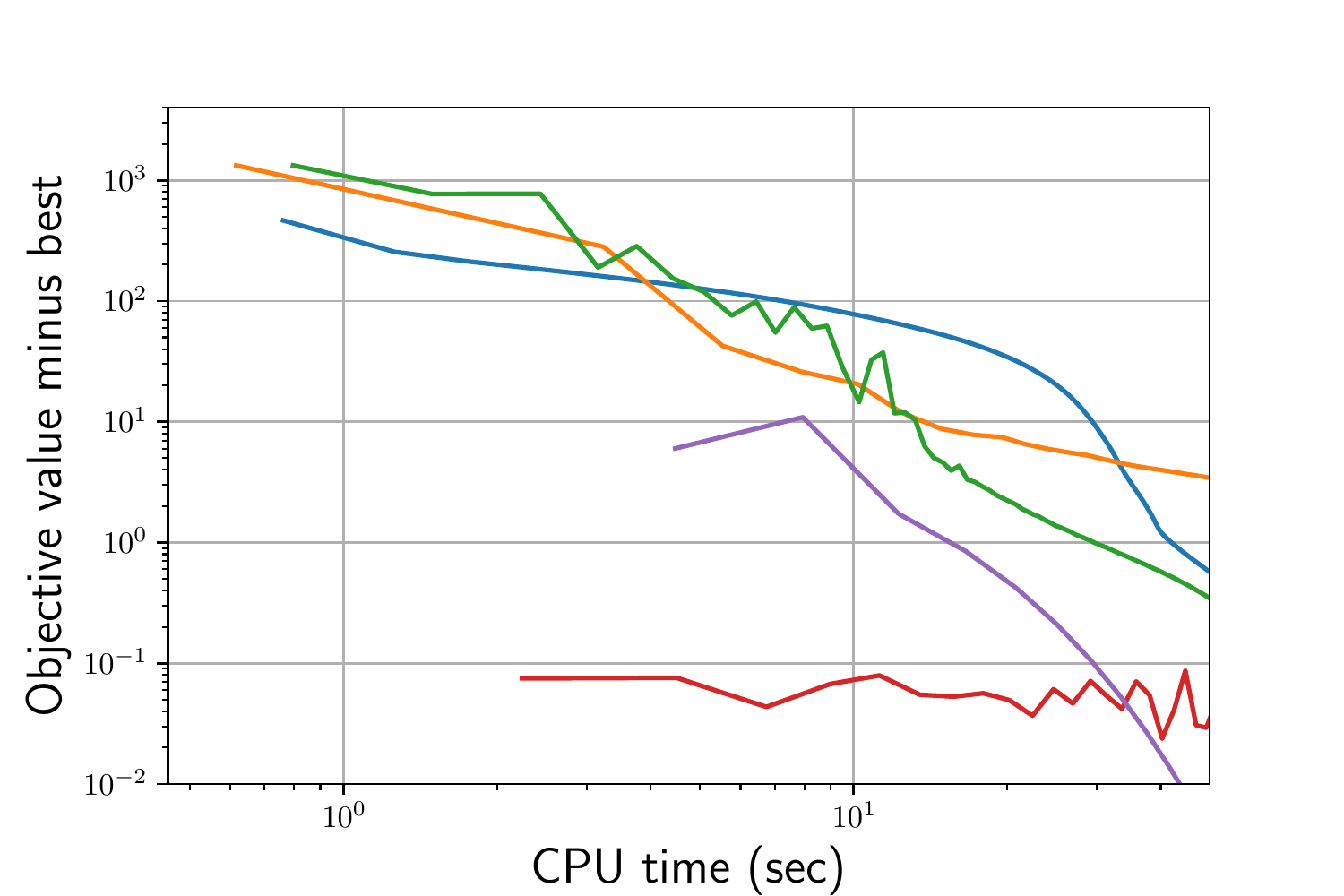}
        \label{fig:synthetic_solver_feature_selection}
        }\\
        \subfloat[Feature selection setting: $d_{\mathrm{true}}=20$, $b=1$ and $d_{\mathrm{noise}}=80$ with $N=200$ (left) and $20,000$ for \textbf{CS} method.]{
        \includegraphics[width=80mm]{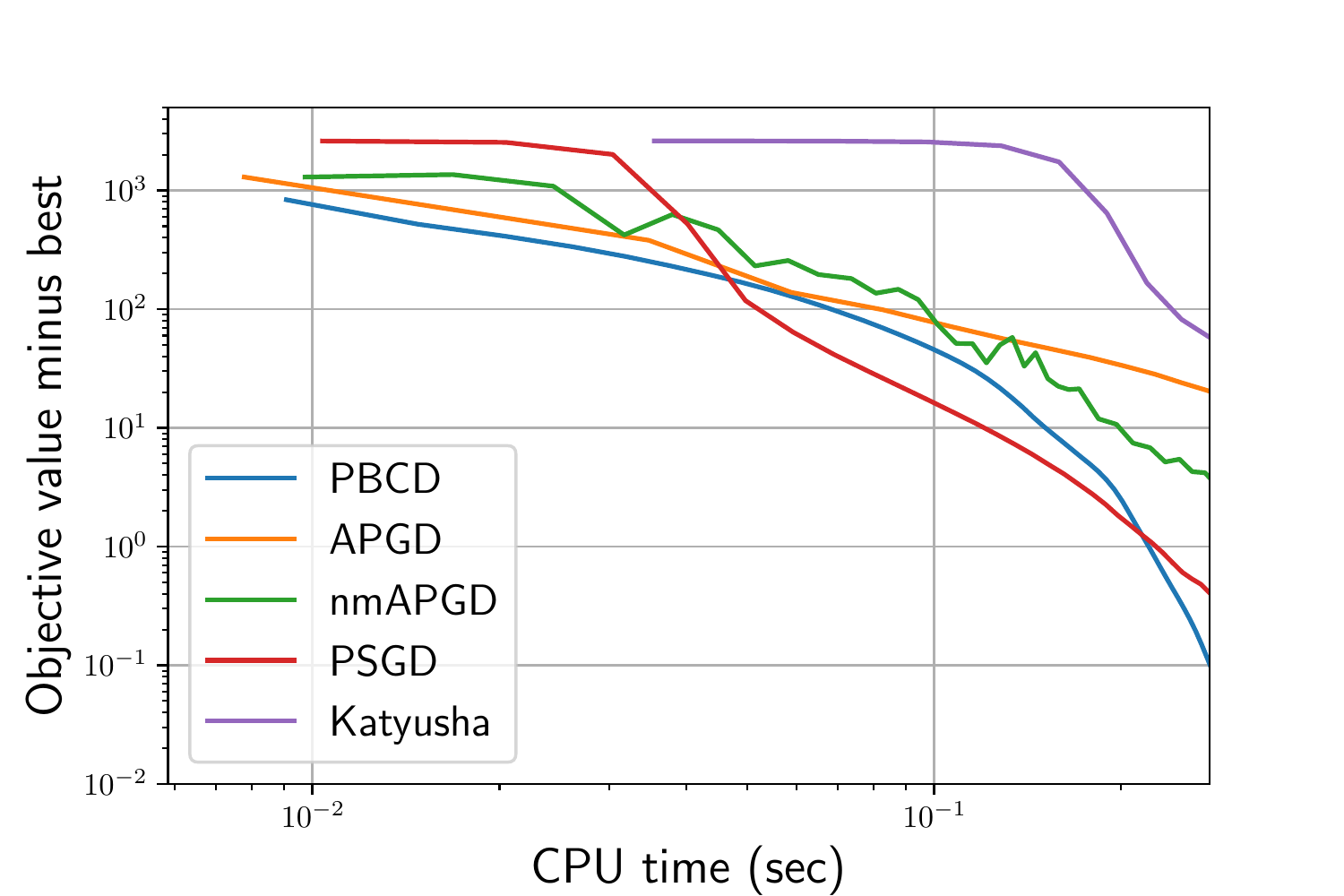} 
        \includegraphics[width=80mm]{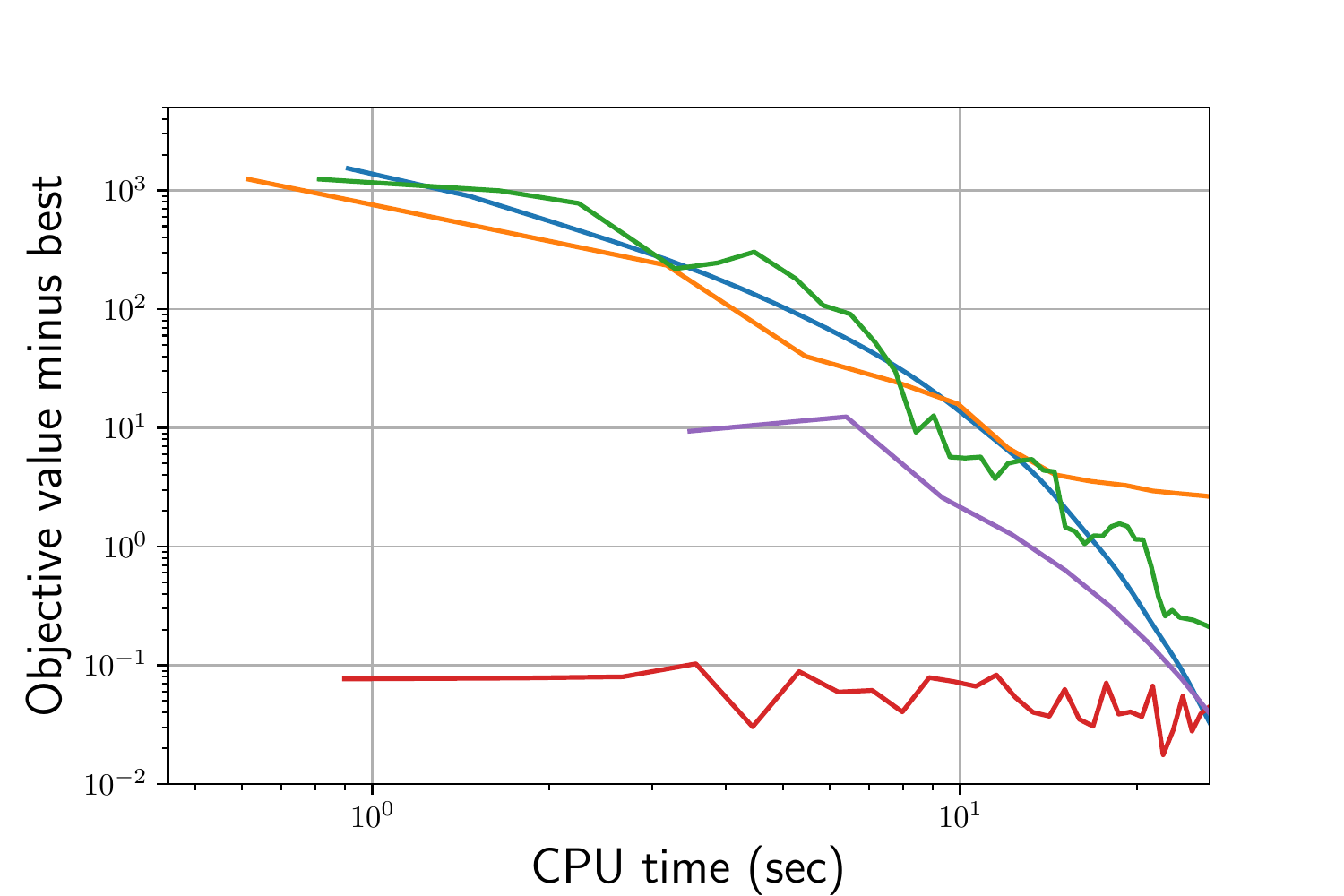}
        \label{fig:synthetic_solver_feature_selection_cs}
        }
        \caption{Runtime comparisons among \textbf{PCD}, \textbf{PBCD}, \textbf{APGD}, \textbf{nmAPGD}, \textbf{PSGD}, and \textbf{Katyusha} algorithms on synthetic datasets using different amounts of training data: (a) feature interaction selection setting datasets for \textbf{TI} method; (b) feature selection setting datasets for \textbf{TI} method; (c) feature selection setting datasets for \textbf{CS} method. Left and right graphs show results for datasets with $N=200$ and $20,000$, respectively.}
        \label{fig:synthetic_solver}
    \end{figure}
    
    \begin{figure}[t!]
        \centering
        \subfloat[ML100K dataset ($N=100,000$): \textbf{TI} method (left) and \textbf{CS} method (right) with $\lambda_w = 5\times 10^{-4}$, $\lambda_p = 5\times 10^{-5}$, and $\tilde{\lambda}_p = 5\times 10^{-5}$.]{
        \includegraphics[width=80mm]{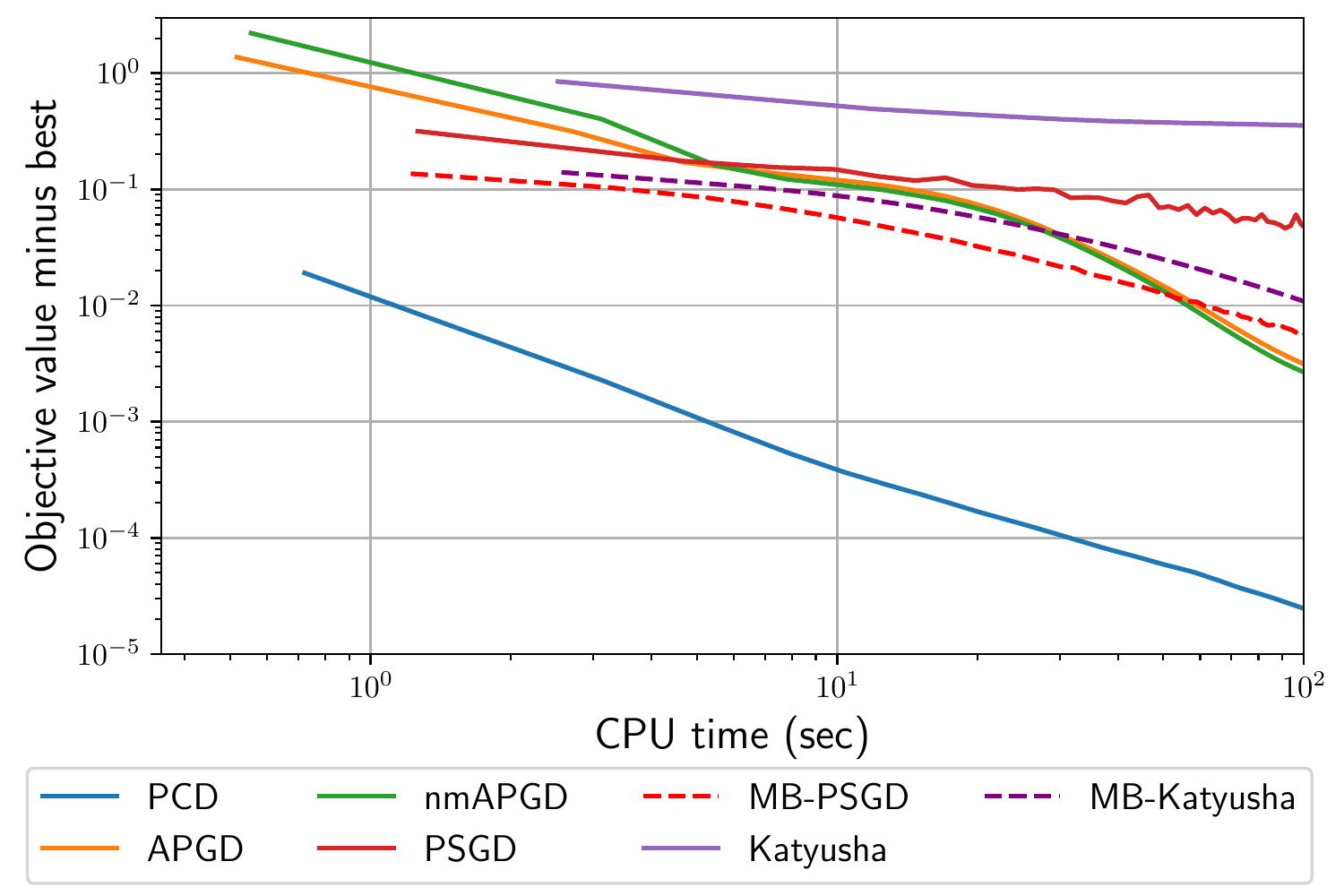} 
        \includegraphics[width=80mm]{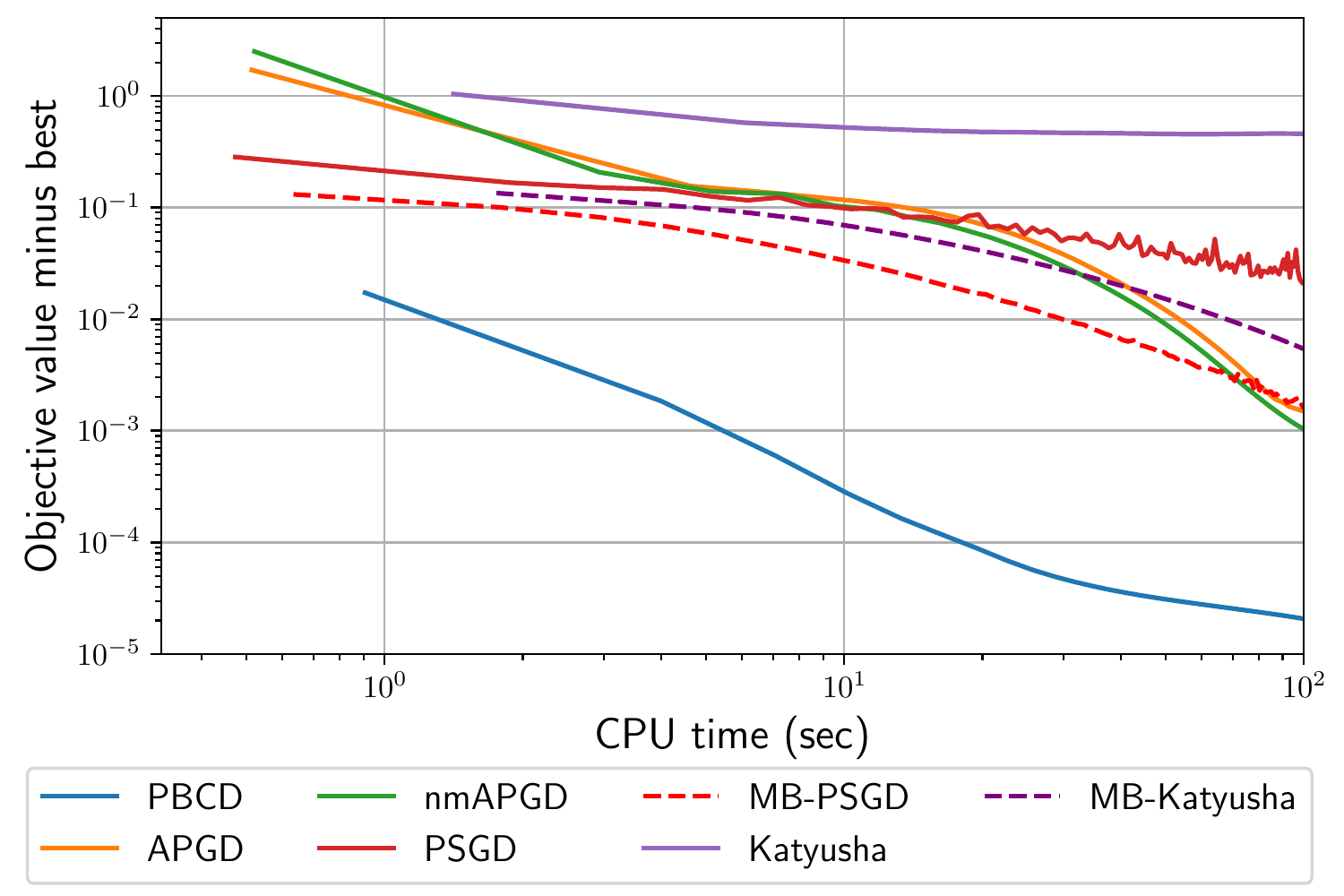}
        \label{fig:solver_ml100k}
        }\\
        \subfloat[a9a dataset ($N=48,842$): \textbf{TI} method (left) and \textbf{CS} method (right) with $\lambda_w = 5\times 10^{-2}$, $\lambda_p = 5\times 10^{-4}$, and $\tilde{\lambda}_p = 5\times 10^{-4}$.]{
        \includegraphics[width=80mm]{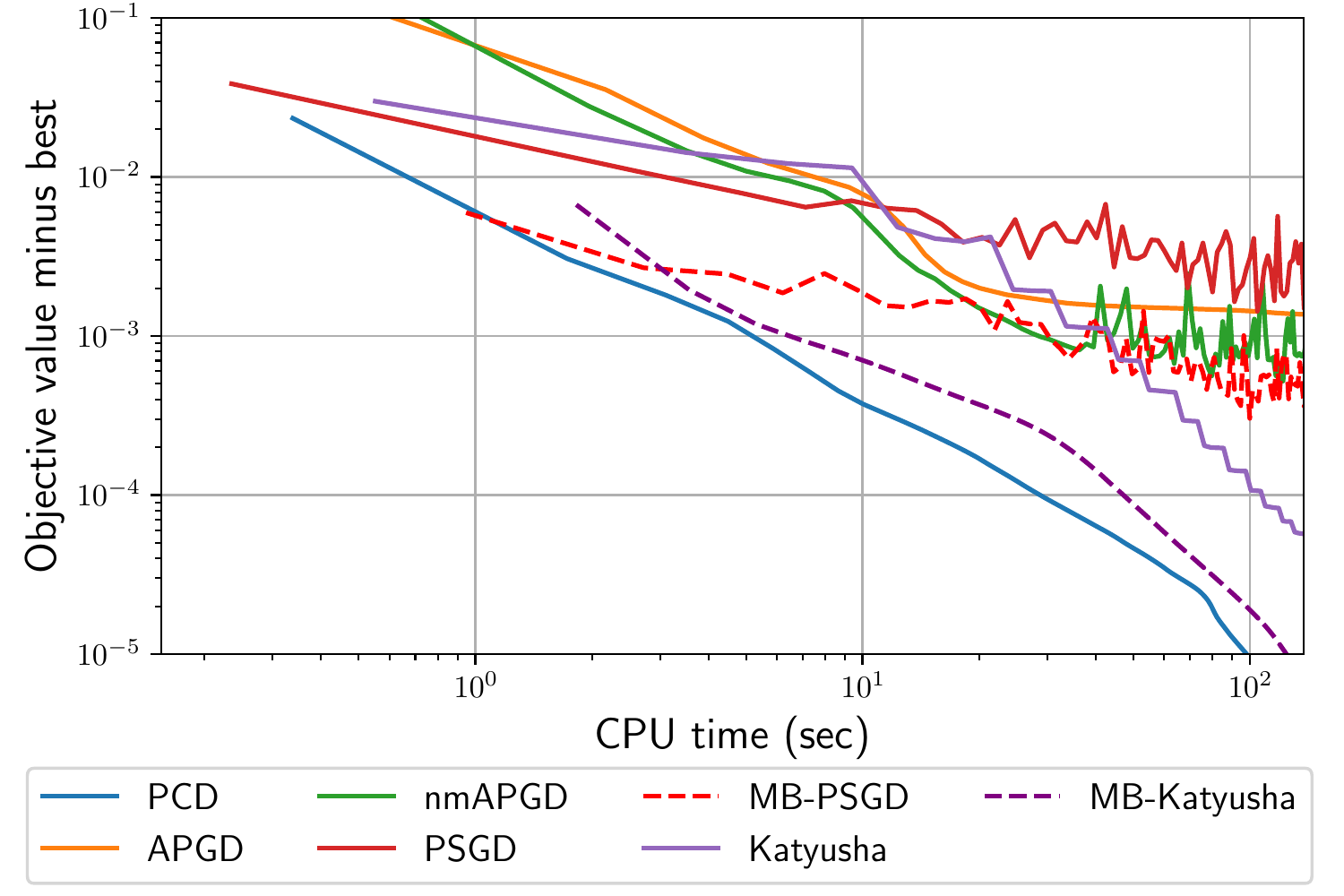} 
        \includegraphics[width=80mm]{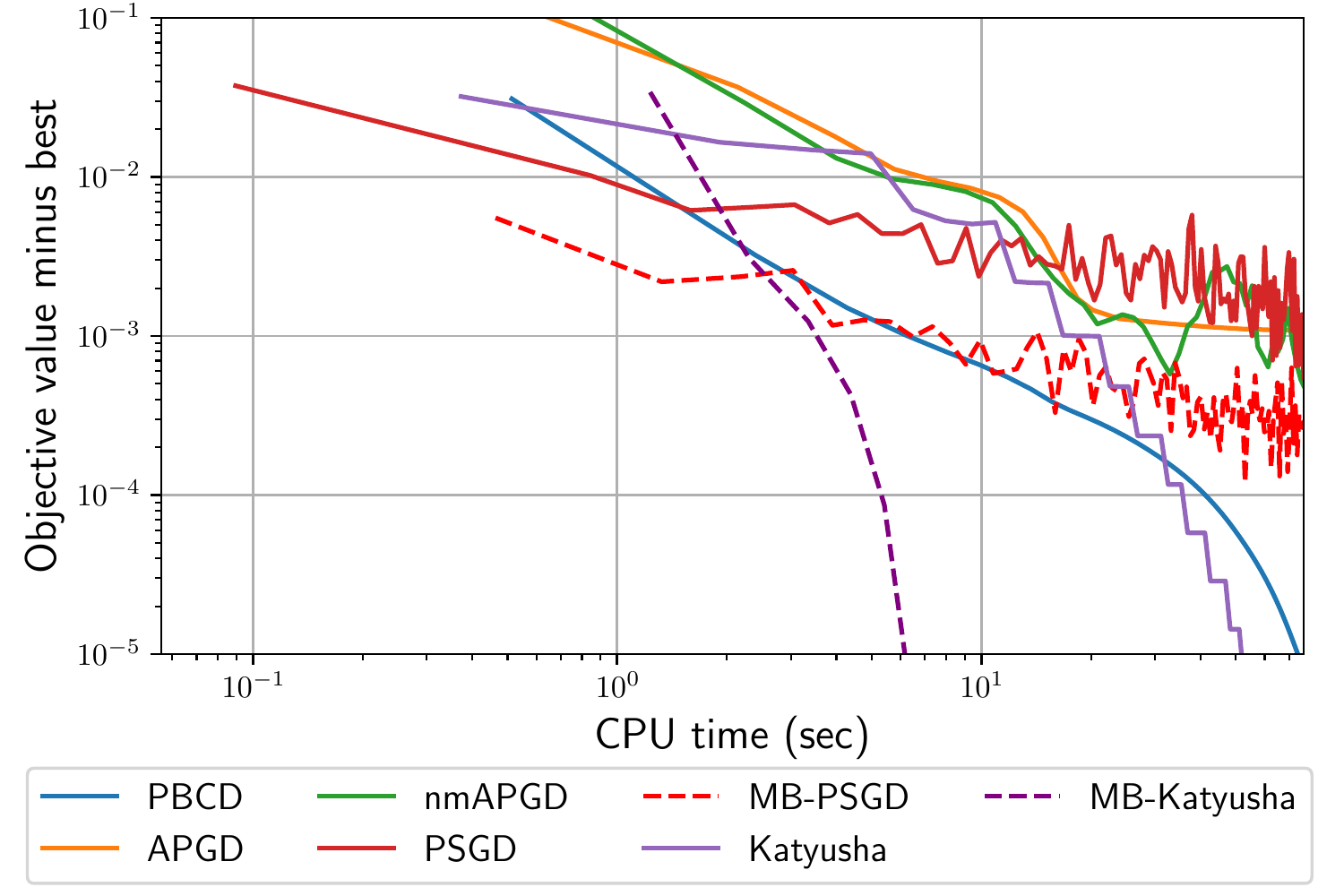}
        \label{fig:solver_a9a}
        }\\
        \caption{Runtime comparisons among \textbf{PCD}, \textbf{PBCD}, \textbf{APGD}, \textbf{nmAPGD}, \textbf{PSGD}, \textbf{MB-PSGD}, \textbf{Katyusha}, and \textbf{MB-Katyusha} algorithms on (a) ML100K dataset and (b) a9a dataset. Left and right graphs show results for \textbf{TI} and \textbf{CS} method, respectively.}
        \label{fig:solver_real}
    \end{figure}
    
    \paragraph{Datasets.}
    We used feature interaction selection setting datasets, feature selection setting datasets, the ML100K dataset, and the a9a dataset.
    We set the number of instances of synthetic datasets $200$ and $20,000$ in order to compare scalabilities of algorithms w.r.t $N$.
    As in~\cref{subsec:efficiency_real_world}, we used $100,000$ and $48,642$ instances for training on the ML100K and the a9a datasets, respectively.
    We ran the experiment ten times using different initial random seeds.
    On synthetic datasets, we set $\lambda_{p}=\tilde{\lambda_p}=0.1$ for the batch algorithms (\textbf{PCD}, \textbf{PBCD}, \textbf{APGD}, and \textbf{nmAPGD}).
    For the stochastic algorithms (\textbf{PSGD} and \textbf{Katyusha}), we first scaled the feature vectors and targets: we used $\bm{x}_n / \sqrt{d}$ and $y_n / d$ as feature vectors and targets and set $\lambda_p$ and $\tilde{\lambda}_p$ to $0.1/d^2$.
    Since we used the squared loss and $f_{\mathrm{FM}}(\bm{x}/\sqrt{d}; \bm{0}, \bm{P}) = f_{\mathrm{FM}}(\bm{x}; \bm{0}, \bm{P})/d$, the balance between the loss term and regularization term was the same as that in the batch algorithms.
    On the ML100K dataset, we set $\lambda_w=0.5 \times 10^{-4}$, $\lambda_p=0.5 \times 10^{-4}$, and $\tilde{\lambda}_p = 0.5 \times 10^{-4}$ for all methods.
    On the a9a dataset, we set $\lambda_w=0.5 \times 10^{-2}$, $\lambda_p=0.5 \times 10^{-3}$, and $\tilde{\lambda}_p = 0.5 \times 10^{-3}$ for all methods.
    They were chosen based on the results in~\cref{subsec:real}.
    
    \paragraph{Results: batch vs stochastic.}
    As shown in~\cref{fig:synthetic_solver}, on synthetic datasets, when the number of training instances was $20,000$, the stochastic algorithms (\textbf{PSGD} and \textbf{Katyusha}) were faster than the batch algorithms (\textbf{PCD}, \textbf{PBCD}, \textbf{APGD}, and \textbf{nmAPGD}).
    This indicates that stochastic algorithms can be more useful than batch algorithms for large-scale dense datasets, as described in~\cref{subsec:pgd_ti} and~\cref{subsec:pgd_cs}.
    However, the stochastic algorithms take $O(dk)$ time at each iteration even if a sampled feature vector is sparse.
    On synthetic datasets, the feature vectors were dense since they were generated from Gaussian distributions, so the stochastic algorithms might be relatively slower in some real-world applications.
    Indeed, as shown in~\cref{fig:solver_real}, on real-world datasets, the completely stochastic algorithms (\textbf{PSGD} and \textbf{Katyusha}) were not faster than batch algorithms although although $N \gg d$ on both datasets.
    The use of appropriate size mini-batch improved the convergence speed as our expected: \textbf{MB-PSGD} and \textbf{MB-Katyusha} were faster than their completely stochastic versions.
    Nevertheless, such mini-batch algorithms were slower than \textbf{PCD} and \textbf{PBCD} on the ML100K dataset and the \textbf{PCD} on the a9a dataset.
    Moreover, performances of (mini-batch) stochastic algorithms were sensitive w.r.t the choice of the step size hyperparameter and the objective values usually diverged with $\eta_0=1.0$.
    Thus, as our analysis in~\cref{subsec:pgd_ti}, the PSGD-based algorithms should be used only when $\nnz{\bm{X}}/d$ is large.
    Note that $\nnz{\bm{X}}/d$ of the ML100K and the a9a datasets in this experiment are $338$ and $5,506$, respectively (clearly, on synthetic datasets $\nnz{\bm{X}}/d = N$).

    \paragraph{Results: \textbf{PCD}/\textbf{PBCD} vs \textbf{APGD}/\textbf{nmAPGD}.}
    On synthetic datasets (\cref{fig:synthetic_solver}), \textbf{PCD}/\textbf{PBCD}, \textbf{APGD}, and \textbf{nmAPGD} tended to show similar results but \textbf{PCD} was much faster than \textbf{APGD} and \textbf{nmAPGD} on the feature interaction selecting dataset with $N=200$.
    On real-world datasets (\cref{fig:solver_real}), \textbf{PCD} and \textbf{PBCD} were much faster than \textbf{APGD} and \textbf{nmAPGD}.
    Strictly speaking, not PCD-based algorithms but PGD/PSGD-based algorithms should be used since \textbf{TI} (\textbf{CS}) regularizer is not seperable w.r.t each coordinate (each row vector) in $\bm{P}$.
    However, our results indicate that \textbf{PCD} and \textbf{PBCD} can work better than \textbf{nmAPGD} and \textbf{APGD} practically.
    Moreover, again, \textbf{PCD} and \textbf{PBCD} have some important practical advantages: (i) easy to implement, (ii) easy to extend to related models (as shown in~\cref{sec:extension}), and (iii) having few hyperparameters.
    
\subsection{Comparison of~\cref{alg:prox_ti_sort} and~\cref{alg:prox_ti_randomize}}
    We compared two algorithms for the proximal operator~\eqref{eq:prox_ti},~\cref{alg:prox_ti_sort} (\textbf{Sort}) and~\cref{alg:prox_ti_randomize} (\textbf{Random}) proposed by~\citet{filipe2011online}.
    For a $d$-dimensional vector, \textbf{Sort} and \textbf{Random} run in $O(d \log d)$ time and $O(d)$ time, respectively.
    We evaluated the runtime of two algorithms for a $d$-dimensional vector generated from a Gaussian distribution, $\mathcal{N}(\bm{0}, \sigma^2 \eye_{d})$ with varying $d \in \{2^{3}, 2^{4}, \ldots, 2^{14}\}$.
    We set $\sigma=1.0$ and $\sigma=10.0$ and the regularization strength $\lambda=0.001$, $\lambda=0.1$, and $\lambda=10.0$.
    We ran the experiment $100$ times with different initial random seeds and report the average runtimes.
    For \textbf{Sort}, we used Nim's standard \texttt{sort} procedure, which is an implementation of merge sort.
    
    \begin{figure}[t]
        \centering
        \subfloat[$\sigma=1.0$.]{
        \includegraphics[width=52mm]{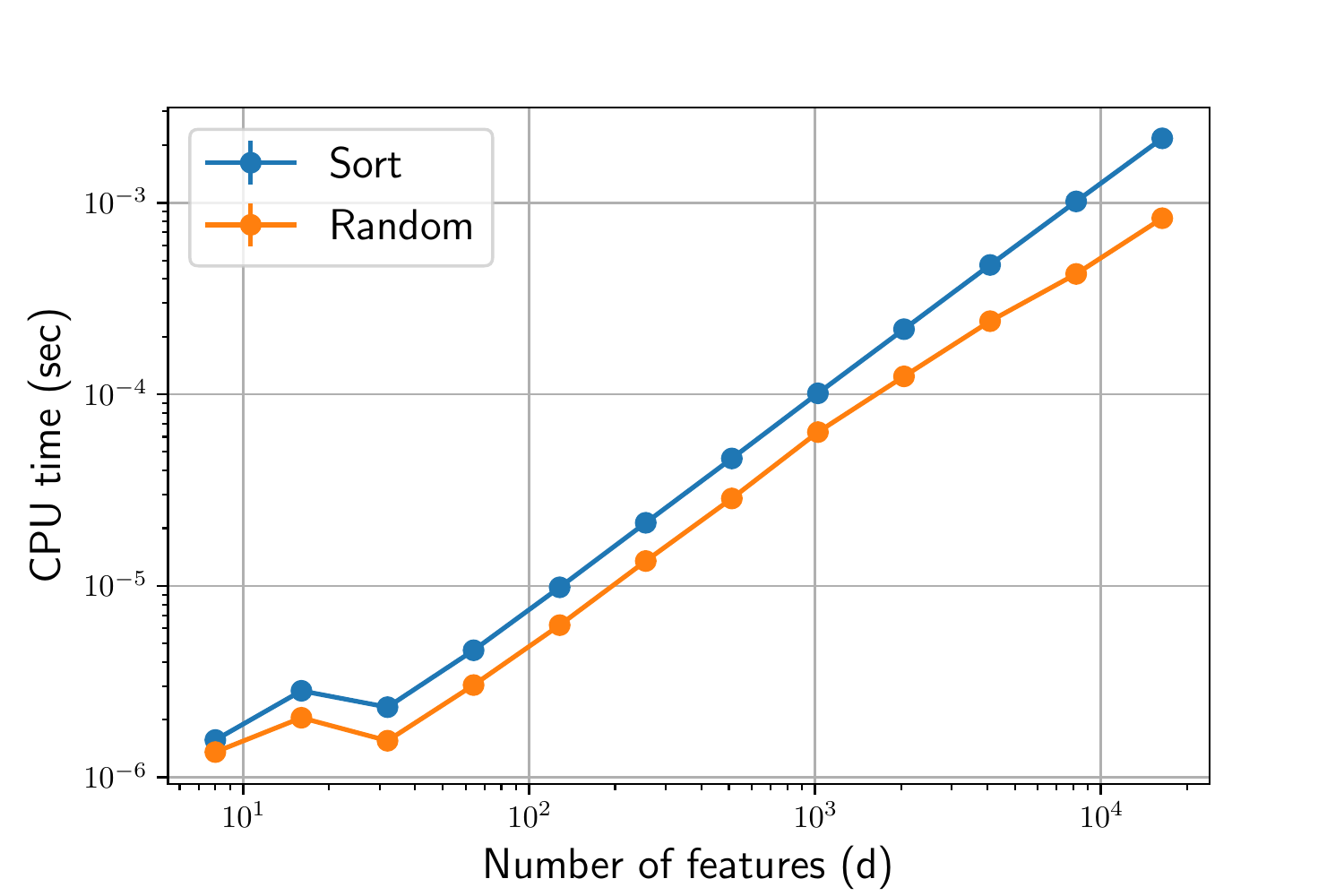}
        \includegraphics[width=52mm]{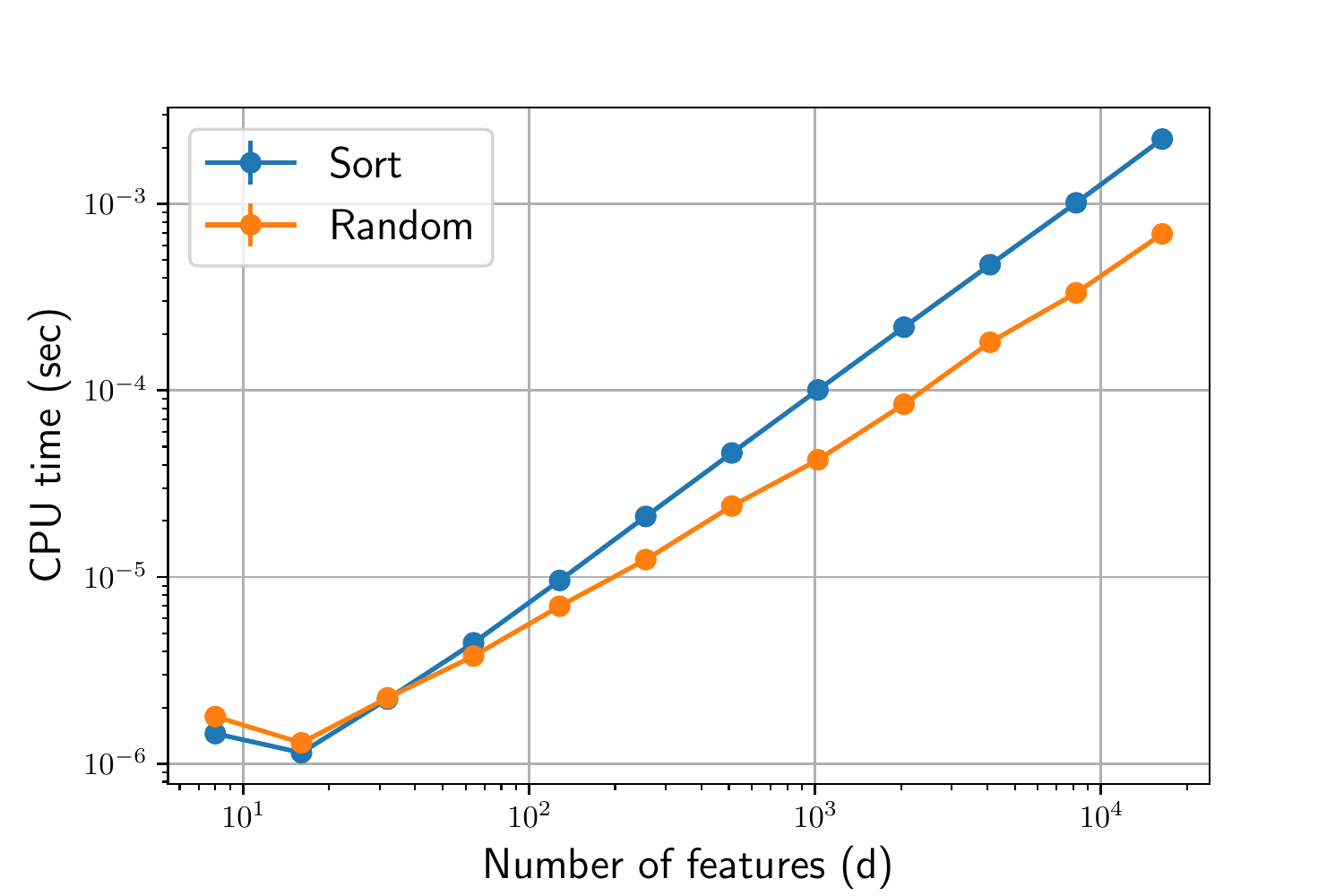}
        \includegraphics[width=52mm]{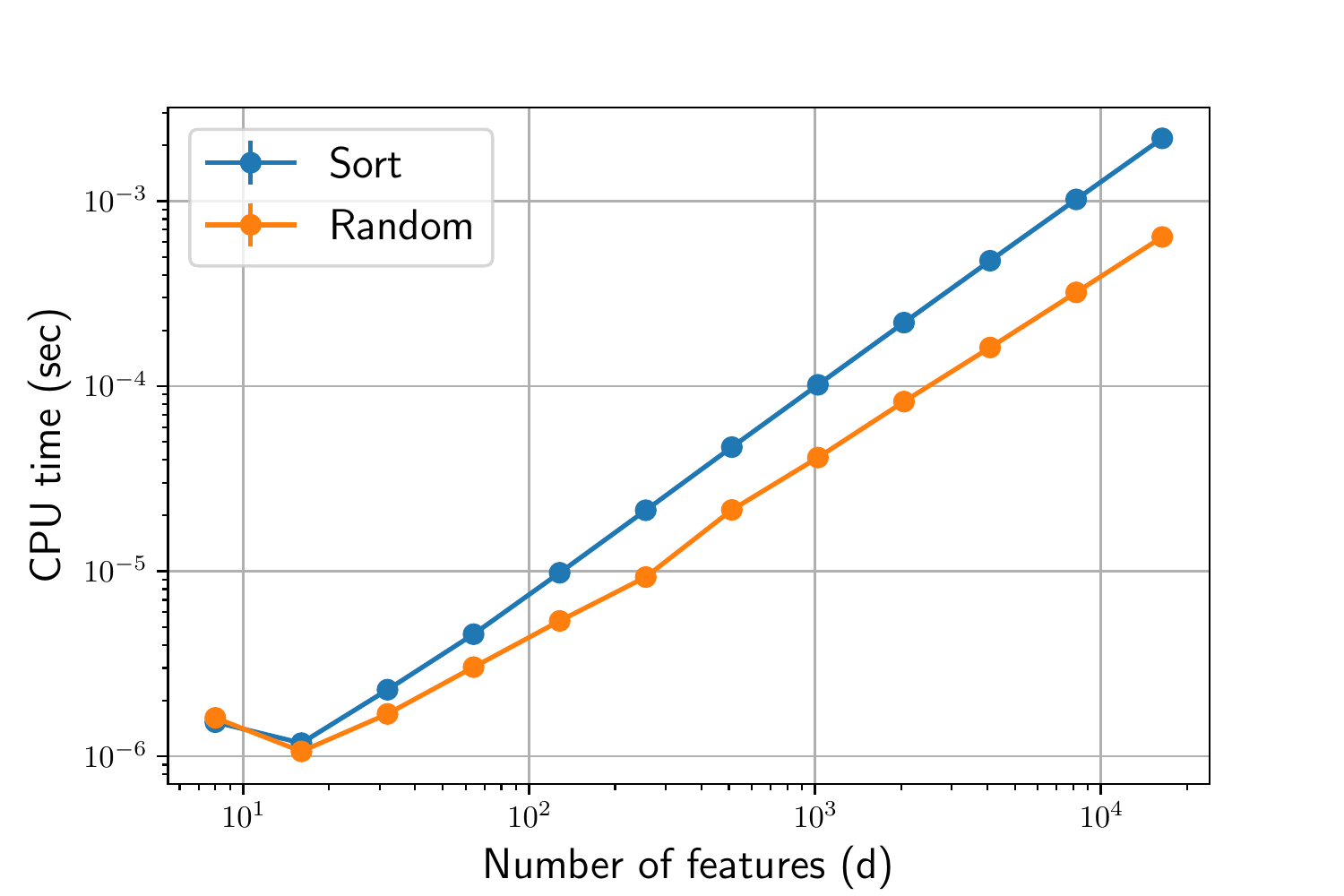}
        \label{fig:prox_sort_vs_random_std1.0}
        }\\
        \subfloat[$\sigma=10.0$.]{
        \includegraphics[width=52mm]{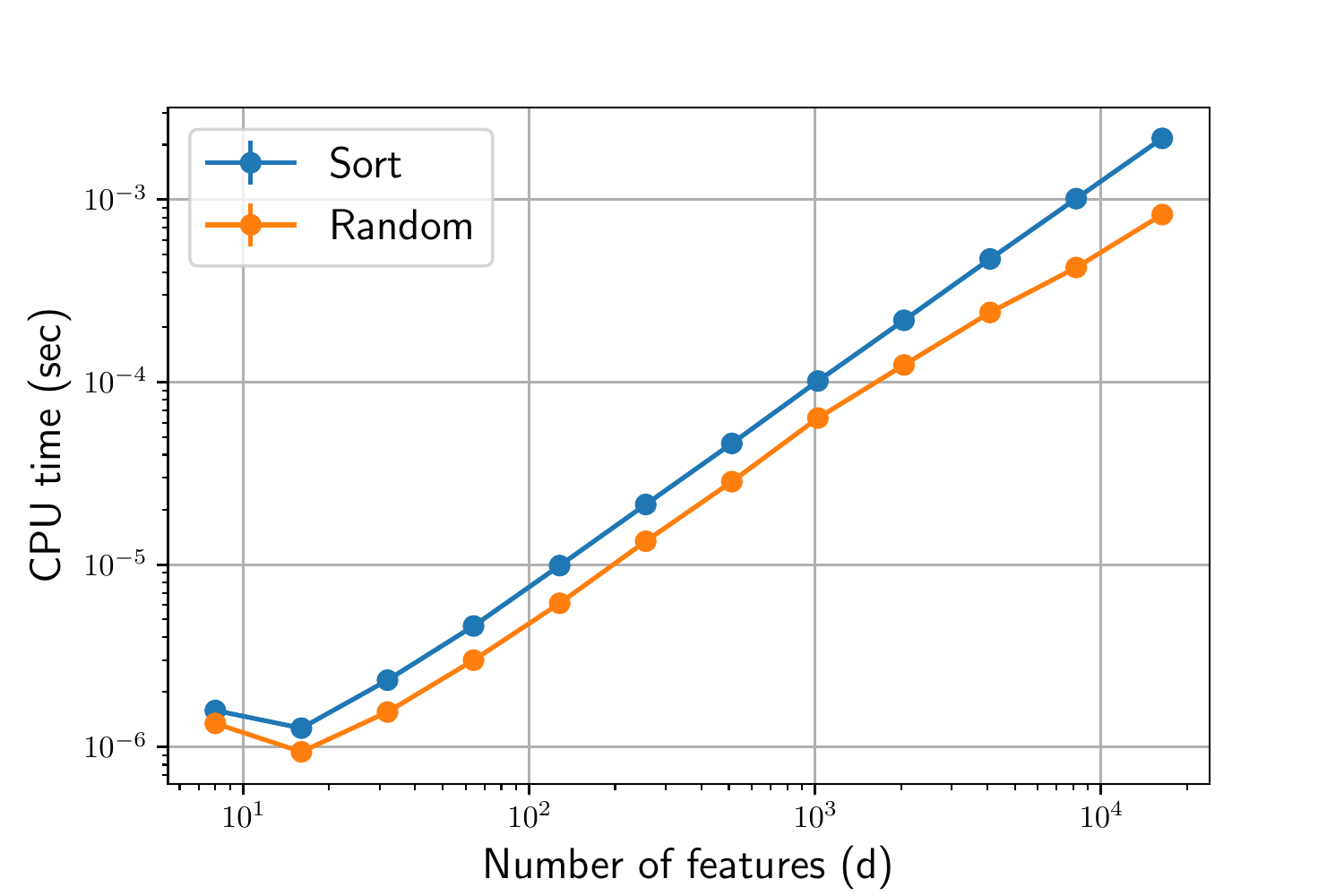}
        \includegraphics[width=52mm]{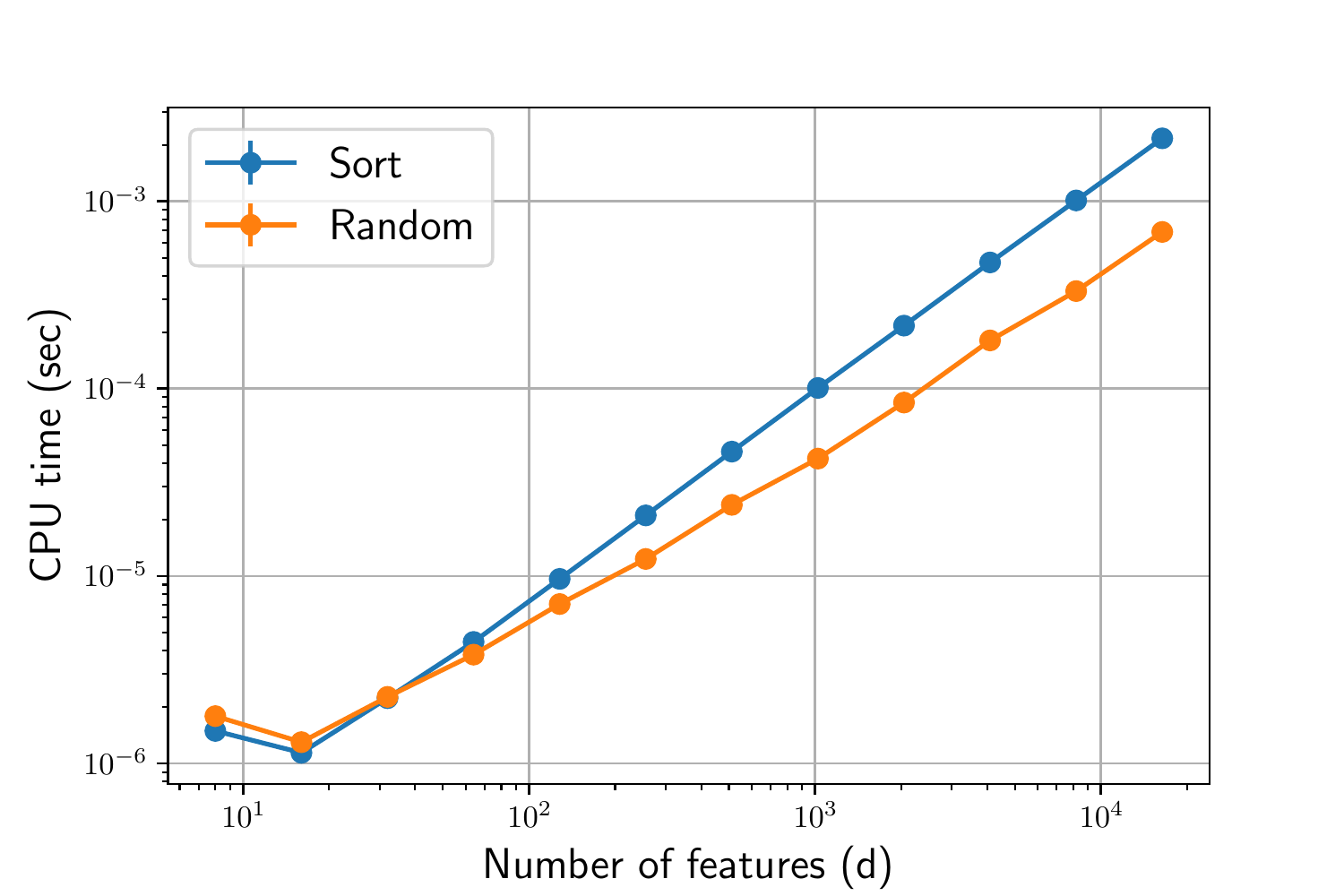}
        \includegraphics[width=52mm]{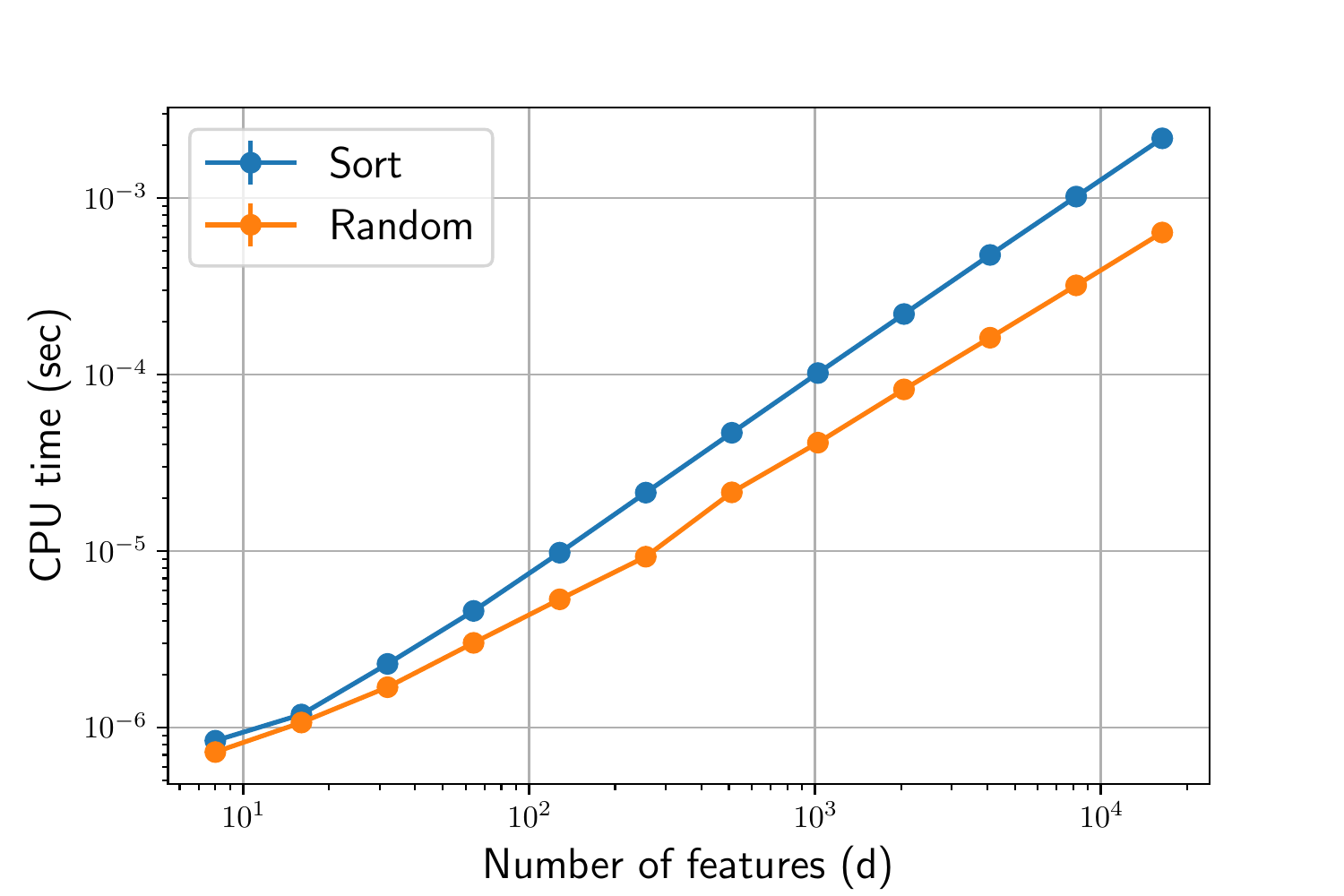}
        \label{fig:prox_sort_vs_random_std10.0}
        }
        \caption{Comparison of~\cref{alg:prox_ti_sort} (\textbf{Sort}) and~\cref{alg:prox_ti_randomize} (\textbf{Random}) with $\lambda=0.001$ (left), $\lambda=0.1$ (center), and $\lambda=10.0$ (right).}
        \label{fig:prox_sort_vs_random}
    \end{figure}
    As shown in~\cref{fig:prox_sort_vs_random}, \textbf{Random} outperformed \textbf{Sort} in most cases.
    \textbf{Sort} sometimes ran faster than \textbf{Random} only when $d$ was very small ($d \le 2^5 =32$).
    Thus, we basically recommend \textbf{Random} (\cref{alg:prox_ti_randomize}) rather than \textbf{Sort} (\cref{alg:prox_ti_sort}) for the evaluation of the proximal operator~\eqref{eq:prox_ti}.

\end{document}